\DeclareMathOperator*{\argmin}{argmin}
\DeclareMathOperator*{\argmax}{argmax}
\DeclareMathOperator*{\spmax}{spmax}
\DeclareMathOperator{\lip}{lip}
\DeclarePairedDelimiter{\ceil}{\lceil}{\rceil}
\newcommand{\EE}{\mathbb{E}}
\newcommand{\bs}{\boldsymbol}
\newcommand{\acc}{\delta}
\newcommand{\R}{\mathbb{R}}
\newcommand{\eps}{\varepsilon}
\newcommand{\mpx}{\mathcal{P}(\x)}
\newcommand{\mpy}{\mathcal{P}(\y)}
\newcommand{\x}{\mathcal{X}}
\newcommand{\y}{\mathcal{Y}}
\newcommand{\lx}{\mathcal{L}(\x, \mu)}
\newcommand{\ly}{\mathcal{L}(\y, \nu)}
\newcommand{\asinh}{\textrm{arcsinh}}
\newcommand{\mc}{\mathcal}
\newcommand{\DS}{\displaystyle}
\newcommand{\st}{\mathrm{s.t.}}
\newcommand{\diff}{\mathrm{d}}
\newcommand{\opt}{}
\theoremstyle{definition}
\newtheorem{theorem}{Theorem}[section]
\newtheorem{assumption}[theorem]{Assumption}
\newtheorem{proposition}[theorem]{Proposition}
\newtheorem{corollary}[theorem]{Corollary}
\newtheorem{lemma}[theorem]{Lemma}
\newtheorem{example}[theorem]{Example}
\newcommand{\ubar}[1]{\underaccent{\bar}{#1}}
\author[$\dagger$]{Bahar Ta{\c{s}}kesen}
\author[$\ddagger$]{Soroosh Shafieezadeh-Abadeh}
\author[$\dagger$]{Daniel Kuhn}
\affil[$\dagger$]{Risk Analytics and Optimization Chair, EPFL Lausanne \authorcr \texttt{bahar.taskesen,daniel.kuhn@epfl.ch}}
\affil[$\ddagger$]{Automatic Control Laboratory, ETH Zurich \authorcr \texttt{shafiee@ethz.ch}}
\date{}
\begin{document}
	\title{Semi-Discrete Optimal Transport:\\ Hardness, Regularization and Numerical Solution
	}
	\maketitle
	
	\begin{abstract}
		Semi-discrete optimal transport problems, which evaluate the Wasserstein distance between a discrete and a generic (possibly non-discrete) probability measure, are believed to be computationally hard. Even though such problems are ubiquitous in statistics, machine learning and computer vision, however, this perception has not yet received a theoretical justification. To fill this gap, we prove that computing the Wasserstein distance between a discrete probability measure supported on two points and the Lebesgue measure on the standard hypercube is already $\#$P-hard. This insight prompts us to seek {\em approximate} solutions for semi-discrete optimal transport problems. We thus perturb the underlying transportation cost with an additive disturbance governed by an ambiguous probability distribution, and we introduce a distributionally robust {\em dual} optimal transport problem whose objective function is smoothed with the most adverse disturbance distributions from within a given ambiguity set. We further show that smoothing the dual objective function is equivalent to regularizing the primal objective function, and we identify several ambiguity sets that give rise to several known and new regularization schemes. As a byproduct, we discover an intimate relation between semi-discrete optimal transport problems and discrete choice models traditionally studied in psychology and economics. To solve the regularized optimal transport problems efficiently, we use a stochastic gradient descent algorithm with imprecise stochastic gradient oracles. A new convergence analysis reveals that this algorithm improves the best known convergence guarantee for semi-discrete optimal transport problems with entropic regularizers.\\

		\noindent\emph{\textbf{Keywords:}} optimal transport, Wasserstein distance, complexity, $\#$P-hardness, discrete choice models, distributionally robust optimization, stochastic gradient descent algorithms
	\end{abstract}
	
	\section{Introduction}
	Optimal transport theory has a long and distinguished history in mathematics dating back to the seminal work of \citet{monge1781memoire} and \citet{kantorovich1942transfer}. While originally envisaged for applications in civil engineering, logistics and economics, optimal transport problems provide a natural framework for comparing probability measures and have therefore recently found numerous applications in statistics and machine learning. Indeed, the minimum cost of transforming a probability measure~$\mu$ on $\x$ to some other probability measure~$\nu$ on $\y$ with respect to a prescribed cost function on~$\x\times \y$ can be viewed as a measure of distance between~$\mu$ and $\nu$. If $\x=\y$ and the cost function coincides with (the $p^{\text{th}}$ power of) a metric on~$\x\times \x$, then the resulting optimal transport distance represents (the $p^{\text{th}}$ power of) a Wasserstein metric on the space of probability measures over~$\x$ \citep{villani}. In the remainder of this paper we distinguish {\em discrete}, {\em semi-discrete} and {\em continuous} optimal transport problems in which either both, only one or none of the two probability measures~$\mu$ and~$\nu$ are discrete, respectively.

	In the wider context of machine learning, {\em discrete} optimal transport problems are nowadays routinely used, for example, in the analysis of mixture models \citep{kolouri2017optimal, nguyen2013convergence} as well as in image processing \citep{alvarez2017structured, ferradans2014regularized, kolouri2015transport, papadakis2017convex, tartavel2016wasserstein}, computer vision and graphics \citep{pele2008linear, pele2009fast, rubner2000earth, solomon2014earth, solomon2015convolutional}, data-driven bioengineering \citep{feydy2017optimal, kundu2018discovery, wang2010optimal}, clustering \citep{ho2017multilevel}, dimensionality reduction \citep{cazelles2018geodesic, flamary2018wasserstein, rolet2016fast, schmitzer2016sparse, seguy2015principal}, domain adaptation \citep{courty2016optimal, murez2018image}, distributionally robust optimization \citep{esfahani2018data, nguyen2020distributionally, NIPS2015_5745, shafieezadeh2019regularization}, scenario reduction \citep{heitsch2007note, rujeerapaiboon2018scenario}, scenario generation \citep{pflug2001scenario, hochreiter2007financial}, the assessment of the fairness properties of machine learning algorithms \citep{gordaliza2019obtaining, taskesen2020distributionally, taskesen2020statistical} and signal processing \citep{thorpe2017transportation}. 
	
	The discrete optimal transport problem represents a tractable linear program that is susceptible to the network simplex algorithm \citep{orlin1997polynomial}. Alternatively, it can be addressed with dual ascent methods \citep{bertsimas1997introduction}, the Hungarian algorithm for assignment problems \citep{kuhn1955hungarian} or customized auction algorithms \citep{bertsekas1981new, bertsekas1992auction}.
	The currently best known complexity bound for computing an {\em exact} solution is attained by modern interior-point algorithms. 
	Indeed, if $N$ denotes the number of atoms in~$\mu$ or in~$\nu$, whichever is larger, then the discrete optimal transport problem can be solved in time\footnote{We use the soft-O notation $\tilde{\mc O}(\cdot)$ to hide polylogarithmic factors.}~$\mathcal{\tilde{O}}(N^{2.5})$ with an interior point algorithm by \citet{lee2014path}. 
	The need to evaluate optimal transport distances between increasingly fine-grained histograms has also motivated efficient approximation schemes. \citet{blanchet2018towards} and \citet{quanrud2018approximating} show that an $\epsilon$-optimal solution can be found in time~$\mathcal{O}(N^2/\epsilon)$ by reducing the discrete optimal transport problem to a matrix scaling or a positive linear programming problem, which can be solved efficiently by a Newton-type algorithm.
    \citet{jambulapati2019direct} describe a parallelizable primal-dual first-order method that achieves a similar convergence~rate.
	
	The tractability of the discrete optimal transport problem can be improved by adding an entropy regularizer to its objective function, which penalizes the entropy of the transportation plan for morphing~$\mu$ into~$\nu$. When the weight of the regularizer grows, this problem reduces to the classical Schr\"odinger bridge problem of finding the most likely random evolution from~$\mu$ to~$\nu$ \citep{schrodinger1931umkehrung}. Generic linear programs with entropic regularizers were first studied by \citet{fang1992unconstrained}. \citet{cominetti1994asymptotic} prove that the optimal values of these regularized problems converge exponentially fast to the optimal values of the corresponding unregularized problems as the regularization weight drops to zero. 
	Non-asymptotic convergence rates for entropy regularized linear programs are derived by \citet{weed2018explicit}.
	\citet{sinkhorn} was the first to realize that entropic penalties are computationally attractive because they make the discrete optimal transport problem susceptible to a fast matrix scaling algorithm by \citet{sinkhorn1967diagonal}. This insight has spurred widespread interest in machine learning and led to a host of new applications of optimal transport in	color transfer \citep{chizat2016scaling}, inverse problems \citep{karlsson2017generalized, adler2017learning}, texture synthesis \citep{peyre2017quantum}, the analysis of crowd evolutions \citep{peyre2015entropic} and shape interpolation \citep{solomon2015convolutional} to name a few. This surge of applications inspired in turn several new algorithms for the entropy regularized discrete optimal transport problem such as a greedy dual coordinate descent method also known as the Greenkhorn algorithm \citep{altschuler2017near, chakrabarty2018better, abid2018greedy}.
	\citet{dvurechensky2018computational} and \citet{lin2019efficient} prove that both the Sinkhorn and the Greenkhorn algorithms are guaranteed to find an $\epsilon$-optimal solution in time $\tilde{\mathcal{O}}({N^2}/{\epsilon^2})$. In practice, however, the Greenkhorn algorithm often outperforms the Sinkhorn algorithm~\citep{lin2019efficient}. The runtime guarantee of both algorithms can be improved to $\tilde{\mathcal{O}}(N^{7/3}/\epsilon)$ via a randomization scheme \citep{lin2019acceleration}.
	In addition, the regularized discrete optimal transport problem can be addressed by tailoring general-purpose optimization algorithms such as accelerated gradient descent algorithms \citep{dvurechensky2018computational}, iterative Bregman projections \citep{benamou2015iterative}, quasi-Newton methods \citep{blondel2017smooth} or stochastic average gradient descent algorithms \citep{genevay2016stochastic}. While the original optimal transport problem induces sparse solutions, the entropy penalty forces the optimal transportation plan of the regularized optimal transport problem to be strictly positive and thus completely dense. In applications where the interpretability of the optimal transportation plan is important, the lack of sparsity could be undesirable; examples include color transfer \citep{pitie2007automated}, domain adaptation \citep{courty2016optimal} or
	ecological inference \citep{muzellec2017tsallis}. 
	Hence, there is merit in exploring alternative regularization schemes that retain the attractive computational properties of the entropic regularizer but induce sparsity. Examples that have attracted significant interest include smooth convex regularization and Tikhonov regularization \citep{dessein2018regularized, blondel2017smooth, seguy2017large, essid2018quadratically}, Lasso regularization \citep{li2016fast}, Tsallis entropy regularization \citep{muzellec2017tsallis} or group Lasso regularization \citep{courty2016optimal}. 
	
	Much like the discrete optimal transport problems, the significantly more challenging \emph{semi-discrete} optimal transport problems emerge in numerous applications including variational inference \citep{ambrogioni2018wasserstein}, blue noise sampling \citep{qin2017wasserstein}, computational geometry \citep{levy2015numerical}, image quantization \citep{de2012blue} or deep learning with generative adversarial networks \citep{arjovsky2017wasserstein, genevay2017learning, gulrajani2017improved}. Semi-discrete optimal transport problems are also used in fluid mechanics to simulate incompressible fluids \citep{de2015power}.
	
	Exact solutions of a semi-discrete optimal transport problem can be constructed by solving an incompressible Euler-type partial differential equation discovered by \citet{brenier1991polar}. Any optimal solution is known to partition the support of the non-discrete measure into cells corresponding to the atoms of the discrete measure \citep{aurenhammer1998minkowski}, and the resulting tessellation is usually referred to as a power diagram. 
	\citet{ref:mirebeau2015discretization} uses this insight to solve Monge-Amp{\`e}re equations with a damped Newton algorithm, and \citet{kitagawa2016convergence} show that a closely related algorithm with a global linear convergence rate lends itself for the numerical solution of generic semi-discrete optimal transport problems. In addition, \citet{merigot2011multiscale} proposes a quasi-Newton algorithm for semi-discrete optimal transport, which improves a method due to~\citet{aurenhammer1998minkowski} by exploiting Llyod's algorithm to iteratively simplify the discrete measure.
	If the transportation cost is quadratic, \citet{bonnotte2013knothe} relates the optimal transportation plan to the Knothe-Rosenblatt rearrangement for mapping~$\mu$ to~$\nu$, which is very easy to compute.
	
	As usual, regularization improves tractability. \citet{genevay2016stochastic} show that the dual of a semi-discrete optimal transport problem with an entropic regularizer is susceptible to an averaged stochastic gradient descent algorithm that enjoys a convergence rate of~$\mathcal O(1/\sqrt{T})$, $T$ being the number of iterations. {\color{black} \citet{ref:altschuler2022asymptotics} show that the optimal value of the entropically regularized problem converges to the optimal value of the unregularized problem at a quadratic rate as the regularization weight drops to zero. Improved error bounds under stronger regularity conditions are derived by~\citet{ref:delalande2021nearly}.}

	{\em Continuous} optimal transport problems constitute difficult variational problems involving infinitely many variables and constraints. \citet{benamou2000computational} recast them as boundary value problems in fluid dynamics, 
	and \citet{papadakis2014optimal} solve discretized versions of these reformulations using first-order methods. For a comprehensive survey of the interplay between partial differential equations and optimal transport we refer to \citep{evans1997partial}. As nearly all numerical methods for partial differential equations suffer from a curse of dimensionality, current research focuses on solution schemes for {\em regularized} continuous optimal transport problems. 
	For instance, \citet{genevay2016stochastic} embed their duals into a reproducing kernel Hilbert space to obtain finite-dimensional optimization problems that can be solved with a stochastic gradient descent algorithm. \citet{seguy2017large} solve regularized continuous optimal transport problems by representing the transportation plan as a multilayer neural network. This approach results in finite-dimensional optimization problems that are non-convex and offer no approximation guarantees. However, it provides an effective means to compute approximate solutions in high dimensions. {\color{black} Indeed, the optimal value of the entropically regularized continuous optimal transport problem is known to converge to the optimal value of the unregularized problem at a linear rate as the regularization weight drops to zero \citep{ref:chizat2020faster, ref:conforti2021formula, ref:erbar2015large,ref:pal2019difference}.} Due to a lack of efficient algorithms, applications of continuous optimal transport problems are scarce in the extant literature. \citet{peyre2019computational} provide a comprehensive survey of numerous applications and solution methods for discrete, semi-discrete and continuous optimal transport problems.

	This paper focuses on semi-discrete optimal transport problems. Our main goal is to formally establish that these problems are computationally hard, to propose a unifying regularization scheme for improving their tractability and to develop efficient algorithms for solving the resulting regularized problems, {\color{black} assuming only that we have access to independent samples from the continuous probability measure~$\mu$}. Our regularization scheme is based on the observation that any {\em dual} semi-discrete optimal transport problem maximizes the expectation of a piecewise affine function with $N$ pieces, where the expectation is evaluated with respect to~$\mu$, and where $N$ denotes the number of atoms of the discrete probability measure~$\nu$. We argue that this piecewise affine function can be interpreted as the optimal value of a {\em discrete choice problem}, which can be smoothed by adding random disturbances to the underlying utility values \citep{thurstone1927law, mcfadden1973conditional}. As probabilistic discrete choice problems are routinely studied in economics and psychology, we can draw on a wealth of literature in choice theory to design various smooth (dual) optimal transport problems with favorable numerical properties. For maximal generality we will also study {\em semi-parametric} discrete choice models where the disturbance distribution is itself subject to uncertainty \citep{natarajan2009persistency, mishra2014theoretical, feng2017relation, ahipasaoglu2018convex}. Specifically, we aim to evaluate the best-case (maximum) expected utility across a Fr\'echet ambiguity set containing all disturbance distributions with prescribed marginals. Such models can be addressed with customized methods from modern distributionally robust optimization \citep{natarajan2009persistency}. For Fr\'echet ambiguity sets, we prove that smoothing the dual objective is equivalent to regularizing the primal objective of the semi-discrete optimal transport problem. The corresponding regularizer penalizes the discrepancy between the chosen transportation plan and the product measure~$\mu\otimes \nu$ with respect to a divergence measure constructed from the marginal disturbance distributions. Connections between primal regularization and dual smoothing were previously recognized by \citet{blondel2017smooth} and \citet{paty2020regularized} in discrete optimal transport and by \citet{genevay2016stochastic} in semi-discrete optimal transport. As they are constructed ad hoc or under a specific adversarial noise model, these existing regularization schemes lack the intuitive interpretation offered by discrete choice theory and emerge as special cases of our unifying scheme.

	The key contributions of this paper are summarized below.
	\begin{enumerate}[label=\roman*.]
		\item We study the computational complexity of semi-discrete optimal transport problems. Specifically, we prove that computing the optimal transport distance between two probability measures~$\mu$ and~$\nu$ on the same Euclidean space is $\#$P-hard {\color{black} even if only approximate solutions are sought and} even if~$\mu$ is the Lebesgue measure on the standard hypercube and~$\nu$ is supported on merely two points. 
		\item We propose a unifying framework for regularizing semi-discrete optimal transport problems by leveraging ideas from distributionally robust optimization and discrete choice theory \citep{natarajan2009persistency, mishra2014theoretical, feng2017relation, ahipasaoglu2018convex}. Specifically, we perturb the transportation cost to every atom of the discrete measure~$\nu$ with a random disturbance, and we assume that the vector of all disturbances is governed by an uncertain probability distribution from within a Fr\'echet ambiguity set that prescribes the marginal disturbance distributions. Solving the dual optimal transport problem under the least favorable disturbance distribution in the ambiguity set amounts to smoothing the dual and regularizing the primal objective function. We show that numerous known and new regularization schemes emerge as special cases of this framework, and we derive a priori approximation bounds for the resulting regularized optimal transport problems.
		\item We derive new convergence guarantees for an averaged stochastic gradient descent (SGD) algorithm that has only access to a {\em biased} stochastic gradient oracle. Specifically, we prove that this algorithm enjoys a convergence rate of $\mathcal O(1/\sqrt{T})$ for Lipschitz continuous and of~$\mathcal O(1/T)$ for generalized self-concordant objective functions. We also show that this algorithm lends itself to solving the smooth dual optimal transport problems obtained from the proposed regularization scheme. When the smoothing is based on a semi-parametric discrete choice model with a Fr\'echet ambiguity set, the algorithm's convergence rate depends on the smoothness properties of the marginal noise distributions, and its per-iteration complexity depends on our ability to compute the optimal choice probabilities. We demonstrate that these choice probabilities can indeed be computed efficiently via bisection or sorting, and in special cases they are even available in closed form. As a byproduct, we show that our algorithm can improve the state-of-the-art $\mathcal O(1/\sqrt{T})$ convergence guarantee of~\citet{genevay2016stochastic} for the semi-discrete optimal transport problem with an {\em entropic} regularizer.
	\end{enumerate}
	
	The rest of this paper unfolds as follows. In Section~\ref{sec:complexity} we study the computational complexity of semi-discrete optimal transport problems, and in Section~\ref{sec:smooth_ot} we develop our unifying regularization scheme.
	In Section~\ref{sec:computation} we analyze the convergence rate of an averaged SGD algorithm with a biased stochastic gradient oracle that can be used for solving smooth dual optimal transport problems, and in Section~\ref{sec:numerical} we compare its empirical convergence behavior against the theoretical convergence guarantees.
	
	\paragraph{Notation.}
	We denote by $\|\cdot\|$ the 2-norm, by $[N] = \{1, \ldots, N \}$ the set of all integers up to $N\in\mathbb N$ and by $\Delta^d = \{\bs x \in \mathbb R_+^d : \sum_{i = 1}^d x_i =1\}$ the probability simplex in $\mathbb R^d$. 
	For a logical statement $\mathcal E$ we define $\mathds{1}_{\mathcal E} = 1$ if $\mathcal E$ is true and $\mathds{1}_{\mathcal E} = 0$ if $\mathcal E$ is false.
	For any closed set $\x\subseteq \R^d$ we define $\mathcal{M}(\x)$ as the family of all Borel measures and $\mathcal{P}(\x)$ as its subset of all Borel probability measures on~$\x$. 
	For $\mu\in\mathcal{P}(\x)$, we denote by $\mathbb{E}_{\bs x \sim \mu}[\cdot]$ the expectation operator under $\mu$ and define $\mathcal{L}(\x, \mu)$ as the family of all $\mu$-integrable functions $f:\x\rightarrow\R$, that is, $f \in \mathcal{L}(\x, \mu)$ if and only if $\int_{\x} |f(\bs x)| \mu(\diff \bs x)<\infty$. 
	The Lipschitz modulus of a function $f: \R^d \to \R$ is defined as $\lip(f) = \sup_{\bs x, \bs{x}'}\{|f(\bs x) - f(\bs{x}')|/\|\bs x - \bs{x}'\|: \bs x \neq \bs{x}'\}$. The convex conjugate of $f: \R^d \to [-\infty,+\infty]$ is the function $f^*:\R^d\rightarrow [-\infty,+\infty]$ defined through $f^{*}(\bs y) = \sup_{\bs x \in \R^d}\bs y^\top \bs x - f(\bs x)$. 
	
	\section{Hardness of Computing Optimal Transport Distances}
	\label{sec:complexity}
	If $\x$ and $\y$ are closed subsets of finite-dimensional Euclidean spaces and $c: \x \times \y \to [0,+\infty]$ is a lower-semicontinuous cost function, then the Monge-Kantorovich {\em optimal transport distance} between two probability measures $\mu\in\mathcal P(\x)$ and $\nu\in\mathcal P(\y)$ is defined as
	\begin{equation}
	W_c(\mu, \nu) = \min\limits_{\pi \in \Pi(\mu,\nu)} ~ \mathbb{E}_{(\bs x, \bs y) \sim \pi}\left[{c(\bs x, \bs y)}\right],
	\label{eq:primal}
	\end{equation}
	where $\Pi(\mu,\nu)$ denotes the family of all  {\em couplings} of $\mu$ and $\nu$, that is, the set of all probability measures on $\x \times \y$ with marginals $\mu$ on $\x$ and $\nu$ on $\y$. One can show that the minimum in~\eqref{eq:primal} is always attained \citep[Theorem~4.1]{villani}. If $\x=\y$ is a metric space with metric $d:\x\times \x\rightarrow \R_+$ and the transportation cost is defined as $c(\bs x, \bs y)=d^p(\bs x,\bs y)$ for some $p \geq 1$, then $W_c(\mu, \nu)^{1/p}$ is termed the $p$-th Wasserstein distance between $\mu$ and $\nu$. The optimal transport problem~\eqref{eq:primal} constitutes an infinite-dimensional linear program over measures and admits a strong dual linear program over functions \citep[Theorem~5.9]{villani}.
	\begin{proposition}[Kantorovich duality]
		\label{prop:kantorovich}
		The optimal transport distance between $\mu \in \mpx$ and $\nu \in \mpy$ admits the dual representation
		\begin{equation}
		\label{eq:dual}
		W_c(\mu, \nu) =\left\{ \begin{array}{c@{\quad}l@{\qquad}l}
		\sup & \DS \mathbb{E}_{\bs y \sim \nu}\left[ {\phi(\bs y)}\right] - \mathbb{E}_{\bs x \sim \mu}\left[{\psi(\bs x)}\right] & \\ [0.5em]
		\st & \psi \in \lx,~ \phi \in \ly & \\ [0.5em]
		& \phi(\bs y) - \psi(\bs x) \leq c(\bs x, \bs y) \quad \forall \bs x \in \x,~ \bs y \in \y.
		\end{array}\right.
		\end{equation}
	\end{proposition}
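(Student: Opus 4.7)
The plan is to establish the ``$\ge$'' inequality (weak duality) by an elementary integration argument and then upgrade to strong duality by invoking an infinite-dimensional Fenchel--Rockafellar theorem in an appropriate function space.

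For weak duality, I fix any coupling $\pi\in\Pi(\mu,\nu)$ and any dual-feasible pair $(\psi,\phi)\in\lx\times\ly$. Integrating the pointwise inequality $\phi(\bs y)-\psi(\bs x)\le c(\bs x,\bs y)$ against $\pi$ and using the fact that $\pi$ has marginals $\mu$ and $\nu$ gives
\[
\EE_{\bs y\sim\nu}[\phi(\bs y)]-\EE_{\bs x\sim\mu}[\psi(\bs x)] \;=\; \EE_{(\bs x,\bs y)\sim\pi}[\phi(\bs y)-\psi(\bs x)] \;\le\; \EE_{(\bs x,\bs y)\sim\pi}[c(\bs x,\bs y)].
\]
Taking the supremum on the left over dual-feasible pairs and the infimum on the right over couplings delivers the desired direction.

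For the reverse inequality, I would first treat a bounded continuous cost $c$. Define the convex indicator $\Theta_1:C_b(\x\times\y)\to[0,+\infty]$ by $\Theta_1(u)=0$ if $u\le c$ pointwise and $+\infty$ otherwise, the continuous linear functional $\Theta_2(\psi,\phi)=\EE_{\bs x\sim\mu}[\psi(\bs x)]-\EE_{\bs y\sim\nu}[\phi(\bs y)]$ on $C_b(\x)\times C_b(\y)$, and the bounded linear map $A(\psi,\phi)(\bs x,\bs y)=\phi(\bs y)-\psi(\bs x)$. Restricting the dual in~\eqref{eq:dual} to continuous bounded potentials expresses the dual value as $-\inf\{\Theta_1(A(\psi,\phi))+\Theta_2(\psi,\phi)\}$. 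The Fenchel--Rockafellar conjugate problem, obtained by computing convex conjugates via the Riesz representation of the topological dual of $C_b$ as the space of finite signed Radon measures, coincides with the primal problem~\eqref{eq:primal}. The qualification hypothesis is verified by taking $\psi\equiv\|c\|_\infty$ and $\phi\equiv 0$, at which $\Theta_1\circ A$ is continuous; strong duality thus follows for bounded continuous costs.

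To extend to a general lower semicontinuous $c:\x\times\y\to[0,+\infty]$, I would approximate $c$ from below by an increasing sequence $c_n\uparrow c$ of bounded continuous costs, apply the previous step to each $c_n$, and pass to the limit using monotone convergence for both the primal (over couplings) and the dual (over potentials). A final $c$-conjugation step, replacing a feasible $\psi$ by its double $c$-transform and $\phi$ by the $c$-transform of $\psi$, secures $\mu$- and $\nu$-integrability of the optimal potentials while preserving dual feasibility. The main obstacle is precisely this lower semicontinuous extension: one must ensure that the limiting potentials land in $\lx\times\ly$ and still satisfy the pointwise constraint, which requires combining the $c$-conjugation trick with a careful interchange of limits and a separate treatment of the trivial case $W_c(\mu,\nu)=+\infty$. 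The classical argument in~\citep[Chapter~5]{villani} implements exactly these steps.
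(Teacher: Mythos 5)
The paper does not supply its own argument for Proposition~\ref{prop:kantorovich}; it states the result and cites Villani, Theorem~5.9. So there is no internal proof to match your sketch against, only the standard literature route — and your plan (weak duality by integration, strong duality via Fenchel--Rockafellar in a space of continuous potentials, then extension to lower semicontinuous costs and an integrability upgrade by $c$-conjugation) is indeed the classical one, and the weak-duality and constraint-qualification parts are stated correctly.

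There is, however, one genuine gap that you do not flag. You assert that the topological dual of $C_b(\x\times\y)$ can be identified with the space of finite signed Radon measures via Riesz representation, and that the Fenchel--Rockafellar conjugate problem therefore ``coincides with the primal problem~\eqref{eq:primal}.'' This identification is false when $\x\times\y$ is non-compact, which the paper explicitly allows (for instance $\x=\y=\R^d$ in Theorem~\ref{theorem:hard}): the dual of $C_b$ on a non-compact metric space is the much larger space of regular bounded \emph{finitely additive} set functions, and the conjugate problem a priori ranges over objects that are not countably additive and hence are not couplings in $\Pi(\mu,\nu)$. Closing this gap requires an additional argument — either proving the theorem first for compact $\x,\y$ (where $C(K)^*=M(K)$ cleanly) and then extending by restriction and tightness, as Villani does, or showing directly that the optimal finitely additive element must be a tight Radon probability measure. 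You single out the lower-semicontinuous extension as ``the main obstacle,'' but the passage from $(C_b)^*$ to genuine couplings is at least as delicate, and your current write-up silently treats it as if Riesz applied.
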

	The linear program~\eqref{eq:dual} optimizes over the two {\em Kantorovich potentials} $\psi \in \lx$ and $\phi \in \ly$, but it can be reformulated as the following non-linear program over a single potential function,
	\begin{equation}
	\label{eq:ctans_dual}
	W_c(\mu, \nu) =\sup_{\phi \in \mathcal{L}(\y, \nu)} ~ \DS \mathbb{E}_{\bs y \sim \nu}\left[\phi(\bs y)\right] - \mathbb{E}_{\bs x \sim \mu}\left[ \phi_c(\bs x) \right],
	\end{equation}
	where $\phi_c:\x\rightarrow [-\infty,+\infty]$ is called the \textit{$c$-transform} of $\phi$ and is defined through
	\begin{equation}
	\label{eq:$c$-transform}
	\phi_c(\bs x) = \sup_{\bs y \in \y} ~ \phi(\bs y) - c(\bs x, \bs y) \qquad \forall \bs x \in \x,
	\end{equation}
	see \citet[\S~5]{villani} for details. The Kantorovich duality is the key enabling mechanism to study the computational complexity of the optimal transport problem~\eqref{eq:primal}. 
	
	\begin{theorem}[Hardness of computing optimal transport distances]
		\label{theorem:hard}
		Computing $W_c(\mu, \nu)$ is \#P-hard even if $\x=\y=\R^d$, $c(\bs x, \bs y) = \|\bs x-\bs y\|^{p}$ for some $p\geq 1$, $\mu$ is the Lebesgue measure on the standard hypercube~$[0,1]^d$, and $\nu$ is a discrete probability measure supported on only two points.
	\end{theorem}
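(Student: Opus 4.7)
The plan is to give a Turing reduction from the $\#$P-hard problem of counting $\{0,1\}^d$-vectors $\bs z$ satisfying $\bs w^\top \bs z \le b$ for a given weight vector $\bs w \in \mathbb{Z}_+^d$ and integer threshold $b$. This counting problem is equivalent, up to a routine rescaling, to computing the volume $V(b) := \mathrm{Vol}(\{\bs x \in [0,1]^d : \bs w^\top \bs x \le b\})$, so the task reduces to extracting $V(b)$ from a small number of evaluations of $W_c$ on carefully chosen two-point measures.

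I would specialize to $p = 2$ and, given a knapsack instance $(\bs w, b)$, place the atoms of $\nu$ at $\bs y_1 = -\alpha \bs w$ and $\bs y_2 = +\alpha \bs w$ for a suitable constant $\alpha > 0$, letting the weights vary as $\nu_q = q\delta_{\bs y_1} + (1-q)\delta_{\bs y_2}$ with $q \in (0,1)$. Because $\|\bs x - \bs y_1\|^2 - \|\bs x - \bs y_2\|^2$ is affine in $\bs x$, the $c$-transform~\eqref{eq:ctans_dual} becomes a maximum of two functions whose equality locus is a hyperplane orthogonal to $\bs w$. Proposition~\ref{prop:kantorovich} then reduces the dual to a finite-dimensional concave program in $(\phi_1, \phi_2)$; its first-order optimality conditions force the $\bs y_1$-cell $A_1^{\star}(q)$ to be precisely $\{\bs x \in [0,1]^d : \bs w^\top \bs x \le t^{\star}(q)\}$ with Lebesgue measure $q$, so that $t^{\star}(q) = V^{-1}(q)$.

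Next I would invoke the envelope theorem to identify $\frac{d}{dq}W_c(\mu,\nu_q) = \phi_1^{\star}(q) - \phi_2^{\star}(q)$, an explicit affine function of $t^{\star}(q)$. A finite-difference evaluation of $W_c$ at two nearby values of $q$ thus recovers $V^{-1}(q)$, and a binary search over $q \in (0,1)$ locates the unique $q^{\star}$ with $t^{\star}(q^{\star}) = b$; by construction $q^{\star} = V(b)$. Only $\mathcal{O}(\log(1/\epsilon))$ oracle calls are needed to reach additive accuracy $\epsilon$. The main obstacle is the precision analysis: I must ensure that the $W_c$-oracle needs only polynomially many correct bits, so that the reduction also certifies hardness of \emph{approximate} optimal transport. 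This entails a quantitative Lipschitz bound on $V^{-1}$, stability of the finite-difference derivative under additive error in $W_c$, and a tight upper bound on the bit-length of $V(b)$ for rational knapsack instances, so that a polynomial-precision rational approximation of $q^{\star}$ can be rounded to the exact integer count. Extending the argument to general $p \ge 1$ then requires verifying only that the optimal cells still vary monotonically along the direction $\bs y_2 - \bs y_1$, which follows from the $c$-concavity of the optimal potentials and preserves the binary-search mechanism.
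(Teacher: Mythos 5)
Your high-level strategy---reduce from knapsack-polytope volume via a one-parameter family of two-point measures and a binary search on the parameter---matches the paper's, but your concrete construction fails for $p \neq 2$. The claim that the optimal $\bs y_1$-cell $A_1^\star(q)$ is the sublevel set $\{\bs x\in[0,1]^d : \bs w^\top\bs x\le t^\star(q)\}$, so that $t^\star(q)=V^{-1}(q)$, requires the boundary $\{\bs x : \|\bs x-\bs y_2\|^p - \|\bs x-\bs y_1\|^p = \phi_2^\star - \phi_1^\star\}$ to be a hyperplane orthogonal to $\bs w$. With your symmetric atoms $\pm\alpha\bs w$ this is true when $p=2$ (the difference of squared distances is affine in $\bs x$) and, for general $p$, only on the zero level set $\phi_1^\star=\phi_2^\star$, which passes through the origin. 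Since your construction needs $\phi_1^\star\neq\phi_2^\star$ to push the boundary out to $\bs w^\top\bs x=b>0$, for $p\neq 2$ the Laguerre cell is bounded by a curved hypersurface (a hyperboloid sheet for $p=1$, a cubic surface for $p=4$, and so on), the identity $q=V(t^\star(q))$ breaks, and your appeal to $c$-concavity and "monotonicity along $\bs y_2-\bs y_1$" does not restore the half-space structure. The proposed extension to general $p\ge 1$ therefore does not go through.

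The paper sidesteps this by taking the \emph{asymmetric} atoms $\bs y_1=\bs 0$ and $\bs y_2=2b\bs w/\|\bs w\|^2$, so that at the minimizer $t^\star=\mathrm{Vol}(P(\bs w,b))$ of the strictly convex map $t\mapsto W_c(\mu,\nu_t)$ the dual optimizer can be chosen with $\phi_1^\star=\phi_2^\star$; the cell boundary is then the perpendicular bisector $\{\bs w^\top\bs x=b\}$, a hyperplane for \emph{every} $p\ge 1$ because $s\mapsto s^p$ is monotone. Beyond fixing the $p$-dependence, this also replaces your finite-difference derivative recovery by a derivative-free binary search for the minimizer of a strictly convex function, so the $W_c$-oracle only needs enough accuracy to order function values---avoiding the $\varepsilon/h$ amplification you flag as your ``main obstacle''---and the required precision and number of oracle calls are controlled explicitly via Lemma~\ref{lemma:Grani} and Appendix~\ref{appendix:polynomial_calls}.
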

	

	To prove Theorem~\ref{theorem:hard}, we will show that computing the optimal transport distance $W_c(\mu, \nu)$ is at least as hard computing the volume of the knapsack polytope $P( \bs{w}, b) = \{\bs x\in [0,1]^d :  \bs{w}^\top  \bs x\leq b\}$ for a given $\bs{w}\in \R^d_+$ and $ b \in \R_+$, which is known to be $\#$P-hard \citep[Theorem~1]{dyer1988complexity}. Specifically, we will leverage the following variant of this hardness result, which establishes that approximating the volume of the knapsack polytope $P( \bs{w}, b)$ to a sufficiently high accuracy is already $\#$P-hard.

	
	\begin{lemma}[{\citet[Lemma~1]{Grani}}]
		\label{lemma:Grani}
		Computing the volume of the knapsack polytope $P( \bs{w}, b)$ for a given $\bs{w}\in \R^d_+$ and $ b \in \R_+$ to within an absolute accuracy of $\delta>0$ is $\#$P-hard
		whenever 
		\begin{equation}
		\label{eq:Granis-delta}
		\acc <\frac{1}{ {2d!(\|\bs{w}\|_1+2)^d(d+1)^{d+1}\prod_{i = 1}^{d}w_i}}.
		\end{equation}
	\end{lemma}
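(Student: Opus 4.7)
The plan is to upgrade the classical exact $\#$P-hardness of knapsack-polytope volume computation \citep{dyer1988complexity} to the stated approximation-hardness statement via a standard rounding argument. Concretely, starting from an instance of the exact volume problem---which I may assume has integer weights $\bs w \in \mathbb{Z}^d_+$ and integer capacity $b \in \mathbb{Z}_+$ after clearing denominators with only polynomial blow-up in the bit length---I will show that any single call to an approximation oracle with accuracy $\delta$ in the prescribed range suffices to recover the exact volume, so the approximation problem inherits $\#$P-hardness.

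The key technical step is to derive and exploit the classical inclusion-exclusion identity
\begin{equation*}
\mathrm{vol}(P(\bs w, b)) = \frac{1}{d! \prod_{i=1}^d w_i} \sum_{S \subseteq [d]} (-1)^{|S|} \max\bigl(0,\, b - \sum_{i \in S} w_i\bigr)^d,
\end{equation*}
obtained by writing $P(\bs w, b) = \{\bs x \geq \bs 0 : \bs w^\top \bs x \leq b\} \setminus \bigcup_i \{x_i > 1\}$, applying inclusion-exclusion over the offending half-spaces, and computing each intersection's volume via the change of variable $x_i \mapsto x_i - 1$ for $i \in S$ (which reduces it to the standard corner simplex of volume $(b - \sum_{i \in S} w_i)_+^d / (d!\prod_i w_i)$). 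When $\bs w$ and $b$ are integers, every summand is a non-negative integer, so the bracket equals some integer $N$ and $\mathrm{vol}(P(\bs w, b)) = N/(d!\prod_i w_i)$. In particular, any two realizable values of the volume are separated by at least $1/(d!\prod_i w_i)$.

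The rounding argument then finishes the proof: an approximation $\widehat V$ with $|\widehat V - \mathrm{vol}(P(\bs w, b))| < 1/(2d!\prod_i w_i)$ can be rounded to the unique integer multiple of $1/(d!\prod_i w_i)$ within that radius, thereby recovering $N$ and hence the exact volume. Since the threshold in \eqref{eq:Granis-delta} is strictly smaller than $1/(2d!\prod_i w_i)$---the extra factors $(\|\bs w\|_1+2)^d$ and $(d+1)^{d+1}$ only tighten the bound---any algorithm achieving accuracy $\delta$ in the stated range would yield an exact algorithm, contradicting Dyer--Frieze unless $\mathrm{P}^{\#\mathrm{P}} = \mathrm{P}$.

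No step presents a genuine obstacle; the only substantive content is the inclusion-exclusion formula and the observation that it certifies rationality with denominator at most $d!\prod_i w_i$. The precise form of the bound in \eqref{eq:Granis-delta} is looser than what the proof requires and is calibrated to match how the lemma is subsequently invoked in the proof of Theorem~\ref{theorem:hard}, where these particular constants emerge naturally from the reduction from knapsack-volume to semi-discrete optimal transport.
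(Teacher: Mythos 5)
The paper does not prove this lemma at all---it imports it verbatim from the cited reference \citep{Grani} and uses it as a black box---so there is no in-paper argument to compare against; what matters is whether your standalone derivation holds, and it does. The inclusion--exclusion identity $\mathrm{vol}(P(\bs w,b)) = \frac{1}{d!\prod_i w_i}\sum_{S\subseteq[d]}(-1)^{|S|}\bigl(b-\sum_{i\in S}w_i\bigr)_+^d$ is correct (each $\bigcap_{i\in S}\{x_i\ge 1\}\cap\{x\ge 0,\ \bs w^\top x\le b\}$ is a translate of a corner simplex of volume $(b-\sum_{i\in S}w_i)_+^d/(d!\prod_i w_i)$), it certifies that for integral data the volume is an integer multiple of $1/(d!\prod_i w_i)$, and an oracle with error strictly below half that spacing pins down the multiple uniquely, giving a one-call polynomial-time Turing reduction from the exact problem of \citet{dyer1988complexity}; since $(\|\bs w\|_1+2)^d(d+1)^{d+1}\ge 1$, the threshold in \eqref{eq:Granis-delta} is indeed more restrictive than what your argument needs, so you in fact prove a slightly stronger statement that implies the lemma. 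Two small points worth making explicit: the identity and the bound both presuppose $w_i>0$ for all $i$ (otherwise $\prod_i w_i=0$ and \eqref{eq:Granis-delta} is vacuous), which is harmless because the Dyer--Frieze hard instances have positive integer weights---so your preliminary ``clearing of denominators'' is not even needed---and the closing phrase about ``contradicting Dyer--Frieze unless $\mathrm{P}^{\#\mathrm P}=\mathrm P$'' should really just say that the Turing reduction transfers $\#$P-hardness, which is how the lemma is stated and how Theorem~\ref{theorem:hard} later uses it.
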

	Fix now any knapsack polytope $P( \bs{w}, b)$ encoded by $\bs{w}\in \R_+^d$ {\color{black}and $ b \in  \R_+$. Without loss of generality, we may assume that~$\bm w \neq \bm 0$ and~$b > 0$. Indeed, we are allowed to exclude~$\bm w = \bm 0 $ because the volume of~$P(\bm  0, b) $ is trivially equal to 1. On the other hand, $b= 0$ can be excluded by applying a suitable rotation and translation, which are volume-preserving transformations.} In the remainder, we denote by $\mu$ the Lebesgue measure on the standard hypercube $[0,1]^d$ and by ${\nu}_ t  = t \delta_{\bs y_1} + (1-t) \delta_{\bs y_2}$ a family of discrete probability measures with two atoms at $\bs y_1=\bs 0$ and $\bs y_2=2b\bs{w}/ \|\bs w\|^2$, respectively, whose probabilities are parameterized by $t \in [0, 1]$.
	The following preparatory lemma relates the volume of $P( \bs{{w}},b)$ to the optimal transport problem~\eqref{eq:primal} and is thus instrumental for the proof of Theorem~\ref{theorem:hard}.
	\begin{lemma}
		\label{lem:vol}
		If $c(\bs x, \bs y)=\|\bs x- \bs y \|^p$ for some $p\ge 1$, then we have $\mathop{\rm Vol}(P( \bs{{w}},b)) = \argmin_{ t  \in [0,1]} W_c(\mu,  {\nu}_ t )$. 
	\end{lemma}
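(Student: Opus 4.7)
The plan is to reduce the transport problem to a pointwise minimization, exploiting that $\nu_t$ has only two atoms. The key geometric observation is that the ``closer to $\bs y_1$'' region inside $[0,1]^d$ is precisely the knapsack polytope $P(\bs w, b)$: the inequality $c(\bs x, \bs y_1) \le c(\bs x, \bs y_2)$ is equivalent to $\|\bs x\|^2 \le \|\bs x - \bs y_2\|^2$, i.e., $2\bs x^{\top} \bs y_2 \le \|\bs y_2\|^2$, which after substituting $\bs y_2 = 2b\bs w/\|\bs w\|^2$ and using $b > 0$ reduces to $\bs w^{\top} \bs x \le b$. Hence $A_1 \defeq \{\bs x \in [0,1]^d : c(\bs x, \bs y_1) \le c(\bs x, \bs y_2)\}$ agrees with $P(\bs w, b)$ up to the null hyperplane $\{\bs w^{\top}\bs x = b\}$, and $\mu(A_1) = \mathop{\rm Vol}(P(\bs w, b))$.

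Next, I would rewrite $W_c(\mu, \nu_t)$ as a scalar optimization. Any coupling $\pi \in \Pi(\mu, \nu_t)$ disintegrates along $\mu$ into a measurable function $\alpha:[0,1]^d \to [0,1]$ given by $\alpha(\bs x) = \pi[\bs y = \bs y_1 \mid \bs x]$, with the second-marginal constraint turning into $\int \alpha \, d\mu = t$. Setting $g(\bs x) \defeq c(\bs x, \bs y_1) - c(\bs x, \bs y_2)$, the transport cost reads
\[
\int c(\bs x, \bs y_2)\, \mu(d\bs x) + \int \alpha(\bs x)\, g(\bs x)\, \mu(d\bs x).
\]
Minimizing jointly over $t \in [0,1]$ and $\alpha$ drops the equality constraint and leaves a pointwise problem whose essentially unique minimizer is $\alpha^{\star} = \mathds{1}_{A_1}$, corresponding to $t^{\star} = \mu(A_1) = \mathop{\rm Vol}(P(\bs w, b))$. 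Thus $t^{\star} \in \argmin_{t \in [0,1]} W_c(\mu, \nu_t)$.

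To conclude I would establish strict uniqueness of the minimizer. Since $\bs w \neq \bs 0$, the hyperplane $\{\bs w^{\top} \bs x = b\}$ has Lebesgue measure zero, so $g < 0$ a.e.\ on $A_1$ and $g > 0$ a.e.\ on its complement $A_2 = [0,1]^d \setminus A_1$. For any $t \neq t^{\star}$, setting $\beta = \alpha - \mathds{1}_{A_1}$ gives $\int \beta\, d\mu = t - t^{\star} \neq 0$, whence either $\beta > 0$ on a $\mu$-positive subset of $A_2$ (if $t > t^{\star}$) or $\beta < 0$ on a $\mu$-positive subset of $A_1$ (if $t < t^{\star}$); in either case $\beta g \ge 0$ a.e.\ and $\int \beta g\, d\mu > 0$, so $W_c(\mu, \nu_t) > W_c(\mu, \nu_{t^{\star}})$. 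The main obstacle is making this last strict-inequality step fully rigorous, since it requires the measurable disintegration of $\pi$ and the a.e.\ sign of $g$ on the non-boundary part of $[0,1]^d$; both are routine in this semi-discrete two-atom setting.
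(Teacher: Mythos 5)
Your proof is correct and takes a genuinely different route from the paper's. You disintegrate the coupling $\pi$ along $\mu$, reducing the transport cost to a scalar control $\alpha(\bs x) = \pi[\bs y = \bs y_1 \mid \bs x]$ with the second-marginal constraint $\int\alpha\,\diff\mu = t$, and then observe that the joint minimization over $(t,\alpha)$ is pointwise, which immediately identifies $\alpha^\star = \mathds{1}_{A_1}$ and $t^\star = \mu(A_1) = \mathrm{Vol}(P(\bs w, b))$. The paper instead disintegrates along $\nu_t$, writing $\pi = t\,q_1\otimes\delta_{\bs y_1} + (1-t)\,q_2\otimes\delta_{\bs y_2}$ and eliminating the bilinear terms by a change of variables; both routes produce the same formula $\int\min_i\|\bs x-\bs y_i\|^p\,\mu(\diff\bs x)$ for the joint minimum, but you arrive there more directly. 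The more substantive divergence is in establishing that $t^\star$ is the minimizer: the paper passes to Kantorovich duality, applies the Reynolds theorem to differentiate the dual objective in $\bs\phi$, and uses first-order optimality conditions to show that a constant potential is optimal at $t^\star$, whereas you stay entirely in the primal. You also supply a complete uniqueness argument, showing that $\beta = \alpha - \mathds{1}_{A_1}$ satisfies $\beta g \ge 0$ $\mu$-a.e.\ with $\int\beta g\,\diff\mu > 0$ whenever $t\neq t^\star$; the paper merely asserts strict convexity of $t\mapsto W_c(\mu,\nu_t)$ and explicitly omits the details. Your approach is therefore more elementary and more complete, while the paper's dual argument fits the dual-centric framework developed in the remainder of the article. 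As for the step you flag as the main obstacle: the conditional $\alpha$ can be constructed directly as the Radon--Nikodym derivative of $\mathcal B\mapsto\pi(\mathcal B\times\{\bs y_1\})$ with respect to $\mu$, so no general disintegration theorem is required and the step is indeed routine.
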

	\begin{proof}
		By the definition of the optimal transport distance in~\eqref{eq:primal} and our choice of~$c(\bs x, \bs y)$, we have
		\begin{align*}
		\underset{ t  \in [0,1]}{\min}W_c(\mu,  {\nu}_ t )&= \underset{ t  \in [0,1]}{\min} ~ \min\limits_{\pi \in \Pi(\mu,\nu_t)} ~ \mathbb{E}_{(\bs x, \bs y)\sim\pi}\left[\|\bs x- \bs y \|^p \right] \\[0.5ex] &=\min\limits_{ t  \in [0,1]}~ \left\{\begin{array}{cl} \min\limits_{q_1, q_2 \in \mathcal{P}(\R^d)}& t \displaystyle\int_{\R^d} \|\bs x-\bs y_1\|^p q_1(\diff \bs x) + (1-t) \displaystyle\int_{\R^d}\left \|\bs x-\bs y_2 \right\|^p q_2(\diff \bs x)\\ [3ex]
		\textrm{s.t.}& t \cdot  q_1 + (1-t) \cdot q_2 = \mu,
		\end{array}\right.
		\end{align*}
		where the second equality holds because any coupling $\pi$ of $\mu$ and $\nu_t$ can be constructed from the marginal probability measure $\nu_t$ of $\bs y$ and the probability measures $q_1$ and $q_2$ of $\bs x$ conditional on $\bs y =\bs y_1$ and $\bs y = \bs y_2$, respectively, that is, we may write $\pi= t\cdot q_1\otimes \delta_{\bs y_1} + (1-t)\cdot q_2\otimes \delta_{\bs y_2}$. The constraint of the inner minimization problem ensures that the marginal probability measure of $\bs x$ under $\pi$ coincides with $\mu$. By applying the variable transformations $q_1\leftarrow t \cdot q_1 $ and $q_2 \leftarrow (1-t)\cdot q_2$ to eliminate all bilinear terms, we then obtain
		\begin{equation*}
		\underset{ t  \in [0,1]}{\min}W_c(\mu,  {\nu}_ t )=\left\{\begin{array}{cll}
		\underset{\substack{ t \in [0,1] \\ q_1, q_2 \in \mathcal{M}(\R^d)}}{\min} &\displaystyle\int_{\R^d} \|\bs x -\bs y_1\|^p q_1(\diff \bs x) + \displaystyle\int_{\R^d} \left\|\bs x-\bs y_2 \right\|^p  q_2(\diff \bs x)\\[3ex]
		\textrm{s.t.} &\displaystyle\int_{\R^d} q_1(\diff \bs x) =  t  \\[3ex]
		&\displaystyle\int_{\R^d} q_2(\diff \bs x) = 1-  t  \\[3ex]
		&  q_1 +  q_2 = \mu.
		\end{array}\right.
		\end{equation*}
		Observe next that the decision variable $t$ and the two normalization constraints can be eliminated without affecting the optimal value of the resulting infinite-dimensional linear program because the Borel measures $q_1$ and $q_2$ are non-negative and because the constraint $q_1+q_2=\mu$ implies that $q_1(\R^d)+q_2(\R^d)=\mu(\R^d)=1$. Thus, there always exists $t\in[0,1]$ such that $q_1(\R^d)=t$ and $q_2(\R^d)=1-t$. This reasoning implies that
		\begin{equation*}
		\underset{ t  \in [0,1]}{\min}W_c(\mu,  {\nu}_ t )=\left\{\begin{array}{ccll}
		&\min\limits_{q_1,q_2\in \mathcal{M}(\R^d)}\; &
		\displaystyle\int_{\R^d} \|\bs x -\bs y_1\|^p q_1(\diff \bs x) + \displaystyle\int_{\R^d}\left \|\bs x-\bs y_2 \right\|^p  q_2(\diff \bs x) \\[3ex]
		& \textrm{s.t.} & q_1 +  q_2= \mu.
		\end{array}\right.
		\end{equation*} 
		The constraint $q_1+q_2=\mu$ also implies that $q_1$ and $q_2$ are absolutely continuous with respect to $\mu$, and~thus 
		\begin{align}
		\underset{ t  \in [0,1]}{\min}W_c(\mu,  {\nu}_ t )& =\left\{\begin{array}{ccll}
		&\min\limits_{q_1,q_2\in \mathcal{M}(\R^d)}\; &
		\displaystyle\int_{\R^d} \|\bs x -\bs y_1\|^p \frac{\diff q_1}{\diff \mu}(\bs x) +\left \|\bs x-\bs y_2 \right\|^p \, \frac{\diff q_2}{\diff \mu}(\bs x)\, \mu(\diff \bs x) \\[3ex]
		& \textrm{s.t.} & \displaystyle \frac{\diff q_1}{\diff \mu}(\bs x) +  \frac{\diff q_2}{\diff \mu}(\bs x)= 1 \quad \forall \bs x\in [0,1]^d
		\end{array}\right. \nonumber\\
		&= \int_{\R^d} \min\left\{\|\bs x -\bs y_1 \|^p,\left\|\bs x - \bs y_2 \right\|^p \right\}\,\mu(\diff\bs x),
		\label{eq:min-Wc}
		\end{align} 
		where the second equality holds because at optimality the Radon-Nikodym derivatives must satisfy
		\[
		\frac{\diff q_i}{\diff \mu}(\bs x)=\left\{\begin{array}{cl}
		1 & \text{if } \|\bs x-\bs y_i\|^p \le \|\bs x-\bs y_{3-i}\|^p \\
		0 & \text{otherwise}
		\end{array} \right.
		\]
		for $\mu$-almost every $\bs x\in \R^d$ and for every $i=1,2$. 

		In the second part of the proof we will demonstrate that the minimization problem $\min_{t\in[0,1]} W_c(\mu, \nu_ t )$ is solved by $t^\star=\textrm{Vol}(P(\bs{w}, b))$. By Proposition~\ref{prop:kantorovich} and the definition of the $c$-transform, we first note that
		\begin{align}
		W_c(\mu, \nu_ {t^\star} ) 
		&=\underset{\phi \in \mathcal{L}(\R^d, \nu_{t^\star})}{\max} ~ \mathbb{E}_{\bs y\sim \nu_{t^\star}}[\phi(\bs y)] - \mathbb{E}_{\bs x\sim\mu}[\phi_c(\bs x)] \nonumber \\
		\label{eq:min_wass}
		&= \underset{\bs \phi \in \R^2}{\max} ~ t^\star \cdot \phi_1   + (1- t^\star ) \cdot\phi_2- \int_{\R^d}\max_{i=1,2}\left\{\phi_i- \|\bs x-\bs y_i \|^p\right\}\mu(\diff \bs x)\\
		&= \max\limits_{\bs \phi \in \R^2} ~ t^\star \cdot \phi_1   + (1-t^\star)\cdot \phi_2- \sum\limits_{i = 1}^2 \int_{\x_i(\bs \phi)}(\phi_i - \|\bs x - \bs {y_i}\|^p)\,\mu(\diff \bs x), \nonumber
		\end{align}
		where 
		\[
		\x_i(\bs \phi) = \{\bs x\in\R^d: \phi_i - \|\bs x-\bs y_i \|^p \geq  \phi_{3-i} - \left \|\bs x - \bs y_{3-i} \right\|^p\}\quad \forall i=1,2.
		\] 
		The second equality in~\eqref{eq:min_wass} follows from the construction of $\nu_{t^\star}$ as a probability measure with only two atoms at the points $\bs y_i$ for $i=1,2$. Indeed, by fixing the corresponding function values $\phi_i=\phi(\bs y_i)$ for $i=1,2$, the expectation $\mathbb{E}_{\bs y \sim \nu_{t\opt}}[\phi(\bs y)]$ simplifies to $t^\star \cdot \phi_1   + (1-t^\star)\cdot \phi_2$, while the negative expectation $-\mathbb{E}_{\bs x \sim \mu}[\phi_c(\bs x)]$ is maximized by setting $\phi(\bm y)$ to a large negative constant for all $\bs y\notin\{\bs y_1,\bs y_2\}$, which implies~that
		\[
		\phi_c(\bs x) = \sup_{\bs y \in \R^d} \phi(\bs y) - \|\bs x - \bs y\|^p = \max_{i=1,2}\left\{\phi_i- \|\bs x-\bs y_i \|^p\right\} \quad \forall \bs x\in [0,1]^d.
		\]
		Next, we will prove that any $\bs \phi^\star\in\R^2$ with $\phi^\star_1=\phi^\star_2$ attains the maximum of the unconstrained convex optimization problem on the last line of~\eqref{eq:min_wass}. To see this, note that
		\[
		\nabla_{\bs \phi} \left[\sum\limits_{i = 1}^2 \int_{\x_i(\bs \phi)}(\phi_i - \|\bs x - \bs {y}_i\|^p)\,\mu(\diff \bs x)\right] = \sum\limits_{i = 1}^2 \int_{\x_i(\bs \phi)} \nabla_{\bs \phi}(\phi_i - \|\bs x - \bs {y}_i\|^p)\,\mu(\diff \bs x) =\begin{bmatrix} \mu(\x_1(\bs \phi))\\ \mu(\x_2(\bs \phi)) \end{bmatrix}
		\]
		by virtue of the Reynolds theorem. Thus, the first-order optimality condition\footnote{Note that the first-order condition $1-t^\star=\mu(\x_2(\bs \phi))$ for $\phi_2$ is redundant in view of the first-order condition $t^\star=\mu(\x_1(\bs \phi))$ for $\phi_1$ because $\mu$ is the Lebesgue measure on $[0,1]^d$, whereby $\mu(\x_1(\bs \phi)\cup\x_2(\bs \phi))=\mu(\x_1(\bs \phi))+\mu(\x_2(\bs \phi))=1$.} $t^\star=\mu(\x_1(\bs \phi))$ is necessary and sufficient for global optimality. Fix now any $\bs \phi^\star\in\R^2$ with $\phi^\star_1=\phi^\star_2$ and observe that
		\begin{align*}
		t^\star=\textrm{Vol}(P(\bs{w}, b)) =& \mu\left(\left\{\bs x\in\R^d: \bs w^\top\bs x\leq b \right\}\right)\\
		=&\mu\left( \left\{\bs x\in\R^d: \|\bs x \|^2\leq \|\bs x-2b \bs w/\|\bs w\|^2\|^2 \right\}\right)\\
		=& \mu\left(\left\{\bs x\in\R^d: \|\bs x -\bs y_1\|^p\leq \|\bs x-\bs y_2\|^p \right\}\right)=\mu(\x_1(\bs \phi^\star)),
		\end{align*}
		where the first and second equalities follow from the definitions of $t^\star$ and the knapsack polytope $P(\bs{w}, b)$, respectively, the fourth equality holds because $\bs y_1=\bs 0$ and $\bs y_2=2b\bs w/\|\bs w\|^2$, and the fifth equality follows from the definition of $\x_1(\bs \phi^\star)$ and our assumption that $\phi^\star_1=\phi^\star_2$. This reasoning implies that $\bs \phi^\star$ attains indeed the maximum of the optimization problem on the last line of~\eqref{eq:min_wass}. Hence, we find
		\begin{align*}
		W_c(\mu, \nu_ {t^\star} ) &= t^\star \cdot \phi^\star_1   + (1-t^\star)\cdot \phi^\star_2- \sum\limits_{i = 1}^2 \int_{\x_i(\bs \phi^\star)}(\phi^\star_i - \|\bs x - \bs {y_i}\|^p)\,\mu(\diff \bs x)\\
		&= \sum\limits_{i = 1}^2 \int_{\x_i(\bs \phi^\star)} \|\bs x - \bs {y_i}\|^p \,\mu(\diff \bs x) = \int_{\R^d} \min_{i=1,2}\left\{\|\bs x -\bs y_i \|^p\right\}\,\mu(\diff\bs x) =\underset{ t  \in [0,1]}{\min}W_c(\mu,  {\nu}_ t ),
		\end{align*}
		where the second equality holds because $\phi^\star_1=\phi^\star_2$, the third equality exploits the definition of $\x_1(\bs \phi^\star)$, and the fourth equality follows from~\eqref{eq:min-Wc}. We may therefore conclude that $t^\star=\textrm{Vol}(P(\bs{w}, b))$ solves indeed the minimization problem $\min_{t\in[0,1]} W_c(\mu, \nu_ t )$. {\color{black} Using similar techniques, one can further prove that~$\partial_t W_c(\mu, \nu_t)$ exists and is strictly increasing in~$t$, which ensures that~$W_c(\mu, \nu_t)$ is strictly convex in~$t$ and, in particular, that~$t^\star $ is the unique solution of $\min_{t\in[0,1]} W_c(\mu, \nu_ t )$. Details are omitted for brevity.}
	\end{proof}

	\begin{proof}[Proof of Theorem~\ref{theorem:hard}.]
		Lemma~\ref{lem:vol} applies under the assumptions of the theorem, and therefore the volume of the knapsack polytope~$P(\bs{w}, b)$ coincides with the unique minimizer of
		\begin{equation}
		\label{eq:min-Wass}
		\min_{ t  \in [0,1]} W_c(\mu,  {\nu}_ t ).
		\end{equation}
		{\color{black} From the proof of Lemma~\ref{lem:vol} we know that the Wasserstein distance $W_c(\mu,{\nu}_ t )$ is strictly convex in~$t$, which implies that the minimization problem~\eqref{eq:min-Wass} constitutes a one-dimensional convex program with a unique minimizer. A near-optimal solution that approximates the exact minimizer to within an absolute accuracy~$\delta=(6d!(\|\bs{w}\|_1+2)^d(d+1)^{d+1}\prod_{i = 1}^{d}w_i)^{-1}$ can readily be computed with a binary search method such as  Algorithm~\ref{algorithm:binary} described in Lemma~\ref{lemma:strictly_convex_min}\,(i), which evaluates~$g(t)=W_c(\mu,\nu_t)$ at exactly~$2L=2(\ceil{\log_2(1/\delta)} + 1)$ test points. Note that~$\delta$ falls within the interval $(0, 1)$ and satisfies the strict inequality~\eqref{eq:Granis-delta}. Note also that~$L$ grows only polynomially with the bit length of $\bm w$ and $b$; see Appendix~\ref{appendix:polynomial_calls} for details. One readily verifies that all operations in Algorithm~\ref{algorithm:binary} except for the computation of~$W_c(\mu, \nu_t)$ can be carried out in time polynomial in the bit length of~$\bs{w}$ and~$b$.} Thus, if we could compute $W_c(\mu, \nu_t)$ in time polynomial in the bit length of $\bs{w}$, $b$ and $t$, then we could efficiently compute the volume of the knapsack polytope~$P( \bs{w}, b)$ to within accuracy $\delta$, which is $\#$P-hard by Lemma~\ref{lemma:Grani}. We have thus constructed a polynomial-time Turing reduction from the $\#$P-hard problem of (approximately) computing the volume of a knapsack polytope to computing the Wasserstein distance $W_c(\mu,  {\nu}_ t )$. By the definition of the class of $\#$P-hard problems (see, {\em e.g.}, \citep[Definition~1]{ref:van1990handbook}), we may thus conclude that computing $W_c(\mu, \nu_t)$ is $\#$P-hard.
	\end{proof}  
	{\color{black} 
	\begin{corollary}[Hardness of computing approximate optimal transport distances]
	\label{corollary:approximate-hard}
	Computing $W_c(\mu, \nu)$ to within an absolute accuracy of
	\[
	    \eps =\frac{1}{4} \min\limits_{l\in [ 2^L]} \left\{ |W_c(\mu, \nu_{t_{l}}) - W_c(\mu, \nu_{t_{l-1}})| : W_c(\mu, \nu_{t_{l}}) \neq W_c(\mu, \nu_{t_{l-1}})\right\},
	\]
	where $L = \ceil{\log_2(1/ \delta)} + 1$, $\delta = (6 d!(\|\bs{w}\|_1+2)^d(d+1)^{d+1}\prod_{i = 1}^{d}w_i)^{-1} $ and $t_l = l/ 2^{L}$ for all~$l =0, \ldots, 2^L$, is \#P-hard even if $\x=\y=\R^d$, $c(\bs x, \bs y) = \|\bs x-\bs y\|^{p}$ for some $p\geq 1$, $\mu$ is the Lebesgue measure on the standard hypercube~$[0,1]^d$, and $\nu$ is a discrete probability measure supported on only two points.
	\end{corollary}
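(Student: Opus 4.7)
The plan is to extend the Turing reduction from the proof of Theorem~\ref{theorem:hard} so that it requires access only to an $\eps$-approximate oracle for the Wasserstein distance. Recall that the reduction drives the binary search routine of Algorithm~\ref{algorithm:binary} on the strictly convex map $g(t) = W_c(\mu, \nu_t)$ restricted to the dyadic grid $\{t_0, \ldots, t_{2^L}\}$, with the goal of approximating the unique minimizer $t^\star = \mathop{\rm Vol}(P(\bs w, b))$ to within absolute accuracy~$\delta$. Crucially, the control flow of the binary search depends only on comparisons between values of $g$ at consecutive grid points $t_l$ and $t_{l-1}$, not on the actual values themselves, which matches the form of the minimum that defines~$\eps$ in the statement.

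First I would verify that the constant~$\eps$ appearing in the statement is strictly positive and of polynomial bit length. Strict positivity follows from the strict convexity of $g$ established at the end of the proof of Lemma~\ref{lem:vol}, which in particular rules out $g$ being constant across the grid; hence at least one consecutive pair $(t_{l-1}, t_l)$ yields $g(t_{l-1}) \neq g(t_l)$, and the minimum in the definition of~$\eps$ is taken over a nonempty set. The polynomial bit-length bound is inherited from that of~$\delta$ (see Appendix~\ref{appendix:polynomial_calls}), together with the observation that the dyadic test points $t_l = l/2^L$ themselves have polynomial bit length in the encoding of~$\bs w$ and~$b$.

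Next I would show that replacing each exact value $g(t_l)$ by an arbitrary $\eps$-approximation $\widehat g(t_l)$ leaves every comparison performed by the binary search unchanged in its consequence. Fix a consecutive pair $(t_{l-1}, t_l)$ queried by the algorithm. If $g(t_{l-1}) \neq g(t_l)$, then $|g(t_{l-1}) - g(t_l)| \geq 4\eps$ by the definition of~$\eps$, so the two $\eps$-approximations preserve the sign of the difference. If instead $g(t_{l-1}) = g(t_l)$, then strict convexity of~$g$ forces $t^\star$ to lie strictly between $t_{l-1}$ and~$t_l$, so either outcome of the comparison preserves the invariant that the binary search's current search interval contains~$t^\star$. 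In both cases the binary search terminates with the same $\delta$-accurate approximation of~$t^\star$ that it would have produced given an exact oracle.

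Combining these ingredients, an $\eps$-accurate oracle for $W_c(\mu, \nu)$ yields in polynomial time a $\delta$-accurate approximation of $\mathop{\rm Vol}(P(\bs w, b))$, which is $\#$P-hard by Lemma~\ref{lemma:Grani} (whose hypothesis is met because our~$\delta$ is strictly smaller than the right-hand side of~\eqref{eq:Granis-delta}). The hard part will be the rigorous handling of the ``equal values'' case: one must argue that whichever branching rule Algorithm~\ref{algorithm:binary} employs when two queried values coincide, the invariant ``$t^\star$ belongs to the current search interval'' is preserved by strict convexity of~$g$. Once that detail is in place, the remainder of the argument is a direct adaptation of the proof of Theorem~\ref{theorem:hard}.
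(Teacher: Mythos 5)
Your proposal takes essentially the same route as the paper, which abstracts the argument into Lemma~\ref{lemma:strictly_convex_min}\,(ii): run the binary search over the dyadic grid $\{t_0,\dots,t_{2^L}\}$ using only comparisons between neighboring function values, observe that the definition of $\eps$ ensures each nonzero difference $g(t_l)-g(t_{l-1})$ is at least $4\eps$ in magnitude so that $\eps$-approximate values preserve the sign of every such difference, and handle separately the single pair (if any) where the difference vanishes.

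Two small imprecisions are worth flagging. First, you claim the inexact search ``terminates with the \emph{same} $\delta$-accurate approximation'' as the exact search. This is not quite right: when $g(t_{l^\star-1})=g(t_{l^\star})$, the inexact comparison at $l^\star$ can go either way, and the binary search may output $l^\star-1$ instead of $l^\star$. The paper accepts this and proves only $|\tilde l^\star-l^\star|\leq 1$, hence a $2\delta$-approximation of $t^\star$, and compensates by choosing $\delta=(6\,d!\cdots)^{-1}$ precisely so that $\delta'=2\delta$ still satisfies the strict Grani inequality~\eqref{eq:Granis-delta}. Your sketch doesn't explicitly account for this factor of two and needs either to track it as the paper does, or to upgrade your observation ``strict convexity forces $t^\star\in(t_{l^\star-1},t_{l^\star})$'' into an explicit argument that both candidate outputs $l^\star$ and $l^\star-1$ are then within grid spacing $2^{-L}<\delta$ of $t^\star$. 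Second, the invariant you state --- ``the current search interval contains $t^\star$'' --- is not literally maintained by the algorithm even in the exact case (the algorithm tracks an index interval bracketing the sign change in the slope array, not a $t$-interval bracketing the minimizer); the correct formulation is the one in the paper, namely that the returned index $\tilde l^\star$ differs from $l^\star$ by at most one. These are straightforward to repair and do not affect the soundness of the overall strategy.
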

	\begin{proof}
		{\color{black} 
		Assume that we have access to an inexact oracle that outputs, for any fixed~$t\in[0,1]$, an approximate optimal transport distance~$\widetilde W_c(\mu, \nu_t)$ with~$|\widetilde W_c(\mu, \nu_t) - W_c(\mu, \nu_t) |\leq \eps$. By Lemma~\ref{lemma:strictly_convex_min}\,(ii), which applies thanks to the definition of~$\eps$, we can then find a $2\delta$-approximation for the unique minimizer of~\eqref{eq:min-Wass} using $2L$ oracle calls. Note that $\delta'=2\delta$ falls within the interval $(0, 1)$ and satisfies the strict inequality~\eqref{eq:Granis-delta}. Recall also that $L$ grows only polynomially with the bit length of $\bm w$ and $b$; see Appendix~\ref{appendix:polynomial_calls} for details. Thus, if we could compute $\widetilde W_c(\mu, \nu_t)$ in time polynomial in the bit length of $\bs{w}$, $b$ and $t$, then we could efficiently compute the volume of the knapsack polytope~$P( \bs{w}, b)$ to within accuracy $\delta'$, which is $\#$P-hard by Lemma~\ref{lemma:Grani}. Computing $W_c(\mu, \nu)$ to within an absolute accuracy of~$\eps$ is therefore also $\#$P-hard.
		}
	\end{proof}
	}
	The hardness of optimal transport established in Theorem~\ref{theorem:hard} {\color{black} and Corollary~\ref{corollary:approximate-hard}} is predicated on the hardness of numerical integration. A popular technique to reduce the complexity of numerical integration is smoothing, whereby an initial (possibly discontinuous) integrand is approximated with a differentiable one \citep{dick2013high}.
	Smoothness is also a desired property of objective functions when designing scalable optimization algorithms \citep{bubeck2015convex}.
	These observations prompt us to develop a systematic way to smooth the optimal transport problem that leads to efficient approximate numerical solution schemes. 
	\section{Smooth Optimal Transport}
	\label{sec:smooth_ot}
	The semi-discrete optimal transport problem evaluates the optimal transport distance~\eqref{eq:primal} between an arbitrary probability measure~$\mu$ supported on $\x$ and a discrete probability measure $\nu = \sum_{i=1}^N {\nu}_i\delta_{\bs {y_i}}$ with atoms~$\bs y_1,\ldots, \bs {y}_N \in \y$ and corresponding probabilities $\bs \nu =(\nu_1,\ldots, \nu_N)\in \Delta^N$ for some $N\ge 2$. 
	In the following, we define the {\em discrete $c$-transform} $\psi_c:\R^N\times \x \rightarrow [-\infty,+\infty)$ of $\bs\phi\in\R^N$ through
	\begin{equation}
	\label{eq:disc-c-transform}
	\psi_c(\bs \phi, \bs x) = \max\limits_{i \in [N]} \phi_i - c(\bs x, \bs y_i) \quad \forall \bs x \in \x.
	\end{equation}
	Armed with the discrete $c$-transform, we can now reformulate the semi-discrete optimal transport problem as a finite-dimensional maximization problem over a single dual potential vector.
	\begin{lemma}[Discrete $c$-transform]
		\label{lem:disc_ctrans}
		The semi-discrete optimal transport problem is equivalent to 
		\begin{equation}
		\label{eq:ctans_dual_semidisc}
		W_c(\mu, \nu) = \sup_{ \bs{\phi} \in \R^N} \bs {\nu}^\top \bs {\phi} - \mathbb{E}_{\bs x \sim \mu}[{\psi_c(\bs \phi, \bs x) } ].
		\end{equation}
	\end{lemma}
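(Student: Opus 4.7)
The plan is to specialize the Kantorovich duality~\eqref{eq:ctans_dual} to the discrete target measure $\nu = \sum_{i=1}^N \nu_i \delta_{\bs y_i}$, showing that the infinite-dimensional supremum over $\phi \in \mathcal{L}(\y, \nu)$ collapses to a finite-dimensional supremum over $\bs \phi \in \R^N$. The finite support of $\nu$ has two effects: it makes $\mathbb{E}_{\bs y \sim \nu}[\phi(\bs y)]$ depend on $\phi$ only through the $N$ numbers $\phi_i = \phi(\bs y_i)$, and it leaves us free to choose $\phi$ on the complement of the atoms so as to drive the standard $c$-transform $\phi_c$ as low as possible without altering the dual objective value.

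First, for the inequality ``$\leq$'', I would take any $\phi \in \mathcal{L}(\y, \nu)$ and set $\bs \phi = (\phi(\bs y_1),\ldots,\phi(\bs y_N))$. Then $\mathbb{E}_\nu[\phi] = \bs \nu^\top \bs \phi$, and restricting the supremum in the definition of the $c$-transform~\eqref{eq:$c$-transform} to the atoms of $\nu$ gives the pointwise bound $\phi_c(\bs x) \geq \max_{i \in [N]}\bigl\{\phi_i - c(\bs x, \bs y_i)\bigr\} = \psi_c(\bs \phi, \bs x)$. Integrating against $\mu$ and recombining the terms yields
\[
\mathbb{E}_\nu[\phi] - \mathbb{E}_\mu[\phi_c] \;\leq\; \bs \nu^\top \bs \phi - \mathbb{E}_\mu[\psi_c(\bs \phi, \cdot)],
\]
and taking suprema on both sides using~\eqref{eq:ctans_dual} gives the first inequality.

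For the reverse inequality ``$\geq$'', given any $\bs \phi \in \R^N$ I would construct a sequence of admissible dual functions whose values on the atoms match the entries of $\bs \phi$ but decay to $-\infty$ elsewhere. Concretely, define $\phi_M(\bs y_i) = \phi_i$ for $i \in [N]$ and $\phi_M(\bs y) = -M$ otherwise; each $\phi_M$ is Borel measurable, $\nu$-integrable with $\mathbb{E}_\nu[\phi_M] = \bs \nu^\top \bs \phi$, and satisfies
\[
(\phi_M)_c(\bs x) \;=\; \max\!\Bigl\{\psi_c(\bs \phi, \bs x),\; \sup_{\bs y \notin \{\bs y_1, \ldots, \bs y_N\}} \bigl(-M - c(\bs x, \bs y)\bigr)\Bigr\}
\]
by virtue of $c \geq 0$. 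The second argument of the max is dominated by $-M$ and thus decreases to $-\infty$ as $M \to \infty$, so $(\phi_M)_c \downarrow \psi_c(\bs \phi, \cdot)$ pointwise. Monotone convergence then yields $\mathbb{E}_\mu[(\phi_M)_c] \to \mathbb{E}_\mu[\psi_c(\bs \phi, \cdot)]$; substituting $\phi_M$ into~\eqref{eq:ctans_dual} and passing to the limit produces $W_c(\mu,\nu) \geq \bs \nu^\top \bs \phi - \mathbb{E}_\mu[\psi_c(\bs \phi, \cdot)]$, and a final supremum over $\bs \phi \in \R^N$ closes the argument.

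The main technical subtlety is ensuring that $\psi_c(\bs \phi, \cdot) \in \mathcal{L}(\x, \mu)$ and that the limit exchange in the last step is legitimate. Both reduce to finding an integrable majorant, which is immediate: $\psi_c(\bs \phi, \cdot)$ is a maximum of $N$ Borel-measurable functions and is sandwiched between $\phi_1 - c(\cdot, \bs y_1)$ and $\max_i \phi_i$, so integrability follows whenever $c(\cdot, \bs y_1)$ is $\mu$-integrable---a standing assumption under which $W_c(\mu,\nu)$ is finite and Kantorovich duality applies. When this fails, both sides of~\eqref{eq:ctans_dual_semidisc} are $+\infty$ and the identity holds trivially.
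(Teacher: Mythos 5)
Your proposal is correct and follows essentially the same route as the paper: both arguments reduce the Kantorovich dual~\eqref{eq:ctans_dual} to a finite-dimensional supremum by noting that $\mathbb{E}_{\bs y\sim\nu}[\phi(\bs y)]$ depends only on the values $\phi(\bs y_i)$ and that setting $\phi$ to a large negative constant off the atoms drives $\phi_c$ down to $\psi_c(\bs\phi,\cdot)$ in the limit. Your write-up is somewhat more careful than the paper's (splitting into two inequalities and justifying the limit exchange via monotone convergence and an integrable majorant), but the underlying mechanism is identical.
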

	\begin{proof}
		As $\nu = \sum_{i=1}^N {\nu}_i\delta_{\bs {y_i}}$ is discrete, the dual optimal transport problem~\eqref{eq:ctans_dual} simplifies to
		\begin{align*}
		W_c(\mu, \nu) & =\sup_{\bs \phi\in\R^N} \sup_{\phi \in \mathcal{L}(\y, \nu)} \left\{ \bs \nu^\top\bs \phi - \mathbb{E}_{\bs x \sim \mu}\left[ \phi_c(\bs x) \right]\;:\;\phi(\bs y_i)=\phi_i~\forall i\in[N] \right\}\\
		&=\sup_{\bs \phi\in\R^N}~ \bs \nu^\top\bs \phi - \inf_{\phi \in \mathcal{L}(\y, \nu)} \Big\{  \mathbb{E}_{\bs x \sim \mu}\left[ \phi_c(\bs x) \right]\;:\;\phi(\bs y_i)=\phi_i~\forall i\in[N] \Big\} . 
		\end{align*}
		Using the definition of the standard $c$-transform, we can then recast the inner minimization problem as
		\begin{align*}
		&\inf_{\phi \in \mathcal{L}(\y, \nu)} \left\{  \mathbb{E}_{\bs x \sim \mu}\left[ \sup_{\bs y \in \mathcal{Y}} \phi(\bs y) - c(\bs x, \bs y) \right]\;:\;\phi(\bs y_i)=\phi_i~\forall i\in[N] \right\}\\
		&\quad = ~\mathbb{E}_{\bs x \sim \mu} \left[\max_{i \in [N]}\left\{\phi_i- c(\bs x, \bs y_i)\right\} \right] ~=~ \mathbb{E}_{\bs x \sim \mu} \left[{\psi_c(\bs \phi, \bs x) } \right],
		\end{align*}
		where the first equality follows from setting $\phi(\bm y)=\underline \phi$ for all $\bs y\notin\{\bs y_1, \ldots, \bs y_N\}$ and letting~$\underline\phi$ tend to $-\infty$, while the second equality exploits the definition of the discrete $c$-transform. Thus, \eqref{eq:ctans_dual_semidisc} follows.
	\end{proof}
	
	
	The discrete $c$-transform~\eqref{eq:disc-c-transform} can be viewed as the optimal value of a {\em discrete choice model}, where a utility-maximizing agent selects one of $N$ mutually exclusive alternatives with utilities $\phi_i - c(\bs x, \bs y_i)$, $i\in[N]$, respectively. Discrete choice models are routinely used for explaining the preferences of travelers selecting among different modes of transportation \citep{ben1985discrete}, but they are also used for modeling the choice of residential location \citep{mcfadden1978modeling}, the interests of end-users in engineering design \citep{wassenaar2003approach} or the propensity of consumers to adopt new technologies \citep{hackbarth2013consumer}.
	
	In practice, the preferences of decision-makers and the attributes of the different choice alternatives are invariably subject to uncertainty, and it is impossible to specify a discrete choice model that reliably predicts the behavior of multiple individuals. Psychological theory thus models the utilities as random variables \citep{thurstone1927law}, in which case the optimal choice becomes random, too. The theory as well as the econometric analysis of probabilistic discrete choice models were pioneered by \citet{mcfadden1973conditional}.

	
	The availability of a wealth of elegant theoretical results in discrete choice theory prompts us to add a random noise term to each deterministic utility value $\phi_i - c(\bs x, \bs y_i)$ in~\eqref{eq:disc-c-transform}. We will argue below that the expected value of the resulting maximal utility with respect to the noise distribution provides a smooth approximation for the $c$-transform $\psi_c(\bs \phi, \bs x)$, which in turn leads to a smooth optimal transport problem that displays favorable numerical properties. For a comprehensive survey of additive random utility models in discrete choice theory we refer to \citet{dubin1984econometric} and \citet{daganzo2014multinomial}. Generalized semi-parametric discrete choice models where the noise distribution is itself subject to uncertainty are studied by \citet{natarajan2009persistency}. Using techniques from modern distributionally robust optimization, these models evaluate the best-case (maximum) expected utility across an ambiguity set of multivariate noise distributions. Semi-parametric discrete choice models are studied in the context of appointment scheduling \citep{mak2015appointment}, traffic management \citep{ahipacsaouglu2016flexibility} and product line pricing \citep{qi2019product}.

	We now define the {\em smooth} ({\em discrete}) {\em $c$-transform} as a best-case expected utility of the type studied in semi-parametric discrete choice theory, that is,
	\begin{equation}
	\label{eq:smooth_c_transform}
	\overline{\psi}_c(\bs \phi, \bs x) = \sup_{\theta \in \Theta}\;\mathbb{E}_{\bs z 
		\sim\theta}\left[\max_{i \in [N]} \phi_i -c(\bs x, \bs {y_i}) +z_i \right],
	\end{equation}
	where $\bs z$ represents a random vector of perturbations that are independent of $\bs x$ and $\bs y$. Specifically, we assume  that~$\bs z$ is governed by a Borel probability measure $\theta$ from within some ambiguity set $\Theta\subseteq\mathcal{P}(\R^N)$. Note that if~$\Theta$ is a singleton that contains only the Dirac measure at the origin of~$\R^N$, then the smooth $c$-transform collapses to ordinary $c$-transform defined in \eqref{eq:disc-c-transform}, which is piecewise affine and thus non-smooth in~$\bs \phi$. For many commonly used ambiguity sets, however, we will show below that the smooth $c$-transform is indeed differentiable in $\bs \phi$. In practice, the additive noise~$z_i$ in the transportation cost could originate, for example, from uncertainty about the position~$\bs y_i$ of the $i$-th atom of the discrete distribution~$\nu$. This interpretation is justified if $c(\bs x,\bs y)$ is approximately affine in~$\bs y$ around the atoms~$\bs y_i$, $i\in[N]$. The  smooth $c$-transform gives rise to the following {\em smooth} ({\em semi-discrete}) {\em optimal transport problem} in dual form.
	\begin{equation}
	\label{eq:smooth_ot}
	\overline W_c (\mu, \nu) 
	= \sup\limits_{\bs {\phi} \in \R^N} \mathbb{E}_{\bs x \sim \mu} \left[ \bs \nu^\top \bs \phi - \overline\psi_c(\bs \phi, \bs x)\right]
	\end{equation}
	Note that~\eqref{eq:smooth_ot} is indeed obtained from the original dual optimal transport problem~\eqref{eq:ctans_dual_semidisc} by replacing the original $c$-transform $\psi_c(\bs \phi, \bs x)$ with the smooth $c$-transform $\overline\psi_c(\bs \phi, \bs x)$. As smooth functions are susceptible to efficient numerical integration, we expect that~\eqref{eq:smooth_ot} is easier to solve than~\eqref{eq:ctans_dual_semidisc}. A key insight of this work is that the smooth {\em dual} optimal transport problem~\eqref{eq:smooth_ot} typically has a primal representation of the~form
	\begin{equation}
	\label{eq:reg_ot_pri_abstract}
	\min\limits_{\pi \in \Pi(\mu,\nu)}\mathbb E_{(\bs x, \bs y) \sim \pi}\left[ c(\bs x, \bs y)\right] + R_\Theta(\pi),
	\end{equation}
	where $R_\Theta(\pi)$ can be viewed as a regularization term that penalizes the complexity of the transportation plan~$\pi$. 
	In the remainder of this section we will prove~\eqref{eq:reg_ot_pri_abstract} and derive~$R_\Theta(\pi)$ for different ambiguity sets~$\Theta$. We will see that this regularization term is often related to an $f$-divergence, where $f:\R_+ \to \R \cup \{\infty\}$ constitutes a lower-semicontinuous convex function with $f(1) = 0$. If $\tau$ and $\rho$ are two Borel probability measures on a closed subset $\mathcal Z$ of a finite-dimensional Euclidean space, and if~$\tau$ is absolutely continuous with respect to~$\rho$, then the continuous $f$-divergence form~$\tau$ to $\rho$ is defined as 
	$D_f(\tau \parallel \rho) = \int_{\mathcal Z}  f({\diff \tau}/{\diff \rho}(\bs z)) \rho(\diff \bs z)$, where ${\diff \tau}/{\diff \rho}$ stands for the Radon-Nikodym derivative of $\tau$ with respect to $\rho$. By slight abuse of notation, if $\bs \tau$ and $\bs \rho$ are two probability vectors in~$\Delta^N$ and if $\bs\rho>\bs 0$, then the discrete $f$-divergence form $\bs \tau$ to $\bs \rho$ is defined as
	$D_f(\bs \tau \parallel \bs \rho) = \sum_{i =1}^N f({\tau_i}/{\rho_i}) \rho_i$. The correct interpretation of $D_f$ is usually clear from the context.
	
	The following lemma shows that the smooth optimal transport problem~\eqref{eq:reg_ot_pri_abstract} equipped with an $f$-divergence regularization term is equivalent to a finite-dimensional convex minimization problem. This result will be instrumental to prove the equivalence of~\eqref{eq:smooth_ot} and~\eqref{eq:reg_ot_pri_abstract} for different ambiguity sets~$\Theta$.
	
	\begin{lemma}[Strong duality]
		\label{lem:strong_dual_reg_ot}
		If $\bs\eta\in\Delta^N$ with $\bs\eta>\bs 0$ and $\eta = \sum_{i=1}^N \eta_i \delta_{\bs y_i}$ is a discrete probability measure on~$\y$, then problem~\eqref{eq:reg_ot_pri_abstract} with regularization term $R_\Theta (\pi ) =  D_{f}(\pi\|\mu \otimes \eta)$ is equivalent to
		\begin{align}
		\label{eq:dual_regularized_ot}
		\sup\limits_{ \bs {\phi}\in \R^N} ~
		\mathbb{E}_{\bs x \sim \mu}\left[\min\limits_{\bs p\in \Delta^N} \sum\limits_{i=1}^N{\phi_i\nu_i}- (\phi_i - c(\bs x, \bs {y_i}))p_i +  D_f(\bs p \parallel \bs \eta)
		\right].
		\end{align}
	\end{lemma}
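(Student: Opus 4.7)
My plan is to first reparametrize the decision variable $\pi \in \Pi(\mu,\nu)$ using the fact that both $\nu$ and $\eta$ are discrete and supported on $\{\bs y_1,\ldots,\bs y_N\}$, then apply Lagrangian duality to the resulting constrained integral. The identification is as follows: any coupling $\pi \in \Pi(\mu,\nu)$ lives on $\x \times \{\bs y_1,\ldots,\bs y_N\}$ and decomposes as $\pi = \sum_{i=1}^N \pi_i \otimes \delta_{\bs y_i}$ for some non-negative Borel measures $\pi_i$ on $\x$ with $\pi_i(\x) = \nu_i$. Since $\bs\eta > \bs 0$, absolute continuity of $\pi$ w.r.t. $\mu \otimes \eta$ is equivalent to each $\pi_i \ll \mu$, in which case we may set $p_i(\bs x) = \tfrac{\diff \pi_i}{\diff \mu}(\bs x)$. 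The first-marginal constraint $\pi(\cdot \times \y) = \mu$ forces $\sum_i p_i(\bs x) = 1$ for $\mu$-almost every $\bs x$, while the second-marginal constraint becomes $\int p_i \diff\mu = \nu_i$. A straightforward Radon–Nikodym computation yields $\tfrac{\diff \pi}{\diff(\mu\otimes\eta)}(\bs x,\bs y_i) = p_i(\bs x)/\eta_i$, so that $D_f(\pi \,\|\, \mu\otimes\eta) = \int D_f(\bs p(\bs x) \,\|\, \bs \eta)\, \mu(\diff \bs x)$ and the transportation cost becomes $\int \sum_i c(\bs x, \bs y_i) p_i(\bs x)\, \mu(\diff \bs x)$. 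If $\pi$ is not absolutely continuous w.r.t. $\mu\otimes\eta$, the regularizer is $+\infty$ and $\pi$ cannot be optimal.

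With this parametrization, the primal reduces to
\begin{equation*}
\min_{\bs p(\cdot)} \int \Bigl[\sum_{i=1}^N c(\bs x, \bs y_i) p_i(\bs x) + D_f(\bs p(\bs x)\,\|\,\bs\eta)\Bigr]\mu(\diff \bs x) \quad \st\;\bs p(\bs x)\in\Delta^N\;\mu\text{-a.s.},\;\int p_i\, \diff\mu = \nu_i\;\forall i\in[N].
\end{equation*}
I would then dualize only the $N$ scalar marginal constraints (keeping the pointwise simplex constraint hard) with multipliers $\phi_i \in \R$. Folding the constant $\sum_i \phi_i \nu_i$ into the integral via $\mathbb{E}_\mu[1]=1$ and regrouping terms turns the Lagrangian into
\begin{equation*}
\mathbb{E}_{\bs x \sim \mu}\Bigl[\sum_{i=1}^N \phi_i\nu_i + \sum_{i=1}^N (c(\bs x, \bs y_i) - \phi_i)p_i(\bs x) + D_f(\bs p(\bs x)\,\|\,\bs\eta)\Bigr].
\end{equation*}
Since the integrand is separable in $\bs x$, minimization over $\bs p(\cdot)$ can be performed pointwise over $\Delta^N$, which immediately yields the expression appearing inside the expectation in~\eqref{eq:dual_regularized_ot}. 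Weak duality then gives one inequality.

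The main technical obstacle is establishing the reverse inequality, i.e., strong duality between the primal and the dual obtained by swapping $\inf_{\bs p}$ and $\sup_{\bs \phi}$. I would handle this via Fenchel--Rockafellar duality for convex integral functionals: the objective is a sum of a convex lower semicontinuous normal integrand (the $f$-divergence integrand, which is coercive on $\Delta^N$) and a linear term, while the constraints are affine and continuous in $\bs p$ under the weak topology induced by integration against bounded measurable functions. A Slater-type point is furnished by the feasible choice $\bs p(\bs x)\equiv \bs \nu$, which lies in the relative interior of the effective domain whenever the primal value is finite and certifies constraint qualification. Pointwise measurable selection from the argmin of the inner problem (whose argument is continuous and strictly convex on the compact set $\Delta^N$ as soon as $f$ is strictly convex) then supplies the primal recovery needed to close the duality gap. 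Finally, if the primal value is $+\infty$, both sides agree trivially by inspection, completing the equivalence.
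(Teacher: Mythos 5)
Your proposal is correct in outline and arrives at the same intermediate object — a minimization over functions $\bs p:\x\to\Delta^N$ subject to the moment constraints $\mathbb E_{\bs x\sim\mu}[\bs p(\bs x)]=\bs\nu$, which is precisely problem~\eqref{eq:primal_dual_relation_final} in the paper — but organizes the argument in the opposite direction. The paper begins with~\eqref{eq:dual_regularized_ot}, interchanges expectation and pointwise minimization via normal-integrand theory, eliminates $\bs\phi$ to obtain the function-space primal, proves strong duality by the perturbation theorem of Rockafellar (1974) after verifying weak$^*$ lower semicontinuity and weak$^*$ compactness, and then translates the function-space primal back to~\eqref{eq:reg_ot_pri_abstract} through the coupling-to-density correspondence. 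You instead start from~\eqref{eq:reg_ot_pri_abstract}, decompose $\pi=\sum_i\pi_i\otimes\delta_{\bs y_i}$ to pass to densities $p_i=\diff\pi_i/\diff\mu$ (equivalent to the paper's $p_i=\nu_i\,\diff\pi/\diff(\mu\otimes\nu)(\cdot,\bs y_i)$), dualize only the $N$ scalar moment constraints while keeping the pointwise simplex constraint hard, and invoke Fenchel--Rockafellar duality with the Slater point $\bs p\equiv\bs\nu$. These are mirror images of the same argument; both ultimately rest on the same feasible interior point and on convex duality with a finite-dimensional perturbation, so neither buys much over the other in this instance. Two small caveats worth fixing in your sketch: (a) the lemma does not assume $f$ strictly convex, so the inner argmin over $\Delta^N$ need not be a singleton, and the ``primal recovery'' step is in any case not what closes the duality gap — the constraint qualification alone suffices; and (b) dispatching the $+\infty$ case ``by inspection'' glosses over the fact that with $\bs p\equiv\bs\nu$ feasible and $D_f(\bs\nu\|\bs\eta)$ finite, the only way the primal can diverge is $\mathbb E_\mu[c(\cdot,\bs y_i)]=\infty$ for some $i$, which the paper isolates at the outset precisely to make the subsequent Slater argument clean.
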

	\begin{proof}[Proof of Lemma~\ref{lem:strong_dual_reg_ot}]
		If $\mathbb{E}_{\bs x \sim \mu}[c(\bs x,\bs y_i)]=\infty$ for some $i\in[N]$, then both~\eqref{eq:reg_ot_pri_abstract} and~\eqref{eq:dual_regularized_ot} evaluate to infinity, and the claim holds trivially. In the remainder of the proof we may thus assume without loss of generality that $\mathbb{E}_{\bs x \sim \mu}[c(\bs x,\bs y_i)]<\infty$ for all $i\in[N]$. Using \citep[Theorem~14.6]{rockafellar2009variational} to interchange the minimization over $\bs p$ with the expectation over $\bs x$, problem~\eqref{eq:dual_regularized_ot} can first be reformulated as
		\begin{equation*}
		\begin{array}{ccccll}
		& &\sup\limits_{ \bs{\phi}\in \R^N} &\min\limits_{\bs p\in\mathcal L_\infty^N(\x,\mu)} ~ &\mathbb{E}_{\bs x \sim \mu}\left[\displaystyle\sum\limits_{i=1}^N{\phi_i\nu_i} - (\phi_i - c(\bs x, \bs {y_i}))p_i(\bs x)+  D_f(\bs p(\bs x)\|\bs \eta)\right]  \\[3ex]
		&&&\textrm{s.t.} &\displaystyle \bs p(\bs x)\in\Delta^N \quad \mu\text{-a.s.},
		\end{array}
		\end{equation*}
		where $\mathcal L_\infty^N(\x,\mu)$ denotes the Banach space of all Borel-measurable functions from~$\x$ to~$\R^N$ that are essentially bounded with respect to~$\mu$. Interchanging the supremum over~$\bs \phi$ with the minimum over~$\bs p$ and evaluating the resulting unconstrained linear program over~$\bs \phi$ in closed form then yields the dual problem 
		\begin{equation}
		\label{eq:primal_dual_relation_final}
		\begin{array}{ccl}
		& \min\limits_{\bs p\in\mathcal L_\infty^N(\x,\mu)} &\displaystyle \mathbb{E}_{\bs x \sim \mu}\Bigg[ \sum\limits_{i=1}^Nc(\bs x, \bs {y_i})p_{i}(\bs x) +\displaystyle D_f (\bs p(\bs x) \! \parallel \!\bs \eta) \Bigg] \\[3ex]
		&\textrm{s.t.} &\displaystyle\mathbb{E}_{\bs x \sim \mu}\left[ \bs p(\bs x)\right] = \bs \nu,\quad \bs p(\bs x)\in\Delta^N \quad \mu\text{-a.s.}
		\end{array}
		\end{equation}
		Strong duality holds for the following reasons. As $c$ and $f$ are lower-semicontinuous and $c$ is non-negative, we may proceed as in~\citep[\S~3.2]{shapiro2017distributionally} to show that the dual objective function is weakly${}^*$ lower semicontinuous in $\bs p$. Similarly, as $\Delta^N$ is compact, one can use the Banach-Alaoglu theorem to show that the dual feasible set is weakly${}^*$ compact. Finally, as $f$ is real-valued and $\mathbb{E}_{\bs x \sim \mu}[c(\bs x,\bs y_i)]<\infty$ for all $i\in[N]$, the constant solution $\bs p(\bs x)=\bs \nu$ is dual feasible for all $\bs \nu \in\Delta^N$. Thus, the dual problem is solvable and has a finite optimal value. This argument remains valid if we add a perturbation $\bs \delta\in H=\{\bs\delta'\in\R^N: \sum_{i=1}^N\delta'_i=0\}$ to the right hand side vector~$\bs \nu$ as long as $\bs \delta>-\bs \nu$. The optimal value of the perturbed dual problem is thus pointwise finite as well as convex and---consequently---continuous and locally bounded in~$\bs \delta$ at the origin of~$H$. As $\bs \nu>\bs 0$, strong duality therefore follows from~\citep[Theorem~17\,(a)]{rockafellar1974conjugate}.
		
		Any dual feasible solution $\bs p\in \mathcal L^N_\infty(\x,\mu)$ gives rise to a Borel probability measure $\pi \in \mc P(\mc X \times \mc Y)$ defined through $\pi( \bs y \in \mc B) = \nu(\bs y \in \mc B)$ for all Borel sets $\mc B \subseteq \mathcal Y$ and $\pi(\bs x \in \mc A | \bs y = \bs y_i) = \int_{ \mc A} p_i(\bs x) \mu(\diff \bs x) / \nu_i$ for all Borel sets $\mc A \subseteq \mathcal X$ and $i \in [N]$. This follows from the law of total probability, whereby the joint distribution of~$\bs x$ and~$\bs y$ is uniquely determined if we specify the marginal distribution of~$\bs y$ and the conditional distribution of~$\bs x$ given~$\bs y=\bs y_i$ for every~$i\in[N]$. By construction, the marginal distributions of $\bs x$ and $\bs y$ under $\pi$ are determined by $\mu$ and $\nu$, respectively. Indeed, note that for any Borel set $\mc A \subseteq \mc X$ we have 
		\begin{align*}
		\pi(\bs x \in \mc A) &= \sum\limits_{i=1}^N \pi(\bs x \in \mc A | \bs y = \bs y_i) \cdot \pi(\bs y = \bs y_i) = \sum\limits_{i=1}^N \pi(\bs x \in \mc A | \bs y = \bs y_i) \cdot \nu_i\\ &= \sum\limits_{i=1}^N \int_{\mc A} {p_i(\bs x)}\mu(\diff \bs x) = \int_{\mc A} \mu(\diff \bs x) = \mu(\bs x\in \mathcal A),
		\end{align*}
		where the first equality follows from the law of total probability, the second and the third equalities both exploit the construction of~$\pi$, and the fourth equality holds because $\bs p(\bs x)\in\Delta^N$ $\mu$-almost surely due to dual feasibility. This reasoning implies that $\pi$ constitutes a coupling of $\mu$ and $\nu$ (that is, $\pi \in \Pi(\mu, \nu)$) and is thus feasible in~\eqref{eq:reg_ot_pri_abstract}. Conversely, any $\pi\in\Pi(\mu,\nu)$ gives rise to a function $\bs p\in\mathcal L_\infty^N(\x,\mu)$ defined through  
		\[
		p_i(\bs x) =\nu_i\cdot \frac{\diff \pi}{\diff (\mu \otimes \nu)} (\bs x, \bs y_i)\quad \forall i\in [N].
		\]
		By the properties of the Randon-Nikodym derivative, we have $p_i(\bs x)\ge 0$ $\mu$-almost surely for all $i\in[N]$. In addition, for any Borel set $\mathcal A\subseteq \x$ we have
		\begin{align*}
		\int_{\mathcal A}\sum_{i=1}^N p_i(\bs x)\,\mu(\diff\bs x) & = \int_{\mathcal A} \sum_{i=1}^N \nu_i\cdot \frac{\diff \pi}{\diff (\mu \otimes \nu)} (\bs x, \bs y_i)\,\mu(\diff\bs x)\\ & = \int_{\mathcal A\times \y} \frac{\diff \pi}{\diff (\mu\otimes \nu)} (\bs x, \bs y)\,(\mu\otimes \nu)(\diff \bs x,\diff\bs y) \\&= \int_{\mathcal A\times \y} \pi(\diff\bs x, \diff \bs y) = \int_{\mathcal A}\mu(\diff\bs x),
		\end{align*}
		where the second equality follows from Fubini's theorem and the definition of $\nu=\sum_{i=1}^N\nu_i\delta_{\bs y_i}$, while the fourth equality exploits that the marginal distribution of $\bs x$ under $\pi$ is determined by $\mu$. As the above identity holds for all Borel sets $\mathcal A\subseteq \x$, we find that $\sum_{i=1}^N p_i(\bs x)=1$ $\mu$-almost surely. Similarly, we have
		\begin{align*}
		\mathbb E_{\bs x\sim\mu}\left[ p_i(\bs x)\right] &=\int_\x \nu_i\cdot  \frac{\diff \pi}{\diff (\mu \otimes \nu)} (\bs x, \bs y_i) \,\mu(\diff\bs x) \\ &=\int_{\x\times\{\bs y_i\}} \frac{\diff \pi}{\diff (\mu \otimes \nu)} (\bs x, \bs y) \,(\mu\otimes\nu)(\diff\bs x,\diff\bs y) \\ &= \int_{\x\times\{\bs y_i\}} \pi(\diff\bs x,\diff\bs y)=\int_{\{\bs y_i\}}\nu(\diff \bs y)=\nu_i
		\end{align*}
		for all $i\in[N]$. In summary, $\bs p$ is feasible in~\eqref{eq:primal_dual_relation_final}. Thus, we have shown that every probability measure $\pi$ feasible in~\eqref{eq:reg_ot_pri_abstract} induces a function $\bs p$ feasible in~\eqref{eq:primal_dual_relation_final} and vice versa. We further find that the objective value of~$\bs p$ in~\eqref{eq:primal_dual_relation_final} coincides with the objective value of the corresponding~$\pi$ in~\eqref{eq:reg_ot_pri_abstract}. Specifically, we have
		\begin{align*}
		& \mathbb{E}_{\bs x \sim \mu}\Bigg[ \sum\limits_{i=1}^N c(\bs x, \bs {y_i})\, p_{i}(\bs x) +\displaystyle D_f (\bs p(\bs x) \| \bs \eta) \Bigg] =\displaystyle\int_\x \sum\limits_{i=1}^N c(\bs x, \bs y_i) p_i(\bs x) \,\mu( \diff\bs x) + \displaystyle\int_\x\sum_{i=1}^N f\left(\frac{p_i(\bs x)}{\eta_i}\right) \eta_i \, \mu (\diff \bs x) \\ &\hspace{1cm}=\displaystyle\int_\x \sum\limits_{i=1}^N c(\bs x, \bs y_i) \cdot\nu_i\cdot \frac{\diff \pi}{\diff(\mu \otimes \nu)}(\bs x, \bs y_i)\, \mu( \diff\bs x) +  \int_\x \sum_{i=1}^N f\left( \frac{\nu_i}{\eta_i} \cdot \frac{\diff \pi}{\diff(\mu \otimes \nu)}(\bs x, \bs y_i)\right) \cdot \eta_i \,\mu ( \diff \bs x) \\ &\hspace{1cm}=\displaystyle\int_{\x\times \y} c(\bs x, \bs y)\frac{\diff \pi}{\diff(\mu \otimes \nu)}(\bs x, \bs y) \,(\mu \otimes \nu)(\diff\bs x,  \diff\bs y) +  \displaystyle\int_{\x\times\y} f\left( \frac{\diff \pi}{\diff(\mu \otimes \eta)}(\bs x, \bs y)\right) (\mu\otimes \eta)(\diff \bs x,\diff \bs y) \\[1ex] &\hspace{1cm}=\mathbb E_{(\bs x, \bs y) \sim \pi} \left[c(\bs x, \bs y)\right] +  D_f(\pi \| \mu \otimes \eta),
		\end{align*}
		where the first equality exploits the definition of the discrete $f$-divergence, the second equality expresses the function~$\bs p$ in terms of the corresponding probability measure~$\pi$, the third equality follows from Fubini's theorem and uses the definitions $\nu=\sum_{i=1}^N \nu_i\delta_{\bs y_i}$ and $\eta=\sum_{i=1}^N \eta_i\delta_{\bs y_i}$, and the fourth equality follows from the definition of the continuous $f$-divergence. In summary, we have thus shown that~\eqref{eq:reg_ot_pri_abstract} is equivalent to~\eqref{eq:primal_dual_relation_final}, which in turn is equivalent to~\eqref{eq:dual_regularized_ot}. This observation completes the proof.
	\end{proof}

	\begin{proposition}[Approximation bound]
		\label{prop:approx_bound}
		If $\bs\eta\in\Delta^N$ with $\bs\eta>\bs 0$ and $\eta = \sum_{i=1}^N \eta_i \delta_{\bs y_i}$ is a discrete probability measure on~$\y$, then problem~\eqref{eq:reg_ot_pri_abstract} with regularization term $R_\Theta (\pi ) =  D_{f}(\pi\|\mu \otimes \eta)$ satisfies
		\[|\overline W_c(\mu, \nu) - W_c(\mu, \nu)| \leq  \max\Bigg\{\bigg|\min_{\bs p \in \Delta^N} D_f(\bs p \| \bs \eta )\bigg|, \bigg|\max_{i \in [N]}\bigg\{ f\bigg(\frac{1}{\eta_i}\bigg) \eta_i+ f(0) \sum_{k \neq i} \eta_k\bigg\}\bigg|\Bigg\}.\]
	\end{proposition}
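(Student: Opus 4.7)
The plan is to sandwich the regularization gap $\overline W_c(\mu,\nu) - W_c(\mu,\nu)$ between the infimum and the supremum of the regularizer $D_f(\pi \| \mu \otimes \eta)$ evaluated over couplings $\pi \in \Pi(\mu,\nu)$, and then to reduce both these extremes to extremal values of the discrete $f$-divergence $D_f(\cdot \| \bs\eta)$ on the simplex $\Delta^N$. A crucial simplification, which replaces any delicate regularity argument for the optimal transport plan, is that the pointwise integrand $D_f(\bs p(\bs x)\|\bs\eta)$ is a convex function of $\bs p$ on $\Delta^N$, and hence attains its extremes at vertices of the simplex.

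First, I would reuse the disintegration already carried out in the proof of Lemma~\ref{lem:strong_dual_reg_ot}: every coupling $\pi\in\Pi(\mu,\nu)$ is in one-to-one correspondence with a measurable field $\bs p:\x\to\Delta^N$ satisfying $\mathbb{E}_{\bs x\sim\mu}[\bs p(\bs x)] = \bs\nu$, through $p_i(\bs x) = \nu_i \cdot (\mathrm d\pi/\mathrm d(\mu\otimes\nu))(\bs x,\bs y_i)$, and under this identification
\[
D_f(\pi \,\|\, \mu\otimes\eta) \;=\; \int_{\x} D_f(\bs p(\bs x)\,\|\,\bs\eta)\, \mu(\mathrm d\bs x).
\]
Second, for the lower estimate I would use the trivial pointwise bound $D_f(\bs p(\bs x)\|\bs\eta) \geq \min_{\bs q\in\Delta^N} D_f(\bs q\|\bs\eta)$, integrate against the probability measure $\mu$, and combine with $\mathbb{E}_\pi[c] \geq W_c(\mu,\nu)$ to obtain
\[
\overline W_c(\mu,\nu) \;\geq\; W_c(\mu,\nu) + \min_{\bs q\in\Delta^N} D_f(\bs q\,\|\,\bs\eta).
\]

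Third, for the upper estimate I would pick any optimal coupling $\pi_0^\star\in\arg\min_{\pi\in\Pi(\mu,\nu)}\mathbb{E}_\pi[c]$ (whose existence is guaranteed by \citep[Theorem~4.1]{villani}) and use it as a feasible candidate in the regularized problem:
\[
\overline W_c(\mu,\nu) \;\leq\; \mathbb{E}_{\pi_0^\star}[c] + D_f(\pi_0^\star\,\|\,\mu\otimes\eta) \;=\; W_c(\mu,\nu) + \int_{\x} D_f(\bs p_0^\star(\bs x)\,\|\,\bs\eta)\,\mu(\mathrm d\bs x).
\]
The key step is to observe that $\bs p\mapsto D_f(\bs p\|\bs\eta) = \sum_{i=1}^N\eta_i f(p_i/\eta_i)$ is convex, so its maximum over the polytope $\Delta^N$ is attained at a vertex $\bs e_i$; direct evaluation gives $D_f(\bs e_i\|\bs\eta) = f(1/\eta_i)\eta_i + f(0)\sum_{k\neq i}\eta_k$. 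Consequently
\[
D_f(\bs p_0^\star(\bs x)\,\|\,\bs\eta) \;\leq\; \max_{i\in[N]}\Big\{f(1/\eta_i)\eta_i + f(0)\sum_{k\neq i}\eta_k\Big\}\quad \mu\text{-a.s.},
\]
and integrating preserves this upper bound because $\mu$ is a probability measure. Combining the two one-sided estimates sandwiches $\overline W_c(\mu,\nu) - W_c(\mu,\nu)$ between the two quantities inside the outer maximum of the proposition, and passing to absolute values yields the claim.

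The main obstacle is to bound $D_f(\pi_0^\star\|\mu\otimes\eta)$ for an arbitrary (not necessarily deterministic) optimal coupling, which one might at first expect to require selecting a transport map via Brenier/Laguerre cell arguments. The simplex-vertex bound bypasses this entirely: it only uses convexity of $f$ together with $\bs p(\bs x) \in \Delta^N$, so the proof remains valid under the mild assumptions of the proposition without any structural or regularity hypothesis on~$\mu$ or on $\pi_0^\star$.
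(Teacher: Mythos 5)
Your proof is correct, but it takes a genuinely different (primal) route from the paper's (dual) argument. The paper invokes Lemma~\ref{lem:strong_dual_reg_ot} to pass to the dual representation~\eqref{eq:dual_regularized_ot}, then bounds the smoothed transform $\ell(\bs\phi,\bs x) = \max_{\bs p\in\Delta^N}\sum_i(\phi_i - c(\bs x,\bs y_i))p_i - D_f(\bs p\|\bs\eta)$ pointwise between $\psi_c(\bs\phi,\bs x) - \max_{\bs p} D_f(\bs p\|\bs\eta)$ and $\psi_c(\bs\phi,\bs x) - \min_{\bs p} D_f(\bs p\|\bs\eta)$, and finally negates, adds $\bs\nu^\top\bs\phi$, averages over $\mu$, and maximizes over $\bs\phi$ to propagate the sandwich to $\overline W_c$ versus $W_c$. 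You instead stay entirely in the primal~\eqref{eq:reg_ot_pri_abstract}: you reuse the disintegration identity $D_f(\pi\|\mu\otimes\eta) = \int_\x D_f(\bs p(\bs x)\|\bs\eta)\,\mu(\diff\bs x)$ established in the proof of Lemma~\ref{lem:strong_dual_reg_ot}, get the lower bound by minorizing the regularizer pointwise, and get the upper bound by inserting an optimal unregularized coupling $\pi_0^\star$ (whose existence is guaranteed by \citet[Theorem~4.1]{villani}) as a feasible witness. Both routes produce the identical sandwich $W_c + \min_{\bs p} D_f \leq \overline W_c \leq W_c + \max_{\bs p} D_f$, and both reduce $\max_{\bs p\in\Delta^N} D_f(\bs p\|\bs\eta)$ to a vertex evaluation by convexity. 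Your version is arguably more elementary in that it avoids the pointwise analysis of the inner $\bs p$-optimization in the dual; the paper's version avoids producing an explicit primal witness and thus does not rely on solvability of the unregularized problem. Either is a valid proof.
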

	\begin{proof}
		By Lemma~\ref{lem:strong_dual_reg_ot}, problem~\eqref{eq:reg_ot_pri_abstract} is equivalent to~\eqref{eq:dual_regularized_ot}. Note that the inner optimization problem in~\eqref{eq:dual_regularized_ot} can be viewed as an $f$-divergence regularized linear program with optimal value $\bs \nu^\top \bs\phi-\ell(\bs \phi, \bs x)$, where
		\[
		\ell(\bs \phi, \bs x) = \max\limits_{\bs p \in \Delta^N} \sum\limits_{i=1}^N (\phi_i - c(\bs x, \bs y_i)) p_i -  D_f(\bs p \| \bs \eta).
		\] 
		Bounding $D_f(\bs p \| \bs \eta)$ by its minimum and its maximum over $\bs p\in\Delta^N$ then yields the estimates
		\begin{equation}
		\label{eq:bound:c_trans_ineq}
		\psi_c(\bs \phi, \bs x) - \max_{ \bs p \in \Delta^N} D_f(\bs p \| \bs \eta)  \leq \ell(\bs \phi, \bs x) \leq \psi_c(\bs \phi, \bs x) - \min_{\bs p \in \Delta^N} D_f(\bs p \| \bs \eta).
		\end{equation}
		Here, $\psi_c(\bs \phi, \bs x)$ stands as usual for the discrete $c$-transform defined in~\eqref{eq:disc-c-transform}, which can be represented as
		\begin{equation}
		\label{eq:bound:ot_ineq}
		\psi_c(\bs \phi, \bs x) = \max\limits_{\bs p \in \Delta^N}\sum\limits_{i=1}^N (\phi_i - c(\bs x, \bs y_i)) p_i.
		\end{equation}
		Multiplying~\eqref{eq:bound:c_trans_ineq} by~$-1$, adding $\bs\nu^\top\bs \phi$, averaging over~$\bs x$ using the probability measure~$\mu$ and maximizing over~$\bs \phi \in \R^N$ further implies via~\eqref{eq:ctans_dual_semidisc} and~\eqref{eq:dual_regularized_ot} that
		\begin{equation}
		\label{eq:smooth_ot_approximation_bounds}
		W_c(\mu,\nu)+ \min_{ \bs p \in \Delta^N} D_f(\bs p \| \bs \eta)  \leq \overline W_c(\mu, \nu) \leq W_c(\mu,\nu) + \max_{\bs p \in \Delta^N} D_f(\bs p \| \bs \eta).
		\end{equation}
		As $D_f(\bs p \| \bs \eta)$ is convex in~$\bs p$, its maximum is attained at a vertex of~$\Delta^N$ \citep[Theorem~1]{hoffman1981method}, that~is,
		\[
		\max_{\bs p \in \Delta^N} D_f(\bs p \| \bs \eta) = \max_{i \in [N]}\bigg\{ f\bigg(\frac{1}{\eta_i}\bigg) \eta_i + f(0) \sum_{k \neq i} \eta_k\bigg\}.
		\]
		The claim then follows by substituting the above formula into~\eqref{eq:smooth_ot_approximation_bounds} and rearranging terms.
	\end{proof}
	In the following we discuss three different classes of ambiguity sets $\Theta$ for which the dual smooth optimal transport problem~\eqref{eq:smooth_ot} is indeed equivalent to the primal reguarized optimal transport problem~\eqref{eq:reg_ot_pri_abstract}.
	
	\subsection{Generalized Extreme Value Distributions}
	\label{sec:gevm}
	Assume first that the ambiguity set~$\Theta$ represents a singleton that accommodates only one single
	Borel probability measure $\theta$ on $\R^N$ defined through
	\begin{equation}
	\label{eq:dist_gevd}
	\theta(\bs z \leq \bs s) = \exp \left(-G \left( \exp(-s_1),\ldots, \exp(-s_N) \right) \right)\quad \forall \bs s\in\R^N,
	\end{equation}
	where $G:\R^N \to \R_+$ is a smooth generating function with the following properties. First, $G$ is homogeneous of degree $1/\lambda$ for some $\lambda>0$, that is, for any $\alpha \neq 0$ and $\bs s\in \R^N$ we have $G(\alpha \bs s) = \alpha^{1/\lambda}G(\bs s)$. In addition, $G(\bs s)$ tends to infinity as $s_i$ grows for any $i \in [N]$. Finally, the partial derivative of $G$ with respect to $k$ distinct arguments is non-negative if $k$ is odd and non-positive if $k$ is even. These properties ensure that the noise vector $\bs z$ follows a generalized extreme value distribution in the sense of \citep[\S~4.1]{train2009discrete}. 
	
	\begin{proposition}[Entropic regularization]
		\label{prop:gumbel}
		Assume that $\Theta$ is a singleton ambiguity set that contains only a generalized extreme value distribution with $G( \bs{s}) = \exp(-e)N\sum_{i=1}^N \eta_i s_i^{1/\lambda}$ for some $\lambda > 0$ and~$\bs \eta \in \Delta^N$, $\bs\eta> \bs 0${\color{black}, where~$e$ stands for Euler's constant}. Then, the components of $\bs z$ follow independent Gumbel distributions with means $\lambda \log(N \eta_i)$ and variances $\lambda^2 \pi^2 /6$ for all $i\in[N]$,
		while the smooth $c$-transform~\eqref{eq:smooth_c_transform} reduces to the $\log$-partition function \begin{equation}
		\label{eq:partition:function}
		\overline\psi(\bs \phi, \bs x) = \lambda \log\left(\sum_{i=1}^N \eta_i \exp\left( \frac{\phi_i -c(\bs x,\bs {y_i})}{\lambda} \right) \right).
		\end{equation}
		In addition, the smooth dual optimal transport problem~\eqref{eq:smooth_ot} is equivalent to the regularized primal optimal transport problem~\eqref{eq:reg_ot_pri_abstract} with $R_\Theta(\pi) =  D_f(\pi \| \mu \otimes \eta)$, where $f(s) =\lambda s\log(s)$ and $\eta = \sum_{i =1}^N \eta_i \delta_{\bs y_i}$.
	\end{proposition}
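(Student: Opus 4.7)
I would organize the proof in three steps, corresponding to the three assertions in the proposition: independence and Gumbel marginals of~$\bs z$, the log-partition formula for the smooth $c$-transform, and the equivalence between the smooth dual and the entropy-regularized primal.

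First, I would substitute the specific choice $G(\bs s) = \exp(-e)N\sum_{i=1}^N \eta_i s_i^{1/\lambda}$ into~\eqref{eq:dist_gevd} and observe that
\[
\theta(\bs z \le \bs s) = \prod_{i=1}^N \exp\!\left(-\exp(-e)\,N\eta_i\,\exp(-s_i/\lambda)\right),
\]
so the joint CDF factorizes into a product of one-dimensional Gumbel CDFs. Matching each factor against the canonical Gumbel CDF $\exp(-\exp(-(s-\mu)/\beta))$ pins down the scale $\beta=\lambda$ and the location $\mu_i = \lambda(\log(N\eta_i)-e)$, so the claim about independence, means $\mu_i + \lambda e = \lambda\log(N\eta_i)$, and variances $\lambda^2\pi^2/6$ follows from standard properties of the Gumbel distribution.

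Second, I would compute $\overline\psi_c(\bs\phi,\bs x)$ by exploiting the singleton ambiguity set. The key tool is the classical log-sum-exp identity: if $w_1,\ldots,w_N$ are i.i.d.\ standard Gumbel with mean $e$, then $\max_i(V_i+w_i)$ is Gumbel-distributed with location $\log\sum_i e^{V_i}$ and scale~$1$, and therefore has expectation $\log\sum_i e^{V_i}+e$. Writing $z_i=\mu_i+\lambda w_i$ with $w_i$ i.i.d.\ standard Gumbel and applying this identity to $V_i=\phi_i - c(\bs x,\bs y_i)+\mu_i$ rescaled by $\lambda$ yields
\[
\mathbb{E}_{\bs z\sim\theta}\!\left[\max_i \phi_i-c(\bs x,\bs y_i)+z_i\right] = \lambda\log\sum_{i=1}^N \eta_i \exp\!\left(\tfrac{\phi_i - c(\bs x,\bs y_i)}{\lambda}\right),
\]
after the constants $\pm\lambda e$ from the Gumbel mean and the location $\mu_i$ cancel, which is precisely~\eqref{eq:partition:function}.

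Third, I would invoke Lemma~\ref{lem:strong_dual_reg_ot} with the convex entropy generator $f(s)=\lambda s\log s$ and $\bs\eta$ as prescribed. The inner minimization over $\bs p\in\Delta^N$ in~\eqref{eq:dual_regularized_ot} is a finite-dimensional strictly convex program whose Lagrangian conditions admit an explicit Boltzmann solution $p_i \propto \eta_i\exp((\phi_i-c(\bs x,\bs y_i))/\lambda)$; substituting this back yields an optimal value of $\bs\nu^\top\bs\phi - \lambda\log\sum_i \eta_i \exp((\phi_i-c(\bs x,\bs y_i))/\lambda)$. This coincides with $\bs\nu^\top\bs\phi-\overline\psi_c(\bs\phi,\bs x)$ by Step~2, so~\eqref{eq:dual_regularized_ot} reduces to~\eqref{eq:smooth_ot} and the equivalence between primal and dual follows.

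The main obstacle I anticipate is the bookkeeping in Step~2: tracking the location shifts $\mu_i=\lambda(\log(N\eta_i)-e)$ and the Euler-constant terms that enter through the Gumbel mean, while ensuring that all additive constants cancel to leave the clean expression~\eqref{eq:partition:function}. Steps~1 and~3, by contrast, are essentially routine once the right identities (Gumbel moments and the KKT conditions for the entropy-regularized simplex program) are invoked.
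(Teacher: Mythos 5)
Your Steps~1 and~3 match the paper's argument essentially verbatim: factor the joint GEV CDF into a product of independent Gumbel marginals, then invoke Lemma~\ref{lem:strong_dual_reg_ot} together with the KKT conditions of the inner entropy-regularized simplex program to land on the Boltzmann solution. Step~2, however, takes a genuinely different route. The paper obtains the log-partition formula~\eqref{eq:partition:function} by appealing to Theorem~5.2 of \citet{mcfadden1981econometric}, a general closed-form expression for the expected maximum under any GEV generating function, and then separately verifies that this closed form coincides with the entropy-regularized maximum over~$\Delta^N$ by solving the KKT conditions of~\eqref{eq:partition:reg_c_trans}. You instead derive the expectation from first principles using max-stability of the Gumbel family: reduce to i.i.d.\ standard Gumbels, use that $\max_i(V_i + w_i)$ is again Gumbel with location $\log\sum_i \exp(V_i)$, and rescale by~$\lambda$. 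This is more elementary, self-contained, and makes the origin of the log-sum-exp structure transparent; the paper's route buys generality (it works for any GEV generating function, not only the separable one) at the cost of an external black-box citation.

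One concrete warning for your anticipated bookkeeping obstacle: with $G(\bs s) = \exp(-e)N\sum_{i} \eta_i s_i^{1/\lambda}$ as stated, the constants do \emph{not} cancel as cleanly as you assert. Substituting $\mu_i = \lambda(\log(N\eta_i) - e)$ gives $\sum_i \exp\left((\phi_i - c(\bs x, \bs y_i) + \mu_i)/\lambda\right) = \exp(-e)\,N\sum_i \eta_i \exp\left((\phi_i - c(\bs x, \bs y_i))/\lambda\right)$, so the $\pm\lambda e$ terms cancel but an additive $\lambda\log N$ survives. The $N$ factor inside~$G$ is precisely what produces the stated mean $\lambda\log(N\eta_i)$, yet the same factor leaves a $\lambda\log N$ residue in the smooth $c$-transform; this is a latent normalization tension in the statement itself (dropping $N$ from $G$ makes~\eqref{eq:partition:function} exact but changes the means to $\lambda\log\eta_i$). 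The constant is harmless for the downstream primal--dual equivalence since it shifts the dual objective uniformly in~$\bs\phi$, but in a careful write-up you should track it explicitly rather than claim a clean cancellation.
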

	
	Note that the log-partition function~\eqref{eq:partition:function} constitutes indeed a smooth approximation for the maximum function in the definition~\eqref{eq:disc-c-transform} of the discrete $c$-transform. As $\lambda$ decreases, this approximation becomes increasingly accurate. It is also instructive to consider the special case where $\mu=\sum_{i=1}^M\mu_i\delta_{\bs x_i}$ is a discrete probability measure with atoms $\bs x_1,\ldots,\bs x_M\in\x$ and corresponding vector of probabilities $\bs \mu\in \Delta^M$. In this case, any coupling $\pi\in\Pi(\mu,\nu)$ constitutes a discrete probability measure $\pi=\sum_{i=1}^M\sum_{j=1}^N \pi_{ij}\delta_{(\bs x_i,\bs y_j)}$ with matrix of probabilities $\bs 
	\pi\in\Delta^{M\times N}$. If $f(x)=s\log(s)$, then the continuous $f$-divergence reduces to
	\begin{align*}
	D_f(\pi \| \mu \otimes \eta)&=\sum_{i=1}^M\sum_{j=1}^N \pi_{ij}\log(\pi_{ij})-\sum_{i=1}^M\sum_{j=1}^N \pi_{ij}\log(\mu_i)-\sum_{i=1}^M\sum_{j=1}^N \pi_{ij}\log(\eta_j)\\ 
	&=\sum_{i=1}^M\sum_{j=1}^N \pi_{ij}\log(\pi_{ij})-\sum_{i=1}^M\mu_i\log(\mu_i)-\sum_{j=1}^N \nu_j\log(\eta_j),
	\end{align*}
	where the second equality holds because $\pi$ is a coupling of $\mu$ and $\nu$. Thus, $D_f(\pi \| \mu \otimes \eta)$ coincides with the negative entropy of the probability matrix~$\bs \pi$ offset by a constant that is independent of~$\bs \pi$. For $f(s)=s\log(s)$ the choice of $\bs \eta$ has therefore no impact on the minimizer of the smooth optimal transport problem~\eqref{eq:reg_ot_pri_abstract}, and we simply recover the celebrated entropic regularization proposed by \citet{sinkhorn, genevay2016stochastic, rigollet2018entropic, peyre2019computational} and \cite{ref:clason2019entropic}. 
	

	\begin{proof}[Proof of Proposition~\ref{prop:gumbel}]
		Substituting the explicit formula for the generating function $G$ into~\eqref{eq:dist_gevd} yields 
		\begin{align*}
		\theta(\bs z \leq \bs s) = \exp\left(-\exp(-e)N\sum\limits_{i=1}^N \eta_i \exp\left(-\frac{s_i}{\lambda}\right)\right) &=\prod\limits_{i=1}^N  \exp\left(-\exp(-e)N\eta_i \exp\left(-\frac{s_i}{\lambda}
		\right)\right)\\
		&= \prod\limits_{i=1}^N \exp\left(-\exp\left(-\frac{s_i - \lambda(\log(N\eta_i)-e)}{\lambda}\right)\right),
		\end{align*}
		where $e$ stands for Euler's constant. The components of the noise vector $\bs z$ are thus  independent under~$\theta$, and $z_i$ follows a Gumbel distribution with location parameter $\lambda(\log(N\eta_i)-e)$ and scale parameter $\lambda$ for every $i \in [N]$. Therefore, $z_i$ has mean $\lambda \log(N \eta_i)$ and variance $\lambda^2 \pi^2/6$. 
		
		If the ambiguity set $\Theta$ contains only one single probability measure~$\theta$ of the form~\eqref{eq:dist_gevd}, then Theorem~5.2 of \citet{mcfadden1981econometric} readily implies that the smooth $c$-transform \eqref{eq:smooth_c_transform} simplifies~to
		\begin{equation}
		\label{eq:gev_smooth_ctrans}
		\overline\psi(\bs \phi , \bs x) = \lambda \log G \left(\exp(\phi_1 -c(\bs x,\bs y_1)),\dots, \exp(\phi_N - c(\bs x, \bs {y}_N)) \right) + \lambda e.
		\end{equation}
		The closed-form expression for the smooth $c$-transform in~\eqref{eq:partition:function} follows immediately by substituting the explicit formula for the generating function~$G$ into~\eqref{eq:gev_smooth_ctrans}. One further verifies that~\eqref{eq:partition:function} can be reformulated~as
		\begin{equation}
		\label{eq:partition:reg_c_trans}
		\overline{\psi}_c(\bs \phi, \bs x) = \max\limits_{\bs p \in \Delta^N} \sum\limits_{i=1}^N (\phi_i - c(\bs x, \bs y_i)) p_i - \lambda \sum\limits_{i=1}^N p_i \log\left(\frac{p_i}{\eta_i}\right).
		\end{equation}  
		Indeed, solving the underlying Karush-Kuhn-Tucker conditions analytically shows that the optimal value of the nonlinear program~\eqref{eq:partition:reg_c_trans} coincides with the smooth $c$-transform~\eqref{eq:partition:function}. In the special case where $\eta_i = 1/N$ for all $i \in [N]$, the equivalence of~\eqref{eq:partition:function} and~\eqref{eq:partition:reg_c_trans} has already been recognized by \citet{anderson1988representative}. Substituting the representation~\eqref{eq:partition:reg_c_trans} of the smooth $c$-transform into the dual smooth optimal transport problem~\eqref{eq:smooth_ot} yields~\eqref{eq:dual_regularized_ot} with $f(s)= \lambda s \log(s)$. By Lemma~\ref{lem:strong_dual_reg_ot}, problem~\eqref{eq:smooth_ot} is thus equivalent to the regularized primal optimal transport problem~\eqref{eq:reg_ot_pri_abstract} with $R_\Theta(\pi) = D_f(\pi \| \mu \otimes \eta)$, where $\eta = \sum_{i =1}^N \eta_i \delta_{\bs y_i}$.
	\end{proof}
	
	
	\subsection{Chebyshev Ambiguity Sets}
	\label{sec:chebyshev}
	Assume next that $\Theta$ constitutes a Chebyshev ambiguity set comprising all Borel probability measures on~$\R^N$ with mean vector $\bs 0$ and positive definite covariance matrix $\lambda \bs \Sigma$ for some $\bs \Sigma\succ \bs 0$ and $\lambda> 0$. Formally, we thus set $\Theta = \{\theta \in \mathcal P(\R^N) : \mathbb{E}_\theta [\bs z] = \bs 0,\, \mathbb E_\theta [\bs z \bs z^\top] = \lambda \bs \Sigma\}$.
	In this case, \citep[Theorem~1]{ahipasaoglu2018convex} implies that the smooth $c$-transform~\eqref{eq:smooth_c_transform} can be equivalently expressed as
	\begin{equation}
	\label{eq:moment_ambig_ctrans}
	\overline\psi_c(\bs \phi, \bs x) = \max_{\bs p\in \Delta^N} \sum\limits_{i=1}^N(\phi_i -c(\bs x, \bs {y_i}))p_i + \lambda\,\textrm{tr}\left((\bs \Sigma^{1/2}(\textrm{diag}(\bs p)-\bs p\bs p^\top)\bs \Sigma^{1/2})^{1/2}\right),
	\end{equation}
	where $\textrm{diag}(\bs p)\in\R^{N\times N}$ represents the diagonal matrix with $\bs p$ on its main diagonal. Note that the maximum in~\eqref{eq:moment_ambig_ctrans} evaluates the convex conjugate of the extended real-valued regularization function
	\[
	    V(\bs p)=\left\{ \begin{array}{cl}
	         -\lambda\,\textrm{tr}\left((\bs \Sigma^{1/2}(\textrm{diag}(\bs p)-\bs p\bs p^\top)\bs \Sigma^{1/2})^{1/2}\right) & \text{if }\bs p\in\Delta^N \\
	         \infty & \text{if }\bs p\notin\Delta^N
	    \end{array}\right.
	\]
	at the point $(\phi_i -c(\bs x, \bs {y_i}))_{i\in [N]}$. As $\bs\Sigma\succ \bs 0$ and $\lambda>0$, \citep[Theorem~3]{ahipasaoglu2018convex} implies that~$V(\bs p)$ is strongly convex over its effective domain~$\Delta^N$. By~\cite[Proposition~12.60]{rockafellar2009variational}, the smooth discrete $c$-transform~$\overline\psi_c(\bs \phi, \bs x)$ is therefore indeed differentiable in~$\bs \phi$ for any fixed~$\bs x$. It is further known that problem~\eqref{eq:moment_ambig_ctrans} admits an exact reformulation as a tractable semidefinite program; see \citep[Proposition~1]{mishra2012choice}. If $\bs \Sigma = \bs I$, then the regularization function $V(\bs p)$ can be re-expressed in terms of a discrete $f$-divergence, which implies via Lemma~\ref{lem:strong_dual_reg_ot} that the smooth optimal transport problem is equivalent to the original optimal transport problem regularized with a continuous $f$-divergence.

	\begin{proposition}[Chebyshev regularization] 
	\label{prop:chebyshev-regularization}
	If $\Theta$ is the Chebyshev ambiguity set of all Borel probability measures with mean $\bs 0$ and covariance matrix~$\lambda\bs I$ with $\lambda> 0$, then the smooth $c$-transform~\eqref{eq:smooth_c_transform} simplifies~to
		\begin{equation}
		\label{eq:marginal_moment_ctrans}
		\overline \psi_c(\bs \phi, \bs x) = \max_{ \bs p\in \Delta^N} \sum\limits_{i=1}^N(\phi_i -c(\bs x, \bs {y_i})) p_i + \lambda\sum_{i=1}^N\sqrt{p_i(1-p_i)}.
		\end{equation}
		In addition, the smooth dual optimal transport problem~\eqref{eq:smooth_ot} is equivalent to the regularized primal optimal transport problem~\eqref{eq:reg_ot_pri_abstract} with $R_\Theta(\pi) = D_f(\pi \| \mu \otimes \eta)+ \lambda \sqrt{N-1}$, where $\eta = \frac{1}{N} \sum_{i =1}^N \delta_{\bs y_i}$ and \begin{equation}
		 \label{eq:chebychev_f}
		 f(s) = \begin{cases}
		-\lambda\sqrt{s(N - s)} + \lambda s \sqrt{N-1} \quad & \text{if }0 \leq s \leq N\\
		+\infty & \text{if }s>N.
		\end{cases}\end{equation}
	\end{proposition}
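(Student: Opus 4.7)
The plan is to proceed analogously to the proof of Proposition~\ref{prop:gumbel}: first derive the closed form of the smooth $c$-transform $\overline\psi_c$ induced by the Chebyshev ambiguity set with $\bs\Sigma=\bs I$, and then recognize the resulting pointwise regularizer as an $f$-divergence (up to an additive constant) so that Lemma~\ref{lem:strong_dual_reg_ot} delivers the primal representation.

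To obtain the closed form~\eqref{eq:marginal_moment_ctrans}, I would specialize the general Chebyshev $c$-transform identity~\eqref{eq:moment_ambig_ctrans} of \cite{ahipasaoglu2018convex} to $\bs\Sigma=\bs I$. Under the identity covariance the worst-case expected maximum reduces to the coordinatewise expression $\lambda\sum_{i=1}^N\sqrt{p_i(1-p_i)}$, which is the persistency formula from semi-parametric discrete choice theory for a marginal second-moment constraint; see, e.g., \cite{natarajan2009persistency, mishra2014theoretical}. The resulting regularizer is continuous and concave on the compact simplex $\Delta^N$, so the argmax in~\eqref{eq:marginal_moment_ctrans} is attained.

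For the second step I take $f$ as in~\eqref{eq:chebychev_f} and the uniform reference vector $\bs\eta=(1/N,\ldots,1/N)$. A quick check certifies $f$ as a bona-fide divergence generator: convexity on $[0,N]$ follows from the concavity of $s\mapsto\sqrt{s(N-s)}$, and $f(1)=-\lambda\sqrt{N-1}+\lambda\sqrt{N-1}=0$. A direct calculation then yields the identity
\begin{equation*}
    D_f(\bs p\,\|\,\bs\eta) \;=\; \tfrac{1}{N}\sum_{i=1}^N f(Np_i) \;=\; \lambda\sqrt{N-1}-\lambda\sum_{i=1}^N\sqrt{p_i(1-p_i)} \qquad \forall \bs p\in\Delta^N,
\end{equation*}
so that~\eqref{eq:marginal_moment_ctrans} can be rewritten as $\overline\psi_c(\bs\phi,\bs x)=\lambda\sqrt{N-1}+\max_{\bs p\in\Delta^N}\sum_i(\phi_i-c(\bs x,\bs y_i))p_i-D_f(\bs p\,\|\,\bs\eta)$. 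Plugging this representation into the smooth dual OT~\eqref{eq:smooth_ot} and invoking Lemma~\ref{lem:strong_dual_reg_ot} with this $f$ and $\eta=\sum_{i=1}^N\eta_i\delta_{\bs y_i}$ identifies $\overline W_c(\mu,\nu)$ with the primal regularized problem~\eqref{eq:reg_ot_pri_abstract} whose regularizer is $D_f(\pi\,\|\,\mu\otimes\eta)+\lambda\sqrt{N-1}$; the additive constant $\lambda\sqrt{N-1}$ is there precisely to absorb the offset between $-D_f(\bs p\,\|\,\bs\eta)$ and the $\lambda\sum_i\sqrt{p_i(1-p_i)}$ regularizer produced in the first step.

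The principal obstacle is the first step---reducing the matrix trace regularizer in~\eqref{eq:moment_ambig_ctrans} to the coordinatewise sum $\lambda\sum_i\sqrt{p_i(1-p_i)}$ under $\bs\Sigma=\bs I$. This boils down to a careful analysis of the semi-parametric moment problem defining~$\overline\psi_c$, or equivalently to directly invoking the marginal persistency formula. Once the closed form is in hand, step two is routine algebraic verification combined with an application of Lemma~\ref{lem:strong_dual_reg_ot}.
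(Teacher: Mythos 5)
Your overall strategy mirrors the paper's proof exactly: specialize the general Chebyshev formula~\eqref{eq:moment_ambig_ctrans} to $\bs\Sigma=\bs I$, re-express the coordinatewise regularizer $\lambda\sum_i\sqrt{p_i(1-p_i)}$ as a discrete $f$-divergence against the uniform reference $\bs\eta$, and then invoke Lemma~\ref{lem:strong_dual_reg_ot}. Your explicit verification of the identity
\begin{equation*}
D_f(\bs p\,\|\,\bs\eta)=\tfrac{1}{N}\sum_{i=1}^N f(Np_i)=\lambda\sqrt{N-1}-\lambda\sum_{i=1}^N\sqrt{p_i(1-p_i)}
\end{equation*}
is correct and is actually more careful than the paper's proof, which asserts without qualification that $-\sum_i\sqrt{p_i(1-p_i)}$ \emph{equals} $D_f(\bs p\,\|\,\bs\eta)$ and thereby silently drops both the factor $\lambda$ and the additive offset $\lambda\sqrt{N-1}$.

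However, in the last step you hand-wave the sign of that offset, and it comes out wrong. From your own intermediate identity you correctly obtain $\overline\psi_c(\bs\phi,\bs x)=\lambda\sqrt{N-1}+\max_{\bs p\in\Delta^N}\{\sum_i(\phi_i-c(\bs x,\bs y_i))p_i-D_f(\bs p\,\|\,\bs\eta)\}$. But $\overline\psi_c$ enters the dual objective in~\eqref{eq:smooth_ot} with a \emph{negative} sign, so the constant $+\lambda\sqrt{N-1}$ inside $\overline\psi_c$ becomes a constant $-\lambda\sqrt{N-1}$ in $\overline W_c(\mu,\nu)$. Applying Lemma~\ref{lem:strong_dual_reg_ot} to the remaining piece then gives $\overline W_c(\mu,\nu)=\min_{\pi\in\Pi(\mu,\nu)}\mathbb E_\pi[c]+D_f(\pi\,\|\,\mu\otimes\eta)-\lambda\sqrt{N-1}$, i.e.\ the regularizer should be $D_f(\pi\,\|\,\mu\otimes\eta)-\lambda\sqrt{N-1}$, not $+\lambda\sqrt{N-1}$. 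The sign you report is the one stated in the proposition, but it cannot be right: Jensen's inequality applied to the zero-mean noise gives $\overline\psi_c\geq\psi_c$ pointwise and hence $\overline W_c\leq W_c$, whereas $R_\Theta=D_f+\lambda\sqrt{N-1}\geq\lambda\sqrt{N-1}>0$ would force $\overline W_c\geq W_c+\lambda\sqrt{N-1}>W_c$. You should rederive the final substitution rather than simply asserting that the constant ``absorbs the offset''; as written, the claim does not follow from your earlier (correct) algebra.
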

	\begin{proof}
		The relation~\eqref{eq:marginal_moment_ctrans} follows directly from~\eqref{eq:moment_ambig_ctrans} by replacing $\bs \Sigma$ with $\bs I$. Next, one readily verifies that $-\sum_{i \in [N]} \sqrt{p_i(1-p_i)} $ can be re-expressed as the discrete $f$-divergence $D_f(\bs p\| \bs \eta)$ from $\bs p$ to $\bs\eta=(\frac{1}{N},\ldots,\frac{1}{N})$, where $f(s) =-\lambda \sqrt{s (N - s)}+ \lambda \sqrt{N-1}$. This implies that~\eqref{eq:marginal_moment_ctrans} is equivalent to
		\[
		\overline \psi_c(\bs \phi, \bs x) = \max_{ \bs p\in \Delta^N} \sum\limits_{i=1}^N(\phi_i -c(\bs x, \bs {y_i})) p_i -  D_f(\bs p\| \bs \eta).
		\]
		Substituting the above representation of the smooth $c$-transform into the dual smooth optimal transport problem~\eqref{eq:smooth_ot} yields~\eqref{eq:dual_regularized_ot} with $f(s)= -\lambda \sqrt{s (N - s)} +\lambda s \sqrt{N-1} $. By Lemma~\ref{lem:strong_dual_reg_ot},  \eqref{eq:smooth_ot} thus reduces to the regularized primal optimal transport problem~\eqref{eq:reg_ot_pri_abstract} with $R_\Theta(\pi) = D_f(\pi \| \mu \otimes \eta)$, where $\eta = \frac{1}{N} \sum_{i =1}^N \delta_{\bs y_i}$.
	\end{proof}
	
	Note that the function  $f(s)$ defined in~\eqref{eq:chebychev_f} is indeed convex, lower-semicontinuous and satisfies $f(1)=0$. Therefore, it induces a standard $f$-divergence. Proposition~\ref{prop:chebyshev-regularization} can be generalized to arbitrary diagonal matrices $\bs \Sigma$, but the emerging $f$-divergences are rather intricate and not insightful. Hence, we do not show this generalization. We were not able to generalize Proposition~\ref{prop:chebyshev-regularization} to non-diagonal matrices $\bm \Sigma$.

	\subsection{Marginal Ambiguity Sets}
	\label{sec:marginal}
	We now investigate the class of marginal ambiguity sets of the form
	\begin{equation}
	    \label{eq:marginal_ambiguity_set}
	    \Theta = \Big\{ \theta \in \mathcal{P}(\R^N) \, : \, \theta(z_i \leq s) = F_i(s)\;\forall s\in \R, \; \forall i \in [N] \Big\},
	\end{equation}
	where~$F_i$ stands for the cumulative distribution function of the uncertain disturbance~$z_i$, $i\in[N]$. Marginal ambiguity sets completely specify the marginal distributions of the components of the random vector~$\bm z$ but impose no restrictions on their dependence structure ({\em i.e.}, their copula). Sometimes marginal ambiguity sets are also referred to as Fr\'echet ambiguity sets \citep{frechet1951}. We will argue below that the marginal ambiguity sets explain most known as well as several new regularization methods for the optimal transport problem. In particular, they are more expressive than the extreme value distributions as well as the Chebyshev ambiguity sets in the sense that they induce a richer family of regularization terms. Below we denote by $F_i^{-1} : [0, 1] \to \R$ the (left) quantile function corresponding to $F_i$, which is defined through
	$$ 
	    F_i^{-1}(t) = \inf \{s :F_i(s) \geq t \}\quad \forall t\in\R.
	$$
	We first prove that if $\Theta$ constitutes a marginal ambiguity set, then the smooth $c$-transform~\eqref{eq:smooth_c_transform} admits an equivalent reformulation as the optimal value of a finite convex program.
	
	\begin{proposition}[Smooth $c$-transform for marginal ambiguity sets]
		\label{proposition:regularized_ctrans}
		If $\Theta$ is a marginal ambiguity set of the form~\eqref{eq:marginal_ambiguity_set}, and if the underlying cumulative distribution functions $F_i$, $i\in[N]$, are continuous, then the smooth $c$-transform~\eqref{eq:smooth_c_transform} can be  equivalently expressed as
		\begin{equation}
		\label{eq:regularized_c_transform}
		\overline \psi_c(\bs \phi, \bs x) = \max_{ \bs p \in \Delta^N} \displaystyle \sum\limits_{i=1}^N ~ (\phi_i - c(\bs x, \bs {y_i}))p_i + \sum_{i=1}^N \int_{1-p_i}^1 F_i^{-1}(t) \diff t
		\end{equation}
		for all $\bm x\in\x$ and $\bm \phi\in \R^N$. In addition, the smooth $c$-transform is convex and differentiable with respect to $\bs \phi$, and $\nabla_{\bs \phi} \overline\psi_c(\bs \phi, \bs x)$ represents the unique solution of the convex maximization problem~\eqref{eq:regularized_c_transform}.
	\end{proposition}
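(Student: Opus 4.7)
The plan is to prove the identity~\eqref{eq:regularized_c_transform} by establishing two matching inequalities, and then to deduce the regularity assertions via an envelope argument. Writing $u_i = \phi_i - c(\bs x, \bs y_i)$, the claim reduces to showing that $\sup_{\theta \in \Theta}\mathbb{E}_{\bs z \sim \theta}[\max_i u_i + z_i] = \max_{\bs p \in \Delta^N} \sum_i u_i p_i + \sum_i \int_{1-p_i}^1 F_i^{-1}(t)\,\diff t$, which is structurally the classical best-case expected-maximum-utility formula under a Fr\'echet ambiguity set, familiar from the semi-parametric discrete choice literature cited in Section~\ref{sec:smooth_ot}. I would re-derive it directly so that the primal-dual pair is transparent for the subsequent envelope step.

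For the direction $\leq$, fix any $\theta \in \Theta$, let $A_i$ be the event that alternative $i$ attains $\max_k u_k + z_k$ (ties broken by smallest index), and set $p_i = \theta(A_i)$, so that $\bs p \in \Delta^N$. Decomposing over the partition $\{A_i\}_{i \in [N]}$ gives $\mathbb{E}_\theta[\max_k u_k + z_k] = \sum_i u_i p_i + \sum_i \mathbb{E}_\theta[z_i \mathds{1}_{A_i}]$. The Hardy--Littlewood rearrangement inequality, applied to the random variable $z_i \sim F_i$ against the indicator of an event of probability $p_i$, yields $\mathbb{E}_\theta[z_i \mathds{1}_{A_i}] \leq \int_{1-p_i}^1 F_i^{-1}(t)\,\diff t$, and feasibility of $\bs p$ in the right-hand maximization closes the inequality.

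For the reverse direction I would exhibit an extremal coupling attaining the right-hand side. Let $\bs p^\star$ be a maximizer, draw $U$ uniformly on $[0,1]$, partition $[0,1]$ into consecutive intervals $B_1, \ldots, B_N$ with $|B_i| = p_i^\star$, and define $z_i = F_i^{-1}\bigl(1 - p_i^\star + (U - \inf B_i)\bigr)$ on $B_i$, while redistributing the remaining mass of $F_i$ on $[0,1]\setminus B_i$ via a measure-preserving bijection onto $[0, 1-p_i^\star]$ composed with $F_i^{-1}$. The resulting joint law $\theta^\star$ lies in $\Theta$ by construction, and a change of variables gives $\mathbb{E}_{\theta^\star}[z_i \mathds{1}_{U \in B_i}] = \int_{1-p_i^\star}^1 F_i^{-1}(t)\,\diff t$. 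To check that $\{U \in B_i\}$ coincides $\theta^\star$-almost surely with the event that $i$ attains the max, I would invoke the KKT conditions of the concave right-hand maximization: introducing a multiplier $\lambda$ for $\sum_i p_i = 1$ and multipliers $\mu_i \geq 0$ for $p_i \geq 0$, stationarity gives $u_i + F_i^{-1}(1 - p_i^\star) = \lambda$ whenever $p_i^\star > 0$ and $u_i + F_i^{-1}(1 - p_i^\star) \leq \lambda$ otherwise. Hence on $B_i$ one has $u_i + z_i \geq \lambda$, while for $k \neq i$ one has $z_k \leq F_k^{-1}(1-p_k^\star)$ and thus $u_k + z_k \leq \lambda$, so $\theta^\star$ indeed attains the supremum.

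Convexity of $\overline{\psi}_c(\cdot, \bs x)$ is then immediate from~\eqref{eq:regularized_c_transform} as a pointwise supremum of affine functions of $\bs \phi$. Continuity of each $F_i$ is equivalent to strict monotonicity of $F_i^{-1}$, so $p_i \mapsto \int_{1-p_i}^1 F_i^{-1}(t)\,\diff t$ has strictly decreasing derivative $F_i^{-1}(1-p_i)$ and is strictly concave; the right-hand objective is thus strictly concave in $\bs p$, its maximizer $\bs p^\star(\bs \phi)$ is unique, and Danskin's theorem delivers $\nabla_{\bs \phi} \overline{\psi}_c(\bs \phi, \bs x) = \bs p^\star(\bs \phi)$. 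The main obstacle I anticipate is the extremal coupling: when some $F_i$ has unbounded support the boundary case $p_i^\star = 0$ requires care, which I would resolve by observing that $F_i^{-1}(1-p) \to \infty$ as $p \to 0^+$ in that case, so that the KKT stationarity conditions above already preclude $p_i^\star = 0$.
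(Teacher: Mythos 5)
Your proof is correct and shares the same overall scaffold as the paper's: decompose the expectation over the argmax partition, bound each conditional tail term by the integral of the quantile function, then exhibit an extremal $\theta^\star$ achieving the bound and invoke the envelope theorem. Two substantive differences are worth noting. First, for the upper bound you invoke the Hardy--Littlewood rearrangement inequality directly, whereas the paper reformulates the subproblem as an optimization over densities $\diff\tilde\theta_i/\diff\theta_i\le 1/p_i$ and appeals to F\"ollmer--Schied's CVaR representation theorem; these are equivalent, and your route is arguably more elementary. Second, and more interestingly, your extremal distribution is genuinely different from the paper's: you drive all $N$ coordinates by a \emph{single} uniform $U$, placing $z_i$ in its upper $p_i^\star$-quantile precisely on the interval $B_i$ and in its lower $(1-p_i^\star)$-quantile elsewhere---a Monge-type deterministic coupling---whereas the paper's $\theta^\star=\sum_j p_j^\star\,(\otimes_{k<j}\theta_k^-)\otimes\theta_j^+\otimes(\otimes_{k>j}\theta_k^-)$ is a mixture of product measures with conditionally independent coordinates given the winning index. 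Both have the correct marginals, both satisfy the key structural property (on the event that $j$ wins, $z_j$ sits in its upper tail and each $z_k$, $k\neq j$, in its lower tail), and both achieve the supremum by the same KKT argument; your construction is a bit cleaner to write down, while the paper's makes the factorized copula structure explicit.

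One small remark on your closing paragraph: you flag unbounded support as the ``main obstacle'' at $p_i^\star=0$, but there is in fact no obstacle in either case. When $F_i$ has unbounded upper support, the KKT condition rules out $p_i^\star=0$, as you note. When the support is bounded, $p_i^\star=0$ can occur, but then $B_i$ is a null set, the KKT inequality $u_i+F_i^{-1}(1)\le\lambda$ gives $u_k+z_k\le\lambda$ on $B_j$ for every $j\ne i$, and the verification goes through unchanged. Also, when you invoke Danskin, strict concavity of the inner objective does follow from continuity of $F_i$ (hence strict monotonicity of $F_i^{-1}$), so uniqueness of $\bs p^\star(\bs\phi)$ is indeed available, matching the paper's use of de la Fuente's envelope theorem.
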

	Recall that the smooth $c$-transform~\eqref{eq:smooth_c_transform} can be viewed as the best-case utility of a semi-parametric discrete choice model. Thus, \eqref{eq:regularized_c_transform} follows from~\cite[Theorem~1]{natarajan2009persistency}.
	To keep this paper self-contained, we provide a new proof of Proposition~\ref{proposition:regularized_ctrans}, which exploits a natural connection between the smooth $c$-transform induced by a marginal ambiguity set and the conditional value-at-risk (CVaR). 
	
	\begin{proof}[Proof of Proposition~\ref{proposition:regularized_ctrans}]
	Throughout the proof we fix $\bm x\in\x$ and $\bm \phi\in \R^N$, and we introduce the nominal utility vector~$\bs u \in \R^N$ with components~$u_i= \phi_i - c(\bs x, \bs y_i)$ in order to simplify notation. In addition, it is useful to define the binary function $\bs r: \R^N \to \{ 0, 1 \}^N$ with components
	\begin{align*}
	    r_i(\bs z) = 
	    \begin{cases}
	    1 & \text{if } i = \displaystyle \min \argmax_{j \in [N]} ~ u_j + z_j, \\
	    0 & \text{otherwise.}
	    \end{cases}
	\end{align*}
	For any fixed~$\theta \in \Theta$, we then have
	\begin{align*}
	    \EE_{\bs z \sim \theta} \Big[ \max\limits_{i \in [N]} u_i + z_{i} \Big] 
	    = \EE_{\bs z \sim \theta} \Big[ \; \sum_{i=1}^N ( u_i + z_i) r_i(\bs z) \Big]
	    &= \sum_{i=1}^N u_i p_i + \sum_{i=1}^N \EE_{\bs z \sim \theta} \left[ z_i q_i(z_i) \right],
	\end{align*}
	where $p_i = \EE_{\bs z \sim \theta} [ r_i(\bs z) ]$ and $q_i(z_i) = \EE_{\bs z \sim \theta} [ r_i(\bs z) | z_i ]$ almost surely with respect to~$\theta$. From now on we denote by $\theta_i$ the marginal probability distribution of the random variable $z_i$ under $\theta$. As $\theta$ belongs to a marginal ambiguity set of the form~\eqref{eq:marginal_ambiguity_set}, we thus have~$\theta_i (z_i \leq s) = F_i(s)$ for all $s \in \R$, that is, $\theta_i$ is uniquely determined by the cumulative distribution function $F_i$. The above reasoning then implies that
	\begin{align}
	\nonumber
	\overline \psi_c(\bs \phi, \bs x)
	= \sup_{\theta \in \Theta} ~ \EE_{\bs z \sim \theta} \Big[ \max_{i \in [N]} u_i + z_i \Big]
	&= \left\{
	\begin{array}{cll}
	\sup & \DS \sum_{i=1}^N u_i p_i + \sum_{i=1}^N \EE_{\bs z \sim \theta} \left[ z_i q_i(z_i) \right] \\[3ex]
	\text{s.t.} & \theta \in \Theta, ~\bs p \in \Delta^N, ~\bs q \in \mathcal L^N(\R) \\ [1ex]
	& \EE_{\bs z \sim \theta} \left[ r_i(\bs z) \right] = p_i & \forall i \in [N] \\[2ex]
	& \EE_{\bs z \sim \theta} [ r_i(\bs z) | z_i ] = q_i(z_i) \quad \theta\text{-a.s.} & \forall i \in [N]
	\end{array} \right.\\
	&\leq \left\{
	\begin{array}{cll}
	\sup & \DS \sum_{i=1}^N u_i p_i + \sum_{i=1}^N \EE_{z_i \sim \theta_i} \left[ z_i q_i(z_i) \right] \\[3ex]
	\text{s.t.} & \bs p \in \Delta^N,~ \bs q \in \mathcal L^N(\R) \\ [1ex]
	& \EE_{z_i \sim \theta_i} \left[ q_i(z_i) \right] = p_i & \forall i \in [N] \\[2ex]
	& 0 \leq q_i(z_i) \leq 1 \quad \theta_i\text{-a.s.} & \forall i \in [N].
	\end{array} \right.
	\label{eq:upper-bound}
	\end{align}
	The inequality can be justified as follows. One may first add the redundant expectation constraints~$p_i = \EE_{z_i \sim \theta} [q_i(z_i)]$ and the redundant $\theta_i$-almost sure constraints $0\leq q_i(z_i)\leq 1$ to the maximization problem over $
	\theta$, $\bm p$ and $\bm q$ without affecting the problem's optimal value. Next, one may remove the constraints that express $p_i$ and $q_i(z_i)$ in terms of $r_i(\bm z)$. The resulting relaxation provides an upper bound on the original maximization problem. Note that all remaining expectation operators involve integrands that depend on~$\bm z$ only through~$z_i$ for some $i\in[N]$, and therefore the expectations with respect to the joint probability measure~$\theta$ can all be simplified to expectations with respect to one of the marginal probability measures~$\theta_i$. As neither the objective nor the constraints of the resulting problem depend on~$\theta$, we may finally remove~$\theta$ from the list of decision variables without affecting the problem's optimal value. 
	For any fixed $\bs p \in \Delta^N$, the upper bounding problem~\eqref{eq:upper-bound} gives rise the following $N$ subproblems indexed by~$i\in[N]$.
	\begin{subequations}
	\begin{align}
	\label{eq:dual:CVaR}
    \sup_{q_i \in \mathcal L(\R)} \bigg\{ \EE_{z_i \sim \theta_i} \left[ z_i q_i(z_i) \right]:
     \EE_{z_i \sim \theta_i} \left[ q_i(z_i) \right] = p_i, ~
     0 \leq q_i(z_i) \leq 1 ~ \theta_i\text{-a.s.} \bigg\}
    \end{align}
	If $p_i > 0 $, the optimization problem~\eqref{eq:dual:CVaR} over the functions $q_i \in \mathcal L(\R)$ can be recast as an optimization problem over probability measures $\tilde \theta_i \in \mathcal P(\R)$ that are absolutely continuous with respect to~$\theta_i$,
    \begin{align}
    \label{eq:dual:CVaR2}
    \sup_{\tilde \theta_i \in \mathcal P(\R)} \bigg\{ p_i \; \EE_{z_i \sim \tilde \theta_i} \left[ z_i \right]: \frac{\diff \tilde \theta_i}{\diff \theta_i}(z_i) \leq \frac{1}{p_i} ~ \theta_i\text{-a.s.} \bigg\},
    \end{align}
    \end{subequations}
    where $\diff \tilde \theta_i / \diff \theta_i $ denotes as usual the Radon-Nikodym derivative of $\tilde \theta_i$ with respect to $\theta_i$. Indeed, if $q_i$ is feasible in~\eqref{eq:dual:CVaR}, then $\tilde \theta_i$ defined through $\tilde \theta_i[\mathcal B]= \frac{1}{p_i} \int_B q_i(z_i) \theta_i(\diff z_i)$ for all Borel sets $B\subseteq \R$ is feasible in~\eqref{eq:dual:CVaR2} and attains the same objective function value. Conversely, if $\tilde\theta_i$ is feasible in~\eqref{eq:dual:CVaR2}, then $q_i (z_i)= p_i \, \diff \tilde \theta_i / \diff \theta_i (z_i)$ is feasible in~\eqref{eq:dual:CVaR} and attains the same objective function value. Thus, \eqref{eq:dual:CVaR} and~\eqref{eq:dual:CVaR2} are indeed~equivalent. By \cite[Theorem~4.47]{follmer2004stochastic}, the optimal value of~\eqref{eq:dual:CVaR2} is given by $p_i \, \theta_i \text{-CVaR}_{p_i}(z_i) = \int_{1-p_i}^1 F_i^{-1}(t) \diff t$, where $\theta_i \text{-CVaR}_{p_i}(z_i)$ denotes the CVaR of~$z_i$ at level~$p_i$ under~$\theta_i$. 
    
    If $p_i = 0$, on the other hand, then the optimal value of~\eqref{eq:dual:CVaR} and the integral $\int_{1-p_i}^1 F_i^{-1}(t) \diff t$ both evaluate to zero. Thus, the optimal value of the subproblem~\eqref{eq:dual:CVaR} coincides with $\int_{1-p_i}^1 F_i^{-1}(t) \diff t$ irrespective of $p_i$. Substituting this optimal value into~\eqref{eq:upper-bound} finally yields the explicit upper bound 
    \begin{align}
        \label{eq:upper:bound:choice}
        \sup_{\theta \in \Theta} ~ \EE_{z \sim \theta} \Big[ \max\limits_{i \in [N]} u_i + z_i \Big] 
        &\leq \sup_{\bs p \in \Delta^N} ~ \sum_{i=1}^N u_i p_i + \sum_{i=1}^N \int_{1-p_i}^1 F_i^{-1}(t) \diff t.
    \end{align}
    Note that the objective function of the upper bounding problem on the right hand side of~\eqref{eq:upper:bound:choice} constitutes a sum of the strictly concave and differentiable univariate functions $u_i p_i + \int_{1-p_i}^1 F_i^{-1}(t)$. Indeed, the derivative of the $i^{\text{th}}$ function with respect to $p_i$ is given by $u_i + F_i^{-1}(1-p_i)$, which is strictly increasing in~$p_i$ because $F_i$ is continuous by assumption. The upper bounding problem in~\eqref{eq:upper:bound:choice} is thus solvable as it has a compact feasible set as well as a differentiable objective function. Moreover, the solution is unique thanks to the strict concavity of the objective function. In the following we denote this unique solution by~$\bs p^\star$.
    
    It remains to be shown that there exists a distribution $\theta^\star \in \Theta$ that attains the upper bound in~\eqref{eq:upper:bound:choice}.
    To this end, we define the functions $ q_i^\star(z_i) = \mathds{1}_{\{ z_i > F_i^{-1}(1 - p_i^\star) \}}$ for all $i \in [N]$.
    By \citep[Remark~4.48]{follmer2004stochastic}, $q_i^\star(z_i)$ is optimal in~\eqref{eq:dual:CVaR} for $p_i=p_i^\star$. In other words, we have $\EE_{z_i \sim \theta_i} [q_i^\star(z_i)] = p_i^\star$ and $\EE_{z_i \sim \theta_i}[z_i q_i^\star(z_i)] = \int_{1 - p_i^\star}^1 F_i^{-1}(t) \diff t$.
    In addition, we also define the Borel measures $\theta_i^+$ and $\theta_i^-$ through
    \begin{align*}
        \theta_i^+(B) = \theta_i(B | z_i > F_i^{-1}(1 - p_i^\star)) 
        \quad \text{and} \quad
        \theta_i^-(B) = \theta_i(B | z_i \leq F_i^{-1}(1 - p_i^\star)) 
    \end{align*}
    for all Borel sets $B \subseteq \R$, respectively. By construction, $\theta_i^+$ is supported on~$(F_i^{-1}(1 - p_i^\star), \infty)$, while $\theta_i^-$ is supported on~$(-\infty, (F_i^{-1}(1 - p_i^\star)]$. The law of total probability further implies that $\theta_i = p_i^\star \theta_i^+ + (1 - p_i^\star) \theta_i^-$. 
    In the remainder of the proof we will demonstrate that the maximization problem on the left hand side of~\eqref{eq:upper:bound:choice} is solved by the mixture distribution
    \begin{align*}
        \theta^\star = \sum_{j=1}^N p_j^\star \cdot \left( \otimes_{k=1}^{j-1} \theta_k^- \right) \otimes \theta_j^+ \otimes \left( \otimes_{k=j+1}^{N} \theta_k^- \right).
    \end{align*}
    This will show that the inequality in~\eqref{eq:upper:bound:choice} is in fact an equality, which in turn implies that the smooth $c$-transform is given by~\eqref{eq:regularized_c_transform}.
    We first prove that $\theta^\star \in \Theta$. To see this, note that for all $i \in [N]$ we have
    \begin{align*}
        \textstyle
        \theta^\star (z_i \leq s)
        = p_i^\star \theta_i^+ (z_i \leq s) + ( \sum_{j \neq i} p_j^\star ) \theta_i^- (z_i \leq s)
        = \theta_i (z_i \leq s)
        = F_i(s),
    \end{align*}
    where the second equality exploits the relation $\sum_{j \neq i} p_j^\star = 1 - p_i^\star$. This observation implies that $\theta^\star \in \Theta$. Next, we prove that $\theta^\star$ attains the upper bound in~\eqref{eq:upper:bound:choice}. By the definition of the binary function $\bs r$, we~have
    \begin{align*}
        \EE_{\bs z \sim \theta^\star} \Big[ \max\limits_{i \in [N]} u_i + z_{i} \Big] &=\EE_{\bs z \sim \theta^\star} \left[ ( u_i + z_i) r_i(\bs z) \right] \\
        &= \EE_{z_i \sim \theta_i} \left[ (u_i + z_i) \EE_{\bs z \sim \theta^\star} \left[r_i(\bs z) | z_i \right] \right] \\
	    &= \EE_{z_i \sim \theta_i} \Big[ ( u_i + z_i) \, \theta^\star \Big( i = \min \argmax\limits_{j \in [N]} ~ u_j + z_j \big| z_i \Big) \Big] \\
	    &= \EE_{z_i \sim \theta_i} \left[ ( u_i + z_i) \, \theta^\star \left( z_j < u_i + z_i - u_j~ \forall j \neq i \big| z_i  \right) \right],
    \end{align*}
    where the third equality holds because $r_i(\bs z)=1$ if and only if $i = \min \argmax_{j \in [N]} u_j + z_j$, and the fourth equality follows from the assumed continuity of the marginal distribution functions $F_i$, $i\in[N]$, which implies that $\theta^\star ( z_j = u_i + z_i - u_j~ \forall j \neq i \big| z_i ) = 0$ $\theta_i$-almost surely for all $i,j\in[N]$. 
    Hence, we find
    \begin{subequations}
    \label{eq:both:exp}
    \begin{align}
        \EE_{\bs z \sim \theta^\star} \Big[ \max\limits_{i \in [N]} u_i + z_{i} \Big]
        &= p_i^\star\, \EE_{z_i \sim \theta_i^+} \left[ ( u_i + z_i) \, \theta^\star \left( z_j < u_i + z_i - u_j~ \forall j \neq i \big| z_i \right) \right] \notag \\
        &\quad + (1 - p_i^\star)\, \EE_{z_i \sim \theta_i^-} \left[ ( u_i + z_i) \, \theta^\star \left( z_j < u_i + z_i - u_j~ \forall j \neq i \big| z_i \right) \right] \notag \\
        &= \DS p_i^\star\, \EE_{z_i \sim \theta_i^+} \Big[ (u_i + z_i) \Big( \prod_{j \neq i} \theta_j^-(z_j < z_i + u_i - u_j) \Big) \Big] \label{eq:first:exp}\\
        &\quad + \DS \sum_{j \neq i} p_j^\star \,\EE_{z_i \sim \theta_i^-} \Big[ (u_i + z_i) \Big( \!\prod_{k \neq i, j} \theta_k^-(z_k < z_i + u_i - u_k) \Big) \theta_j^+(z_j < z_i + u_i - u_j) \Big], \label{eq:second:exp}
    \end{align}
    \end{subequations}
    where the first equality exploits the relation $\theta_i = p_i^\star \theta_i^+ + (1 - p_i^\star) \theta_i^-$, while the second equality follows from the definition of $\theta^\star$. The expectations in~\eqref{eq:both:exp} can be further simplified by using the stationarity conditions of the upper bounding problem in~\eqref{eq:upper:bound:choice}, which imply that the partial derivatives of the objective function with respect to the decision variables $p_i$, $i\in[N]$, are all equal at $\bm p=\bm p^\star$. Thus, $\bs p^\star$ must satisfy
    \begin{align}
        \label{eq:KKT}
        u_i + F_i^{-1}(1 - p_i^\star) = u_j + F_j^{-1}(1 - p_j^\star) \quad \forall i, j \in [N].
    \end{align}
    Consequently, for every $z_i > F_i^{-1}(1 - p_i^\star)$ and $j\neq i$ we have
    \begin{align*}
        \theta_j^-(z_j < z_i + u_i - u_j) 
        \geq \theta_j^-(z_j \leq F_i^{-1}(1 - p_i^\star) + u_i - u_j)
        = \theta_j^-(z_j \leq F_j^{-1}(1 - p_j^\star)) = 1,
    \end{align*}
    where the first equality follows from~\eqref{eq:KKT}, and the second equality holds because $\theta_j^-$ is supported on $(-\infty, F_j^{-1}(1 - p_j^\star)]$. As no probability can exceed~1, the above reasoning implies that $\theta_j^-(z_j < z_i + u_i - u_j)=1$ for all $z_i > F_i^{-1}(1 - p_i^\star)$ and $j\neq i$. Noting that $q_i^\star(z_i)= \mathds{1}_{\{ z_i > F_i^{-1}(1 - p_i^\star) \}}$ represents the characteristic function of the set $(F_i^{-1}(1 - p_i^\star), \infty)$ covering the support of $\theta_i^+$, the term~\eqref{eq:first:exp} can thus be simplified to
    \begin{align}
        & p_i^\star \,\EE_{z_i \sim \theta_i^+} \Big[ (u_i + z_i) \Big( \prod_{j \neq i} \theta_j^-(z_j < z_i + u_i - u_j) \Big) q_i^\star(z_i) \Big] \notag = \EE_{z_i \sim \theta_i} \left[ (u_i + z_i) q_i^\star(z_i) \right]. \label{eq:first:term}
    \end{align}
    Similarly, for any $z_i \leq F_i^{-1}(1 - p_i^\star)$ and $j\neq i$ we have
    \begin{align*}
        \theta_j^+(z_j < z_i + u_i - u_j)
        \leq \theta_j^+(z_j < F_i^{-1}(1 - p_i^\star) + u_i - u_j)
        = \theta_j^+(z_j < F_j^{-1}(1 - p_j^\star)) = 0, 
    \end{align*}
    where the two equalities follow from~\eqref{eq:KKT} and the observation that $\theta_j^+$ is supported on $(F_j^{-1}(1 - p_j^\star), \infty)$, respectively. As probabilities are non-negative, the above implies that $\theta_j^+(z_j < z_i + u_i - u_j)=0$ for all $z_i \leq F_i^{-1}(1 - p_i^\star)$ and $j\neq i$. Hence, as $\theta_i^-$ is supported on $(-\infty, F_i^{-1}(1 - p_i^\star)]$, the term~\eqref{eq:second:exp} simplifies to
    \begin{align*}
        \sum_{j \neq i} p_j^\star \EE_{z_i \sim \theta_i^-} \Big[ (u_i + z_i) \Big( \prod_{k \neq i, j} \theta_k^-(z_k < z_i + u_i - u_k) \Big) \theta_j^+(z_j < z_i + u_i - u_j) \mathds{1}_{\{ z_i \leq F_i^{-1}(1 - p_i^\star) \}} \Big] = 0.
    \end{align*}
    By combining the simplified reformulations of~\eqref{eq:first:exp} and~\eqref{eq:second:exp}, we finally obtain
    \begin{align*}
        \EE_{\bs z \sim \theta^\star} \Big[ \max\limits_{i \in [N]} u_i + z_{i} \Big] 
	    = \sum_{i=1}^N \EE_{z_i \sim \theta_i} \left[ ( u_i + z_i) q_i^\star(z_i) \right] = \sum_{i=1}^N u_i p_i^\star + \sum_{i=1}^N \int_{1-p_i^\star}^1 F_i^{-1}(t) \diff t,
    \end{align*}
    where the last equality exploits the relations $\EE_{z_i \sim \theta_i} [q_i^\star(z_i)] = p_i^\star$ and $\EE_{z_i \sim \theta_i}[z_i q_i^\star(z_i)] = \int_{1 - p_i^\star}^1 F_i^{-1}(t) \diff t$ derived in the first part of the proof. We have thus shown that the smooth $c$-transform is given by~\eqref{eq:regularized_c_transform}.

    Finally, by the envelope theorem~\citep[Theorem~2.16]{de2000mathematical}, the gradient of $\nabla_{\bs \phi}\overline \psi(\bs \phi, \bs x)$ exists and coincides with the unique maximizer $\bm p^\star$ of the upper bounding problem in~\eqref{eq:regularized_c_transform}.
    \end{proof}
	
	The next theorem reveals that the smooth dual optimal transport problem~\eqref{eq:smooth_ot} with a marginal ambiguity set corresponds to a regularized primal optimal transport problem of the form~\eqref{eq:reg_ot_pri_abstract}.
	
	\begin{theorem}[Fr\'echet regularization]
		\label{theorem:primal_dual}
		Suppose that $\Theta$ is a marginal ambiguity set of the form~\eqref{eq:marginal_ambiguity_set} and that the marginal cumulative distribution functions are defined through
		\begin{equation}
	        \label{eq:marginal_dists}
		    F_i(s) = \min\{1, \max\{0, 1-\eta_i F(-s)\}\}
		\end{equation}
		for some probability vector $\bs \eta \in \Delta^N$ and strictly increasing function $F: \R \to \R$ with $\int_0^1 F^{-1} (t) \diff t = 0$. Then, the smooth dual optimal transport problem~\eqref{eq:smooth_ot} is equivalent to the regularized primal optimal transport problem~\eqref{eq:reg_ot_pri_abstract} with $R_\Theta = D_f(\pi \| \mu \otimes \eta)$, where
	    $f(s) = \int_{0 }^{s} F^{-1}(t) \diff t$ and $\eta = \sum_{i=1}^N \eta_i \delta_{y_i}$.
	\end{theorem}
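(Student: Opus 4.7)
The plan is to reduce the smooth $c$-transform formula from Proposition~\ref{proposition:regularized_ctrans} to exactly the form on which Lemma~\ref{lem:strong_dual_reg_ot} operates, and then invoke that lemma to obtain the primal regularized optimal transport representation. The starting point is the identity
\[
\overline\psi_c(\bs \phi, \bs x) = \max_{\bs p\in\Delta^N} \sum_{i=1}^N (\phi_i - c(\bs x, \bs y_i)) p_i + \sum_{i=1}^N \int_{1-p_i}^1 F_i^{-1}(t)\,\diff t,
\]
so the entire task reduces to rewriting the rightmost sum as a discrete $f$-divergence~$D_f(\bs p \parallel \bs \eta)$ for the stated~$f$.

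First, I would invert the marginal distribution function. Since $F$ is strictly increasing with inverse $F^{-1}$, solving $1 - \eta_i F(-s) = t$ on the interval where $F_i$ is neither saturated at~$0$ nor at~$1$ yields $F_i^{-1}(t) = -F^{-1}((1-t)/\eta_i)$. Substituting this into $\int_{1-p_i}^1 F_i^{-1}(t)\,\diff t$ and applying the change of variables $u = (1-t)/\eta_i$ (with $\diff t = -\eta_i\,\diff u$), the limits $t = 1-p_i$ and $t = 1$ become $u = p_i/\eta_i$ and $u = 0$ respectively, giving
\[
\int_{1-p_i}^1 F_i^{-1}(t)\,\diff t = -\eta_i \int_0^{p_i/\eta_i} F^{-1}(u)\,\diff u = -\eta_i\, f\!\left(\frac{p_i}{\eta_i}\right).
\]
Summing over $i\in[N]$ converts the integral term into $-D_f(\bs p \parallel \bs \eta)$, so Proposition~\ref{proposition:regularized_ctrans} yields
\[
\overline\psi_c(\bs \phi, \bs x) = \max_{\bs p\in\Delta^N} \sum_{i=1}^N (\phi_i - c(\bs x, \bs y_i)) p_i - D_f(\bs p \parallel \bs \eta).
\]

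Next, I would verify that $f(s) = \int_0^s F^{-1}(t)\,\diff t$ is a legitimate divergence generator: it is convex because $F^{-1}$ is non-decreasing (as $F$ is strictly increasing), lower-semicontinuous on its effective domain, and satisfies $f(1) = \int_0^1 F^{-1}(t)\,\diff t = 0$ by the normalization assumption. Substituting the derived representation of $\overline\psi_c(\bs\phi,\bs x)$ into the dual smooth optimal transport problem~\eqref{eq:smooth_ot} and absorbing the sign yields exactly the expression on the right-hand side of~\eqref{eq:dual_regularized_ot}. Invoking Lemma~\ref{lem:strong_dual_reg_ot} with this~$f$ and the discrete measure $\eta = \sum_{i=1}^N \eta_i \delta_{\bs y_i}$ then identifies the smooth dual problem with the primal problem~\eqref{eq:reg_ot_pri_abstract} regularized by $R_\Theta(\pi) = D_f(\pi \parallel \mu \otimes \eta)$, as claimed.

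The main technical obstacle lies in the first step: the clipping $\min\{1,\max\{0,\cdot\}\}$ in the definition of $F_i$ means the inversion formula $F_i^{-1}(t) = -F^{-1}((1-t)/\eta_i)$ is only directly valid on the interval where $F_i$ is strictly between $0$ and $1$, and one must argue that the contributions from the saturated regimes do not affect the integral. This can be handled by noting that the integral $\int_{1-p_i}^1 F_i^{-1}(t)\,\diff t$ is taken over $t\in(1-p_i,1)$ with $p_i\in[0,1]$, which falls within the non-saturated range (with boundary values interpreted as limits); continuity of~$f$ on its effective domain, together with the strict monotonicity of $F^{-1}$, then legitimizes the change of variables even at the endpoints. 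A small additional check ensures $p_i/\eta_i$ lies in the domain of $F^{-1}$ at optimality, which is enforced implicitly by the constraint $\bs p\in\Delta^N$ together with the implicit $+\infty$-extension of~$f$ outside its natural domain.
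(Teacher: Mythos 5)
Your proof is correct and follows essentially the same route as the paper: start from the representation of the smooth $c$-transform in Proposition~\ref{proposition:regularized_ctrans}, invert $F_i$ to get $F_i^{-1}(t) = -F^{-1}((1-t)/\eta_i)$, change variables to rewrite $\int_{1-p_i}^1 F_i^{-1}(t)\,\diff t$ as $-\eta_i f(p_i/\eta_i)$ so the inner maximization matches~\eqref{eq:dual_regularized_ot}, and then apply Lemma~\ref{lem:strong_dual_reg_ot}. The only cosmetic difference is that the paper carries out the substitution in the opposite direction (starting from $f(s)$ and arriving at the integral of $F_i^{-1}$, see~\eqref{eq:integral_rep_f}), whereas you go the other way; the remarks you add about $f$ being a valid divergence generator and about the clipping in~\eqref{eq:marginal_dists} being harmless over $t\in(1-p_i,1)$ are accurate and, if anything, slightly more explicit than the paper.
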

    
    The function~$f(s)$ introduced in Theorem~\ref{theorem:primal_dual} is smooth and convex because its derivative $ \diff f(s) / \diff s = F^{-1}(s)$ is strictly increasing, and 
    $f(1) = \int_0^1 F^{-1}(t) \diff t=0$ by assumption. Therefore, this function induces a standard $f$-divergence. From now on we will refer to $F$ as the {\em marginal generating function}.

	\begin{proof} [Proof of Theorem~\ref{theorem:primal_dual}]
		By Proposition~\ref{proposition:regularized_ctrans}, the smooth dual optimal transport problem~\eqref{eq:smooth_ot} is equivalent~to
		\begin{align*}
		    \overline{W}_{c}(\mu, \nu)  &= \sup\limits_{ \bs {\phi}\in \R^N} ~ \mathbb{E}_{\bs x \sim \mu}\left[\min\limits_{\bs p\in \Delta^N} \sum\limits_{i=1}^N{\phi_i\nu_i}- \sum\limits_{i=1}^N(\phi_i - c(\bs x, \bs {y_i}))p_i - \sum_{i=1}^N \displaystyle\int_{1-p_i}^1 F_i^{-1}(t)\diff t 
		\right].
		\end{align*}
		As $F$ is strictly increasing, we have $F_i^{-1}(s) = -F^{-1}((1-s) / \eta_i)$ for all $s \in (0, 1)$. Thus, we find
		\begin{align}
		\label{eq:integral_rep_f}
    		f(s) = \int_{0}^{s} F^{-1}(t) \diff t = -\frac{1}{\eta_i} \int_{1}^{1 - s \eta_i} F^{-1} \left( \frac{1 - z}{\eta_i} \right) \diff z= -\frac{1}{ \eta_i} \int_{1 - s \eta_i}^1 F_i^{-1}(z) \diff z,
		\end{align}
		where the second equality follows from the variable substitution $z\leftarrow 1-\eta_i t$. This integral representation of~$f(s)$ then allows us to reformulate the smooth dual optimal transport problem as
		\begin{align*}
		    \overline{W}_{c}(\mu, \nu)= \sup\limits_{ \bs {\phi}\in \R^N} ~ \mathbb{E}_{\bs x \sim \mu}\left[\min\limits_{\bs p\in \Delta^N} \sum\limits_{i=1}^N{\phi_i\nu_i}- \sum\limits_{i=1}^N(\phi_i - c(\bs x, \bs {y_i}))p_i + \sum\limits_{i=1}^N \eta_i \,f\left( \frac{p_i}{\eta_i} \right) \right],
		\end{align*}
		which is manifestly equivalent to problem~\eqref{eq:dual_regularized_ot} thanks to the definition of the discrete $f$-divergence. Lemma~\ref{lem:strong_dual_reg_ot} finally implies that the resulting instance of~\eqref{eq:dual_regularized_ot} is equivalent to the regularized primal optimal transport problem~\eqref{eq:reg_ot_pri_abstract} with regularization term $R_\Theta (\pi ) =  D_{f}(\pi\|\mu \otimes \eta)$.
		Hence, the claim follows.
	\end{proof}
	
	Theorem~\ref{theorem:primal_dual} imposes relatively restrictive conditions on the marginals of~$\bs z$. Indeed, it requires that all marginal distribution functions $F_i$, $i\in[N]$, must be generated by a single marginal generating function~$F$ through the relation~\eqref{eq:marginal_dists}. The following examples showcase, however, that the freedom to select~$F$ offers significant flexibility in designing various (existing as well as new) regularization schemes. Details of the underlying derivations are relegated to Appendix~\ref{appendix:derivations}. {\color{black} Table~\ref{tab:summary_examples} summarizes the marginal generating functions~$F$ studied in these examples and lists the corresponding divergence generators~$f$.} 
	
	\begin{table}[h!]
\scriptsize
    \centering
    {\color{black}
    \begin{tabular}{||l|c|c|c||}
    \hline \hline
         Marginal Distribution &  $F(s)$ & $f(s)$ & Regularization \\ \hline\hline
         Exponential  & $ \exp(s/\lambda - 1)$ & $\lambda s \log(s)$ & Entropic \\ \hline
         Uniform & $s/(2\lambda) + 1/2$ &$\lambda (s^2 - s)$ &$\chi^2$-divergence\\\hline
         Pareto & $(s(q-1)/(\lambda q) + 1/q)^{\frac{1}{q-1}}$ & $\lambda (s^q - s)  / (q-1)$ & Tsallis divergence\\ \hline
         Hyperbolic cosine & ${\sinh(s/\lambda \!-\! k),~k = \sqrt{2} \!-\! 1\!-\! \textrm{arcsinh}(1)} $ & $\lambda(s\, \text{arcsinh}(s) \!-\! \sqrt{s^2 \!+\!1} \!+\! 1\! +\! ks)$ & Hyperbolic divergence\\ \hline
         $t$-distribution &$\frac{N}{2}\left(1 + \frac{s-\sqrt{N\!-\!1}}{\sqrt{\lambda^2 + (s \!-\! \sqrt{N\!-\!1})^2}}\right)$ & \!\!\!\!$\begin{cases}-\lambda \sqrt{s(N\!-\!s)} \! +\! \lambda s \sqrt{N\!-\!1} ~&\text{if}~0\leq \!s\! \leq N\\
         +\infty &\text{if} ~s\!>\!N\end{cases}$& Chebychev\\ 
         \hline\hline
    \end{tabular}
    \caption{\color{black} Marginal generating functions~$F$ with parameter~$\lambda$ and corresponding divergence generators~$f$.}
    \label{tab:summary_examples}}
\end{table}
	
	\begin{example}[Exponential distribution model]
		\label{ex:exp}
		Suppose that $\Theta$ is a marginal ambiguity set with (shifted) exponential marginals of the form~\eqref{eq:marginal_dists} induced by the generating function
		$F(s) = \exp(s / \lambda - 1)$ with $\lambda > 0$.
		Then the smooth dual optimal transport problem~\eqref{eq:smooth_ot} is equivalent to the regularized optimal transport problem~\eqref{eq:reg_ot_pri_abstract} with an entropic regularizer of the form $R_\Theta(\pi) = D_f(\pi \| \mu \otimes \eta)$, where $f(s) =\lambda s \log(s)$, while the smooth $c$-transform~\eqref{eq:smooth_c_transform} reduces to the log-partition function~\eqref{eq:partition:function}. This example shows that entropic regularizers are not only induced by singleton ambiguity sets containing a generalized extreme value distribution (see Section~\ref{sec:gevm}) but also by marginal ambiguity sets with exponential marginals.
	\end{example}

	\begin{example}[Uniform distribution model]
		\label{ex:uniform}
		Suppose that $\Theta$ is a marginal ambiguity set with uniform marginals  of the form~\eqref{eq:marginal_dists} induced by the generating function $F(s) = s/(2\lambda) + 1/2$ with $\lambda > 0$.
		In this case the smooth dual optimal transport problem~\eqref{eq:smooth_ot} is equivalent to the regularized optimal transport problem~\eqref{eq:reg_ot_pri_abstract} with a $\chi^2$-divergence regularizer of the form $R_\Theta(\pi) = D_f(\pi \| \mu \otimes \eta)$, where $f(s) = \lambda (s^2 -s)$. Such regularizers were previously investigated by \citet{blondel2017smooth} and~\citet{seguy2017large} under the additional assumption that $\eta_i$ is independent of $i\in[N]$, yet their intimate relation to noise models with uniform marginals remained undiscovered until now. In addition, the smooth $c$-transform~\eqref{eq:smooth_c_transform} satisfies
		\begin{align*}
		    \overline\psi(\bs \phi, \bs x) = \lambda + \lambda \spmax_{i \in [N]} \;\frac{\phi_i - c(\bs x, \bs {y_i})}{\lambda},
		\end{align*}
		where the sparse maximum operator `$\spmax$'  inspired by \citet{sparsemax} is defined through
		\begin{align}
		    \label{eq:spmax}
		    \spmax_{i \in [N]} \; u_i = \max_{\bs p \in \Delta^N} \; \sum_{i=1}^N u_i p_i - {p_i^2}/{\eta_i} \qquad \forall \bm u\in\R^N.
		\end{align}
		The envelope theorem~\citep[Theorem~2.16]{de2000mathematical} ensures that $\spmax_{i \in[N]} u_i$ is smooth and that its gradient with respect to~$\bm u$ is given by the unique solution~$\bm p^\star$ of the maximization problem on the right hand side of~\eqref{eq:spmax}. We note that $\bm p^\star$ has many zero entries due to the sparsity-inducing nature of the problem's simplicial feasible set. In addition, we have $\lim_{\lambda\downarrow 0} \lambda \spmax_{i \in [N]} u_i/\lambda = \max_{i\in[N]}u_i$. Thus, the sparse maximum can indeed be viewed as a smooth approximation of the ordinary maximum. In marked contrast to the more widely used LogSumExp function, however, the sparse maximum has a sparse gradient. Proposition~\ref{proposition:spmax} in Appendix~\ref{appendix:spmax} shows that $\bm p^\star$ can be computed efficiently by sorting.
	\end{example}
	
	\begin{example}[Pareto distribution model]
		\label{ex:pareto}
		Suppose that $\Theta$ is a marginal ambiguity set with (shifted) Pareto distributed marginals of the form~\eqref{eq:marginal_dists} induced by the generating function
		$F(s) = (s (q-1) / (\lambda q)+1/q)^{1/(q-1)}$ with $\lambda,q>0$.
		Then the smooth dual optimal transport problem~\eqref{eq:smooth_ot} is equivalent to the regularized optimal transport problem~\eqref{eq:reg_ot_pri_abstract} with a Tsallis divergence regularizer of the form $R_\Theta(\pi) = D_f(\pi \| \mu \otimes \eta)$, where $f(s) = \lambda (s^q - s)/(q-1)$. Such regularizers were  investigated by~\citep{muzellec2017tsallis} under the additional assumption that $\eta_i$ is independent of $i\in[N]$. The Pareto distribution model encapsulates the exponential model (in the limit $q\to 1$) and the uniform distribution model (for $q=2$) as special cases. The smooth $c$-transform admits no simple closed-form representation under this model.
	\end{example}
	
	\begin{example}[Hyperbolic cosine distribution model]
		\label{ex:hyperbolic}
	Suppose that $\Theta$ is a marginal ambiguity set with hyperbolic cosine distributed marginals of the form~\eqref{eq:marginal_dists} induced by the generating function $F(s) = \sinh(s/\lambda - k)$ with $k = \sqrt{2} - 1 - \asinh(1)$ and $\lambda > 0$. Then the marginal probability density functions are given by scaled and truncated hyperbolic cosine functions, and the smooth dual optimal transport problem~\eqref{eq:smooth_ot} is equivalent to the regularized optimal transport problem~\eqref{eq:reg_ot_pri_abstract} with a hyperbolic divergence regularizer of the form $R_\Theta(\pi) = D_f(\pi \| \mu \otimes \eta)$, where $f(s) = \lambda(s \hspace{0.1em} \asinh(s) - \sqrt{s^2 + 1} + 1 + ks)$. Hyperbolic divergences were introduced by \citet{ghai2019exponentiated} in order to unify several gradient descent algorithms.
	\end{example} 
	
	\begin{example}[$t$-distribution model]
	    \label{ex:t-distribution}
		Suppose that $\Theta$ is a marginal ambiguity set where the marginals are determined by~\eqref{eq:marginal_dists}, and assume that the generating function is given by
	    \[
	        F(s) = \frac{N}{2}\left(1 + \frac{s - \sqrt{N-1}} {\sqrt{\lambda^2 + (s - \sqrt{N-1})^{2}}}\right)
	    \]
	    for some $\lambda > 0$. In this case one can show that all marginals constitute $t$-distributions with $2$ degrees of freedom. In addition, one can show that the smooth dual optimal transport problem~\eqref{eq:smooth_ot} is equivalent to the Chebyshev regularized optimal transport problem described in Proposition~\ref{prop:chebyshev-regularization}.
	\end{example}
	
	To close this section, we remark that different regularization schemes differ as to how well they approximate the original (unregularized) optimal transport problem. Proposition~\ref{prop:approx_bound} provides simple error bounds that may help in selecting suitable regularizers. For the entropic regularization scheme associated with the exponential distribution model of Example~\ref{ex:exp}, for example, the error bound evaluates to
    $\max_{i\in [N]}\lambda \log(1/\eta_i)$, while for the $\chi^2$-divergence regularization scheme associated with the uniform distribution model of Example~\ref{ex:uniform}, the error bound is given by $\max_{i \in [N]}\lambda (1/\eta_i - 1)$. In both cases, the error is minimized by setting~$\eta_i = 1/N $ for all $i \in [N]$. Thus, the error bound grows logarithmically with $N$ for entropic regularization and linearly with $N$ for $\chi^2$-divergence regularization. Different regularization schemes also differ with regard to their computational properties, which will be discussed in Section~\ref{sec:computation}.
    
    \section{Numerical Solution of Smooth Optimal Transport Problems}
    \label{sec:computation}
	The smooth semi-discrete optimal transport problem~\eqref{eq:smooth_ot} constitutes a stochastic optimization problem and can therefore be addressed with a stochastic gradient descent (SGD) algorithm. In Section~\ref{section:AGD} we first derive new convergence guarantees for an averaged gradient descent algorithm that has only access to a biased stochastic gradient oracle. This algorithm outputs the uniform average of the iterates (instead of the last iterate) as the recommended candidate solution. We prove that if the objective function is Lipschitz continuous, then the suboptimality of this candidate solution is of the order~$\mathcal O(1/\sqrt{T})$, where $T$ stands for the number of iterations. An improvement in the non-leading terms is possible if the objective function is additionally smooth. We further prove that a convergence rate of $\mathcal O(1/{T})$ can be obtained for generalized self-concordant objective functions. In Section~\ref{section:ASGD-OT} we then show that the algorithm of Section~\ref{section:AGD} can be used to efficiently solve the smooth semi-discrete optimal transport problem~\eqref{eq:smooth_ot} corresponding to a marginal ambiguity set of the type~\eqref{eq:marginal_ambiguity_set}. As a byproduct, we prove that the convergence rate of the averaged SGD algorithm for the semi-discrete optimal transport problem with {\em entropic} regularization is of the order~$\mathcal O(1/T)$, which improves the $\mathcal O(1/\sqrt{T})$ guarantee of~\citet{genevay2016stochastic}.

	\subsection{Averaged Gradient Descent Algorithm with Biased Gradient Oracles}
	\label{section:AGD}	
	Consider a general convex minimization problem of the form
	\begin{equation}
	\label{eq:convex:problem}
	\min_{\bm \phi \in \R^n} ~ h(\bm \phi),
	\end{equation}
	where the objective function $h: \R^n \to \R$ is convex and differentiable. We assume that problem~\eqref{eq:convex:problem} admits a minimizer $\bm \phi^\star$. We study the convergence behavior of the inexact gradient descent algorithm
	\begin{equation} \label{eq:gd}
	\bm \phi_{t} = \bm \phi_{t-1} - \gamma \bm g_t(\bm \phi_{t-1}),
	\end{equation}
	where $\gamma > 0$ is a fixed step size, $\bm \phi_0$ is a given deterministic initial point and the function $\bm g_t: \R^n \to \R^n$ is an inexact gradient oracle that returns for every fixed $\bm \phi\in\R^n$ a random estimate of the gradient of~$h$ at~$\bm \phi$. Note that we allow the gradient oracle to depend on the iteration counter~$t$, which allows us to account for increasingly accurate gradient estimates. In contrast to the previous sections, we henceforth model all random objects as measurable functions on an abstract filtered probability space $(\Omega, \mathcal F, (\mathcal F_t)_{t \geq 0}, \mathbb P)$, where $\mathcal{F}_0 = \{ \emptyset,\Omega \}$ represents the trivial $\sigma$-field, while the gradient oracle $\bm g_t(\bm \phi)$ is $\mathcal F_t$-measurable for all $t\in\mathbb N$ and $\bm \phi \in\R^n$. In order to avoid clutter, we use $\mathbb E[\cdot]$ to denote the expectation operator with respect to~$\mathbb P$, and all inequalities and equalities involving random variables are understood to hold $\mathbb P$-almost surely.
	
	In the following we analyze the effect of averaging in inexact gradient descent algorithms. We will show that after $T$ iterations with a constant step size~$\gamma = \mathcal O(1 / \sqrt{T})$, the objective function value of the uniform average of all iterates generated by~\eqref{eq:gd} converges to the optimal value of~\eqref{eq:convex:problem} at a sublinear rate. Specifically, we will prove that the rate of convergence varies between $\mathcal{O}(1 / \sqrt{T})$ and $\mathcal{O}(1/T)$ depending on properties of the objective function.
	Our convergence analysis will rely on several regularity conditions.
	\begin{assumption}[Regularity conditions] Different combinations of the following regularity conditions will enable us to establish different convergence guarantees for the averaged inexact gradient descent algorithm.
		\label{assumption:main}~
		\begin{enumerate}[label=(\roman*)]
			\item
			\textbf{Biased gradient oracle:}
			\label{assumption:main:gradients}
			There exists tolerances $\varepsilon_t>0$, $t\in\mathbb N\cup\{0\}$, such that
			\begin{align*} 
			\left\| \EE \left[ \bm g_t(\bm \phi_{t-1}) \big| \mathcal F_{t-1} \right] - \nabla h(\bm \phi_t) \right\|  \leq \varepsilon_{t-1}\quad \forall t\in\mathbb N.
			\end{align*}
			\item
			\textbf{Bounded gradients:}
			\label{assumption:main:bounded}
			There exists $R > 0$ such that
			$$ \| \nabla h(\bm \phi) \| \leq R\quad \text{and} \quad \| \bm g_t(\bm \phi) \| \leq R \quad \forall \bm \phi \in \R^n,~ \forall t \in \mathbb N.  $$
			\item
			\textbf{Generalized self-concordance:}
			\label{assumption:main:concordance}
			The function~$h$ is $M$-generalized self-concordant for some $M > 0$, that is, $h$ is three times differentiable, and for any $\bm \phi, \bm \phi' \in \R^n$ the function $u(s) = h(\bm \phi + s (\bm \phi' - \bm \phi))$ satisfies the inequality
			$$ \left| \frac{\diff^3 u(s)}{\diff s^3} \right| \leq M \| \bm \phi - \bm \phi' \| \, \frac{\diff^2 u(s)}{\diff s^2} \quad \forall s \in \R.$$
			\item
			\textbf{Lipschitz continuous gradient:}
			\label{assumption:main:smooth}
			The function $h$ is $L$-smooth for some $L > 0$, that is, we have
			$$ \| \nabla h(\bm \phi) - \nabla h(\bm \phi') \| \leq L \| \bm \phi - \bm \phi' \| \quad \forall \bm \phi, \bm \phi' \in \R^n. $$
			\item
			\textbf{Bounded second moments:}
			\label{assumption:main:moment}
			There exists $\sigma > 0$ such that
			\begin{align*}
			\EE \left[\left\| \bm g_t(\bm \phi_{t-1}) - \nabla h(\bm \phi_{t-1}) \right\|^2 | \mathcal F_{t-1} \right]  \leq \sigma^2 \quad \forall t \in \mathbb N.
			\end{align*}
		\end{enumerate}
	\end{assumption}
	The averaged gradient descent algorithm with biased gradient oracles lends itself to solving both deterministic as well as stochastic optimization problems. In deterministic optimization, the gradient oracles $\bm g_t$ are deterministic and output inexact gradients satisfying $\| \bm g_t(\bm \phi) - \nabla h(\bm \phi) \| \leq \varepsilon_t$ for all $\bm \phi\in \R^n$, where the tolerances $\varepsilon_t$ bound the errors associated with the numerical computation of the gradients. A vast body of literature on deterministic optimization focuses on exact gradient oracles for which these tolerances can be set to~$0$. Inexact deterministic gradient oracles with bounded error tolerances are investigated by \citet{nedic2001convergence} and \citet{d2008smooth}. In this case exact convergence to $\bm \phi^\star$ is not possible. If the error bounds decrease to~$0$, however, \citet{luo1993error, schmidt2011convergence} and \citet{friedlander2012hybrid} show that adaptive gradient descent algorithms are guaranteed to converge to $\bm \phi^\star$.
	
	In stochastic optimization, the objective function is representable as $h(\bm \phi) = \EE [H(\bm \phi, \bm x)]$, where the marginal distribution of the random vector $\bm x$ under $\mathbb P$ is given by~$\mu$, while the integrand $H(\bm \phi,\bm x)$ is convex and differentiable in~$\bm \phi$ and $\mu$-integrable in~$\bm x$. In this setting it is convenient to use gradient oracles of the form $\bm g_t(\bm \phi) = \nabla_{\bm \phi} H(\bm \phi, \bm x_t)$ for all $t \in \mathbb N$, where the samples $\bm x_t$ are drawn independently from~$\mu$. As these oracles output unbiased estimates for $\nabla h(\bm \phi)$, all tolerances $\varepsilon_t$ in Assumptions~\ref{assumption:main}\,(i) may be set to~$0$. SGD algorithms with unbiased gradient oracles date back to the seminal paper by~\citet{robbins1951stochastic}. Nowadays, averaged SGD algorithms with Polyak-Ruppert averaging figure among the most popular variants of the SGD algorithm~\citep{ruppert1988efficient, polyak1992acceleration, nemirovski2009robust}. For general convex objective functions the best possible convergence rate of any averaged SGD algorithm run over~$T$ iterations amounts to $\mathcal{O}(1 / \sqrt{T})$, but it improves to $\mathcal{O}(1 / T)$ if the objective function is strongly convex; see for example \citep{nesterov2008confidence, nemirovski2009robust, shalev2009stochastic, duchi2009efficient, xiao2010dual, moulines2011non, shalev2011pegasos, lacoste2012simpler}.
	While smoothness plays a critical role to achieve acceleration in deterministic optimization, it only improves the constants in the convergence rate in stochastic optimization \citep{srebro2010optimistic, dekel2012optimal, lan2012optimal, cohen2018acceleration, kavis2019unixgrad}.
	In fact, \citet{tsybakov2003optimal} demonstrates that smoothness does not provide any acceleration in general, that is, the best possible convergence rate of any averaged SGD algorithm can still not be improved beyond $\mathcal{O}(1 / \sqrt{T})$. Nevertheless, a substantial acceleration is possible when focusing on special problem classes such as linear or logistic regression problems \citep{bach2013adaptivity, bach2013non, hazan2014logistic}. In these special cases, the improvement in the convergence rate is facilitated by a generalized self-concordance property of the objective function~\citep{bach2010self}.
	Self-concordance was originally introduced in the context of Newton-type interior point methods \citep{nesterov1994interior} and later generalized to facilitate the analysis of probabilistic models \citep{bach2010self} and second-order optimization algorithms \citep{sun2019generalized}.
	
	In the following we analyze the convergence properties of the averaged SGD algorithm 
	when we have only access to an {\em inexact} stochastic gradient oracle, in which case the tolerances $\varepsilon_t$ cannot be set to~$0$. To our best knowledge, inexact stochastic gradient oracles have only been considered by~\citet{cohen2018acceleration, hu2020analysis} and \citet{ajalloeian2020analysis}. Specifically, \citet{hu2020analysis} use sequential semidefinite programs to analyze the convergence rate of the averaged SGD algorithm when~$\mu$ has a finite support. In contrast, we do not impose any restrictions on the support of~$\mu$. \citet{cohen2018acceleration} and \citet{ajalloeian2020analysis}, on the other hand, study the convergence behavior of accelerated gradient descent algorithms for smooth stochastic optimization problems under the assumption that~$\bm \phi$ ranges over a compact domain. The proposed algorithms necessitate a projection onto the compact feasible set in each iteration. In contrast, our convergence analysis does not rely on any compactness assumptions. We note that compactness assumptions have been critical for the convergence analysis of the averaged SGD algorithm in the context of convex stochastic optimization \citep{nemirovski2009robust, dekel2012optimal, bubeck2015convex, cohen2018acceleration}. By leveraging a trick due to \citet{bach2013adaptivity}, however, we can relax this assumption provided that the objective function is Lipschitz continuous.
	
	\begin{proposition}
		\label{proposition:moments}
		Consider the inexact gradient descent algorithm~\eqref{eq:gd} with constant step size $\gamma > 0$. If Assumptions~\ref{assumption:main}\,\ref{assumption:main:gradients}--\ref{assumption:main:bounded} hold with $\varepsilon_t \leq {\bar \varepsilon}/{(2\sqrt{1+t})}$ for some $\bar \varepsilon \geq 0$, then we have for all $ p \in \mathbb N$ that
		\begin{align*}
		\EE \left[ \left( h \left(\frac{1}{T} \sum_{t=1}^{T} \bm \phi_{t-1} \right) - h(\bm \phi^\star) \right)^p \right]^{1/p}
		\leq \frac{\| \bm \phi_0 - \bm \phi^\star \|^2}{\gamma T} + 20 \gamma \left( R + \bar \varepsilon \right)^2 p.
		\end{align*}
		If additionally Assumption~\ref{assumption:main}\,\ref{assumption:main:concordance} holds and if $G = \max\{ M, R + \bar \varepsilon \}$, then we have for all $ p \in \mathbb N$ that
		\begin{align*}
		\EE \left[ \left\| \nabla h \left(\frac{1}{T} \sum_{t=1}^{T} \bm \phi_{t-1} \right) \right\|^{2p} \right]^{1/p}
		&\leq \frac{G^{2}}{T} \left( 10 \sqrt{p} + \frac{4p}{\sqrt{T}} + 80 G^2 \gamma \sqrt{T} p + \frac{2 \| \bm \phi_0 - \bm \phi^\star \|^2}{\gamma \sqrt{T}} + \frac{3 \| \bm \phi_0 - \bm \phi^\star \|}{G \gamma \sqrt{T}} \right)^2.
		\end{align*}
	\end{proposition}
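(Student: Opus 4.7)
The plan is to derive both bounds from a single pathwise energy inequality produced by unrolling the algorithm's update rule, and then to upgrade the expected-value estimate to an $L^p$-bound through a self-bounded martingale argument. Expanding $D_t := \|\bm\phi_t - \bm\phi^\star\|^2$ via the recursion yields
\[
D_t = D_{t-1} - 2\gamma\,\bm g_t(\bm\phi_{t-1})^\top(\bm\phi_{t-1}-\bm\phi^\star) + \gamma^2\|\bm g_t(\bm\phi_{t-1})\|^2.
\]
Summing from $t=1$ to $T$, bounding $\|\bm g_t\|^2 \le R^2$ through Assumption~\ref{assumption:main}\,(ii), and applying convexity through $h(\bm\phi_{t-1}) - h(\bm\phi^\star) \le \nabla h(\bm\phi_{t-1})^\top(\bm\phi_{t-1}-\bm\phi^\star)$ produces the pathwise identity
\[
\sum_{t=1}^T \bigl[h(\bm\phi_{t-1})-h(\bm\phi^\star)\bigr] + \frac{D_T}{2\gamma} \;\le\; \frac{D_0}{2\gamma} + \frac{\gamma T R^2}{2} + \sum_{t=1}^T \Delta_t^\top(\bm\phi_{t-1}-\bm\phi^\star),
\]
with $\Delta_t = \nabla h(\bm\phi_{t-1}) - \bm g_t(\bm\phi_{t-1})$. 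I would deliberately retain the non-negative slack $D_T/(2\gamma)$ on the left-hand side, as it will serve as a reservoir for the self-bounding step that follows.

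Next I would decompose $\Delta_t = \xi_t + \beta_t$ into a martingale difference $\xi_t = \EE[\bm g_t \mid \mathcal F_{t-1}] - \bm g_t(\bm\phi_{t-1})$ of norm at most $2R$ and a bias $\beta_t = \nabla h(\bm\phi_{t-1}) - \EE[\bm g_t \mid \mathcal F_{t-1}]$ of norm at most $\varepsilon_{t-1}$, the latter by Assumption~\ref{assumption:main}\,(i). The bias contributes at most $\sum_t \varepsilon_{t-1}\sqrt{D_{t-1}}$, which I would control via Young's inequality to transfer a tunable fraction of $D_{t-1}$ onto the left-hand side and leave an additive residual of order $\gamma T\varepsilon_{t-1}^2$; the prescribed decay $\varepsilon_{t-1} \le \bar\varepsilon/(2\sqrt{t})$ keeps this aggregated residual of the right order after averaging. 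The martingale sum $\sum_t \xi_t^\top(\bm\phi_{t-1}-\bm\phi^\star)$ has conditional variance at most $4R^2 D_{t-1}$ and increments at most $2R\sqrt{D_{t-1}}$, so a Burkholder--Davis--Gundy or Freedman-type $L^p$ concentration inequality delivers an upper bound proportional to $\sqrt{p\sum_t\|D_{t-1}\|_p}$ plus $p\,\|\max_t\sqrt{D_{t-1}}\|_p$; a second Young split converts these into another absorbable fraction of $\sum_t D_{t-1}$ plus a residual of order $\gamma R^2 p$.

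The principal obstacle is the absence of a compact domain, which leaves $\sqrt{D_{t-1}}$ a priori unbounded in both the bias and martingale estimates. The self-bounding resolution uses the retained $D_T/(2\gamma)$ together with the Young-absorbed fractions of $\sum_t D_{t-1}$ to produce a discrete Gr\"onwall-type recursion for $\|D_t\|_p$ that depends only on initial data and noise constants. Iterating this recursion controls $\max_t \|D_t\|_p$, whose substitution back into the bias and martingale remainders, followed by division by~$T$ and Jensen's inequality $h(\bar{\bm\phi}_T) \le T^{-1}\sum_t h(\bm\phi_{t-1})$, yields the first claim. The linear-in-$p$ constant $20\gamma(R+\bar\varepsilon)^2$ emerges because the martingale concentration inequality, applied to increments whose tails themselves scale with the quantity under estimation, produces sub-exponential rather than sub-Gaussian behaviour in this self-bounded regime.

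For the gradient-norm bound under Assumption~\ref{assumption:main}\,(iii), I would invoke the standard consequence of $M$-generalized self-concordance combined with bounded gradients, namely that $\|\nabla h(\bm\phi)\|^2$ is pointwise dominated by a smoothness-type multiple of the functional suboptimality $h(\bm\phi)-h(\bm\phi^\star)$, with effective modulus scaling as $G^2 = \max\{M, R+\bar\varepsilon\}^2$. Applying this inequality at $\bm\phi = \bar{\bm\phi}_T$ and composing with the first-part $L^p$ bound for the gap produces the claimed $\mathcal O(1/T)$ estimate, with the precise constants obtained by tracking the dependencies on $\|\bm\phi_0 - \bm\phi^\star\|$, $p$, $T$, and $\gamma$ through the chain of Young absorptions.
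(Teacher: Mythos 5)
Your first bound follows the paper's architecture in broad strokes (energy inequality, martingale/bias split, $L^p$ concentration via a BRP-type inequality, self-bounding), but the absorption step as you describe it does not quite close. After summing the energy inequality, the only place a fraction of $\sum_t D_{t-1}$ could be cancelled is against $D_T/(2\gamma)$, which is a single term, not a sum; Young's inequality applied to $\sum_t \varepsilon_{t-1}\sqrt{D_{t-1}}$ therefore has nowhere to dump the resulting $\tfrac{\alpha}{2}\sum_t D_{t-1}$, and a naive Gr\"onwall bootstrap from a per-step recursion with an $O(1)$ multiplicative constant would produce a bound that grows exponentially in $T$. The paper's resolution is more surgical: it defines $A_t = t\gamma^2 R^2 + D_0 + \sum_{k\le t}B_k$ so that pathwise $D_t \le A_t$, replaces every occurrence of $\sqrt{D_{t-1}}$ by $\sqrt{A_{t-1}}$, and then the $L^p$ norm of $\max_t A_{t-1}$ appears both on the left and, under a square root, on the right, giving a scalar inequality of the form $c \le a + 2b\sqrt{c}$ that is solved algebraically (after handling the regimes $p\geq T/4$, $2\le p\le T/4$, $p=1$ separately). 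That quadratic resolution, not a Gr\"onwall iteration, is what produces the linear-in-$p$ constant.

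For the gradient-norm bound there is a genuine gap. You assert that $M$-generalized self-concordance together with bounded gradients implies that $\|\nabla h(\bm\phi)\|^2$ is pointwise dominated by a multiple of $h(\bm\phi)-h(\bm\phi^\star)$. That domination is a consequence of \emph{Lipschitz smoothness} of $\nabla h$, not of self-concordance, and Assumption~\ref{assumption:main}\,\ref{assumption:main:concordance} alone does not deliver it. Even if it did, composing it with the first-part estimate would yield $\EE[\|\nabla h(\bar{\bm\phi}_T)\|^{2p}]^{1/p} = O(1/\sqrt{T})$ at the stated step size, which is a full $\sqrt{T}$ short of the claimed $O(1/T)$ rate. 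The paper instead invokes Lemma~\ref{lemma:concordance}\,\ref{lemma:smoothness}, which controls the \emph{difference} between the gradient at the averaged iterate and the average of the gradients by $2M$ times the averaged suboptimality gap, and then bounds $\|\tfrac{1}{T}\sum_t \nabla h(\bm\phi_{t-1})\|$ separately via the telescoping identity $\sum_t \bm g_t(\bm\phi_{t-1}) = (\bm\phi_0-\bm\phi_T)/\gamma$ together with a second BRP application to $\nabla h(\bm\phi_{t-1})-\bm g_t(\bm\phi_{t-1})$. Combining these two estimates is what gives the $G^2/T$ prefactor; your proposed route cannot reach it.
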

	
	The proof of Proposition~\ref{proposition:moments} relies on two lemmas. In order to state these lemmas concisely, we define the $L_p$-norm, of a random variable $\bm z \in \R^n$ for any $p > 0$ through $\| \bm z \|_{L_p} = \left( \EE \left[ \| \bm z \|^p \right] \right)^{1/p}$. For any random variables $\bm z, \bm z' \in \R^n$ and $p \geq 1$, Minkowski's inequality~\citep[\S~2.11]{boucheron2013concentration} then states that
	\begin{equation}
	    \label{eq:minkowski}
	    \| \bm z + \bm z' \|_{L_p} \leq \| \bm z \|_{L_p} + \| \bm z' \|_{L_p}.
	\end{equation}
	Another essential tool for proving Proposition~\ref{proposition:moments} is the Burkholder-Rosenthal-Pinelis (BRP) inequality~\citep[Theorem~4.1]{pinelis1994optimum}, which we restate below without proof to keep this paper self-contained.
	
	\begin{lemma}[BRP inequality]
		\label{lemma:BRP}
		Let $\bm z_t$ be an $\mathcal F_t$-measurable random variable for every $t\in\mathbb N$ and assume that $p \geq 2$. For any $t \in [T]$ with $\EE[\bm z_t | \mathcal F_{t-1}] = 0 $ and $\| \bm z_t \|_{L_p}<\infty$ we then have
		\begin{align*}
		    \left\| \max_{t \in [T]} \left\| \sum_{k=1}^t \bm z_k \right\| \right\|_{L_p}
		    \leq \sqrt{p} \left\| \sum_{t=1}^T \EE[ \| \bm z_t \|^2 | \mathcal F_{t-1}] \right\|_{L_{p/2}}^{1/2} + p \left\| \max_{t \in [T]} \| \bm z_t \| \right\|_{L_p}. 
		\end{align*}
	\end{lemma}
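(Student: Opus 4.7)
I would prove this as a Hilbert-space version of the Burkholder--Rosenthal inequality, following the truncation scheme of Pinelis. Set $S_t = \sum_{k=1}^t \bm z_k$, $M_T = \max_{t \in [T]} \|S_t\|$, $Q_T = \bigl(\sum_{t=1}^T \EE[\|\bm z_t\|^2 \mid \mathcal F_{t-1}]\bigr)^{1/2}$, and $Z_T^{*} = \max_{t \in [T]} \|\bm z_t\|$. Using the identity $\|X\|_{L_p}^2 = \|X^2\|_{L_{p/2}}$, the target inequality is equivalent to $\|M_T\|_{L_p} \leq \sqrt{p}\,\|Q_T\|_{L_p} + p\,\|Z_T^{*}\|_{L_p}$. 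The starting point is the Hilbert-space orthogonality identity $\EE[\|S_t\|^2 - \|S_{t-1}\|^2 \mid \mathcal F_{t-1}] = \EE[\|\bm z_t\|^2 \mid \mathcal F_{t-1}]$, which rests on the inner product structure together with the martingale difference property; iterating it gives $\EE[\|S_T\|^2] = \EE[Q_T^2]$, which combined with Doob's $L_2$ maximal inequality yields the $p = 2$ base case.

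The plan is to first reduce the running maximum to the terminal value, and then bound the terminal value by a Rosenthal-type martingale moment inequality. Since $\|S_t\|^2$ is a non-negative submartingale by Jensen's inequality, Doob's maximal inequality applied in $L_{p/2}$ gives $\|M_T\|_{L_p} \leq \sqrt{p/(p-2)}\,\|S_T\|_{L_p}$ for $p > 2$. It therefore suffices to prove the inequality with $M_T$ replaced by $\|S_T\|$, modulo absorbing the $\sqrt{p/(p-2)}$ factor into the constants; the boundary case $p = 2$ is handled separately.

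For the terminal bound I would apply a truncation argument. For a threshold $\lambda > 0$, decompose $\bm z_t = \bm u_t + \bm v_t$ with $\bm u_t = \bm z_t \mathds{1}_{\{\|\bm z_t\| \leq \lambda\}} - \EE[\bm z_t \mathds{1}_{\{\|\bm z_t\| \leq \lambda\}} \mid \mathcal F_{t-1}]$, so that $\|\bm u_t\| \leq 2\lambda$ and both $(\bm u_t)$ and $(\bm v_t)$ are martingale difference sequences. Minkowski's inequality~\eqref{eq:minkowski} reduces the problem to bounding the $L_p$ norms of the two resulting partial sums separately. For the bounded piece I would iterate the Hilbert-space identity inside $\EE[\|S_T^{(u)}\|^p]$: expanding $\|S_t^{(u)}\|^2 = \|S_{t-1}^{(u)}\|^2 + 2\langle S_{t-1}^{(u)}, \bm u_t\rangle + \|\bm u_t\|^2$ and applying a second-order Taylor bound to $x \mapsto x^{p/2}$ produces an inequality of the form $\EE[\|S_T^{(u)}\|^p] \leq (C\sqrt{p})^p\,\EE[Q_T^p]$ by induction on $T$. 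For the large-jump piece, the pathwise bound $\|S_T^{(v)}\| \leq 2 Z_T^{*}\cdot\#\{t : \|\bm z_t\| > \lambda\}$, combined with a Chebyshev-type estimate on the counting random variable, yields a bound of the form $(Cp)^p\,\EE[(Z_T^{*})^p]$ after optimizing $\lambda$.

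The main obstacle is pinning down the sharp constants $\sqrt{p}$ and $p$ in the Hilbert-valued setting. A crude Taylor expansion inflates the variance term by an extra $\sqrt{p}$ factor, and applying Doob to $\|S_t\|$ rather than to $\|S_t\|^2$ introduces a further factor of $p/(p-1)$. Pinelis' device is to apply Doob to the squared norm and to balance the truncation threshold $\lambda$ so that the variance and jump contributions enter additively rather than multiplicatively. Executing this balance rigorously in the Hilbert-valued setting, and verifying that the conditional variance of the truncated sequence is controlled by $Q_T^2$ pathwise and not merely in expectation, is the technical heart of the argument.
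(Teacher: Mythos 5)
The paper does not actually prove this lemma: it is quoted verbatim from \citet[Theorem~4.1]{pinelis1994optimum}, and the authors explicitly restate it ``without proof.'' So the only question is whether your sketch constitutes a valid independent proof of the inequality \emph{with the stated constants} $\sqrt{p}$ and $p$ --- and as written it does not.

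First, your reduction from the running maximum to the terminal value via Doob's inequality applied to the submartingale $\|S_t\|^2$ in $L_{p/2}$ costs a factor $\sqrt{p/(p-2)}$, which you propose to ``absorb into the constants.'' But the lemma has no free constants to absorb it into: the coefficients are exactly $\sqrt{p}$ and $p$, the factor $\sqrt{p/(p-2)}$ diverges as $p\downarrow 2$, and the paper genuinely uses the lemma at and near $p=2$ (Case~III in the proof of Proposition~\ref{proposition:moments} invokes it with $p=2$, and the explicit numerical constants $10$, $20$, $40$, $80$ appearing in Proposition~\ref{proposition:moments} and Theorem~\ref{theorem:convergence} are computed from the coefficients $\sqrt{p}$ and $p$). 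The same defect recurs in your Rosenthal step for the truncated part, which only promises a bound of the form $(C\sqrt{p})^p$ with an unspecified universal $C$; a proof yielding $C\sqrt{p}$ and $Cp$ establishes a weaker lemma and would force a re-derivation of every downstream constant. Second, the pathwise bound $\|S_T^{(v)}\|\le 2Z_T^{*}\cdot\#\{t:\|\bm z_t\|>\lambda\}$ for the compensated large-jump part is false as written: the compensator $\EE[\bm z_t\mathds{1}_{\{\|\bm z_t\|>\lambda\}}\mid\mathcal F_{t-1}]$ is a conditional expectation and is not pathwise controlled by $Z_T^{*}$ or by the number of large jumps along that path; it is controlled by $\lambda^{-1}\EE[\|\bm z_t\|^2\mid\mathcal F_{t-1}]$ and therefore has to be routed back into the $Q_T$ term, which your optimization over $\lambda$ does not account for. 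Finally, the device you attribute to Pinelis is not the one he uses: Theorem~4.1 of \citet{pinelis1994optimum} is obtained by integrating a Bernstein-type exponential tail inequality for martingales in $2$-smooth Banach spaces, not by a truncation-plus-Taylor induction. Given that the paper treats this result as a quoted black box, the appropriate course is to cite Pinelis; a self-contained proof recovering the exact constants would have to follow his exponential-inequality route rather than the classical Burkholder--Rosenthal truncation scheme.
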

	The following lemma reviews two useful properties of generalized self-concordant functions.
	\begin{lemma} \label{lemma:concordance} [Generalized self-concordance]
		Assume that the objective function
		$h$ of the convex optimization problem~\eqref{eq:convex:problem} is $M$-generalized self-concordant in the sense of Assumption~\ref{assumption:main}\,\ref{assumption:main:concordance} for some $M>0$.
		\begin{enumerate} [label=(\roman*)]
			\item
			\label{lemma:smoothness}
			{\citep[Appendix~D.2]{bach2013adaptivity}} For any sequence $\bm \phi_0, \dots, \bm \phi_{T-1} \in \R^n$, we have
			\begin{align*}
			\left\| \nabla h \left( \frac{1}{T} \sum_{t=1}^T \bm \phi_{t-1} \right) - \frac{1}{T} \sum_{t=1}^T \nabla h(\bm \phi_{t-1}) \right\| \leq 2 M \left( \frac{1}{T} \sum_{t=1}^T h(\bm \phi_{t-1}) - h(\bm \phi^\star) \right).
			\end{align*}
			
			\item
			\label{lemma:stong:convexity}
			{\citep[Lemma~9]{bach2013adaptivity}}
			For any $\bm \phi \in \R^n$ with $ \| \nabla h(\bm \phi) \| \leq 3 \kappa / (4 M) $, where $\kappa$ is the smallest eigenvalue of $\nabla^2 h(\bm \phi^\star)$, and $\bm \phi^\star$ is the optimizer of~\eqref{eq:convex:problem}, we have
			$ h(\bm \phi) - h(\bm \phi^\star) \leq 2 {\| \nabla h(\bm \phi) \|^2}/{\kappa}.$
		\end{enumerate}
	\end{lemma}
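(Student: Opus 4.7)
My plan for both parts starts from the one-dimensional self-concordance inequality $|u'''(s)|\leq M\|\bm\phi-\bm\phi'\|\,u''(s)$ and its immediate consequence $|\tfrac{d}{ds}\log u''(s)|\leq M\|\bm\phi-\bm\phi'\|$. A useful preliminary I would first establish is the matrix version
\[
e^{-Ms\|\bm d\|}\bm u^\top\nabla^2 h(\bm\phi)\bm u \;\leq\; \bm u^\top\nabla^2 h(\bm\phi+s\bm d)\bm u \;\leq\; e^{Ms\|\bm d\|}\bm u^\top\nabla^2 h(\bm\phi)\bm u \qquad \forall \bm u,\bm d\in\R^n,\ s\geq 0,
\]
which follows by applying the scalar self-concordance condition to $\tau\mapsto h(\bm\phi+s\bm d+\tau\bm u)$ and integrating.

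For part (i), write $\bar{\bm\phi}=\tfrac{1}{T}\sum_t\bm\phi_{t-1}$, $\bm d_t=\bm\phi_{t-1}-\bar{\bm\phi}$, and note that $\sum_t\bm d_t=\bm 0$ annihilates the linear term, so for every unit vector $\bm u$,
\[
\bm u^\top\!\left(\tfrac{1}{T}\textstyle\sum_t\nabla h(\bm\phi_{t-1})-\nabla h(\bar{\bm\phi})\right) \;=\; \tfrac{1}{T}\sum_t\int_0^1 \bm u^\top\nabla^2 h(\bar{\bm\phi}+s\bm d_t)\,\bm d_t\, ds.
\]
I would bound each integrand by Cauchy-Schwarz in the Hessian metric, $|\bm u^\top H\bm d_t|\leq\sqrt{\bm u^\top H\bm u}\sqrt{u_t''(s)}$, where $u_t(s)=h(\bar{\bm\phi}+s\bm d_t)$, then apply the matrix self-concordance bound to pull the factor $\bm u^\top\nabla^2 h(\bar{\bm\phi})\bm u$ (which is independent of $t$) outside. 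A further Cauchy-Schwarz in $L^2([0,1])$ combined with the Jensen-gap identity
\[
\tfrac{1}{T}\textstyle\sum_t h(\bm\phi_{t-1})-h(\bar{\bm\phi}) \;=\; \tfrac{1}{T}\sum_t\int_0^1 (1-s)u_t''(s)\,ds,
\]
and the optimality bound $h(\bar{\bm\phi})\geq h(\bm\phi^\star)$, would then deliver the claimed factor $2M$ after taking a supremum over $\bm u$.

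For part (ii), let $\bm d=\bm\phi-\bm\phi^\star$ and $u(s)=h(\bm\phi^\star+s\bm d)$, so $u'(0)=0$ and $u''(0)\geq\kappa\|\bm d\|^2$. Integrating the two-sided self-concordance bound on $u''$ yields
\[
\|\nabla h(\bm\phi)\|\,\|\bm d\| \;\geq\; u'(1)=\int_0^1 u''(s)\,ds \;\geq\; \frac{1-e^{-M\|\bm d\|}}{M\|\bm d\|}\,u''(0) \;\geq\; \kappa\|\bm d\|\cdot\frac{1-e^{-M\|\bm d\|}}{M\|\bm d\|}.
\]
The hypothesis $\|\nabla h(\bm\phi)\|\leq 3\kappa/(4M)$ then forces $1-e^{-M\|\bm d\|}\leq 3/4$, i.e.\ $M\|\bm d\|\leq \ln 4$. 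An elementary check (the function $x\mapsto 1-e^{-x}-x/2$ vanishes at $0$ and stays non-negative on $[0,\ln 4]$) gives $(1-e^{-M\|\bm d\|})/(M\|\bm d\|)\geq 1/2$, hence $\|\bm d\|\leq 2\|\nabla h(\bm\phi)\|/\kappa$. Combined with the first-order convexity bound $h(\bm\phi)-h(\bm\phi^\star)\leq\nabla h(\bm\phi)^\top\bm d\leq\|\nabla h(\bm\phi)\|\,\|\bm d\|$, this yields the desired inequality.

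The hard part is part (i): once I have the matrix self-concordance bound, the $L^2$ Cauchy-Schwarz step introduces exponential factors $e^{Ms\|\bm d_t\|}$ that must be tamed, and the Jensen-gap integrand carries a weight $(1-s)$ whereas the gradient integrand does not, so balancing the two requires a careful choice of how to split the factor $\sqrt{u_t''(s)}$ between the two integrals to produce exactly the constant $2M$ instead of something larger.
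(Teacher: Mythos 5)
The paper itself offers no proof of this lemma --- it simply cites \citet{bach2013adaptivity} for both parts --- so your attempt has to be judged against the standard arguments in that reference. Part (ii) of your proposal is essentially that standard argument and is correct: you only use the \emph{diagonal} Hessian comparison $u''(s)\ge e^{-Ms\|\bm d\|}u''(0)$ along the segment from $\bm\phi^\star$ to $\bm\phi$, which follows directly from $|u'''|\le M\|\bm d\|\,u''$, and the remaining steps (deducing $M\|\bm d\|\le\ln 4$, the elementary bound $(1-e^{-x})/x\ge 1/2$ on $[0,\ln 4]$, and the first-order convexity inequality) are all sound. There is only a typo in your displayed chain: the last member should be $\kappa\|\bm d\|^2\cdot\tfrac{1-e^{-M\|\bm d\|}}{M\|\bm d\|}=\kappa\|\bm d\|(1-e^{-M\|\bm d\|})/M$; the conclusion $1-e^{-M\|\bm d\|}\le 3/4$ is exactly what the corrected chain yields.

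Part (i), which you yourself flag as unfinished, has a genuine gap that no amount of balancing in the Cauchy--Schwarz step can repair. The step $|\bm u^\top H\bm d_t|\le\sqrt{\bm u^\top H\bm u}\,\sqrt{u_t''(s)}$ discards the cancellation $\sum_t\bm d_t=\bm 0$, which is the \emph{only} source of the factor $M$ in the claim. After that step your bound carries the factor $\sqrt{\bm u^\top\nabla^2h(\bar{\bm\phi})\bm u}$, which is not controlled by $M$ times the function-value gap: a purely quadratic $h(\bm\phi)=\tfrac12\bm\phi^\top\bm A\bm\phi$ is $M$-generalized self-concordant for every $M>0$, the left-hand side of (i) vanishes, and yet your intermediate quantity $\sqrt{\bm u^\top\bm A\bm u}\cdot\tfrac1T\sum_t\int_0^1\sqrt{u_t''(s)}\,\diff s$ is strictly positive; for $M$ small it therefore exceeds the target $2M(\cdot)$. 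The correct argument keeps the cancellation inside: establish the pointwise bound $\|\nabla h(\bm\psi)-\nabla h(\bm\phi)-\nabla^2h(\bm\phi)(\bm\psi-\bm\phi)\|\le 2M[h(\bm\psi)-h(\bm\phi)-\nabla h(\bm\phi)^\top(\bm\psi-\bm\phi)]$ by considering $\chi(s)=\bm u^\top[\nabla h(\bm\phi+s\bm d)-\nabla h(\bm\phi)-s\nabla^2h(\bm\phi)\bm d]$ with $\chi(0)=\chi'(0)=0$ and $|\chi''(s)|=|D^3h(\bm\phi+s\bm d)[\bm d,\bm d,\bm u]|\le 2M\|\bm u\|\,\bm d^\top\nabla^2h(\bm\phi+s\bm d)\bm d$, so that \emph{both} sides are second-order Taylor remainders with the same weight $(1-s)$ and no exponentials ever appear; then apply this with $\bm\phi=\bar{\bm\phi}$, $\bm\psi=\bm\phi_{t-1}$, average over $t$ (the terms $\nabla^2h(\bar{\bm\phi})\bm d_t$ and $\nabla h(\bar{\bm\phi})^\top\bm d_t$ vanish), and finish with $h(\bar{\bm\phi})\ge h(\bm\phi^\star)$. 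A secondary issue: both here and in your preliminary ``matrix version'' you need the \emph{mixed} third-derivative bound $|D^3h[\bm d,\bm u,\bm u]|\lesssim M\|\bm d\|\,\bm u^\top\nabla^2h\,\bm u$. Your one-line justification (applying the scalar condition to $\tau\mapsto h(\bm\phi+s\bm d+\tau\bm u)$) only controls the Hessian as the base point moves in the direction $\bm u$, not $\bm d$; passing from the diagonal hypothesis of Assumption~\ref{assumption:main}\,\ref{assumption:main:concordance} to the mixed bound requires a polarization argument, which is also where the constant $2$ in the statement has room to absorb any loss.
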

	
	Armed with Lemmas~\ref{lemma:BRP} and~\ref{lemma:concordance}, we are now ready to prove Proposition~\ref{proposition:moments}.
	
	\begin{proof}[Proof of Proposition~\ref{proposition:moments}]
		The first claim generalizes Proposition~5 by~\citet{bach2013adaptivity} to inexact gradient oracles. 
		By the assumed convexity and differentiability of the objective function $h$, we have
		\begin{align} \label{eq:lip:update}
		h(\bm \phi_{k-1}) 
		&\leq h(\bm \phi_{\star}) + \nabla h(\bm \phi_{k-1})^\top (\bm \phi_{k-1} - \bm \phi_{\star}) \\
		&= h(\bm \phi_{\star}) + \bm g_k(\bm \phi_{k-1})^\top (\bm \phi_{k-1} - \bm \phi_{\star}) + \left( \nabla h(\bm \phi_{k-1}) - \bm g_k(\bm \phi_{k-1}) \right)^\top (\bm \phi_{k-1} - \bm \phi_{\star}). \notag
		\end{align}
		In addition, elementary algebra  yields the recursion
		\begin{equation*}
		\| \bm \phi_{k} - \bm \phi^\star \|^2 = \| \bm \phi_{k} - \bm \phi_{k-1} \|^2 + \| \bm \phi_{k-1} - \bm \phi^\star \|^2 + 2 (\bm \phi_{k} - \bm \phi_{k-1})^\top (\bm \phi_{k-1} - \bm \phi^\star).
		\end{equation*}
		Thanks to the update rule~\eqref{eq:gd}, this recursion can be re-expressed as
		\begin{equation*}
		\bm g_k(\bm \phi_{k-1})^\top (\bm \phi_{k-1} - \bm \phi^\star) = \frac{1}{2 \gamma} \left( \gamma^2 \| \bm g_k(\bm \phi_{k-1}) \|^2 + \| \bm \phi_{k-1} - \bm \phi^\star \|^2 - \| \bm \phi_{k} - \bm \phi^\star \|^2 \right),
		\end{equation*}
		where $\gamma > 0$ is an arbitrary step size. Combining the above identity with~\eqref{eq:lip:update} then yields
		\begin{align*}
		& ~h(\bm \phi_{k-1}) \\
		\leq & ~h(\bm \phi_{\star}) + \frac{1}{2 \gamma} \left( \gamma^2 \| \bm g_k(\bm \phi_{k-1}) \|^2 + \| \bm \phi_{k-1} - \bm \phi^\star \|^2 - \| \bm \phi_{k} - \bm \phi^\star \|^2 \right) + \left( \nabla h(\bm \phi_{k-1}) - \bm g_k(\bm \phi_{k-1}) \right)^\top \! (\bm \phi_{k-1} - \bm \phi_{\star}) \\
		\leq & ~h(\bm \phi_{\star}) + \frac{1}{2 \gamma} \left( \gamma^2 R^2 + \| \bm \phi_{k-1} - \bm \phi^\star \|^2 - \| \bm \phi_{k} - \bm \phi^\star \|^2 \right) + \left( \nabla h(\bm \phi_{k-1}) - \bm g_k(\bm \phi_{k-1}) \right)^\top (\bm \phi_{k-1} - \bm \phi_{\star}),
		\end{align*}
		where the last inequality follows from Assumption~\ref{assumption:main}\,\ref{assumption:main:bounded}.
		Summing this inequality over $k$ then shows that
		\begin{align}
		\label{eq:bound:A}
		    2 \gamma \sum_{k=1}^t \big( h ( \bm \phi_{k-1}) - h(\bm \phi_{\star}) \big) + \| \bm \phi_{t} - \bm \phi^\star \|^2 \leq A_t,
		\end{align}
		where
		\begin{align*}
		    A_t = t \gamma^2 R^2 + \| \bm \phi_{0} - \bm \phi^\star \|^2 + \sum_{k=1}^t B_k \quad \text{and} \quad B_t = 2 \gamma \left( \nabla h(\bm \phi_{t-1}) - \bm g_t(\bm \phi_{t-1}) \right)^\top (\bm \phi_{t-1} - \bm \phi_{\star})
		\end{align*}
		for all $t \in\mathbb N$. Note that the term on the left hand side of~\eqref{eq:bound:A} is non-negative because $\bm \phi^\star$ is a global minimizer of $h$, which implies that the random variable $A_t$ is also non-negative for all $t\in\mathbb N$. For later use we further define $A_0 = \| \bm \phi_{0} - \bm \phi^\star \|^2$. The estimate~\eqref{eq:bound:A} for $t=T$ then implies via the convexity of $h$ that
		\begin{align}
		\label{eq:bound:A:convexity}
		    h \left( \frac{1}{T} \sum_{t=1}^T \bm \phi_{t-1} \right) - h(\bm \phi_{\star})
		    \leq \frac{A_T}{2 \gamma T },
		\end{align}
		where we dropped the non-negative term $\|\bm \phi_T-\bm \phi^\star\|^2/(2\gamma T)$ without invalidating the inequality. In the following we analyze the $L_p$-norm of $A_T$ in order to obtain the desired bounds from the proposition statement. To do so, we distinguish three different regimes for $p \in \mathbb N$, and we show that the $L_p$-norm of the non-negative random variable $A_T$ is upper bounded by an affine function of~$p$ in each of these regimes.

		\textbf{Case I ($p \geq T / 4$):}
		By using the update rule~\eqref{eq:gd} and Assumption~\ref{assumption:main}\,\ref{assumption:main:bounded}, one readily verifies that
		\begin{align*}
		    \| \bm \phi_k - \bm \phi^\star \| 
		    \leq \| \bm \phi_{k-1} - \bm \phi^\star \| + \| \bm \phi_k - \bm \phi_{k-1} \| 
		    \leq \| \bm \phi_{k-1} - \bm \phi^\star \| + \gamma R.
		\end{align*}
		 Iterating the above recursion $k$ times then yields the conservative estimate $\| \bm \phi_k - \bm \phi^\star \|\leq \| \bm \phi_{0} - \bm \phi^\star \| + k \gamma R$. By definitions of $A_t$ and $B_t$ for $t\in\mathbb N$, we thus have
		\begin{align*}
		    A_t 
		    &\textstyle
		    = t \gamma^2 R^2 + \| \bm \phi_{0} - \bm \phi^\star \|^2 + 2 \gamma \sum_{k=1}^t \left( \nabla h(\bm \phi_{k-1}) - \bm g_k(\bm \phi_{k-1}) \right)^\top (\bm \phi_{k-1} - \bm \phi_{\star}) \\
		    &\textstyle
		    \leq t \gamma^2 R^2 + \| \bm \phi_{0} - \bm \phi^\star \|^2 + 4 \gamma R \sum_{k=1}^t \| \bm \phi_{k-1} - \bm \phi_{\star} \| \\
		    &\textstyle
		    \leq t \gamma^2 R^2 + \| \bm \phi_{0} - \bm \phi^\star \|^2 + 4 \gamma R \sum_{k=1}^t \left( \| \bm \phi_{0} - \bm \phi^\star \| + (k-1) \gamma R \right) \\
		    &\leq t \gamma^2 R^2 + \| \bm \phi_0 - \bm \phi^\star \|^2 + 4 t \gamma R \| \bm \phi_0 - \bm \phi^\star \| + 2 t^2 \gamma^2 R^2 \notag \\
		    &\leq t \gamma^2 R^2 + \| \bm \phi_0 - \bm \phi^\star \|^2 + 4 t^2 \gamma^2 R^2 + \| \bm \phi_0 - \bm \phi^\star \|^2 + 2 t^2 \gamma^2 R^2 \leq 7 t^2 \gamma^2 R^2 + 2 \| \bm \phi_0 - \bm \phi^\star \|^2,
		\end{align*}
		where the first two inequalities follow from Assumption~\ref{assumption:main}\,\ref{assumption:main:bounded} and the conservative estimate derived above, respectively, while the fourth inequality holds because $2 a b \leq a^2 + b^2$ for all $a,b\in\R$. As $A_t \geq 0$, the random variable $A_t$ is bounded and satisfies $| A_t| \leq 2 \| \bm \phi_0 - \bm \phi^\star \|^2 + 7 t^2 \gamma^2 R^2$ for all $t\in\mathbb N$, which implies that
		\begin{align}
		\label{eq:bound:A:T/4}
		\| A_T \|_{L_p} 
		\leq 2 \| \bm \phi_0 - \bm \phi^\star \|^2 + 7 T^2 \gamma^2 R^2
		&\leq 2 \| \bm \phi_0 - \bm \phi^\star \|^2 + 28 T \gamma^2 R^2 p,
		\end{align}
		where the last inequality holds because $p \geq T/4$. Note that the resulting upper bound is affine in~$p$.
		
		\textbf{Case II $({2 \leq p \leq T/4})$:} 
		The subsequent analysis relies on the simple bounds
		\begin{align}
		\label{eq:bound:varepsilon}
    		\textstyle 
    		\max_{t \in [T]} \varepsilon_{t-1} \leq \frac{\bar \varepsilon}{2} \quad \text{and} \quad \sum_{t=1}^T \varepsilon_{t-1} \leq \bar \varepsilon \sqrt{T},
		\end{align}
		which hold because $\varepsilon_t \leq \bar \varepsilon / (2 \sqrt{1+t})$ by assumption and because $\sum_{t=1}^T 1 / \sqrt{t} \leq 2 \sqrt{T}$, which can be proved by induction.
		In addition, it proves useful to introduce the martingale differences $ \bar B_t = B_t - \EE[B_t | \mathcal F_{t-1}]$ for all $t\in\mathbb N$. By the definition of $A_t$ and the subadditivity of the supremum operator, we then have
		\begin{align*}
		\max_{t \in [T+1]} A_{t-1}
		&= \max_{t \in [T+1]} \left\{ (t-1) \gamma^2 R^2 + \| \bm \phi_0 - \bm \phi^\star \|^2 + \sum_{k=1}^{t-1} \EE[B_k | \mathcal F_{k-1}] + \sum_{k=1}^{t-1} \bar B_k \right\} \\
		&\leq T \gamma^2 R^2 + \| \bm \phi_0 - \bm \phi^\star \|^2 + \max_{t \in [T]} \sum_{k=1}^t \EE[B_k | \mathcal F_{k-1}] + \max_{t \in [T]} \sum_{k=1}^t \bar B_k .
		\end{align*}
		As $p \geq 2$,  Minkowski's inequality~\eqref{eq:minkowski} thus implies that
		\begin{align}
		\label{eq:bound:sup:A}
		\left\| \max_{t \in [T+1]} A_{t-1} \right\|_{L_p}
		&\leq T \gamma^2 R^2 + \| \bm \phi_0 - \bm \phi^\star \|^2 + \left\| \max_{t \in [T]} \sum_{k=1}^t \EE[B_k | \mathcal F_{k-1}] \right\|_{L_p} + \left\| \max_{t \in [T]} \sum_{k=1}^t \bar B_k \right\|_{L_p}.
		\end{align}
		In order to bound the penultimate term in~\eqref{eq:bound:sup:A}, we first note that
		\begin{align}
		\left| \EE[B_k | \mathcal F_{k-1}] \right|
		&= 2 \gamma \left| \EE \left[ \left( \nabla h(\bm \phi_{k-1}) - \bm g_t(\bm \phi_{k-1}) \right) | \mathcal F_{k-1} \right]^\top (\bm \phi_{k-1} - \bm \phi_{\star}) \right| \notag \\
		&\leq 2 \gamma \| \EE \left[ \left( \nabla h(\bm \phi_{k-1}) - \bm g_k(\bm \phi_{k-1}) \right) | \mathcal F_{k-1} \right] \| \| \bm \phi_{k-1} - \bm \phi_{\star} \| \notag \\
		&\leq 2 \gamma \varepsilon_{k-1} \| \bm \phi_{k-1} - \bm \phi_{\star} \|
		\leq 2 \gamma \varepsilon_{k-1}\sqrt{ A_{k-1}} \label{eq:bound:B}
		\end{align}
		for all $k\in\mathbb N$, where the second inequality holds due to Assumption~\ref{assumption:main}\,\ref{assumption:main:gradients}, and the last inequality follows from \eqref{eq:bound:A}. This in turn implies that for all $t \in [T]$ we have
		\begin{align*}
		\left| \sum_{k=1}^t \EE[B_k | \mathcal F_{k-1}] \right|
		\leq 2 \gamma \sum_{k=1}^t \varepsilon_{k-1} \sqrt{A_{k-1}}
		\leq 2 \gamma \left( \sum_{k=1}^t \varepsilon_{k-1} \right) \left( \max_{k \in [t]} \sqrt{A_{k-1}} \right)
		\leq 2 \gamma \bar \varepsilon \sqrt{t} \max_{k \in [t]} \sqrt{A_{k-1}},
		\end{align*}
		where the last inequality exploits~\eqref{eq:bound:varepsilon}. Therefore, the penultimate term in~\eqref{eq:bound:sup:A} satisfies
		\begin{align}
		\label{eq:B_k:1}
		\left\| \max_{t \in [T]} \sum_{k=1}^t \EE[B_k | \mathcal F_{k-1}] \right\|_{L_p}
		\leq 2 \gamma \bar \varepsilon \sqrt{T} \left\| \max_{t \in [T+1]} \sqrt{A_{t-1}} \right\|_{L_p}
		= 2 \gamma \bar \varepsilon \sqrt{T} \left\| \max_{t \in [T+1]} A_{t-1} \right\|_{L_{p/2}}^{1/2},
		\end{align}
		where the equality follows from the definition of the $L_p$-norm.
		
		Next, we bound the last term in~\eqref{eq:bound:sup:A} by using the BRP inequality of Lemma~\ref{lemma:BRP}. To this end, note~that
		\begin{align*}
		|\bar B_t | 
		& \leq | B_t | + | \EE[B_t | \mathcal F_{t-1}] | \\
		&\leq 2 \gamma \| \bm \phi_{t-1} - \bm \phi_{\star} \| \| \nabla h(\bm \phi_{t-1}) - \bm g_t(\bm \phi_{t-1}) \| +  2 \gamma \varepsilon_{t-1} \sqrt{A_{t-1}} \\
		&\leq 2 \gamma \sqrt{A_{t-1}} \left( \| \nabla h(\bm \phi_{t-1}) \| + \| \bm g_t(\bm \phi_{t-1}) \| \right) +  2 \gamma \varepsilon_{t-1} \sqrt{A_{t-1}}
		\leq 2 \gamma (2R + \varepsilon_{t-1}) \sqrt{A_{t-1}}
		\end{align*}
		for all $t\in\mathbb N$, where the second inequality exploits the definition of~$B_t$ and~\eqref{eq:bound:B}, the third inequality follows from~\eqref{eq:bound:A}, and the last inequality holds because of Assumption~\ref{assumption:main}\,\ref{assumption:main:bounded}. Hence, we obtain
		\begin{align*}
		\textstyle
		\left\| \max_{t \in [T]} | \bar B_t | \right\|_{L_p} 
		\leq 2 \gamma \left( 2 R + \max_{t \in [T]} \varepsilon_{t-1} \right) \left\| \max_{t \in [T]} \sqrt{A_{t-1}} \right\|_{L_p} 
		\leq ( 4 \gamma R + \gamma \bar \varepsilon) \left\| \max_{t \in [T+1]} A_{t-1} \right\|_{L_{p/2}}^{1/2},
		\end{align*}
		where the second inequality follows from~\eqref{eq:bound:varepsilon} and the definition of the $L_p$-norm. In addition, we have
		\begin{align*}
		\left\| \sum_{t=1}^T \EE[ \bar B_t^2 | \mathcal F_{t-1}] \right\|_{L_{p/2}}^{1/2} 
		= \left\| \sqrt{\sum_{t=1}^T \EE[ \bar B_t^2 | \mathcal F_{t-1}]} \right\|_{L_p}
		&\leq 2 \gamma \left\| \sqrt{ \sum_{t=1}^T (2R + \varepsilon_{t-1})^2 A_{t-1} } \right\|_{L_p} \\
		&\leq 2 \gamma \left( \sum_{t=1}^T (2R + \varepsilon_{t-1})^2 \right)^{1/2}
		\left\| \max_{t \in [T+1]} A_{t-1}^{1/2} \right\|_{L_p} \\
		&\leq 2 \gamma \left( 2 R \sqrt{T} +  \sqrt{\sum_{t=1}^T \varepsilon_{t-1}^2} \right)
		\left\| \max_{t \in [T+1]} A_{t-1}^{1/2} \right\|_{L_p} \\
		&\leq \left( 4 \gamma R \sqrt{T} + \gamma \bar \varepsilon \sqrt{T} \right) \left\| \max_{t \in [T+1]} A_{t-1} \right\|_{L_{p/2}}^{1/2},
		\end{align*}
		where the first inequality exploits the upper bound on $|\bar B_t|$ derived above, which implies that
		$\EE[ \bar B_t ^2 | \mathcal F_{t-1}] \leq 4 \gamma^2 (2R + \varepsilon_{t-1})^2 A_{t-1}$. The last three inequalities follow from the H\"{o}lder inequality, the triangle inequality for the Euclidean norm and the two inequalities in~\eqref{eq:bound:varepsilon}, respectively. Recalling that $p \geq 2$, we may then apply the BRP inequality of Lemma~\ref{lemma:BRP} to the martingale differences $\bar B_t$, $t\in[T]$, and use the bounds derived in the last two display equations in order to conclude that
		\begin{align}
		\label{eq:BRP-application}
		    \left\| \max_{t \in [T]} \left| \sum_{k=1}^t \bar B_k \right| \right\|_{L_p}
		    &\leq \left( 4 \gamma R \sqrt{pT} + \gamma \bar \varepsilon \sqrt{pT} + \gamma \bar \varepsilon p + 4 \gamma R p \right) \left\| \max_{t \in [T+1]} A_{t-1} \right\|_{L_{p/2}}^{1/2}.
		\end{align}
		Substituting~\eqref{eq:B_k:1} and~\eqref{eq:BRP-application} into~\eqref{eq:bound:sup:A}, we thus obtain 
		\begin{align*}
		\left\| \max_{t \in [T+1]} A_{t-1} \right\|_{L_p} 
		&\leq T \gamma^2 R^2 + \| \bm \phi_0 - \bm \phi^\star \|^2 + \left( 4 \gamma R \left( \sqrt{pT} + p \right) + \gamma \bar \varepsilon \left( \sqrt{pT} + p +2 \sqrt{T} \right) \right) \left\| \max_{t \in [T+1]} A_{t-1} \right\|_{L_{p/2}}^{1/2} \\
		&\leq T \gamma^2 R^2 + \| \bm \phi_0 - \bm \phi^\star \|^2 + 6 \gamma \left( R + \bar \varepsilon \right) \sqrt{pT} \left\| \max_{t \in [T+1]} A_{t-1} \right\|_{L_{p/2}}^{1/2},
		\end{align*}
		where the second inequality holds because $p \leq T/4$ by assumption, which implies that $\sqrt{pT} + p \leq 1.5 \sqrt{pT} $ and $ \sqrt{pT} + p + 2 \sqrt{T} \leq 6 \sqrt{pT}$. As Jensen's inequality ensures that $\| \bm z \|_{L_{p/2}} \leq \| \bm z \|_{L_p}$ for any random variable~$\bm z$ and $p > 0$, the following inequality holds for all $2 \leq p \leq T/4$.
		\begin{align*}
		\left\| \max_{t \in [T+1]} A_{t-1} \right\|_{L_p} 
		&\leq T \gamma^2 R^2 + \| \bm \phi_0 - \bm \phi^\star \|^2 + 6 \gamma \left( R + \bar \varepsilon \right) \sqrt{pT} \left\| \max_{t \in [T+1]} A_{t-1} \right\|_{L_p}^{1/2}
		\end{align*}
		To complete the proof of Case~II, we note that for any numbers $a, b, c \geq 0$ the inequality $c \leq a + 2b \sqrt{c} $ is equivalent to $\sqrt{c} \leq b + \sqrt{b^2+a}$ and therefore also to $c \leq (b + \sqrt{b^2+a})^2 \leq 4b^2 + 2a$. Identifying $a$ with $T \gamma^2 R^2 + \| \bm \phi_0 - \bm \phi^\star \|^2$, $b$ with $3\gamma \left( R + \bar \varepsilon \right) \sqrt{pT}$ and $c$ with $\| \max_{t \in [T+1]} A_{t-1}\|_{L_p}$ then allows us to translate the inequality in the last display equation to
		\begin{align}
		\label{eq:bound:Lp:A}
		\left\| A_{T} \right\|_{L_p}
		\leq \left\| \max_{t \in [T+1]} A_{t-1} \right\|_{L_p} 
		&\leq 2 T \gamma^2 R^2 + 2 \| \bm \phi_0 - \bm \phi^\star \|^2 + 36 \gamma^2 \left( R + \bar \varepsilon \right)^2 p T.
		\end{align}
		Thus, for any $2 \leq p \leq T/4$, we have again found an upper bound on $\| A_{T}\|_{L_p}$ that is affine in $p$.
		
		\textbf{Case III $({p = 1})$:} Recalling the definition of $A_T\ge 0$, we find that
		\begin{align*}
		    \| A_T \|_{L_{1}} = \EE [A_T] 
		    &= T \gamma^2 R^2 + \| \bm \phi_0 - \bm \phi^\star \|^2 + \EE \left[ \, \sum_{t=1}^T \EE [B_t | \mathcal F_{t-1}] \right] \\
		    &\leq T \gamma^2 R^2 + \| \bm \phi_0 - \bm \phi^\star \|^2 + \left\| \max_{t \in [T]} \sum_{k=1}^t \EE[B_k | \mathcal F_{k-1}] \right\|_{L_1} \\
		    &\leq T \gamma^2 R^2 + \| \bm \phi_0 - \bm \phi^\star \|^2 + 2 \gamma \bar \varepsilon \sqrt{T} \left\| \max_{t \in [T+1]} A_{t-1} \right\|^{1/2}_{L_{1/2}} \\
		    &\leq T \gamma^2 R^2 + \| \bm \phi_0 - \bm \phi^\star \|^2 + 2 \gamma \bar \varepsilon \sqrt{T} \left\| \max_{t \in [T+1]} A_{t-1} \right\|^{1/2}_{L_{2}},
		\end{align*}
		where the second inequality follows from the estimate~\eqref{eq:B_k:1}, which holds indeed for all~$p\in\mathbb N$, while the last inequality follows from Jensen's inequality. By the second inequality in~\eqref{eq:bound:Lp:A} for $p=2$, we thus find
		\begin{subequations}
		\label{eq:bound:E:A}
		\begin{align}
		    \label{eq:bound:E:A1}
		    \| A_T \|_{L_{1}} 
		    &\leq T \gamma^2 R^2 + \| \bm \phi_0 - \bm \phi^\star \|^2 + 2 \bar \varepsilon \gamma \sqrt{T} \cdot \sqrt{2 T \gamma^2 R^2 + 2 \| \bm \phi_0 - \bm \phi^\star \|^2 + 72 \gamma^2 (R + \bar \varepsilon)^2 T} \\
		    &\leq 2 T \gamma^2 R^2 + 2 \| \bm \phi_0 - \bm \phi^\star \|^2 + 36 \gamma^2 (R + \bar \varepsilon)^2 T + 2 \bar \varepsilon^2 \gamma^2 T ,
		    \label{eq:bound:E:A2}
		\end{align}
		\end{subequations}
		where the last inequality holds because $2ab \leq 2a^2 + b^2/ 2$ for all $a,b\in\R$.
		
		We now combine the bounds derived in Cases~I, II and~III to obtain a universal bound on $\left\| A_{T} \right\|_{L_p}$ that holds for all $p\in\mathbb N$. Specifically, one readily verifies that the bound
		\begin{align}
		\label{eq:universal-bound}
		\left\| A_{T} \right\|_{L_p}
		&\leq 2 \| \bm \phi_0 - \bm \phi^\star \|^2 + 40 \gamma^2 \left( R + \bar \varepsilon \right)^2 p T,
		\end{align}
		is more conservative than each of the bounds~\eqref{eq:bound:A:T/4}, \eqref{eq:bound:Lp:A} and \eqref{eq:bound:E:A}, and thus it holds indeed for any $p \in \mathbb N$. Combining this universal bound with~\eqref{eq:bound:A:convexity} proves the first inequality from the proposition statement.
		
		In order to prove the second inequality, we need to extend \citep[Proposition~7]{bach2013adaptivity} to biased gradient oracles. To this end, we first note that
		\begin{align*}
		\left\| \nabla h \left(\frac{1}{T} \sum_{t=1}^{T} \bm \phi_{t-1} \right) \right\|
		&\leq \left\| \nabla h \left(\frac{1}{T} \sum_{t=1}^{T} \bm \phi_{t-1} \right) - \frac{1}{T} \sum_{t=1}^T \nabla h(\bm \phi_{t-1}) \right\| 
		+ \left\| \frac{1}{T} \sum_{t=1}^T \nabla h(\bm \phi_{t-1}) \right\| \\
		&\leq  2 M \left( \frac{1}{T} \sum_{t=1}^T h(\bm \phi_{t-1}) - h(\bm \phi^\star) \right) + \left\| \frac{1}{T} \sum_{t=1}^T \nabla h(\bm \phi_{t-1}) \right\| \\
		&\leq \frac{M}{T \gamma} A_T + \left\| \frac{1}{T} \sum_{t=1}^T \nabla h(\bm \phi_{t-1}) \right\|,
		\end{align*}
		where the second inequality follows from Lemma~\ref{lemma:concordance}\,\ref{lemma:smoothness}, and the third inequality holds due to~\eqref{eq:bound:A}. By Minkowski's inequality \eqref{eq:minkowski}, we thus have for any $p \geq 1$ that
		\begin{align*}
		\left\| \nabla h \left(\frac{1}{T} \sum_{t=1}^{T} \bm \phi_{t-1} \right) \right\|_{L_{2p}}
		&\leq \frac{M}{T \gamma} \| A_T \|_{L_{2p}} + \left\| \frac{1}{T} \sum_{t=1}^T \nabla h(\bm \phi_{t-1}) \right\|_{L_{2p}} \\
		&\leq \frac{2 M}{T \gamma} \| \bm \phi_0 - \bm \phi^\star \|^2 + 80 M \gamma \left( R + \bar \varepsilon \right)^2 p + \left\| \frac{1}{T} \sum_{t=1}^T \nabla h(\bm \phi_{t-1}) \right\|_{L_{2p}},
		\end{align*}
		where the last inequality follows from the universal bound~\eqref{eq:universal-bound}. In order to estimate the last term in the above expression, we recall that the update rule~\eqref{eq:gd} is equivalent to $\bm g_t(\bm \phi_{t-1}) = \left( \bm \phi_{t-1} - \bm \phi_{t} \right) / \gamma ,$ which in turn implies that $\sum_{t=1}^T \bm g_t(\bm \phi_{t-1}) = \left( \bm \phi_0 - \bm \phi_T \right) / \gamma.$ Hence, for any $p \geq 1$, we have
		\begin{align*}
		\left\| \frac{1}{T} \sum_{t=1}^T \nabla h(\bm \phi_{t-1}) \right\|_{L_{2p}}
		&= \left\| \frac{1}{T} \sum_{t=1}^T \Big( \nabla h(\bm \phi_{t-1}) - \bm g_t(\bm \phi_{t-1}) \Big) + \frac{\bm \phi_0 - \bm \phi^\star}{T \gamma} + \frac{\bm \phi^\star - \bm \phi_T}{T \gamma} \right\|_{L_{2p}} \\
		&\leq \left\| \frac{1}{T} \sum_{t=1}^T \nabla h(\bm \phi_{t-1}) - \bm g_t(\bm \phi_{t-1}) \right\|_{L_{2p}} + \frac{1}{T \gamma} \left\| \bm \phi_0 - \bm \phi^\star \right\| + \frac{1}{T \gamma} \left\| \bm \phi^\star - \bm \phi_T \right\|_{L_{2p}} \\
		&\leq \left\| \frac{1}{T} \sum_{t=1}^T \nabla h(\bm \phi_{t-1}) - \bm g_t(\bm \phi_{t-1}) \right\|_{L_{2p}} + \frac{1}{T \gamma} \left\| \bm \phi_0 - \bm \phi^\star \right\| + \frac{1}{T \gamma} \left\| A_T \right\|_{L_{p}}^{1/2} \\
		&\leq \left\| \frac{1}{T} \sum_{t=1}^T \nabla h(\bm \phi_{t-1}) - \bm g_t(\bm \phi_{t-1}) \right\|_{L_{2p}} + \frac{1 + \sqrt{2}}{T \gamma} \left\| \bm \phi_0 - \bm \phi^\star \right\| + \frac{2 \sqrt{10} \left( R + \bar \varepsilon \right) \sqrt{p}}{\sqrt{T}},
		\end{align*}
		where the first inequality exploits Minkowski's inequality~\eqref{eq:minkowski}, the second inequality follows from~\eqref{eq:bound:A}, which implies that $\| \bm \phi^\star - \bm \phi_T \| \leq \sqrt{A_T}$, and the definition of the $L_p$-norm. The last inequality in the above expression is a direct consequence of the universal bound~\eqref{eq:universal-bound} and the inequality $ \sqrt{a+b} \leq \sqrt{a} + \sqrt{b}$ for all $a,b\ge 0$. Next, define for any $t\in\mathbb N$ a martingale difference of the form
		$$\bm C_t = \frac{1}{T} \Big( \nabla h(\bm \phi_{t-1}) - \bm g_t(\bm \phi_{t-1}) - \EE[\nabla h(\bm \phi_{t-1}) - \bm g_t(\bm \phi_{t-1}) | \mathcal F_{t-1}] \Big).$$
		Note that these martingale differences are bounded because
		\begin{align*}
		\| \bm C_t \| 
		&\leq \frac{1}{T} \Big( \| \nabla h(\bm \phi_{t-1}) \| + \| \bm g_t(\bm \phi_{t-1}) \| + \| \EE[\nabla h(\bm \phi_{t-1}) - \bm g_t(\bm \phi_{t-1}) | \mathcal F_{t-1}] \| \Big) \leq \frac{2R + \varepsilon_{t-1}}{T} \leq \frac{2R + \bar \varepsilon}{T},
		\end{align*}
		and thus the BRP inequality of Lemma~\ref{lemma:BRP} implies that
		\begin{align*}
		    \left\| \sum_{t=1}^T \bm C_t \right\|_{L_{2p}} \leq \sqrt{2p} \, \frac{2R + \bar \varepsilon}{\sqrt{T}} + 2p \, \frac{2R + \bar \varepsilon}{T}.
		\end{align*}
		Recalling the definition of the martingale differences $\bm C_t$, $t\in\mathbb N$, this bound allows us to conclude that
		\begin{align*}
		\frac{1}{T} \left\| \sum_{t=1}^T \nabla h(\bm \phi_{t-1}) - \bm g_t(\bm \phi_{t-1}) \right\|_{L_{2p}} 
		&\leq \left\| \sum_{t=1}^T \bm C_t \right\|_{L_{2p}} + \frac{1}{T} \left\| \sum_{t=1}^T \EE[\nabla h(\bm \phi_{t-1}) - \bm g_t(\bm \phi_{t-1}) | \mathcal F_{t-1}] \right\|_{L_{2p}} \\
		&\leq \sqrt{2p} \, \frac{2R + \bar \varepsilon}{\sqrt{T}} + 2p \, \frac{2R + \bar \varepsilon}{T} + \frac{\bar \varepsilon}{\sqrt{T}}
		\leq 2 \sqrt{2p} \, \frac{R + \bar \varepsilon}{\sqrt{T}} + 4p \, \frac{R + \bar \varepsilon}{T},
		\end{align*}
		where the second inequality exploits Assumption~\ref{assumption:main}\,\ref{assumption:main:gradients} as well as the second inequality in~\eqref{eq:bound:varepsilon}.
		Combining all inequalities derived above and observing that $2\sqrt{2} + 2 \sqrt{10} < 10 $ finally yields
		\begin{align*}
		\left\| \nabla h \left(\frac{1}{T} \sum_{t=1}^{T} \bm \phi_{t-1} \right) \right\|_{L_{2p}}
		&\leq \frac{2 M}{T \gamma} \| \bm \phi_0 - \bm \phi^\star \|^2 + 80 M \gamma \left( R + \bar \varepsilon \right)^2 p +  2 \sqrt{2p} \, \frac{R + \bar \varepsilon}{\sqrt{T}} + 4p \, \frac{R + \bar \varepsilon}{T} \\
		&\qquad + \frac{1 + \sqrt{2}}{T \gamma} \left\| \bm \phi_0 - \bm \phi^\star \right\| + \frac{2 \sqrt{10} \left( R + \bar \varepsilon \right) \sqrt{p}}{\sqrt{T}} \\
		&\leq \frac{G}{\sqrt{T}} \left( 10 \sqrt{p} + \frac{4p}{\sqrt{T}} + 80 G^2 \gamma \sqrt{T} p + \frac{2}{\gamma \sqrt{T}} \| \bm \phi_0 - \bm \phi^\star \|^2 + \frac{3}{G \gamma \sqrt{T}} \| \bm \phi_0 - \bm \phi^\star \| \right), 
		\end{align*}
		where $G = \max\{ M, R + \bar \varepsilon \}$. This proves the second inequality from the proposition statement.
	\end{proof}
	
	The following corollary follows immediately from the proof of Proposition~\ref{proposition:moments}.
	
	\begin{corollary}
	\label{corollary:auxiliary}
	Consider the inexact gradient descent algorithm~\eqref{eq:gd} with constant step size $\gamma > 0$. If Assumptions~\ref{assumption:main}\,\ref{assumption:main:gradients}--\ref{assumption:main:bounded} hold with $\varepsilon_t \leq {\bar \varepsilon}/{(2\sqrt{1+t})}$ for some $\bar \varepsilon \geq 0$, then we have
	\begin{align*}
		\frac{1}{T} \sum_{t=1}^T \mathbb E \left[ \left( \nabla h(\bm \phi_{t}) - \bm g_t(\bm \phi_{t}) \right)^\top (\bm \phi_{t} - \bm \phi_{\star}) \right]
		\leq \frac{\bar \varepsilon}{\sqrt{T}} \sqrt{2 \| \bm \phi_0 - \bm \phi^\star \|^2 + 74 \gamma^2 (R + \bar \varepsilon)^2 T}.
	\end{align*}
	\end{corollary}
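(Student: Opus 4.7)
\textbf{Proof proposal for Corollary~\ref{corollary:auxiliary}.} The strategy is to recognize the left-hand side as (a normalized version of) the aggregated ``bias term'' already tracked inside the proof of Proposition~\ref{proposition:moments}, and to read off the bound directly from the $L_2$ estimate established there for $\max_{t\in[T+1]}A_{t-1}$.

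Concretely, with the notation $B_t = 2\gamma\bigl(\nabla h(\bm \phi_{t-1}) - \bm g_t(\bm \phi_{t-1})\bigr)^\top (\bm \phi_{t-1}-\bm \phi^\star)$ introduced in that proof, the expression we wish to control equals $\tfrac{1}{2\gamma T}\,\mathbb{E}\!\bigl[\sum_{t=1}^T B_t\bigr]$ (modulo a harmless shift in the summation index, since the algorithm evaluates $\bm g_t$ at $\bm \phi_{t-1}$ rather than at $\bm \phi_t$). By the tower property and the triangle inequality,
\[
\Bigl|\mathbb{E}\!\Bigl[\sum_{t=1}^T B_t\Bigr]\Bigr| \;\leq\; \sum_{t=1}^T \mathbb{E}\bigl[|\mathbb{E}[B_t\mid\mathcal{F}_{t-1}]|\bigr],
\]
and the conditional bias estimate~\eqref{eq:bound:B} from the proof of Proposition~\ref{proposition:moments} supplies $|\mathbb{E}[B_t\mid\mathcal{F}_{t-1}]| \leq 2\gamma\varepsilon_{t-1}\sqrt{A_{t-1}}$.

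The plan is then to pull the factor $\sqrt{A_{t-1}}$ out of the sum, using the tolerance bound~\eqref{eq:bound:varepsilon} to control $\sum_{t=1}^T \varepsilon_{t-1} \leq \bar\varepsilon\sqrt{T}$, followed by Jensen's inequality applied to the concave map $\sqrt{\cdot}$ and the monotonicity $\|\cdot\|_{L_1}\leq \|\cdot\|_{L_2}$. This produces
\[
\sum_{t=1}^T \mathbb{E}\bigl[|\mathbb{E}[B_t\mid\mathcal{F}_{t-1}]|\bigr] \;\leq\; 2\gamma \bar\varepsilon \sqrt{T}\,\Bigl\|\max_{t\in[T+1]} A_{t-1}\Bigr\|_{L_2}^{1/2}.
\]
The final ingredient is the $L_2$ bound~\eqref{eq:bound:Lp:A} (which is Case~II of the Proposition~\ref{proposition:moments} proof specialised to $p=2$), giving $\|\max_t A_{t-1}\|_{L_2} \leq 2T\gamma^2 R^2 + 2\|\bm \phi_0-\bm \phi^\star\|^2 + 72\gamma^2(R+\bar\varepsilon)^2 T$. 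Absorbing $2T\gamma^2 R^2$ into $72\gamma^2(R+\bar\varepsilon)^2 T$ via $R^2\leq (R+\bar\varepsilon)^2$ (which yields the advertised constant $2+72=74$) and finally dividing through by $2\gamma T$ produces exactly the claimed inequality.

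The whole argument is pure bookkeeping on estimates already assembled for Proposition~\ref{proposition:moments}, which is why the corollary is said to follow ``immediately''; the only minor obstacle is selecting the correct intermediate bound (the Case~II $L_2$ estimate rather than the weaker Case~III $L_1$ estimate) in order to obtain the clean constant $74$ in the claim.
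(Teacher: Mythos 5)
Your proof is correct and follows essentially the same route as the paper: it identifies the left-hand side as $\tfrac{1}{2\gamma T}\,\mathbb{E}[\sum_t B_t]$, applies the conditional bias bound~\eqref{eq:bound:B}, factors out $\max_t\sqrt{A_{t-1}}$ via the tolerance estimate~\eqref{eq:bound:varepsilon}, invokes Jensen to pass from $L_{1/2}$ to $L_2$, and then plugs in the Case~II bound~\eqref{eq:bound:Lp:A} with $p=2$ before absorbing $R^2 \leq (R+\bar\varepsilon)^2$. This is exactly the chain of estimates the paper summarizes as ``an immediate consequence of the reasoning in Case~(III),'' merely spelled out, so the proposal matches the intended proof.
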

	\begin{proof}[Proof of Corollary~\ref{corollary:auxiliary}]
	Defining $B_t$ as in the proof of Proposition~\ref{proposition:moments}, we find
	\begin{align*}
	    \frac{1}{T} \sum_{t=1}^T \mathbb E \left[ \left( \nabla h(\bm \phi_{t}) - \bm g_t(\bm \phi_{t}) \right)^\top (\bm \phi_{t} - \bm \phi_{\star}) \right]
	    &=
	    \frac{1}{2 \gamma T} \EE \left[  \sum_{t=1}^T \EE [B_t | \mathcal F_{t-1}] \right] \\
		&\leq \frac{\bar \varepsilon}{\sqrt{T}} \sqrt{2 T \gamma^2 R^2 + 2 \| \bm \phi_0 - \bm \phi^\star \|^2 + 72 \gamma^2 (R + \bar \varepsilon)^2 T},
	\end{align*}
	where the inequality is an immediate consequence of the reasoning in Case~(III) in the proof of Proposition~\ref{proposition:moments}. The claim then follows from the trivial inequality $R+ \bar \varepsilon \geq R$.
	\end{proof}

	Armed with Proposition~\ref{proposition:moments} and Corollary~\ref{corollary:auxiliary}, we are now ready to prove the main convergence result.
	
	\begin{theorem}	\label{theorem:convergence}
		Consider the inexact gradient descent algorithm~\eqref{eq:gd} with constant step size $\gamma > 0$. If Assumptions~\ref{assumption:main}\,\ref{assumption:main:gradients}--\ref{assumption:main:bounded} hold with $\varepsilon_t \leq {\bar \varepsilon}/{(2\sqrt{1+t})}$ for some $\bar \varepsilon \geq 0$, then the following statements hold.
		\begin{enumerate} [label=(\roman*)]
			\item \label{theorem:convergence:Lipschitz} 
			If $\gamma = 1 / (2 (R + \bar \varepsilon)^2 \sqrt{T})$, then we have
			\begin{align*}
			\EE \left[ h \left(\frac{1}{T} \sum_{t=1}^{T} \bm \phi_{t-1} \right) \right] - h(\bm \phi^\star)
			&\leq \frac{(R + \bar \varepsilon)^2}{\sqrt{T}} \| \bm \phi_0 - \bm \phi^\star \|^2 + \frac{1}{4\sqrt{T}} + \frac{\bar \varepsilon}{\sqrt{T}} \sqrt{2 \| \bm \phi_0 - \bm \phi^\star \|^2 + \frac{37}{2(R + \bar \varepsilon)^2}} .
			\end{align*}
			\item \label{theorem:convergence:smooth} 
			If $\gamma = 1 / (2 (R + \bar \varepsilon)^2 \sqrt{T} + L)$ and the Assumptions~\ref{assumption:main}\,\ref{assumption:main:smooth}--\ref{assumption:main:moment} hold in addition to the blanket assumptions mentioned above, then we have
			\begin{align*}
			\EE \left[ h \left(\frac{1}{T} \sum_{t=1}^{T} \bm \phi_{t} \right) \right] - h(\bm \phi^\star)
			&\leq \frac{L}{2T}\| \bm \phi_0 - \bm \phi^\star \|^2 + 
			\frac{(R + \bar \varepsilon)^2}{\sqrt{T}} \| \bm \phi_0 - \bm \phi^\star \|^2 + \frac{\sigma^2}{4 (R+\bar \varepsilon)^2\sqrt{T}} \\
			&\qquad+ \frac{\bar \varepsilon}{\sqrt{T}} \sqrt{2 \| \bm \phi_0 - \bm \phi^\star \|^2 + \frac{37}{2(R + \bar \varepsilon)^2}}.
			\end{align*}
			\item \label{theorem:convergence:concordance} 
			If $\gamma = 1 / (2 G^2 \sqrt{T})$ with $G = \max \{M, R + \bar \varepsilon \}$, the smallest eigenvalue $\kappa$ of $\nabla^2 h(\bm \phi^\star)$ is strictly positive and Assumption~\ref{assumption:main}\,\ref{assumption:main:concordance} holds in addition to the blanket assumptions mentioned above, then we have
			\begin{align*}
			\mathbb E \left[ h\left(\frac{1}{T} \sum_{t=1}^{T} \bm \phi_{t-1}\right) \right] - h(\bm \phi^\star)
			&\leq \frac{G^2}{\kappa T} \left( 4 G \| \bm \phi_0 - \bm \phi^\star \| + 20 \right)^4.
			\end{align*}
		\end{enumerate}
	\end{theorem}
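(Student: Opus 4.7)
The plan is to derive each of the three bounds by combining the moment estimates of Proposition~\ref{proposition:moments} with the auxiliary inequality of Corollary~\ref{corollary:auxiliary}.

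For part~(i), I would restart from the core inequality $2\gamma \sum_{k=1}^T (h(\bm\phi_{k-1}) - h(\bm\phi^\star)) + \|\bm\phi_T - \bm\phi^\star\|^2 \leq A_T$ already established inside the proof of Proposition~\ref{proposition:moments}, with $A_T = T\gamma^2 R^2 + \|\bm\phi_0 - \bm\phi^\star\|^2 + \sum_{k=1}^T B_k$. Discarding the nonnegative term $\|\bm\phi_T - \bm\phi^\star\|^2$, taking expectations, and invoking Jensen's inequality on the convex function $h$ yields $\mathbb{E}[h(\bar{\bm\phi})] - h(\bm\phi^\star) \leq \mathbb{E}[A_T]/(2\gamma T)$, where $\bar{\bm\phi} = \frac{1}{T}\sum_{t=1}^T \bm\phi_{t-1}$. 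The remaining expectation $\mathbb{E}[\sum_k B_k]$ is precisely $2\gamma T$ times the quantity controlled by Corollary~\ref{corollary:auxiliary}. Substituting $\gamma = 1/(2(R+\bar\varepsilon)^2\sqrt{T})$ and exploiting $R \leq R+\bar\varepsilon$ recovers the three terms of the stated bound.

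For part~(ii), I would sharpen the descent inequality via $L$-smoothness: $h(\bm\phi_t) \leq h(\bm\phi_{t-1}) + \nabla h(\bm\phi_{t-1})^\top(\bm\phi_t - \bm\phi_{t-1}) + (L/2)\|\bm\phi_t - \bm\phi_{t-1}\|^2$. After inserting the update rule and taking conditional expectations, Assumption~\ref{assumption:main}\,\ref{assumption:main:moment} lets me replace the uniform bound $R^2$ on the oracle by the variance $\sigma^2$ plus a small bias contribution, while the extra quadratic loss telescopes into the offset $L/(2T)\|\bm\phi_0 - \bm\phi^\star\|^2$. The surviving martingale residuals are again controlled by Corollary~\ref{corollary:auxiliary}, and choosing $\gamma = 1/(2(R+\bar\varepsilon)^2\sqrt{T} + L)$ then assembles the final bound.

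For part~(iii), the target $\mathcal{O}(1/T)$ rate requires exploiting the local quadratic behavior of $h$ near $\bm\phi^\star$. I would decompose
\[
\mathbb{E}[h(\bar{\bm\phi}) - h(\bm\phi^\star)] = \mathbb{E}\big[(h(\bar{\bm\phi}) - h(\bm\phi^\star))\mathds{1}_{\mathcal E}\big] + \mathbb{E}\big[(h(\bar{\bm\phi}) - h(\bm\phi^\star))\mathds{1}_{\mathcal E^c}\big]
\]
along the event $\mathcal E = \{\|\nabla h(\bar{\bm\phi})\| \leq 3\kappa/(4M)\}$. On $\mathcal E$, Lemma~\ref{lemma:concordance}\,\ref{lemma:stong:convexity} gives $h(\bar{\bm\phi}) - h(\bm\phi^\star) \leq (2/\kappa)\|\nabla h(\bar{\bm\phi})\|^2$, and the $p=1$ case of the gradient-norm moment bound in Proposition~\ref{proposition:moments}, evaluated at $\gamma = 1/(2G^2\sqrt{T})$, controls $\mathbb{E}[\|\nabla h(\bar{\bm\phi})\|^2]$ by an expression of the form $(G^2/T)\cdot(\text{polynomial in } G\|\bm\phi_0 - \bm\phi^\star\|)^2$. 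On $\mathcal E^c$, I would apply H\"older's inequality together with Markov's inequality on $\|\nabla h(\bar{\bm\phi})\|^{2p}$ for a suitably large $p$, reusing the higher-moment bounds of Proposition~\ref{proposition:moments} to control both the suboptimality factor $\mathbb{E}[(h(\bar{\bm\phi}) - h(\bm\phi^\star))^p]^{1/p}$ and the tail probability $\mathbb{P}(\mathcal E^c)$. Collecting the polynomial factors in $G\|\bm\phi_0 - \bm\phi^\star\|$ into the quartic $(4G\|\bm\phi_0 - \bm\phi^\star\| + 20)^4$ produces the claimed inequality. The hard part is the balancing act on $\mathcal E^c$: $p$ must be large enough that $\mathbb{P}(\mathcal E^c)$ decays faster than $1/T$, yet small enough that the moment factors do not blow up and destroy the $1/T$ rate, which forces careful tracking of the $\sqrt{p}$ and $p$ factors appearing inside the moment bounds.
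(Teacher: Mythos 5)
Parts (i) and (ii) are essentially the paper's argument. For (i) you recover the paper's route exactly: drop $\|\bm\phi_T-\bm\phi^\star\|^2$ from the core inequality, pass to expectations, apply Jensen, identify the cross term with Corollary~\ref{corollary:auxiliary}, and substitute the step size. For (ii) you correctly isolate the $L$-smoothness step, the telescoping of the quadratic offset, and the use of Assumption~\ref{assumption:main}\,\ref{assumption:main:moment} to replace $R^2$ by $\sigma^2$; the one detail the paper relies on that you gloss over is the Young-inequality split with the tuning parameter $\zeta=\gamma/(1-\gamma L)$, which is precisely what makes the coefficient of $\sigma^2$ come out clean after the $\gamma=1/(2(R+\bar\varepsilon)^2\sqrt{T}+L)$ substitution, but that is a bookkeeping matter, not a different idea.

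Part (iii) is where you take a genuinely different route. The paper writes $\EE[h(\bar{\bm\phi})-h(\bm\phi^\star)]=\int_0^\infty \mathbb P[h(\bar{\bm\phi})-h(\bm\phi^\star)\geq u]\,\diff u$ and splits the integral at two thresholds $u_1<u_2$, bounding the first piece trivially, the middle piece by converting a sub-Gaussian concentration bound on $\|\nabla h(\bar{\bm\phi})\|$ (via Lemma~\ref{lemma:probability101}\,\ref{lemma:sub-exponential2}) into a concentration bound on $h(\bar{\bm\phi})-h(\bm\phi^\star)$ through Lemma~\ref{lemma:concordance}\,\ref{lemma:stong:convexity}, and the tail by a sub-exponential estimate (Lemma~\ref{lemma:probability101}\,\ref{lemma:sub-exponential1}). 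Your H\"older--Markov decomposition along the event $\mathcal E=\{\|\nabla h(\bar{\bm\phi})\|\leq 3\kappa/(4M)\}$ captures the same mechanism more directly and is arguably cleaner to state, since it makes the role of the self-concordance threshold explicit rather than hiding it inside the change of variables $u=\ell(s)$. The integral approach buys tighter constants because it keeps the sub-exponential decay at every scale $u$ rather than collapsing the bad region into a single Markov factor; your version would need larger numerical constants in the quartic but would still land at the $G^2/(\kappa T)$ rate.

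There is, however, one gap you must close. Your balancing act for $p$ only works when the initial distance $\|\bm\phi_0-\bm\phi^\star\|$ is small relative to $\sqrt{T}$: the moment bound for $\|\nabla h(\bar{\bm\phi})\|^{2p}$ contains the additive term $4G^2\|\bm\phi_0-\bm\phi^\star\|^2+6G\|\bm\phi_0-\bm\phi^\star\|$, and if that term already exceeds $\kappa\sqrt{T}/(8G^2)$ there is no admissible $p\geq 1$ for which $\mathbb P[\mathcal E^c]$ is small, so the Markov step gives nothing. The paper handles this by an explicit case split: in that regime it abandons the self-concordance argument entirely, invokes the Lipschitz-type bound from part~(i), and then multiplies by the factor $G^2(32G^2\|\bm\phi_0-\bm\phi^\star\|^2+48G\|\bm\phi_0-\bm\phi^\star\|)/(\kappa\sqrt{T})$, which is $>1$ by the case hypothesis, to convert the $1/\sqrt{T}$ bound into a $1/T$ bound absorbed into the quartic. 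Your plan needs the same fallback; as written, it silently assumes the balancing always succeeds, which fails exactly when $T$ is not large compared to the squared initial distance.
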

	The proof of Theorem~\ref{theorem:convergence} relies on the following concentration inequalities due to \citet{bach2013adaptivity}.
	
	\begin{lemma} \label{lemma:probability101} [Concentration inequalities]~
		\begin{enumerate} [label=(\roman*)]
			\item 
			\label{lemma:sub-exponential1}
			{\citep[Lemma~11]{bach2013adaptivity}:}
			If there exist $a,b>0$ and a random variable $\bm z \in \R^n$  with $ \| \bm z \|_{L_p} \leq a + b p $ for all $p \in \mathbb N$, then we have
			$$ \mathbb P \left[ \| \bm z \| \geq 3 b s + 2 a \right] \leq 2 \exp(-s)\quad \forall s \geq 0. $$
			
			\item
			\label{lemma:sub-exponential2}
			{\citep[Lemma~12]{bach2013adaptivity}:}
			If there exist $a,b,c>0$ and a random variable $\bm z \in \R^n$  with $ \| \bm z \|_{L_p} \leq (a \sqrt{p} + b p + c)^2 $ for all $p \in [T]$, then we have
			$$ \mathbb P \left[ \| \bm z \| \geq (2 a \sqrt{s} + 2 b s + 2 c)^2 \right] \leq 4 \exp(-s)\quad \forall s \leq T. $$
		\end{enumerate}
	\end{lemma}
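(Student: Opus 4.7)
The plan is to prove both parts as standard sub-exponential tail bounds obtained by applying Markov's inequality to $\|\bm z\|^p$ and then optimizing the integer $p$ against the target threshold. In both cases the starting point is the same: for any $p \in \mathbb N$ (respectively $p \in [T]$ in part~(ii)), the hypothesis on $\|\bm z\|_{L_p}$ combined with Markov's inequality gives
\begin{align*}
\mathbb P\!\left[\| \bm z\| \geq t\right] \;\leq\; \frac{\mathbb E[\|\bm z\|^p]}{t^p} \;\leq\; \left(\frac{\|\bm z\|_{L_p}}{t}\right)^{\!p}.
\end{align*}
All subsequent work is to show that a suitable integer $p$ near $s$ drives the right-hand side below the stated exponential rate.

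For part~\ref{lemma:sub-exponential2}, set $t = (2a\sqrt{s}+2bs+2c)^2 = 4(a\sqrt{s}+bs+c)^2$. When $s < 1$ the bound $4 e^{-s} > 1$ is trivial, so I may assume $s \geq 1$ and take $p = \lfloor s \rfloor \in [T]$. Because $p \leq s$, monotonicity of $\tau \mapsto a\sqrt{\tau}+b\tau+c$ gives $a\sqrt{p}+bp+c \leq a\sqrt{s}+bs+c$, so $(a\sqrt{p}+bp+c)^2/t \leq 1/4$ and the Markov bound collapses to $4^{-p}$. Using $p \geq s-1$ and $4 > e$, I conclude $4^{-p} \leq 4 \cdot 4^{-s} \leq 4 e^{-s}$, which finishes this part.

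For part~\ref{lemma:sub-exponential1}, set $t = 3bs + 2a$. The naive choice $p = \lfloor s \rfloor$ only yields $(a+bp)/t \leq 1/2$ and hence a rate of $2^{-s}$, which is weaker than the claimed $e^{-s}$. To sharpen this, I let $p^\star$ be the real solution of $a + b p^\star = (3bs+2a)/e$, namely $p^\star = 3s/e + (2-e)a/(eb)$, and take $p = \lceil p^\star\rceil$. When $p^\star < 1$ the statement is vacuous because $s$ is then so small that $2 e^{-s} \geq 1$. Otherwise, this choice enforces $(a+bp)/t \leq 1/e$, so Markov gives $e^{-p} \leq e^{-p^\star+1} = e \cdot e^{-p^\star}$, and a short algebraic estimate relating $p^\star$ to $s$ (using $3/e > 1$) absorbs the constant into the factor $2$ on the right-hand side.

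The main obstacle is part~\ref{lemma:sub-exponential1}: unlike part~\ref{lemma:sub-exponential2}, where the constant $4 > e$ provides slack, the coefficient $3$ on $bs$ only barely exceeds $e$, so the two regimes $a \gtrsim bs$ and $a \lesssim bs$ have to be handled carefully and the inequality $ea + ebs \leq 2a + 3bs$ does not hold uniformly in $s$. Controlling the rounding from $p^\star$ to the nearest integer $p$, and folding the resulting multiplicative error into the allowed factor $2$, is the bulk of the technical work. Parts~\ref{lemma:sub-exponential1}--\ref{lemma:sub-exponential2} together then serve as the tail-probability machinery invoked in the proof of Theorem~\ref{theorem:convergence} to convert the $L_p$-bounds of Proposition~\ref{proposition:moments} into high-probability convergence guarantees.
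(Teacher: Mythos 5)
The paper itself does not prove this lemma; it imports both parts verbatim from \citet{bach2013adaptivity} (Lemmas~11 and~12 there), so there is no in-paper argument to compare against and your proof has to stand on its own. Your treatment of part~(ii) does: Markov applied to $\|\bm z\|^p$ with $p=\lfloor s\rfloor$, the monotonicity of $\tau\mapsto a\sqrt{\tau}+b\tau+c$, and the slack $4>e$ give exactly $4^{-p}\leq 4\cdot 4^{-s}\leq 4e^{-s}$, and the case $s<1$ is indeed trivial since $4e^{-s}>1$. That half is complete and correct.

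Part~(i), however, has a genuine gap, and you have correctly located where it is but not closed it. First, a sign issue: with $p=\lceil p^\star\rceil\geq p^\star$ you get $a+bp\geq t/e$, i.e.\ the ratio is at least $1/e$, not at most; you need $p=\lfloor p^\star\rfloor$. More seriously, the fallback ``$p^\star<1$ only happens when $2e^{-s}\geq 1$'' is false: since $p^\star=3s/e-(e-2)a/(eb)$ is \emph{decreasing} in $a/b$, one can have $p^\star<0$ for arbitrarily large $s$. Concretely, for $a=100$, $b=1$, $s=10$ the threshold is $t=230$ while $t/e\approx 84.6<a$, so no positive $p$ satisfies $a+bp\leq t/e$, yet $2e^{-10}\approx 9\times 10^{-5}$ is far from vacuous. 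Even where $p^\star\geq 1$, the bound $e^{-\lfloor p^\star\rfloor}\leq e\cdot e^{-p^\star}$ requires $p^\star\geq s+1-\ln 2$, which forces $s\gtrsim 3+2.55\,a/b$; the factor $2$ cannot absorb this. The fix is a different choice of moment order that \emph{grows} with $a/b$: take $p=\lfloor a/b+s\rfloor$, so that $a+bp\leq 2a+bs$ and
\begin{align*}
\left(\frac{2a+bs}{2a+3bs}\right)^{a/b+s}\leq e^{-s},
\end{align*}
which reduces to the elementary inequality $(1+u)\log\frac{2+3u}{2+u}\geq u$ for $u=bs/a\geq 0$; the loss from rounding $p$ down by at most one unit is a multiplicative factor $\frac{2a+3bs}{2a+bs}\leq 3$, and recovering the stated constant $2$ (rather than $3$) requires one further small case distinction. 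Without an argument of this type, your proof of part~(i) does not go through.
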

	
	\begin{proof} [Proof of Theorem~\ref{theorem:convergence}]
	    Define $A_t$ as in the proof of Proposition~\ref{proposition:moments}. Then, we have
	    \begin{align}
	        \EE \left[ h \left(\frac{1}{T} \sum_{t=0}^{T-1} \bm \phi_{t} \right) - h(\bm \phi^\star) \right]
	        &\leq \frac{\mathbb E[A_T]}{2 \gamma T}
	        = \frac{\| \bm \phi_0 - \bm \phi^\star \|^2}{2 \gamma T} + \frac{\gamma R^2}{2} + \frac{1}{T} \sum_{t=1}^T \mathbb E \left[ \left( \nabla h(\bm \phi_{t}) - \bm g_t(\bm \phi_{t}) \right)^\top (\bm \phi_{t} - \bm \phi_{\star}) \right] \notag \\
	        &\leq \frac{\| \bm \phi_0 - \bm \phi^\star \|^2}{2 \gamma T} + \frac{\gamma R^2}{2} + \frac{\bar \varepsilon}{\sqrt{T}} \sqrt{2 \| \bm \phi_0 - \bm \phi^\star \|^2 + 74 \gamma^2 (R + \bar \varepsilon)^2 T}, \label{eq:main:convergence}
	    \end{align}
	    where the two inequalities follow from~\eqref{eq:bound:A:convexity} and from Corollary~\ref{corollary:auxiliary}, respectively.
		Setting the step size to $\gamma = 1 / ( 2 (R+ \bar \varepsilon)^2 \sqrt{T} )$ then completes the proof of assertion~\ref{theorem:convergence:Lipschitz}.
		
		Assertion~\ref{theorem:convergence:smooth} generalizes \citep[Theorem~1]{dekel2012optimal}.
		By the $L$-smoothness of $h(\bm \phi)$, we have
		\begin{align}
		h(\bm \phi_{t}) 
		&\leq h(\bm \phi_{t-1}) + \nabla h(\bm \phi_{t-1})^\top (\bm \phi_{t} - \bm \phi_{t-1}) + \frac{L}{2}\|\bm \phi_{t} - \bm \phi_{t-1}\|^2  \notag \\
		&= h(\bm \phi_{t-1}) + \bm g_t(\bm \phi_{t-1})^\top (\bm \phi_{t} - \bm \phi_{t-1}) + \bm \left( \nabla h(\bm \phi_{t-1}) - \bm g_t(\bm \phi_{t-1}) \right)^\top (\bm \phi_{t} - \bm \phi_{t-1}) + \frac{L}{2}\|\bm \phi_{t} - \bm \phi_{t-1}\|^2 \notag \\
		&\leq h(\bm \phi_{t-1}) + \bm g_t(\bm \phi_{t-1})^\top (\bm \phi_{t} - \bm \phi_{t-1}) + \frac{\zeta}{2}\| \nabla h(\bm \phi_{t-1}) - \bm g_t(\bm \phi_{t-1}) \|^2 + \frac{L + 1/\zeta}{2}\|\bm \phi_{t} - \bm \phi_{t-1}\|^2, \label{eq:smooth:update}
		\end{align}
		where the last inequality exploits the Cauchy-Schwarz inequality together with the elementary inequality $2ab \leq \zeta a^2 + b^2 / \zeta$, which holds for all $a,b\in\R$ and $\zeta > 0$.
		Next, note that the iterates satisfy the recursion
		\begin{equation*}
		\| \bm \phi_{t-1} - \bm \phi^\star \|^2 = \| \bm \phi_{t-1} - \bm \phi_{t} \|^2 + \| \bm \phi_{t} - \bm \phi^\star \|^2 + 2 (\bm \phi_{t-1} - \bm \phi_{t})^\top (\bm \phi_{t} - \bm \phi^\star),
		\end{equation*}
		which can be re-expressed as 
		\begin{equation*}
		\bm g_t(\bm \phi_{t-1})^\top (\bm \phi_{t} - \bm \phi^\star) = \frac{1}{2 \gamma} \left( \| \bm \phi_{t-1} - \bm \phi^\star \|^2 - \| \bm \phi_{t-1} - \bm \phi_{t} \|^2 - \| \bm \phi_{t} - \bm \phi^\star \|^2 \right)
		\end{equation*}
		by using the update rule~\eqref{eq:gd}. In the remainder of the proof we assume that $0 < \gamma < 1 / L$.  Substituting the above equality into~\eqref{eq:smooth:update} and setting~$\zeta = \gamma / (1 - \gamma L)$ then~yields
		\begin{align*}
		h(\bm \phi_{t}) 
		&\leq h(\bm \phi_{t-1}) + \bm g_t(\bm \phi_{t-1})^\top (\bm \phi^\star - \bm \phi_{t-1}) + \frac{\gamma}{2(1 - \gamma L)} \| \nabla h(\bm \phi_{t-1}) - \bm g_t(\bm \phi_{t-1}) \|^2 \\
		& \qquad + \frac{1}{2 \gamma} \left( \| \bm \phi_{t-1} - \bm \phi^\star \|^2 - \| \bm \phi_{t} - \bm \phi^\star \|^2 \right).
		\end{align*}
		By the convexity of $h$, we have $h(\bm \phi^\star) \geq h(\bm \phi_{t-1}) + \nabla h(\bm \phi_{t-1})^\top (\bm \phi^\star - \bm \phi_{t-1})$, which finally implies that
		\begin{align*}
		h(\bm \phi_{t})
		&\leq h(\bm \phi^\star) + \left( \nabla h(\bm \phi_{t-1}) - \bm g_t(\bm \phi_{t-1}) \right)^\top ( \bm \phi_{t-1} - \bm \phi^\star) + \frac{\gamma}{2(1 - \gamma L)} \| \nabla h(\bm \phi_{t-1}) - \bm g_t(\bm \phi_{t-1}) \|^2 \\
		& \qquad + \frac{1}{2\gamma} \left( \| \bm \phi_{t-1} - \bm \phi^\star \|^2 - \| \bm \phi_{t} - \bm \phi^\star \|^2 \right). 
		\end{align*}
		Averaging the above inequality over $t$ and taking expectations then yields the estimate
		\begin{align*}
		\mathbb E \left[ \frac{1}{T} \sum_{t=1}^T h(\bm \phi_{t}) \right] - h(\bm \phi^\star)
		&\leq \frac{\| \bm \phi_0 - \bm \phi^\star \|^2}{2\gamma T} + \frac{\gamma}{2 (1 - \gamma L)} \mathbb E \left[ \frac{1}{T} \sum_{t=1}^T \| \nabla h(\bm \phi_{t-1}) - \bm g_t(\bm \phi_{t-1}) \|^2 \right] \\
		&\qquad + \mathbb E \left[ \frac{1}{T} \sum_{t=1}^T \left( \nabla h(\bm \phi_{t-1}) - \bm g_t(\bm \phi_{t-1}) \right)^\top (\bm \phi_{t-1} - \bm \phi_{\star}) \right] \\
		&\leq \frac{\| \bm \phi_0 - \bm \phi^\star \|^2}{2\gamma T} + \frac{\gamma \sigma^2}{2 (1 - \gamma L)} + \frac{\bar \varepsilon}{\sqrt{T}} \sqrt{2 \| \bm \phi_0 - \bm \phi^\star \|^2 + 74 \gamma^2 (R + \bar \varepsilon)^2 T},
		\end{align*}
		where the second inequality exloits Assumption~\ref{assumption:main}\,\ref{assumption:main:moment} and Corollary~\ref{corollary:auxiliary}. Using Jensen's inequality to move the average over $t$ inside~$h$, assertion~\ref{theorem:convergence:smooth} then follows by setting $\gamma = 1 / (2 (R + \bar \varepsilon)^2 \sqrt{T} + L)$ and observing that $\gamma / ( 1 - \gamma L) = 1 / ( 2(R+\bar \varepsilon)^2 \sqrt{T} )$.

		To prove assertion~\ref{theorem:convergence:concordance}, we distinguish two different cases.
		
		\textbf{Case~I:} Assume first that $4 G^2 \| \bm \phi_0 - \bm \phi^\star \|^2 + 6 G \| \bm \phi_0 - \bm \phi^\star \| \leq {\kappa \sqrt{T}}/{(8 G^2)}$, where~$G = \max \{M, R + \bar \varepsilon \}$ and $\kappa$ denotes the smallest eigenvalue of $\nabla^2 h(\bm \phi^\star)$. By a standard formula for the expected value of a non-negative random variable, we find
		\begin{align}
		    \EE \left[ h \left(\frac{1}{T} \sum_{t=1}^{T} \bm \phi_{t-1} \right) - h(\bm \phi^\star) \right] \nonumber
		    &= \phantom{+} \int_{0}^\infty \mathbb P \left[ h \left(\frac{1}{T} \sum_{t=1}^{T} \bm \phi_{t-1} \right) - h(\bm \phi^\star) \geq u \right] \diff u \\
		    &= \phantom{+} \int_{0}^{u_1} \mathbb P \left[ h \left(\frac{1}{T} \sum_{t=1}^{T} \bm \phi_{t-1} \right) - h(\bm \phi^\star) \geq u \right] \diff u \nonumber\\
		    &\quad + \int_{u_1}^{u_2} \mathbb P \left[ h \left(\frac{1}{T} \sum_{t=1}^{T} \bm \phi_{t-1} \right) - h(\bm \phi^\star) \geq u \right] \diff u \label{eq:three-integrals}\\
		    &\quad + \int_{u_2}^{\infty} \mathbb P \left[ h \left(\frac{1}{T} \sum_{t=1}^{T} \bm \phi_{t-1} \right) - h(\bm \phi^\star) \geq u \right] \diff u, \nonumber
		\end{align}
		where $u_1 = \frac{8 G^2}{\kappa T}(4 G^2 \| \bm \phi_0 - \bm \phi^\star \|^2 + 6 G \| \bm \phi_0 - \bm \phi^\star \|)^2$ and $u_2 = \frac{8 G^2}{\kappa T}(\frac{\kappa \sqrt{T}}{4 G^2} + 4 G^2 \| \bm \phi_0 - \bm \phi^\star \|^2 + 6 G \| \bm \phi_0 - \bm \phi^\star \|)^2$.
		The first of the three integrals in~\eqref{eq:three-integrals} is trivially upper bounded by~$u_1$.
		Next, we investigate the third integral in~\eqref{eq:three-integrals}, which is easier to bound from above than the second one. By combining the first inequality in Proposition~\ref{proposition:moments} for $\gamma = 1 / (2 G^2 \sqrt{T})$ with the trivial inequality $G \geq R + \bar \varepsilon$, we find
		\[
		    \left\| h\left(\frac{1}{T} \sum_{t=1}^{T} \bm \phi_{t-1} \right) - h(\bm \phi^\star) \right\|_{L_p} \leq \frac{2G^2}{\sqrt{T}}\,\|\bm \phi_0-\bm \phi^\star\|^2 + \frac{10}{\sqrt{T}} \,p\quad \forall p\in\mathbb N.
		\]
		Lemma~\ref{lemma:probability101}\,\ref{lemma:sub-exponential1} with $a = 2 G^2 \| \bm \phi_0 -\bm \phi^\star \|^2 / \sqrt{T}$ and $b = 10 / \sqrt{T}$ thus implies that
		\begin{align}
		    \label{eq:large:deviation}
		    \mathbb P \left[ h \left(\frac{1}{T} \sum_{t=1}^{T} \bm \phi_{t-1} \right) - h(\bm \phi^\star) \geq \frac{30}{\sqrt{T}} s + \frac{4 G^2}{\sqrt{T}} \| \bm \phi_0 - \bm \phi^\star \|^2 \right] \leq 2 \exp(-s) \quad \forall s \geq 0.
		\end{align}
		We also have
		\begin{align}
		    \label{eq:upper:bound:for:u_2}
		    u_2 - \frac{4 G^2}{\sqrt{T}} \| \bm \phi_0 - \bm \phi^\star \|^2 
		    \geq u_2 - \frac{\kappa}{8 G^2} \geq \frac{8 G^2}{\kappa T} \left( \frac{\kappa \sqrt{T}}{4 G^2} \right)^2 - \frac{\kappa}{8 G^2} = \frac{3 \kappa}{8 G^2} \geq 0,
		\end{align}
		where the first inequality follows from the basic assumption underlying Case~I, while the second inequality holds due to the definition of $u_2$. 
		By~\eqref{eq:large:deviation} and~\eqref{eq:upper:bound:for:u_2}, the third integral in~\eqref{eq:three-integrals} satisfies
		\begin{align*}
		    &\int_{u_2}^{\infty} \mathbb P \left[ h \left(\frac{1}{T} \sum_{t=1}^{T} \bm \phi_{t-1} \right) - h(\bm \phi^\star) \geq u \right] \diff u \\
		    =\;& \int_{u_2 - \frac{4 G^2}{\sqrt{T}} \| \bm \phi_0 - \bm \phi^\star \|^2}^{\infty} \mathbb P \left[ h \left(\frac{1}{T} \sum_{t=1}^{T} \bm \phi_{t-1} \right) - h(\bm \phi^\star) \geq u + \frac{4 G^2}{\sqrt{T}} \| \bm \phi_0 - \bm \phi^\star \|^2 \right] \diff u \\
		    \leq\; & 2 \int_{u_2 - \frac{4 G^2}{\sqrt{T}} \| \bm \phi_0 - \bm \phi^\star \|^2}^\infty \exp \left( -\frac{\sqrt{T} u}{30} \right) \diff u= \frac{60}{\sqrt{T}} \exp \left( -\frac{\sqrt{T}}{30} \left( u_2 - \frac{4 G^2}{\sqrt{T}} \| \bm \phi_0 - \bm \phi^\star \|^2 \right) \right) \\
		    \leq\; & \frac{60}{\sqrt{T}} \exp \left( -\frac{\kappa \sqrt{T}}{80 G^2} \right) \leq \frac{2400 G^2}{\kappa T},
		\end{align*}
		where the first inequality follows from the concentration inequality~\eqref{eq:large:deviation} and the insight from~\eqref{eq:upper:bound:for:u_2} that $u_2 - \frac{4 G^2}{\sqrt{T}}\| \bm \phi_0 - \bm \phi^\star \|^2 \geq 0$. The second inequality exploits again~\eqref{eq:upper:bound:for:u_2}, and the last inequality holds because $\exp(-x) \leq 1 / (2x)$ for all $ x > 0$. We have thus found a simple upper bound on the third integral in~\eqref{eq:three-integrals}. It remains to derive an upper bound on the second integral in~\eqref{eq:three-integrals}. To this end, we first observe that the second inequality in Proposition~\ref{proposition:moments} for $\gamma = 1 / (2 G^2 \sqrt{T})$ translates to
		\[
		    \left\| \left\| \nabla h \left(\frac{1}{T} \sum_{t=1}^{T} \bm \phi_{t-1} \right) \right\|^2 \right\|_{L_p} \leq \frac{G^{2}}{T} \left( 10 \sqrt{p} + \frac{4p}{\sqrt{T}} + 40 p + 4G^2 \| \bm \phi_0 - \bm \phi^\star \|^2 + 6G \| \bm \phi_0 - \bm \phi^\star \| \right)^2 \quad \forall p\in\mathbb N.
		\]
		Lemma~\ref{lemma:probability101}\,\ref{lemma:sub-exponential2} with $a = 10 G / \sqrt{T}$, $b = 4 G / T + 40 G / \sqrt{T}$ and $c = 4 G^3 \| \bm \phi_0 - \bm \phi^\star \|^2 / \sqrt{T} + 6 G^2 \| \bm \phi_0 - \bm \phi^\star \|/\sqrt{T}$ thus gives rise to the concentration inequality
		\begin{align*}
		    \mathbb P \left[ \;\left\| \nabla h \left(\frac{1}{T} \sum_{t=1}^{T} \bm \phi_{t-1} \!\right) \right\|^2 
		    \!\!\!\geq\! \frac{4G^2}{T} \left( 10 \sqrt{s} + \frac{4s}{\sqrt{T}} + 40 s + 4 G^2 \| \bm \phi_0 - \bm \phi^\star \|^2 + 6 G \| \bm \phi_0 - \bm \phi^\star \| \right)^2 \right] \leq 4 \exp(-s),
		\end{align*}
		which holds only for small deviations~$s\leq T$. However, this concentration inequality can be simplified to
		\begin{align*}
		    \mathbb P \left[ \;\left\| \nabla h \left(\frac{1}{T} \sum_{t=1}^{T} \bm \phi_{t-1} \right) \right\| 
		    \geq \frac{2G}{\sqrt{T}} \left( 12 \sqrt{s} + 40 s + 4 G^2 \| \bm \phi_0 - \bm \phi^\star \|^2 + 6 G \| \bm \phi_0 - \bm \phi^\star \| \right) \right] \leq 4 \exp(-s),
		\end{align*}
		which remains valid for all deviations~$ s\ge 0$. To see this, note that if $ s \leq T/4 $, then the simplified concentration inequality holds because $ 4 s / T \leq 2 \sqrt{s / T}$. Otherwise, if $ s > T/4 $, then the simplified concentration inequality holds trivially because the probability on the left hand vanishes. Indeed, this is an immediate consequence of Assumption~\ref{assumption:main}\,\ref{assumption:main:bounded}, which stipulates that the norm of the gradient of $h$ is bounded by $R$, and of the elementary estimate~$24 G \sqrt{s / T} > G\geq R$, which holds for all $s > T / 4$.

	    In the following, we restrict attention to those deviations $s\ge 0$ that are small in the sense that
		\begin{align}
		    \label{eq:condition:s}
		    \DS 12 \sqrt{s} + 40 s \leq \frac{ \kappa \sqrt{T}}{4G^2}.
		\end{align}
		Assume now for the sake of argument that the event inside the probability in the simplified concentration inequality does {\em not} occur, that is, assume that
		\begin{align}
		    \label{eq:inverse-event}
		    \left\| \nabla h \left(\frac{1}{T} \sum_{t=1}^{T} \bm \phi_{t-1} \right) \right\| 
    	    < \frac{2G}{\sqrt{T}} \left( 12 \sqrt{s} + 40 s + 4 G^2 \| \bm \phi_0 - \bm \phi^\star \|^2 + 6 G \| \bm \phi_0 - \bm \phi^\star \| \right).
		\end{align}
		By~\eqref{eq:condition:s} and the assumption of Case~I, \eqref{eq:inverse-event} implies that $\| \nabla h ( \frac{1}{T}\sum_{t=1}^T \bm \phi_{t-1} ) \| < 3 \kappa / (4G) < 3 \kappa / (4M)$. Hence, we may apply Lemma~\ref{lemma:concordance}\,\ref{lemma:stong:convexity} to conclude that $h ( \frac{1}{T}\sum_{t=1}^T \bm \phi_{t-1} ) - h(\bm \phi^\star) \leq \frac{2}{\kappa} \| \nabla h ( \frac{1}{T} \sum_{t=1}^T \bm \phi_{t-1} ) \|^2$. Combining this inequality with~\eqref{eq:inverse-event} then yields
		\begin{align}
		\label{eq:inverse-event-implication}
		    h \left( \frac{1}{T}\sum_{t=1}^T \bm \phi_{t-1} \right) - h(\bm \phi^\star) < \frac{8G^2}{\kappa T} \left( 12 \sqrt{s} + 40 s + 4 G^2 \| \bm \phi_0 - \bm \phi^\star \|^2 + 6 G \| \bm \phi_0 - \bm \phi^\star \| \right)^2.
		\end{align}
		By the simplified concentration inequality derived above, we may thus conclude that
		\begin{align}
		\nonumber
		    4 \exp(-s) \geq \; &\mathbb P \left[ \; \left\| \nabla h \left(\frac{1}{T} \sum_{t=1}^{T} \bm \phi_{t-1} \right) \right\| 
    	    \geq \frac{2G}{\sqrt{T}} \left( 12 \sqrt{s} + 40 s + 4 G^2 \| \bm \phi_0 - \bm \phi^\star \|^2 + 6 G \| \bm \phi_0 - \bm \phi^\star \| \right) \right] \\
    	    \geq \; & \mathbb P \left[ \; h \left(\frac{1}{T} \sum_{t=1}^{T} \bm \phi_{t-1} \right) - h(\bm \phi^\star) \geq \frac{8G^2}{\kappa T} \left( 12 \sqrt{s} + 40 s + 4 G^2 \| \bm \phi_0 - \bm \phi^\star \|^2 + 6 G \| \bm \phi_0 - \bm \phi^\star \| \right)^2 \right]
    	    \label{eq:small:deviation}
		\end{align}
		for any $s\ge 0$ that satisfies~\eqref{eq:condition:s}, where the second inequality holds because~\eqref{eq:inverse-event} implies~\eqref{eq:inverse-event-implication} or, equivalently, because the negation of~\eqref{eq:inverse-event-implication} implies the negation of~\eqref{eq:inverse-event}. The resulting concentration inequality~\eqref{eq:small:deviation} now enables us to construct an upper bound on the second integral in~\eqref{eq:three-integrals}. To this end, we define the function
		$$ \ell(s) = \frac{8 G^2}{\kappa T} \left(12 \sqrt{s} + 40 s + 4 G^2 \| \bm \phi_0 - \bm \phi^\star \|^2 + 6 G \| \bm \phi_0 - \bm \phi^\star \|\right)^2 $$
		for all $s\ge 0$, and set $\bar s = ((9/400 + \kappa \sqrt{T} / (160 G^2))^{\frac{1}{2}} - 3 / 20)^{2}$. Note that $s\ge 0$ satisfies the inequality~\eqref{eq:condition:s} if and only if $s\le\bar s$ and that $\ell(0) = u_1$ as well as $\ell(\bar s) = u_2$. By substituting $u$ with $ \ell(s)$ and using the concentration inequality~\eqref{eq:small:deviation} to bound the integrand, we find that the second integral in~\eqref{eq:three-integrals} satisfies
		\begin{align*}
		    \int_{u_1}^{u_2} \mathbb P \left[ h \left(\frac{1}{T} \sum_{t=1}^{T} \bm \phi_{t-1} \right) - h(\bm \phi^\star) \geq u \right] \diff u 
		    &= \int_{0}^{\bar s} \mathbb P \left[ h \left(\frac{1}{T} \sum_{t=1}^{T} \bm \phi_{t-1} \right) - h(\bm \phi^\star) \geq \ell(s) \right] \frac{\diff \ell(s)}{\diff s} \diff s \\
		    &\leq \int_{0}^{\bar s} 4 \mathrm{e}^{-s} \; \frac{\diff}{\diff s} \! \left( \frac{8 G^2}{\kappa T} \left(12 \sqrt{s} + 40 s + \tau \right)^2 \right) \diff s \\
		    &\leq \frac{32 G^2}{\kappa T} \int_{0}^{\infty} \mathrm{e}^{-s} \left( 144 + 3200 s + 1440 s^{1/2} + 80 \tau + 12 \tau s^{-1/2} \right) \diff s \\
		    &= \frac{32 G^2}{\kappa T} \big( 144 + 3200 \Gamma(2) + 1440 \Gamma(3/2) + 80 \tau + 12 \tau \Gamma(1/2) \big) \\
		    &\leq \frac{32 G^2}{\kappa T} ( 4621 + 102 \tau ),
		\end{align*}
		where $\tau$ is is a shorthand for $4 G^2 \| \bm \phi_0 - \bm \phi^\star \|^2 + 6 G \| \bm \phi_0 - \bm \phi^\star \| $, and $\Gamma$ denotes the Gamma function with $\Gamma(2) = 1$, $\Gamma(1/2) = \sqrt{\pi}$ and $\Gamma(3/2) = \sqrt{\pi}/2$; see for example~\citep[Chapter~8]{rudin1964principles}. The last inequality is obtained by rounding all fractional numbers up to the next higher integer. Combining the upper bounds for the three integrals in~\eqref{eq:three-integrals} finally yields
		\begin{align*}
		    \EE \left[ h \left(\frac{1}{T} \sum_{t=1}^{T} \bm \phi_{t-1} \right) - h(\bm \phi^\star) \right] 
		    &\leq \frac{8 G^2}{\kappa T} \left( \tau^2 + 18484 + 408 \tau + 300 \right) \\
		    &= \frac{8 G^2}{\kappa T} \Big( 16 G^4 \| \bm \phi_0 - \bm \phi^\star \|^4 + 48 G^3 \| \bm \phi_0 - \bm \phi^\star \|^3 + 1668 G^2 \| \bm \phi_0 - \bm \phi^\star \|^2 \\
		    &\hspace{4em} + 2448 G \| \bm \phi_0 - \bm \phi^\star \| + 18784 \Big) \\
		    &\leq \frac{G^2}{\kappa T} (4 G \| \bm \phi_0 - \bm \phi^\star \| + 20)^4.
		\end{align*}
		This complete the proof of assertion~\ref{theorem:convergence:concordance} in Case~I.
		
		\textbf{Case II:} Assume now that $4 G^2 \| \bm \phi_0 - \bm \phi^\star \|^2 + 6 G \| \bm \phi_0 - \bm \phi^\star \| > {\kappa \sqrt{T}}/{(8 G^2)}$, where $G$ is defined as before.
		Since $h$ has bounded gradients, the inequality~\eqref{eq:main:convergence} remains valid. Setting the step size to $\gamma = 1 / (2 G^2 \sqrt{T})$ and using the trivial inequalities $G \geq R + \bar \varepsilon \geq R$, we thus obtain
		\begin{align*}
			\EE \left[ h \left(\frac{1}{T} \sum_{t=1}^{T} \bm \phi_{t-1} \right) \right] - h(\bm \phi^\star)
			&\leq \frac{G^2}{\sqrt{T}} \| \bm \phi_0 - \bm \phi^\star \|^2 + \frac{1}{4\sqrt{T}} + \frac{\bar \varepsilon}{\sqrt{T}} \sqrt{2 \| \bm \phi_0 - \bm \phi^\star \|^2 + \frac{37}{2G^2}} \\
			&\leq \frac{G^2}{\sqrt{T}} \| \bm \phi_0 - \bm \phi^\star \|^2 + \frac{2G}{\sqrt{T}} \| \bm \phi_0 - \bm \phi^\star \| + \frac{5}{\sqrt{T}} ,
		\end{align*}
		where the second inequality holds because $G \geq \bar \varepsilon$ and $\sqrt{a + b} \leq \sqrt{a} + \sqrt{b}$ for all $a,b\ge 0$.
		Multiplying the right hand side of the last inequality by $G^2 (32 G^2 \| \bm \phi_0^\star - \bm \phi^\star \|^2 + 48 G \| \bm \phi_0^\star - \bm \phi^\star \|) / (\kappa \sqrt{T})$, which is strictly larger than~$1$ by the basic assumption underlying Case~II, we then find
		\begin{align*}
		    & \phantom{\leq} \EE \left[ h \left(\frac{1}{T} \sum_{t=1}^{T} \bm \phi_{t-1} \right) \right] - h(\bm \phi^\star) \\
		    &\leq \frac{G^2}{\kappa T}
		    \left( G^2 \| \bm \phi_0 - \bm \phi^\star \|^2 + 2 G \| \bm \phi_0 - \bm \phi^\star \| + 5 \right) \left( 32 G^2 \| \bm \phi_0^\star - \bm \phi^\star \|^2 +
		    48 G \| \bm \phi_0^\star - \bm \phi^\star \| \right) \\
		    &\leq \frac{G^2}{\kappa T} (4 G \| \bm \phi_0 - \bm \phi^\star \| + 20)^4.
		\end{align*}
		This observation completes the proof.
	\end{proof}

	\subsection{Smooth Optimal Transport Problems with Marginal Ambiguity Sets}
	\label{section:ASGD-OT}
	The smooth optimal transport problem \eqref{eq:smooth_ot} can be viewed as an instance of a stochastic optimization problem, that is, a convex maximization problem akin to~\eqref{eq:convex:problem}, where the objective function is representable as $h(\bm \phi) = \mathbb{E}_{\bs x \sim \mu} [ \bs \nu^\top \bs \phi - \overline\psi_c(\bs \phi, \bs x)]$. Throughout this section we assume that the smooth (discrete) $c$-transform~$\overline\psi_c(\bs \phi, \bs x)$ defined in~\eqref{eq:smooth_c_transform} is induced by a marginal ambiguity set of the form~\eqref{eq:marginal_ambiguity_set} with continuous marginal distribution functions. By Proposition~\ref{proposition:regularized_ctrans}, the integrand $\bs \nu^\top \bs \phi - \overline\psi_c(\bs \phi, \bs x)$ is therefore concave and differentiable in~$\bm \phi$. We also assume that $\overline\psi_c(\bs \phi, \bs x)$ is $\mu$-integrable in~$\bm x$, that we have access to an oracle that generates independent samples from~$\mu$ and that problem~\eqref{eq:smooth_ot} is solvable. 
	
	The following proposition establishes several useful properties of the smooth $c$-transform.
	
	\begin{proposition}[Properties of the smooth $c$-transform]
		\label{proposition:structural}
		If $\Theta$ is a marginal ambiguity set of the form~\eqref{eq:marginal_ambiguity_set} with cumulative distribution functions $F_i$, $i\in[N]$, then $\overline\psi_c(\bs \phi, \bs x)$ has the following properties for all $\bs x \in \mathcal X$.
		\begin{enumerate} [label=(\roman*)]
			\item \textbf{Bounded gradient:} \label{proposition:gradient} If $F_i$, $i\in[N]$, are continuous, then we have $ \| \nabla_{\bs \phi} \overline\psi_c(\bs \phi, \bs x) \| \leq 1 $ for all $\bs \phi\in\R^N$.
			\item \textbf{Lipschitz continuous gradient:} \label{proposition:smooth} If $F_i$, $i\in[N]$, are Lipschitz continuous with Lipschitz constant~$L>0$, then $\overline\psi_c(\bs \phi, \bs x)$ is $L$-smooth with respect to $\bs \phi$ in the sense of Assumption~\ref{assumption:main}\,\ref{assumption:main:smooth}.
			\item \textbf{Generalized self-concordance:} \label{proposition:concordance} If $F_i$, $i\in[N]$, are twice differentiable on the interiors of their respective supports and if there is $M > 0$ with
			\begin{equation}
			    \label{eq:cumulative:concoradance}
			    \sup_{s \in F_i^{-1}(0,1)} ~ \frac{|\diff^2F_i(s) / \diff s^2|}{\diff F_i(s) / \diff s} \leq M,
			\end{equation}
			then $\overline\psi_c(\bs \phi, \bs x)$ is $M$-generalized self-concordant with respect to $\bs \phi$ in the sense of Assumption~\ref{assumption:main}\,\ref{assumption:main:concordance}.
		\end{enumerate}
	\end{proposition}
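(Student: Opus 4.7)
The plan is to exploit the variational representation of the smooth $c$-transform established in Proposition~\ref{proposition:regularized_ctrans}, namely
\[
\overline\psi_c(\bm\phi,\bm x)=\max_{\bm p\in\Delta^N}\sum_{i=1}^N (\phi_i-c(\bm x,\bm y_i))p_i+\sum_{i=1}^N\int_{1-p_i}^1 F_i^{-1}(t)\,\diff t,
\]
together with the fact (also from Proposition~\ref{proposition:regularized_ctrans}) that the unique maximizer~$\bm p^\star(\bm\phi,\bm x)$ coincides with~$\nabla_{\bm\phi}\overline\psi_c(\bm\phi,\bm x)$. Claim~(i) is immediate, since $\bm p^\star\in\Delta^N$ implies $\|\nabla_{\bm\phi}\overline\psi_c(\bm\phi,\bm x)\|=\|\bm p^\star\|_2\leq \|\bm p^\star\|_1=1$.

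For claim~(ii), I would derive an explicit expression for the Hessian of $\overline\psi_c$ in $\bm\phi$ by using the Karush–Kuhn–Tucker conditions of the concave maximization problem defining $\overline\psi_c$. Introducing the multiplier $\lambda\in\R$ for the simplex constraint, the stationarity condition yields $p_i^\star(\bm\phi)=1-F_i(\lambda(\bm\phi)-\phi_i+c(\bm x,\bm y_i))$, where $\lambda(\bm\phi)$ is implicitly defined through $\sum_i F_i(\lambda-\phi_i+c(\bm x,\bm y_i))=N-1$. Writing $f_i=F_i'(\lambda-\phi_i+c(\bm x,\bm y_i))$ and $S=\sum_i f_i$, implicit differentiation gives $\partial\lambda/\partial\phi_k=f_k/S$, and therefore
\[
\nabla_{\bm\phi}^2\overline\psi_c(\bm\phi,\bm x)=\mathrm{diag}(\bm f)-\bm f\bm f^\top/S.
\]
This matrix is positive semidefinite and satisfies $\nabla_{\bm\phi}^2\overline\psi_c\preceq\mathrm{diag}(\bm f)\preceq L\,\bm I$ whenever $F_i$ is $L$-Lipschitz, which by the mean value theorem implies that $\nabla_{\bm\phi}\overline\psi_c(\cdot,\bm x)$ is $L$-Lipschitz, as required.

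For claim~(iii), I would compute the second and third derivatives of $u(s)=\overline\psi_c(\bm\phi+s\bm w,\bm x)$ for an arbitrary direction $\bm w$. Abbreviating $\bar w=\sum_i(f_i/S)w_i$, the Hessian formula above yields $u''(s)=\sum_i f_i(w_i-\bar w)^2$. Differentiating once more and using both the chain rule $df_i/ds=F_i''(\cdot)(\bar w-w_i)$ and the key cancellation $\sum_i f_i(w_i-\bar w)=0$ (which eliminates the contribution of $d\bar w/ds$), one arrives at the clean identity
\[
u'''(s)=-\sum_{i=1}^N F_i''(\lambda-\phi_i-sw_i+c(\bm x,\bm y_i))\,(w_i-\bar w)^3.
\]
The bound $|F_i''|\leq M F_i'$ from assumption~\eqref{eq:cumulative:concoradance} then gives
\[
|u'''(s)|\le M\sum_i f_i(w_i-\bar w)^2\,|w_i-\bar w|\le M\Big(\max_i|w_i-\bar w|\Big)\,u''(s),
\]
and the proof concludes by bounding $\max_i|w_i-\bar w|$ by $\|\bm w\|$ via $|w_i-\bar w|\le\sum_j(f_j/S)|w_i-w_j|$ combined with a Cauchy–Schwarz estimate on $|w_i-w_j|$.

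The main obstacle is claim~(iii): establishing the compact third-derivative formula requires care, because one must track the derivative of the implicit multiplier $\lambda(\bm\phi)$ and the induced derivative of $\bar w(s)$, and it is the cancellation $\sum_i f_i(w_i-\bar w)=0$ that makes the $d\bar w/ds$ term vanish and leaves only the favourable cubic expression in $w_i-\bar w$. Once this identity is in hand, the assumption $|F_i''|\le M F_i'$ plugs in directly and the Hessian formula from part~(ii) provides the matching $u''$.
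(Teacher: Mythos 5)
Parts (i) are identical. For part (ii) your route is genuinely different from the paper's: you compute the Hessian of~$\overline\psi_c$ explicitly and bound its spectral norm by~$L$, whereas the paper shows that the conjugate~$\overline\psi_c^*(\cdot,\bs x)$ is~$1/L$-strongly convex coordinate-by-coordinate and then invokes the Kakade--Shalev-Shwartz--Tewari duality theorem (a convex function with a $1/L$-strongly convex conjugate is $L$-smooth). The paper's route is more robust under the stated hypothesis, because Lipschitz continuity of the~$F_i$ does not give you twice differentiability of~$\overline\psi_c$; your Hessian computation presupposes it, so you would need an extra density/approximation step to make it rigorous. The duality argument sidesteps this entirely, which is precisely why the paper uses it.

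For part (iii) the interesting point is that your Hessian formula,
\begin{equation*}
\nabla_{\bs\phi}^2\,\overline\psi_c = \mathrm{diag}(\bm f) - \frac{\bm f\bm f^\top}{S}, \qquad f_i = F_i'(F_i^{-1}(1-p_i^\star)),\quad S=\textstyle\sum_i f_i,
\end{equation*}
is \emph{correct}, while the paper's proof uses the diagonal formula $\nabla_{\bs\phi}^2\,\overline\psi_c = \mathrm{diag}(\bm f)$ obtained by formally inverting $\nabla_{\bs p}^2\overline\psi_c^*$, which cannot be right: since $\overline\psi_c(\bs\phi+c\bm 1,\bs x)=\overline\psi_c(\bs\phi,\bs x)+c$, the Hessian must annihilate~$\bm 1$, and a strictly positive diagonal does not. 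Equivalently, $\sum_i (p_i^\star)'(s)=0$ is forced by $\sum_i p_i^\star=1$, but the paper's claimed $(p_i^\star)'(s)=w_if_i$ has a non-vanishing sum. Your cancellation $\sum_i f_i(w_i-\bar w)=0$, which makes the $\diff\bar w/\diff s$ term disappear and leaves $u'''(s)=-\sum_i F_i''(\cdot)(w_i-\bar w)^3$, is exactly the right observation. However, your closing step contains a genuine gap: $\max_i|w_i-\bar w|\leq\|\bm w\|$ is false. Writing $w_i-\bar w=(\bm e_i-\bar{\bm q})^\top\bm w$ with $\bar{\bm q}=(f_1/S,\dots,f_N/S)\in\Delta^N$, Cauchy--Schwarz only yields $|w_i-\bar w|\leq\|\bm e_i-\bar{\bm q}\|\,\|\bm w\|$, and $\|\bm e_i-\bar{\bm q}\|$ can approach~$\sqrt 2$ (take $\bar{\bm q}$ concentrated on a coordinate $j\neq i$). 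A two-dimensional example with $\bm w=(1,-1)$, $f_2\to 0$, $F_1''/F_1'=-M$, $F_2''/F_2'=M$ shows that $|u'''|/(\|\bm w\|\,u'')\to\sqrt 2\,M$. So the argument you sketch, carried out carefully, establishes $\sqrt 2\,M$-generalized self-concordance rather than~$M$-generalized self-concordance. This is only a constant-factor discrepancy and has no bearing on the $\mathcal O(1/T)$ convergence rates that depend on this proposition, but you should be explicit that the constant you can reach with this (correct) Hessian formula is $\sqrt 2\,M$, and that the tighter constant~$M$ claimed in the statement is an artifact of the flawed diagonal Hessian used in the paper's own proof.
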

	\begin{proof}
		As for~\ref{proposition:gradient},  Proposition~\ref{proposition:regularized_ctrans} implies that $\nabla_{\bs \phi} \overline\psi_c(\bs \phi, \bs x) \in \Delta^N$, and thus we have $\| \nabla_{\bs \phi} \overline\psi_c(\bs \phi, \bs x) \| \leq 1$. As for~\ref{proposition:smooth}, note that the convex conjugate of the smooth $c$-transform with respect to $\bs \phi$ is given by
		\begin{align*}
		\overline\psi{}_c^*(\bs p, \bs x) &= \sup_{\bs \phi \in \R^N} \bs p^\top \bs \phi - \overline\psi(\bs \phi, \bs x) = \sup_{\bs \phi \in \R^N} \inf_{\bs q \in \Delta^N} ~ \sum_{i=1}^N p_i \phi_i - (\phi_i - c(\bs x, \bs {y_i})) q_i - \int_{1-q_i}^1 F_i^{-1}(t)\diff t \\
		&= \inf_{\bs q \in \Delta^N} \sup_{\bs \phi \in \R^N} ~ \sum_{i=1}^N p_i \phi_i - (\phi_i - c(\bs x, \bs {y_i})) q_i - \int_{1-q_i}^1 F_i^{-1}(t)\diff t \\
		&= \begin{cases}
		\;\DS \sum\limits_{i=1}^N c(\bs x, \bs {y_i}) p_i - \int_{1-p_i}^1 F_i^{-1}(t)\diff t &  \text{if } \bs p \in \Delta^N \\
		\;+\infty &  \text{otherwise,}
		\end{cases}
		\end{align*}
		where the second equality follows again from Proposition~\ref{proposition:regularized_ctrans}, and the interchange of the infimum and the supremum is allowed by Sion's classical minimax theorem. 
		In the following we first prove that $\overline\psi{}_c^*(\bs p, \bs x)$ is $1/L$-strongly convex in $\bs p$, that is, the function $\overline\psi{}_c^*(\bs p, \bs x) -  \| \bs p\|^2/ (2L)$ is convex in $\bs p$ for any fixed $\bs x \in \mc X$.
		To this end, recall that $F_i$ is assumed to be Lipschitz continuous with Lipschitz constant $L$. Thus, we have
		\begin{align*}
		    L\ge  \sup_{\substack{s_1,s_2 \in \R\\ s_1 \neq s_2}}\frac{\left| F_i (s_1) - F_i(s_2)\right|}{|s_1 - s_2|} 
	    	= \sup_{\substack{s_1, s_2 \in \R\\ s_1 > s_2}}\frac{ F_i (s_1) - F_i(s_2)}{s_1 - s_2}
		    \geq \sup_{\substack{p_i, q_i \in (0,1)\\ p_i > q_i}} \frac{p_i - q_i}{F_i^{-1}(p_i) - F_i^{-1}(q_i)},
		\end{align*}
		where the second inequality follows from restricting $s_1$ and $s_2$ to the preimage of $(0,1)$ with respect to~$F_i$. 
		Rearranging terms in the above inequality then yields
		\begin{align*} 
		    -F_i^{-1}(1 - q_i) - q_i/L &\leq  -F_i^{-1}(1-p_i)-p_i/L
		\end{align*}
		for all $p_i, q_i \in (0, 1)$ with $q_i < p_i$. Consequently, the function $- F_i^{-1}(1-p_i) - {p_i}/L$ is non-decreasing and its primitive $- \int_{1-p_i}^1 F_i^{-1}(t)\diff t - p_i^2/(2 L)$ is convex in $p_i$ on the interval $(0,1)$. This implies that
		$$ \overline\psi{}_c^*(\bs p, \bs x) -  \frac{\| \bs p\|_2^2}{2 L} = \sum_{i=1}^N c(\bs x, \bs {y_i}) p_i - \int_{1-p_i}^1 F_i^{-1}(t)\diff t - \frac{p_i^2}{2 L}$$
		constitutes a sum of convex univariate functions for every fixed $\bs x\in \x$. Thus, $\overline\psi{}_c^*(\bs p, \bs x)$ is $1/L$-strongly convex in $\bs p$. By \citep[Theorem~6]{kakade2009duality}, however, any convex function whose conjugate is $1/L$-strongly convex is guaranteed to be $L$-smooth. This observation completes the proof of assertion~\ref{proposition:smooth}. As for assertion~\ref{proposition:concordance}, choose any $\bs \phi, \bs \varphi \in \R^N$ and $\bs x \in \mathcal X$, and introduce the auxiliary function
		\begin{equation}
		\label{eq:u:function}
		    u(s) = \overline \psi_c \left(\bs \phi + s (\bs \varphi - \bs \phi), \bs x \right) = \max_{ \bs p \in \Delta^N} \displaystyle \sum\limits_{i=1}^N ~ (\phi_i + s (\varphi_i - \phi_i) - c(\bs x, \bs {y_i}))p_i + \int_{1-p_i}^1 F_i^{-1}(t) \diff t.
		\end{equation}
		For ease of exposition, in the remainder of the proof we use prime symbols to designate derivatives of univariate functions. A direct calculation then yields
		\begin{align*}
		    u'(s) = \left( \bs \varphi - \bs \phi \right)^\top \nabla_{\bs \phi} \overline \psi \left(\bs \phi + s (\bs \varphi - \bs \phi), \bs x \right) \quad \text{and} \quad u''(s) = \left( \bs \varphi - \bs \phi \right)^\top \nabla_{\bs \phi}^2 \overline \psi \left(\bs \phi + s (\bs \varphi - \bs \phi), \bs x \right) \left( \bs \varphi - \bs \phi \right).
		\end{align*}
		By Proposition~\ref{proposition:regularized_ctrans}, $\bs p^\star(s)=\nabla_{\bs \phi} \overline \psi_c \left(\bs \phi + s (\bs \varphi - \bs \phi), \bs x \right)$ represents the unique solution of the maximization problem in~\eqref{eq:u:function}. In addition, by~\citep[Proposition~6]{sun2019generalized}, the Hessian of the smooth $c$-transform with respect to $\bs \phi$ can be computed from the Hessian of its convex conjugate as follows.
		$$ \nabla_{\bs \phi}^2 \overline \psi_c \left(\bs \phi + s (\bs \varphi - \bs \phi), \bs x \right) = \left( \nabla^2_{\bs p} \overline \psi{}_c^*(\bs p^\star(s), \bs x) \right)^{-1} = \mathrm{diag} \left( [F_1'(F_1^{-1}(1 - p_1^\star(s))), \dots, F_N'(F_N^{-1}(1 - p_N^\star(s))) ] \right)$$
		Hence, the first two derivatives of the auxiliary function $u(s)$ simplify to
		$$ u'(s) = \sum_{i=1}^N (\varphi_i- \phi_i) p^\star_i(s) \quad \text{and} \quad u''(s) = \sum_{i=1}^N (\varphi_i- \phi_i)^2 F_i'(F_i^{-1}(1 - p_i^\star(s))).$$
		Similarly, the above formula for the Hessian of the smooth $c$-transform can be used to show that $(p_i^\star)'(s)  = (\varphi_i- \phi_i) F_i'(F_i^{-1}(1 - p_i^\star(s)))$ for all $i \in [N]$. The third derivative of $u(s)$ therefore simplifies to
		\begin{align*}
		    u'''(s) = - \sum_{i=1}^N (\varphi_i- \phi_i)^2 \,\frac{ F_i''(F_i^{-1}(1 - p_i^\star(s)))}{F_i'(F_i^{-1}(1 - p_i^\star(s)))}\, (p_i^\star)'(s) = - \sum_{i=1}^N (\varphi_i- \phi_i)^3 F_i''(F_i^{-1}(1 - p_i^\star(s))).
		\end{align*}
		This implies via H\"{o}lder's inequality that
		\begin{align*}
		| u'''(s) | 
		&= \left| \sum_{i=1}^N  (\varphi_i- \phi_i)^2\, F_i'(F_i^{-1}(1 - p_i^\star(s))) \, \frac{F_i''(F_i^{-1}(1 - p_i^\star(s)))}{F_i'(F_i^{-1}(1 - p_i^\star(s)))} \, (\varphi_i- \phi_i) \right| \\
		&\leq \left( \sum_{i=1}^N (\varphi_i- \phi_i)^2\, F_i'(F_i^{-1}(1 - p_i^\star(s))) \right) \left( \max_{i \in [N]} \left| \frac{F_i''(F_i^{-1}(1 - p_i^\star(s)))}{F_i'(F_i^{-1}(1 - p_i^\star(s)))} \, (\varphi_i- \phi_i) \right| \right).
		\end{align*}
		Notice that the first term in the above expression coincides with $u''(s)$, and the second term satisfies
		\begin{align*}
		\max_{i \in [N]} \left| \frac{F_i''(F_i^{-1}(1 - p_i^\star(s)))}{F_i'(F_i^{-1}(1 - p_i^\star(s)))} \, (\varphi_i- \phi_i) \right| 
		\leq \max_{i \in [N]} \left| \frac{F_i''(F_i^{-1}(1 - p_i^\star(s)))}{F_i'(F_i^{-1}(1 - p_i^\star(s)))} \right| \, \| \bs \varphi - \bs \phi \|_\infty \leq M \| \bs \varphi \bs - \bs \phi \|,
		\end{align*}
		where the first inequality holds because $\max_{i \in [N]} |a_i b_i| \leq \| \bs a \|_{\infty} \| \bs b \|_\infty $ for all $\bs a, \bs b \in \mathbb R^N$, and the second inequality follows from the definition of $M$ and the fact that the 2-norm provides an upper bound on the $\infty$-norm. Combining the above results shows that $|u'''(s)|\leq M  \| \bs \varphi \bs - \bs \phi \| u''(s)$ for all $s\in\R$. The claim now follows because $\bs \phi, \bs \varphi \in \R^N$ and $\bs x \in \mathcal X$ were chosen arbitrarily. 
	\end{proof}
	
	\begin{table}[!b]
		\centering
		\begin{minipage}{0.375\textwidth}
			\vspace{-1em}
			\begin{algorithm}[H]
				\caption{\label{algorithm:asgd}Averaged SGD \protect\phantom{$\nabla_{\bs \phi} \overline\psi$}}
				\begin{algorithmic}[1]
					\Require $\gamma, T, \bar \varepsilon$
					\vspace{0.05em}
					\State Set $\bs \phi_0 \gets \bs 0$
					\For{$t = 1, 2,\dots, T$}
					\State \hspace{-1ex}Sample $\bs {x_t}$ from $\mu$
					\State \hspace{-1ex}Choose $\varepsilon_{t-1} \in (0, \bar \varepsilon / (2 \sqrt{t})]$
					\State \hspace{-1ex}Set $ \bs p \gets \text{Bisection}(\bs {x_t}, \bs \phi_{t-1}, \varepsilon_{t-1})$
					\State \hspace{-1ex}Set $\bs \phi_t \leftarrow \bs \phi_{t-1} + \gamma (\bs \nu - \bs p)$
					\EndFor
					\vspace{0.05em}
					\Ensure $\ubar{\bs \phi}_T = \frac{1}{T}\sum_{t=1}^T \bs \phi_{t-1}$~ and \phantom{abcd} \phantom{abcd}\hspace{0.85ex} $\bar{\bs \phi}_T = \frac{1}{T}\sum_{t=1}^T \bs \phi_{t}$
				\end{algorithmic}
			\end{algorithm}
			\vspace{-1em}
		\end{minipage} 
		\begin{minipage}{0.6175\textwidth}
		    \vspace{-1em}
			\begin{algorithm}[H]				\caption{\label{algorithm:bisection}Bisection method to approximate $\nabla_{\bs \phi} \overline\psi_c(\bs \phi, \bs x)$}
				\begin{algorithmic}[1]
					\Require $\bs x, \bs \phi, \varepsilon$ \vspace{0.1em}
					\State Set $\overline{\tau} \gets \max_{i \in [N]} ~ \{c(\bs x,\bs {y_i}) - \phi_i - F_i^{-1}(1-1/N) \}$
					\State Set $\underline{\tau} \gets \min_{i \in [N]} ~ \{c(\bs x,\bs {y_i}) - \phi_i - F_i^{-1}(1-1/N) \}$
					\State Evaluate $\delta(\varepsilon)$ as defined in~\eqref{eq:delta(eps)}
					\For{$k = 1, 2,\dots, \lceil \log_2 ((\overline{\tau} - \underline{\tau}) / \delta(\varepsilon)) \rceil$}
					\State \hspace{-1ex}Set $\tau \gets {(\overline{\tau} + \underline{\tau})}/{2}$
					\State \hspace{-1ex}Set $p_i \gets 1-F_i(c(\bs x,\bs {y_i}) -\phi_i -\tau)$ for $i \in [N]$
					\State \hspace{-1ex}\algorithmicif~~$\sum_{i \in [N]} p_i > 1$~~\algorithmicthen~~$\overline{\tau} \gets \tau$~~\algorithmicelse~~$\underline{\tau} \gets \tau$
					\EndFor	\vspace{0.1em}
					\Ensure $\bs p $ with $p_i = 1-F_i(c(\bs x,\bs {y_i}) -\phi_i -\underline \tau)$, $i\in[N]$
				\end{algorithmic}
			\end{algorithm}
			\vspace{-1em}
		\end{minipage}
	\end{table}
	
	In the following we use the averaged SGD algorithm of Section~\ref{section:AGD} to solve the smooth optimal transport problem~\eqref{eq:smooth_ot}. A detailed description of this algorithm in pseudocode is provided in Algorithm~\ref{algorithm:asgd}. This algorithm repeatedly calls a sub-routine for estimating the gradient of $\overline\psi_c(\bs \phi, \bs x)$ with respect to $\bm \phi$. By Proposition~\ref{proposition:regularized_ctrans}, this gradient coincides with the unique solution~$\bs p^\star$ of the convex maximization problem~\eqref{eq:regularized_c_transform}. In addition, from the proof of Proposition~\ref{proposition:regularized_ctrans} it is clear that its components are given by 
	\begin{align*}
	    p^\star_i = \theta^\star \left[ i = \min \argmax_{j \in [N]} \phi_j - c(\bs x, \bs y_j) + z_j \right] \quad \forall i \in [N],
	\end{align*}
	where $\theta^\star$ represents an optimizer of the semi-parametric discrete choice problem~\eqref{eq:smooth_c_transform}. Therefore, $\bs p^\star$ can be interpreted as a vector of choice probabilities under the best-case probability measure~$\theta^\star$. Sometimes these choice probabilities are available in closed form. This is the case, for instance, in the exponential distribution model of Example~\ref{ex:exp}, which is equivalent to the generalized extreme value distribution model of Section~\ref{sec:gevm}. Indeed, in this case $\bs p^\star$ is given by a softmax of the utility values $\phi_i - c(\bs x, \bs {y_i})$, $i\in[N]$, {\em i.e.},
	\begin{equation}\label{eq:softmax}
	    p_i^\star= \frac{\eta_i \exp\left(({\phi_i - c(\bs x, \bs {y_i}) )}/{\lambda}\right)}{\sum_{j=1}^N \eta_j \exp\left(({\phi_j - c(\bs x,\bs {y_j}) })/{\lambda}  \right)} \quad \forall i \in [N].
	\end{equation}
	Note that these particular choice probabilities are routinely studied in the celebrated multinomial logit choice model~\citep[\S~5.1]{ben1985discrete}.
	The choice probabilities are also available in closed form in the uniform distribution model of Example~\ref{ex:uniform}. As the derivation of $\bs p^\star$ is somewhat cumbersome in this case, we relegate it to Appendix~\ref{appendix:spmax}.
	For general marginal ambiguity sets with continuous marginal distribution functions, we propose a bisection method to compute the gradient of the smooth $c$-transform numerically up to any prescribed accuracy; see Algorithm~\ref{algorithm:bisection}.

	\begin{theorem}[Biased gradient oracle]
		\label{theorem:bisection}
		If $\Theta$ is a marginal ambiguity set of the form~\eqref{eq:marginal_ambiguity_set} and the cumulative distribution function $F_i$ is continuous for every $i\in[N]$, then, for any $\bs x \in \mathcal X$, $\bs \phi \in \R^N$ and $\varepsilon > 0$, Algorithm~\ref{algorithm:bisection} outputs $\bs p \in \R^N $ with $\| \bs p \| \leq 1$ and $\| \nabla_{\bs \phi} \overline\psi(\bs \phi, \bs x) - {\bs p} \| \leq \varepsilon$.
	\end{theorem}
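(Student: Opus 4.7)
The strategy is to characterize the exact gradient $\bs p^\star := \nabla_{\bs\phi}\overline\psi_c(\bs\phi,\bs x)$ through the KKT conditions of the concave maximization in~\eqref{eq:regularized_c_transform}, then argue that Algorithm~\ref{algorithm:bisection} performs a bisection on the single scalar Lagrange multiplier associated with the simplex constraint $\sum_i p_i = 1$. Proposition~\ref{proposition:regularized_ctrans} already guarantees that the maximizer exists and is unique, and that it equals the gradient we wish to approximate. Writing stationarity for~\eqref{eq:regularized_c_transform} and using that the derivative of $\int_{1-p_i}^1 F_i^{-1}(t)\,\diff t$ with respect to~$p_i$ equals $F_i^{-1}(1-p_i)$, one obtains that $\bs p^\star$ has components $p_i^\star = 1 - F_i(c(\bs x,\bs y_i)-\phi_i-\tau^\star)$, where the scalar $\tau^\star$ is the (unique) value making these components sum to one. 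Thus the whole problem reduces to locating the unique zero of $S(\tau)-1$, where $S(\tau)=\sum_{i=1}^N\bigl[1-F_i(c(\bs x,\bs y_i)-\phi_i-\tau)\bigr]$, and the map $\bs p(\tau)$ with components $p_i(\tau)=1-F_i(c(\bs x,\bs y_i)-\phi_i-\tau)$ is the bisection's output rule.

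Next I would verify three structural facts about this scalar equation. First, continuity of each~$F_i$ guarantees that $S$ is continuous in~$\tau$, and monotonicity of~$F_i$ implies that $S$ is non-decreasing. Second, at the initial upper endpoint $\overline\tau_0=\max_i\{c(\bs x,\bs y_i)-\phi_i-F_i^{-1}(1-1/N)\}$ one has $c(\bs x,\bs y_i)-\phi_i-\overline\tau_0\le F_i^{-1}(1-1/N)$ for every~$i$, hence $p_i(\overline\tau_0)\ge 1/N$ and $S(\overline\tau_0)\ge 1$; symmetrically $S(\underline\tau_0)\le 1$. Consequently $\tau^\star\in[\underline\tau_0,\overline\tau_0]$. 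Third, a straightforward induction shows that the bisection loop preserves the invariant $S(\underline\tau)\le 1\le S(\overline\tau)$ and therefore $\tau^\star\in[\underline\tau,\overline\tau]$ throughout; since the interval width is halved at each iteration, after $K=\lceil\log_2((\overline\tau_0-\underline\tau_0)/\delta(\varepsilon))\rceil$ iterations we have $\overline\tau_K-\underline\tau_K\le\delta(\varepsilon)$ and in particular $|\underline\tau_K-\tau^\star|\le\delta(\varepsilon)$.

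The feasibility conclusion $\|\bs p\|\le 1$ then follows immediately from the loop invariant: the algorithm outputs $\bs p=\bs p(\underline\tau_K)$, whose components all lie in $[0,1]$ and whose sum $S(\underline\tau_K)$ is at most $1$, so $\|\bs p\|^2\le\sum_i p_i^2\le\sum_i p_i\le 1$. The accuracy conclusion $\|\bs p-\bs p^\star\|\le\varepsilon$ requires translating proximity in the scalar~$\tau$ into proximity of the $N$-dimensional vector $\bs p(\tau)$. Because every $F_i$ is continuous, it is uniformly continuous on the compact interval of arguments that can arise as $\tau$ ranges over $[\underline\tau_0,\overline\tau_0]$; hence there exists a modulus $\delta(\varepsilon)>0$—this is the quantity referenced in~\eqref{eq:delta(eps)} of Algorithm~\ref{algorithm:bisection}—such that $|\tau-\tau^\star|\le\delta(\varepsilon)$ implies $|F_i(c(\bs x,\bs y_i)-\phi_i-\tau)-F_i(c(\bs x,\bs y_i)-\phi_i-\tau^\star)|\le\varepsilon/\sqrt{N}$ for every $i\in[N]$. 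Combining this with $|\underline\tau_K-\tau^\star|\le\delta(\varepsilon)$ yields $|p_i-p_i^\star|\le\varepsilon/\sqrt{N}$ componentwise, so $\|\bs p-\bs p^\star\|\le\varepsilon$ as claimed.

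The main obstacle is the last step: without further regularity on the marginals~$F_i$ (e.g.\ Lipschitz continuity), one cannot write $\delta(\varepsilon)$ in closed form, so the argument must hinge purely on the compactness of the initial bracket $[\underline\tau_0,\overline\tau_0]$ together with uniform continuity. Under the stronger regularity assumptions used elsewhere (Proposition~\ref{proposition:structural}\,\ref{proposition:smooth}), one can in fact take $\delta(\varepsilon)=\varepsilon/(L\sqrt{N})$, making the iteration count explicit and polylogarithmic in $1/\varepsilon$; but for the present theorem the existence of a suitable~$\delta(\varepsilon)$ is all that is needed.
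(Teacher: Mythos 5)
Your proposal follows the same route as the paper: derive the KKT characterization $p_i^\star=1-F_i(c(\bs x,\bs y_i)-\phi_i-\tau^\star)$, reduce to a scalar root-finding problem in $\tau$, verify that $[\underline\tau_0,\overline\tau_0]$ brackets $\tau^\star$ and that the bisection preserves this invariant, and then translate the scalar tolerance into the vector tolerance via a uniform-continuity modulus. The only cosmetic difference is in justifying that $\delta(\varepsilon)>0$: you invoke uniform continuity of $F_i$ on the compact bracket, whereas the paper observes that a continuous CDF is uniformly continuous on all of $\mathbb R$ (which is what the definition~\eqref{eq:delta(eps)} literally requires, since its infimum runs over all of $\mathbb R$); both justifications suffice for the algorithm's correctness.
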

	\begin{proof}
		Thanks to Proposition~\ref{proposition:regularized_ctrans}, we can recast the smooth $c$-transform in dual form as
		\begin{equation*}
		    \overline \psi_c(\bs \phi, \bs x) = 
		    \min_{\substack{\bs \zeta \in \R_+^N \\ \tau \in \R}}\;\sup_{\bs p \in \R^N} ~ \sum\limits_{i=1}^N (\phi_i - c(\bs {x},\bs {y_i}))p_i +\sum\limits_{i=1}^N \int^1_{1- p_i} F_i^{-1}(t)\diff t + \tau \left(\sum\limits_{i=1}^N p_i - 1 \right)+ \sum\limits_{i=1}^N \zeta_i p_i.
		\end{equation*}
		Strong duality and dual solvability hold because we may construct a Slater point for the primal problem by setting $p_i=1/N$, $i\in[N]$. By the Karush-Kuhn-Tucker optimality conditions, $\bs p^\star$ and $(\tau^\star,\bs \zeta^\star)$ are therefore optimal in the primal and dual problems, respectively, if and only if we have
		\begin{align*}
		    \begin{array}{lll}
		    \sum_{i=1}^N p^\star_i =1, ~p^\star_i \geq 0 & \forall i \in [N] & \text{(primal feasibility)}\\
		    \zeta^\star_i\geq 0 & \forall i \in [N] & \text{(dual feasibility)}\\ \zeta_i^\star p_i^\star=0 & \forall i \in [N] & \text{(complementary slackness)} \\
		    \phi_i-c(\bs {x},\bs {y_i}) + F_i^{-1}(1-p^\star_i) + \tau^\star + \zeta^\star_i = 0 & \forall i \in [N] & \text{(stationarity)}.
		    \end{array}
		\end{align*}
		If $p_i^\star > 0$, then the complementary slackness and stationarity conditions imply that $\zeta_i^\star = 0$ and that $\phi_i-c(\bs {x},\bs {y_i}) + F_i^{-1}(1-p^\star_i) + \tau^\star = 0$, respectively. 
		Thus, we have $p_i^\star = 1 - F_i(c(\bs {x},\bs {y_i})-\phi_i -\tau^\star)$.
		If $p_i^\star = 0$, on the other hand, then similar arguments show that $\zeta_i^\star \geq 0$ and $\phi_i-c(\bs {x},\bs {y_i}) + F_i^{-1}(1) + \tau^\star \leq 0$. These two inequalities are equivalent to $1 - F_i(c(\bs {x},\bs {y_i})-\phi_i -\tau^\star) \leq 0$. As all values of~$F_i$ are smaller or equal to~$1$, the last equality must in in fact hold as an equality. Combining the insights gained so far thus yields $p_i^\star = 1 - F_i(c(\bs {x},\bs {y_i})-\phi_i -\tau^\star)$, which holds for all~$i\in[N]$ irrespective of the sign of $p_i^\star$. Primal feasibility therefore ensures that $\sum_{i=1}^N 1 - F_i(c(\bs {x},\bs {y_i})-\phi_i -\tau^\star) = 1$. Finding the unique optimizer $\bs p^\star$ of~\eqref{eq:regularized_c_transform} ({\em i.e.}, finding the gradient of $ \overline \psi_c(\bs \phi, \bs x)$) is therefore tantamount to finding a root $\tau^\star$ of the univariate equation 
		\begin{align}
		    \label{eq:root-finding}
		    \sum_{i=1}^N 1 - F_i(c(\bs {x},\bs {y_i})-\phi_i -\tau) = 1.
		\end{align}
		Note the function on the left hand side of~\eqref{eq:root-finding} is continuous and non-decreasing in $\tau$ because of the continuity (by assumption) and monotonicity (by definition) of the cumulative distribution functions $F_i$, $i\in[N]$.
		Hence, the root finding problem can be solved efficiently via bisection.
		To complete the proof, we first show that the interval between the constants $\underline{\tau}$ and $\overline{\tau}$ defined in Algorithm~\ref{algorithm:bisection} is guaranteed to contain~$\tau^\star$. Specifically, we will demonstrate that evaluating the function on the left hand side of~\eqref{eq:root-finding} at $\underline\tau$ or $\overline \tau$ yields a number that is not larger or not smaller than~1, respectively. 
		For $\tau=\underline\tau$ we have
		\begin{align*}
		    1 - F_i(c(\bs {x},\bs {y_i})-\phi_i -\underline\tau) &= 1 - F_i \left( c(\bs {x},\bs {y_i})-\phi_i - \min_{j \in [N]} \left\{ c \left( \bs {x}, \bs {y_j} \right) - \phi_j -F_j^{-1}(1-1/N) \right\} \right) \\
		    &\leq 1 - F_i \left( F_i^{-1}(1-1/N) \right) = 1 / N\qquad \forall i\in[N],
		\end{align*}
		where the inequality follows from the monotonicity of $F_i$. Summing the above inequality over all $i\in[N]$ then yields the desired inequality $\sum_{i =1}^N 1 - F_i(c(\bs {x},\bs {y_i})-\phi_i -\underline\tau) \leq 1$. Similarly, for $\tau=\overline\tau$ we have
		\begin{align*}
		    1 - F_i(c(\bs {x},\bs {y_i})-\phi_i -\overline\tau) &= 1 - F_i \left( c(\bs {x},\bs {y_i})-\phi_i - \max_{j \in [N]} \left\{ c \left( \bs {x}, \bs {y_j} \right) - \phi_j -F_j^{-1}(1-1/N) \right\} \right) \\
		    &\geq 1 - F_i \left( F_i^{-1}(1-1/N) \right) = 1/N \qquad \forall i\in[N].
		\end{align*}
		We may thus conclude that $\sum_{i =1}^N 1 - F_i(c(\bs {x},\bs {y_i})-\phi_i -\overline\tau) \geq 1$. Therefore, $[\underline \tau, \overline \tau]$ constitutes a valid initial search interval for the bisection algorithm.
		Note that the function $1 - F_i(c(\bs {x},\bs {y_i})-\phi_i -\tau)$, which defines $p_i$ in terms of $\tau$, is uniformly continuous in~$\tau$ throughout~$\mathbb R$. This follows from \citep[Problem~14.8]{billingsley1995} and our assumption that~$F_i$ is continuous. The uniform continuity ensures that the tolerance
		\begin{align}
		    \label{eq:delta(eps)}
		    \delta(\varepsilon) = \min_{i \in N} \left\{ \max_\delta \left\{ \delta : | F_i(t_1) - F_i(t_2) | \leq \varepsilon / \sqrt{N} ~~ \forall t_1,t_2\in\R \text{ with } | t_1 - t_2 | \leq \delta \right\} \right\}
		\end{align}
		is strictly positive for every~$\varepsilon>0$.
		As the length of the search interval is halved in each iteration, Algorithm~\ref{algorithm:bisection} outputs a near optimal solution $\tau$ with
		$| \tau - \tau^\star | \leq \delta(\varepsilon)$ after $\lceil \log_2 ((\overline{\tau} - \underline{\tau}) / \delta(\varepsilon)) \rceil$ iterations.
		Moreover, the construction of~$\delta(\varepsilon)$ guarantees that $|1 - F_i(c(\bs {x},\bs {y_i})-\phi_i -\tau) - p_i^\star| \leq \varepsilon / \sqrt{N}$ for all $\tau$ with~$|\tau - \tau^\star| \leq \delta(\varepsilon)$. Therefore, the output~$\bs p\in\R^N_+$ of Algorithm~\ref{algorithm:bisection} satisfies $|p_i - p_i^\star| \leq \varepsilon / \sqrt{N} $ for each~$i\in[N]$, which in turn implies that $ \| \bs p - \bs p^\star \| \leq \varepsilon$. By construction, finally, Algorithm~\ref{algorithm:bisection} outputs $\bs p\ge \bs 0$ with $\sum_{i \in [N]} p_i < 1$, which ensures that $\| p \| \leq 1$. Thus, the claim follows.
	\end{proof}
	If all cumulative distribution functions $F_i$, $i\in[N]$, are Lipschitz continuous with a common Lipschitz constant~$L>0$, then the uniform continuity parameter~$\delta(\varepsilon)$ required in Algorithm~\ref{algorithm:bisection} can simply be set to~$\delta(\varepsilon) = \varepsilon / (L \sqrt{N})$.
	We are now ready to prove that Algorithm~\ref{algorithm:asgd} offers different convergence guarantees depending on the continuity and smoothness properties of the marginal cumulative distribution functions. 
	
	\begin{corollary}
	\label{corollary:convergence}
	    Use $h(\bm \phi) = \mathbb{E}_{\bs x \sim \mu} [ \bs \nu^\top \bs \phi - \overline\psi_c(\bs \phi, \bs x)]$ as a shorthand for the objective function of the smooth optimal transport problem~\eqref{eq:smooth_ot}, and let $\bs \phi^\star$ be a maximizer of~\eqref{eq:smooth_ot}.
	    If $\Theta$ is a marginal ambiguity set of the form~\eqref{eq:marginal_ambiguity_set} with distribution functions $F_i$, $i\in[N]$, then for any $T \in \mathbb N$ and $\bar \varepsilon\ge 0$, the outputs $\ubar{\bs \phi}_T = \frac{1}{T} \sum_{t=1}^{T} \bs \phi_{t-1}$ and $\bar{\bs \phi}_T = \frac{1}{T} \sum_{t=1}^{T} \bs \phi_{t}$ of Algorithm~\ref{algorithm:asgd} satisfy the following inequalities.
		\begin{enumerate} [label=(\roman*)]
			\item \label{corollary:convergence:Lipschitz} If $\gamma = 1 / (2 (2 + \bar \varepsilon) \sqrt{T})$ and $F_i$ is continuous for every $i\in[N]$, then we have
			\begin{align*}
			\overline W_c (\mu, \nu) - \EE \left[ h \big(\ubar{\bs \phi}_T \big) \right]
			\leq \frac{(2 + \bar \varepsilon)^2}{\sqrt{T}} \| \bm \phi^\star \|^2 + \frac{1}{4\sqrt{T}} + \frac{\bar \varepsilon}{\sqrt{T}} \sqrt{2 \| \bm \phi^\star \|^2 + \frac{37}{2(2 + \bar \varepsilon)^2}}.
			\end{align*}
			\item \label{corollary:convergence:smooth} If $\gamma = 1 / (2 \sqrt{T} + L)$ and $F_i$ is Lipschitz continuous with Lipschitz constant $L>0$ for every $i\in[N]$, then we have
			\begin{align*}
			\overline W_c (\mu, \nu) - \EE \left[ h \big(\bar{\bs \phi}_T \big) \right]
			&\leq \frac{L}{2T}\| \bm \phi^\star \|^2 + 
			\frac{(2 + \bar \varepsilon)^2}{\sqrt{T}} \| \bm \phi^\star \|^2 + \frac{\bar\varepsilon^2 + 2}{4 (2+\bar \varepsilon)^2\sqrt{T}} + \frac{\bar \varepsilon}{\sqrt{T}} \sqrt{2 \| \bm \phi^\star \|^2 + \frac{37}{2(2 + \bar \varepsilon)^2}}.
			\end{align*}
			\item \label{corollary:convergence:concordance} If $\gamma = 1 / (2 G^2 \sqrt{T}) $ with $G = \max \{M, 2 + \bar\varepsilon\}$, $F_i$ satisfies the generalized self-concordance condition~\eqref{eq:cumulative:concoradance} with~$M> 0$ for every $i\in[N]$, and the smallest eigenvalue $\kappa$ of $-\nabla^2_{\bs \phi} h(\bs \phi^\star)$ 
			is strictly positive, then we have
			\begin{align*}
			\overline W_c (\mu, \nu) - \EE \left[ h \big(\ubar{\bs \phi}_T \big) \right]
			&\leq \frac{G^2}{T \kappa} \left( 4 G \| \bm \phi_0 - \bm \phi^\star \| + 20 \right)^4.
			\end{align*}
		\end{enumerate}
	\end{corollary}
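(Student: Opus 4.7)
The plan is to view the maximization problem~\eqref{eq:smooth_ot} as the minimization of $-h$ over $\R^N$ and to apply Theorem~\ref{theorem:convergence} to the iterates produced by Algorithm~\ref{algorithm:asgd}. To this end, I would first identify the inexact gradient oracle. In iteration~$t$, Algorithm~\ref{algorithm:asgd} samples $\bs x_t$ from $\mu$ and calls Algorithm~\ref{algorithm:bisection} to obtain a vector $\bs p$ with $\|\bs p - \nabla_{\bs \phi}\overline\psi_c(\bs \phi_{t-1}, \bs x_t)\| \leq \varepsilon_{t-1}$ and $\|\bs p\|\leq 1$, by Theorem~\ref{theorem:bisection}. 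Thus the oracle for the minimization of $-h$ reads $\bs g_t(\bs \phi_{t-1}) = -\bs \nu + \bs p$, and the associated filtration $(\mathcal F_t)_{t \geq 0}$ is generated by the samples $\bs x_1, \ldots, \bs x_t$. Conditioning on~$\mathcal F_{t-1}$, independence of~$\bs x_t$ and Theorem~\ref{theorem:bisection} yield
\[
\|\EE[\bs g_t(\bs \phi_{t-1})\mid \mathcal F_{t-1}] - \nabla(-h)(\bs \phi_{t-1})\|
= \|\EE_{\bs x_t\sim\mu}[\bs p - \nabla_{\bs \phi}\overline\psi_c(\bs \phi_{t-1},\bs x_t)\mid \mathcal F_{t-1}]\|
\leq \varepsilon_{t-1}\leq \frac{\bar\varepsilon}{2\sqrt{t}},
\]
which verifies Assumption~\ref{assumption:main}\,\ref{assumption:main:gradients}.

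Next, I would check the remaining regularity conditions. Proposition~\ref{proposition:structural}\,\ref{proposition:gradient} gives $\|\nabla_{\bs\phi}\overline\psi_c(\bs\phi,\bs x)\| \leq 1$, and together with $\|\bs \nu\| \leq 1$ and $\|\bs p\|\leq 1$ this yields $\|\nabla h(\bs \phi)\|\leq 2$ and $\|\bs g_t(\bs \phi)\|\leq 2$, so that Assumption~\ref{assumption:main}\,\ref{assumption:main:bounded} holds with $R=2$. For part~(ii), Proposition~\ref{proposition:structural}\,\ref{proposition:smooth} shows that $\bm \phi \mapsto \overline\psi_c(\bm \phi,\bm x)$ is $L$-smooth uniformly in $\bm x$ when each $F_i$ is $L$-Lipschitz; taking expectations preserves $L$-smoothness, so $-h$ is $L$-smooth as required by Assumption~\ref{assumption:main}\,\ref{assumption:main:smooth}. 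The bounded second-moment condition~\ref{assumption:main:moment} follows from a routine decomposition $\bs g_t - \nabla(-h) = (\EE_\mu[\nabla_{\bs \phi}\overline\psi_c] - \nabla_{\bs \phi}\overline\psi_c(\bs \phi_{t-1},\bs x_t)) + (\nabla_{\bs \phi}\overline\psi_c(\bs \phi_{t-1},\bs x_t) - \bs p)$ combined with $(a+b)^2 \leq 2a^2 + 2b^2$ and the bounds $\|\nabla_{\bs \phi}\overline\psi_c\|\leq 1$ and bisection error $\leq \varepsilon_{t-1}$, producing $\sigma^2 \leq 2 + \bar\varepsilon^2$. For part~(iii), Proposition~\ref{proposition:structural}\,\ref{proposition:concordance} shows that $\overline\psi_c(\cdot,\bs x)$ is $M$-generalized self-concordant pointwise in~$\bs x$ under the hypothesis~\eqref{eq:cumulative:concoradance}; by linearity of expectation, its inner second and third derivatives along any line inherit the same pointwise inequality, so $-h$ is itself $M$-generalized self-concordant, verifying Assumption~\ref{assumption:main}\,\ref{assumption:main:concordance}.

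With the assumptions in place and $\bs \phi_0 = \bs 0$, so that $\|\bs \phi_0 - \bs \phi^\star\| = \|\bs \phi^\star\|$, each of the three bounds in the corollary then follows by instantiating the corresponding case of Theorem~\ref{theorem:convergence} with $R=2$, the stated $\sigma^2$, $L$, $M$, $\kappa$, and the step sizes prescribed in the corollary, noting the sign convention $\overline W_c(\mu,\nu) - \EE[h(\ubar{\bs \phi}_T)] = \EE[(-h)(\ubar{\bs \phi}_T)] - (-h)(\bs \phi^\star)$ and using Jensen's inequality in part~(ii) to move the average inside the concave~$h$ applied to $\bar{\bs \phi}_T$.

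The main obstacle is the final one: correctly tracking the constants so that they match those stated in the corollary, in particular the second-moment constant~$\sigma^2 = \bar\varepsilon^2 + 2$ appearing in part~(ii), which requires the two-term bound described above rather than a naive triangle-inequality estimate. The remaining subtlety is the transfer of smoothness and generalized self-concordance from $\overline\psi_c(\cdot,\bs x)$ to its expectation~$h$, which is immediate because both properties are expressed as pointwise bounds on derivatives of affine restrictions and thus are preserved by convex combinations; nevertheless, this step must be stated explicitly to legitimize the application of Theorem~\ref{theorem:convergence}.
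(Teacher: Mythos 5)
Your proposal is correct and essentially reproduces the paper's own argument: you identify the same biased stochastic gradient oracle $\bm g_t(\bm\phi_{t-1}) = -\bm\nu + \bm p_t$, verify Assumption~\ref{assumption:main}\,\ref{assumption:main:gradients} via Theorem~\ref{theorem:bisection} and Jensen's inequality, obtain $R=2$ from Proposition~\ref{proposition:structural}\,\ref{proposition:gradient} together with $\|\bm\nu\|\leq 1$, derive $\sigma^2 = \bar\varepsilon^2 + 2$ by exactly the same two-term decomposition the paper uses, transfer $L$-smoothness and $M$-generalized self-concordance from $\overline\psi_c(\cdot,\bm x)$ to $h$ by pulling the bounds through the expectation, and then read off the three bounds from Theorem~\ref{theorem:convergence} with $\bm\phi_0 = \bm 0$. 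The only remark worth adding is that your Jensen step for part~(ii) is already internal to the proof of Theorem~\ref{theorem:convergence}\,\ref{theorem:convergence:smooth}, so it need not be repeated at the corollary level.
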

	\begin{proof}
	    Recall that problem \eqref{eq:smooth_ot} can be viewed as an instance of the convex minimization problem~\eqref{eq:convex:problem} provided that its objective function is inverted. 
	    Throughout the proof we denote by $\bs p_t(\bs \phi_t, \bs x_t)$ the inexact estimate for $\nabla_{\bs \phi} \overline\psi(\bs \phi_t, \bs x_t)$ output by Algorithm~\ref{algorithm:bisection} in iteration $t$ of the averaged SGD algorithm. Note that
	    \begin{align*}
	        \left\| \EE \left[ \bm \nu - \bm p_t(\bm \phi_{t-1}, \bm x_t) \big| \mathcal F_{t-1} \right] - \nabla h(\bm \phi_{t-1}) \right\| 
    	    &= \left\| \EE \left[ \bm p_t(\bm \phi_{t-1}, \bm x_t) - \nabla_{\bs \phi} \overline\psi_c(\bs \phi_{t-1}, \bs x_t)\right] \right\| \\
    	    &\leq \EE \left[ \left\| \bm p_t(\bm \phi_{t-1}, \bm x_t) - \nabla_{\bs \phi} \overline\psi_c(\bs \phi_{t-1}, \bs x_t) \right\| \right]
    	    \leq \varepsilon_{t-1} \leq \frac{\bar \varepsilon}{2 \sqrt{t}},
	    \end{align*}
	    where the two inequalities follow from Jensen's inequality and the choice of $\varepsilon_{t-1}$ in Algorithm~\ref{algorithm:asgd}, respectively. The triangle inequality and Proposition~\ref{proposition:structural}\,\ref{proposition:gradient} further imply that
	    \begin{align*}
	        \left\| \nabla h(\bm \phi) \right\| 
			= \EE \left[ \left\| \bs \nu - \nabla_{\bs \phi} \overline\psi_c(\bs \phi, \bs x) \right\| \right]
			\leq \left\| \bs \nu \right\| + \EE \left[ \left\| \nabla_{\bs \phi} \overline\psi_c(\bs \phi, \bs x) \right\| \right] \leq 2.
	    \end{align*}
	    Assertion~\ref{corollary:convergence:Lipschitz} thus follows from Theorem~\ref{theorem:convergence}\,\ref{theorem:convergence:Lipschitz} with $R=2$. As for assertion~\ref{corollary:convergence:smooth}, we have
	    \begin{align*}
	        &\phantom{=}~\; \EE \left[\left\| \bm \nu - \bm p_t(\bm \phi_{t-1}, \bm x_t) - \nabla h(\bm \phi_{t-1}) \right\|^2 | \mathcal F_{t-1} \right] \\
	        &= \EE \left[\left\| \bm p_t(\bm \phi_{t-1}, \bm x_t) - \EE \left[ \nabla_{\bs \phi} \overline\psi_c(\bs \phi_{t-1}, \bs x) \right] \right\|^2 | \mathcal F_{t-1} \right] \\
	        &= \EE \left[\left\| \bm p_t(\bm \phi_{t-1}, \bm x_t) - \nabla_{\bs \phi} \overline\psi_c(\bs \phi_{t-1}, \bs x) + \nabla_{\bs \phi} \overline\psi_c(\bs \phi_{t-1}, \bs x) - \EE \left[ \nabla_{\bs \phi} \overline\psi_c(\bs \phi_{t-1}, \bs x) \right] \right\|^2 | \mathcal F_{t-1} \right] \\
	        & \leq \EE \left[ 2 \left\| \bm p_t(\bm \phi_{t-1}, \bm x_t) - \nabla_{\bs \phi} \overline\psi_c(\bs \phi_{t-1}, \bs x) \right\|^2 + 2 \left\| \nabla_{\bs \phi} \overline\psi_c(\bs \phi_{t-1}, \bs x) - \EE \left[ \nabla_{\bs \phi} \overline\psi_c(\bs \phi_{t-1}, \bs x) \right] \right\|^2 | \mathcal F_{t-1} \right] \\
	        & \leq 2\varepsilon_{t-1}^2 + 2 \leq \bar \varepsilon^2 + 2,
	    \end{align*}
	    where the second inequality holds because $\nabla_{\bs \phi} \overline\psi_c(\bs \phi_{t-1}, \bs x) \in \Delta^N$ and because $\| \nabla_{\bs \phi} \overline\psi_c(\bs \phi_{t-1}, \bs x) \|_2^2 \leq 1$, while the last inequality follows from the choice of $\varepsilon_{t-1}$ in Algorithm~1.
	    As $\overline\psi(\bs \phi, \bs x)$ is $L$-smooth with respect to~$\bs \phi$ by virtue of Proposition~\ref{proposition:structural}\,\ref{proposition:smooth}, we further have
	    \begin{align*}
	        \| \nabla h(\bm \phi) - \nabla h(\bm \phi') \| = \left\| \EE \left[ \nabla_{\bs \phi} \overline\psi_c(\bs \phi, \bs x) - \nabla_{\bs \phi} \overline\psi_c(\bs \phi', \bs x) \right] \right\| \leq L \| \bm \phi - \bm \phi' \| \quad \forall \bm \phi, \bm \phi' \in \R^n.
	    \end{align*}
	    Assertion~\ref{corollary:convergence:smooth} thus follows from Theorem~\ref{theorem:convergence}\,\ref{theorem:convergence:smooth} with $R=2$ and $\sigma = \sqrt{\bar \varepsilon^2 + 2}$.
	    As for assertion~\ref{proposition:concordance}, finally, we observe that $h$ is $M$-generalized self-concordant thanks to Proposition~\ref{proposition:structural}\,\ref{proposition:concordance}. Assertion~\ref{corollary:convergence:concordance} thus follows from Theorem~\ref{theorem:convergence}\,\ref{theorem:convergence:concordance} with $R=2$.
	\end{proof}
	One can show that the objective function of the smooth optimal transport problem~\eqref{eq:smooth_ot} with marginal exponential noise distributions as described in Example~\ref{ex:exp} is generalized self-concordant. Hence, the convergence rate of Algorithm~\ref{algorithm:asgd} for the exponential distribution model of Example~\ref{ex:exp} is of the order~$\mathcal O(1/T)$, which improves the state-of-the-art $\mathcal O(1/\sqrt{T})$ guarantee established by~\citet{genevay2016stochastic}.
	
	\section{Numerical Experiments}
	\label{sec:numerical}
    All experiments are run on a 2.6 GHz 6-Core Intel Core i7, and all optimization problems are implemented in MATLAB~R2020a. The corresponding codes are available at \url{https://github.com/RAO-EPFL/Semi-Discrete-Smooth-OT.git}.
 
     We now aim to assess the empirical convergence behavior of Algorithm~\ref{algorithm:asgd} and to showcase the effects of regularization in semi-discrete optimal transport. To this end, we solve the original dual optimal transport problem~\eqref{eq:ctans_dual_semidisc} as well as its smooth variant~\eqref{eq:smooth_ot} with a Fr\'echet ambiguity set corresponding to the exponential distribution model of Example~\ref{ex:exp}, to the uniform distribution model of Example~\ref{ex:uniform} {\color{black} and to the hyperbolic cosine distribution model of Example~\ref{ex:hyperbolic}}. 
     Recall from Theorem~\ref{theorem:primal_dual} that any Fr\'echet ambiguity set is uniquely determined by a marginal generating function~$F$ and a probability vector~$\bs \eta$. As for the exponential distribution model of Example~\ref{ex:exp}, we set~$F(s) = \exp(10 s - 1)$ and~$\eta_i = 1/N$ for all~$i\in[N]$. In this case problem~\eqref{eq:smooth_ot} is equivalent to the regularized primal optimal transport problem~\eqref{eq:reg_ot_pri_abstract} with an entropic regularizer, and the gradient $\nabla_{\bs \phi}\overline \psi_c(\bs \phi, \bs x)$, which is known to coincide with the vector~$\bs p^\star$ of optimal choice probabilities in problem~\eqref{eq:regularized_c_transform}, admits the closed-form representation~\eqref{eq:softmax}. We can therefore solve problem~\eqref{eq:smooth_ot} with a variant of Algorithm~\ref{algorithm:asgd} that calculates $\nabla_{\bs \phi}\overline \psi_c(\bs \phi, \bs x)$ exactly instead of approximately via bisection. 
     As for the uniform distribution model of Example~\ref{ex:uniform}, we set~$F(s) = s / 20 + 1/2$ and~$\eta_i = 1/N$ for all~$i\in[N]$. In this case problem~\eqref{eq:smooth_ot} is equivalent to the regularized primal optimal transport problem~\eqref{eq:reg_ot_pri_abstract} with a $\chi^2$-divergence regularizer, and the vector~$\bs p^\star$ of optimal choice probabilities can be computed exactly and highly efficiently by sorting thanks to Proposition~\ref{proposition:spmax} in the appendix. We can therefore again solve problem~\eqref{eq:smooth_ot} with a variant of Algorithm~\ref{algorithm:asgd} that calculates $\nabla_{\bs \phi}\overline \psi_c(\bs \phi, \bs x)$ exactly.
     {\color{black} As for the hyperbolic cosine model of Example~\ref{ex:hyperbolic}, we set~$F(s) = \sinh(10s - k)$ with $k=\sqrt{2} - 1 - \textrm{arcsinh}(1)$ and~$\eta_i = 1/N$ for all~$i \in [N]$. In this case problem~\eqref{eq:smooth_ot} is equivalent to the regularized primal optimal transport problem~\eqref{eq:reg_ot_pri_abstract} with a hyperbolic divergence regularizer. However, the vector~$\bm p^\star$ is not available in closed form, and thus we use~Algorithm~\ref{algorithm:bisection} to compute~$\bm p^\star$ approximately.} Lastly, note that the original dual optimal transport problem~\eqref{eq:ctans_dual_semidisc} can be interpreted as an instance of~\eqref{eq:smooth_ot} equipped with a degenerate singleton ambiguity set that only contains the Dirac measure at the origin of~$\R^N$. In this case $\overline \psi_c(\bs \phi,\bs x) = \psi_c(\bs \phi,\bs x)$ fails to be smooth in~$\bs \phi$, but an exact subgradient $\bs p^\star\in\partial_{\bs \phi} \overline \psi_c(\bs \phi,\bs x)$ is given by
     \[
        p_i^\star  = \begin{cases}
        1 \quad &\text{if}~ i = \min \argmax\limits_{i \in [N]}~\phi_i - c(\bs x, \bs y_i),\\
        0 &\text{otherwise.}
        \end{cases}
     \]
     We can therefore solve problem~\eqref{eq:ctans_dual_semidisc}  with a variant of Algorithm~\ref{algorithm:asgd} that has access to exact subgradients (instead of gradients) of~$\overline \psi_c(\bs \phi, \bs x)$. Note that the maximizer~$\bs \phi^\star$ of~\eqref{eq:ctans_dual_semidisc} may not be unique. In our experiments, we force Algorithm~\ref{algorithm:asgd} to converge to the maximizer with minimal Euclidean norm by adding a vanishingly small Tikhonov regularization term to~$\psi_c(\bs \phi,\bs x)$. Thus, we set $\overline \psi_c(\bs \phi,\bs x) = \psi_c(\bs \phi,\bs x) + \varepsilon\|\bs \phi\|_2^2$ for some small regularization weight~$\varepsilon> 0$, in which case~$\bs p^\star+2\varepsilon \bs \phi\in\partial_{\bs \phi}\overline \psi_c(\bs \phi,\bs x)$ is an exact subgradient.

     In the following we set~$\mu$ to the standard Gaussian measure on~$\mc X= \mathbb{R}^2$ and~$\nu$ to the uniform measure on 10 independent samples drawn uniformly from~$\mc Y=[-1,\, 1]^2$. We further set the transportation cost to~$c(\bs x, \bs y) = \|\bs x - \bs y\|_\infty$. Under these assumptions, we use Algorithm~\ref{algorithm:asgd} to solve the original as well as the~{\color{black}three} smooth optimal transport problems approximately {\color{black} for $T=1,\ldots, 10^5$. For each fixed~$T$ the step size is selected in accordance with Corollary~\ref{corollary:convergence}.}
     We emphasize that Corollary~\ref{corollary:convergence}\,\ref{corollary:convergence:Lipschitz} remains valid if~$\overline \psi_c(\bs \phi,\bs x)$ fails to be smooth in~$\bs \phi$ and we have only access to subgradients; see \cite[Corollary~1]{nesterov2008confidence}. Denoting by~$\bar{\bs \phi}_T$ the output of Algorithm~\ref{algorithm:asgd}, we record the suboptimality
	 \begin{equation*}
	     \overline W_c(\mu, \nu) - \mathbb{E}_{\bs x \sim \mu} \left[ \bs \nu^\top \bar{\bs \phi}_T - \overline\psi_c(\bar{\bs \phi}_T , \bs x)\right] 
	 \end{equation*}
     of~$\bar{\bs \phi}_T$ in~\eqref{eq:smooth_ot} as well as the discrepancy~$\| \bar {\bs \phi}_T - \bs \phi^\star \|^2_2$ of~$\bar{\bs \phi}_T$ to the exact maximizer~$\bs \phi^\star$ of problem~\eqref{eq:smooth_ot} as a function of~$T$. In order to faithfully measure the convergence rate of $\bar{\bs \phi}_T$ and its suboptimality, we need to compute~$\bs \phi^\star$ as well as~$\overline W_c(\mu, \nu)$ to within high accuracy. This is only possible if the dimension of~$\mc X$ is small ({\em e.g.}, if~$\mc X= \mathbb{R}^2$ as in our numerical example); even though Algorithm~\ref{algorithm:asgd} can efficiently solve optimal transport problems in high dimensions.
	We obtain high-quality approximations for~$\overline W_c(\mu, \nu)$ and~$\bs \phi^\star$ by solving the finite-dimensional optimal transport problem between~$\nu$ and the discrete distribution that places equal weight on $10 \times T$ samples drawn independently from~$\mu$. Note that only the first~$T$ of these samples are used by Algorithm~\ref{algorithm:asgd}. The proposed high-quality approximations of the {\color{black} entropic and $\chi^2$-divergence regularized} optimal transport problems are conveniently solved via Nesterov's accelerated gradient descent method, where the suboptimality gap of the $t^{\text{th}}$ iterate is guaranteed to decay as~$\mc O(1/ t^2)$ under the step size rule advocated in~\citep[Theorem~1]{Nesterov1983AMF}.
	{\color{black} To our best knowledge, Nesterov's accelerated gradient descent algorithm is not guaranteed to converge with inexact gradients. For the hyperbolic divergence regularized optimal transport problem, we thus use Algorithm~\ref{algorithm:asgd} with $50 \times T$ iterations to obtain an approximation for~$\overline W_c(\mu, \nu)$ and~$\bm \phi^\star$.}
	In contrast, we model the high-quality approximation of the original optimal transport problem~\eqref{eq:ctans_dual_semidisc} in YALMIP~\citep{yalmip} and solve it with MOSEK. If this problem has multiple maximizers, we report the one with minimal Euclidean norm.

    Figure~\ref{fig:num_results} shows how the suboptimality of~$\bar{\bs \phi}_T$ and the discrepancy between $\bar {\bs \phi}_T$ and the exact maximizer decay with~$T$, both for the original as well as for the entropic, the $\chi^2$-divergence and~{\color{black} hyperbolic divergence} regularized optimal transport problems, {\color{black} averaged across 20 independent simulation runs.}
    Figure~\ref{fig:suboptimality} suggests that the suboptimality decays as~$\mc O(1/\sqrt{T})$  for the original optimal transport problem, which is in line with the theoretical guarantees by~\citet[Corollary~1]{nesterov2008confidence},
    and as $\mc O(1/ T)$ for the {\color{black} entropic, the $\chi^2$-divergence and the hyperbolic divergence regularized} optimal transport problems, which is consistent with the theoretical guarantees established in Corollary~\ref{corollary:convergence}. Indeed, entropic regularization can be explained by the exponential distribution model of Example~\ref{ex:exp}, where the exponential distribution functions $F_i$ satisfy the generalized self-concordance condition~\eqref{eq:cumulative:concoradance} with~$M =1/ \lambda$. Similarly, $\chi^2$-divergence regularization can be explained by the uniform distribution model of Example~\ref{ex:uniform}, where the uniform distribution functions $F_i$ satisfy the generalized self-concordance condition with any~$M > 0$. {\color{black} Finally, hyperbolic divergence regularization can be explained by the hyperbolic cosine distribution model of Example~\ref{ex:hyperbolic}, where the hyperbolic cosine functions~$F_i$ satisfy the generalized self-concordance condition with $M = 1/\lambda$.} In all cases the smallest eigenvalue of $-\nabla_{\bs \phi}^2 \EE_{\bs x \sim \mu} [\bs \nu^\top \bs \phi^\star - \overline{\psi}_{c}(\bs \phi^\star, \bs x)]$, which we estimate when solving the high-quality approximations of the two smooth optimal transport problems, is strictly positive. Therefore, Corollary~\ref{corollary:convergence}~\ref{corollary:convergence:concordance} is indeed applicable and guarantees that the suboptimality gap is bounded above by~$\mc O (1/T)$.
    Finally, Figure~\ref{fig:dualvars} suggests {\color{black} that~$\| \bar {\bs \phi}_T - \bs \phi^\star \|^2_2$ converges to~$0$ at rate~$\mc O(1/T)$ for the entropic, the $\chi^2$-divergence and the hyperbolic divergence} regularized optimal transport problems, which is consistent with~\citep[Proposition~10]{bach2013adaptivity}.
    \begin{figure}[t]
     \centering
     \begin{subfigure}[h]{0.43\columnwidth}
         \includegraphics[width=\textwidth]{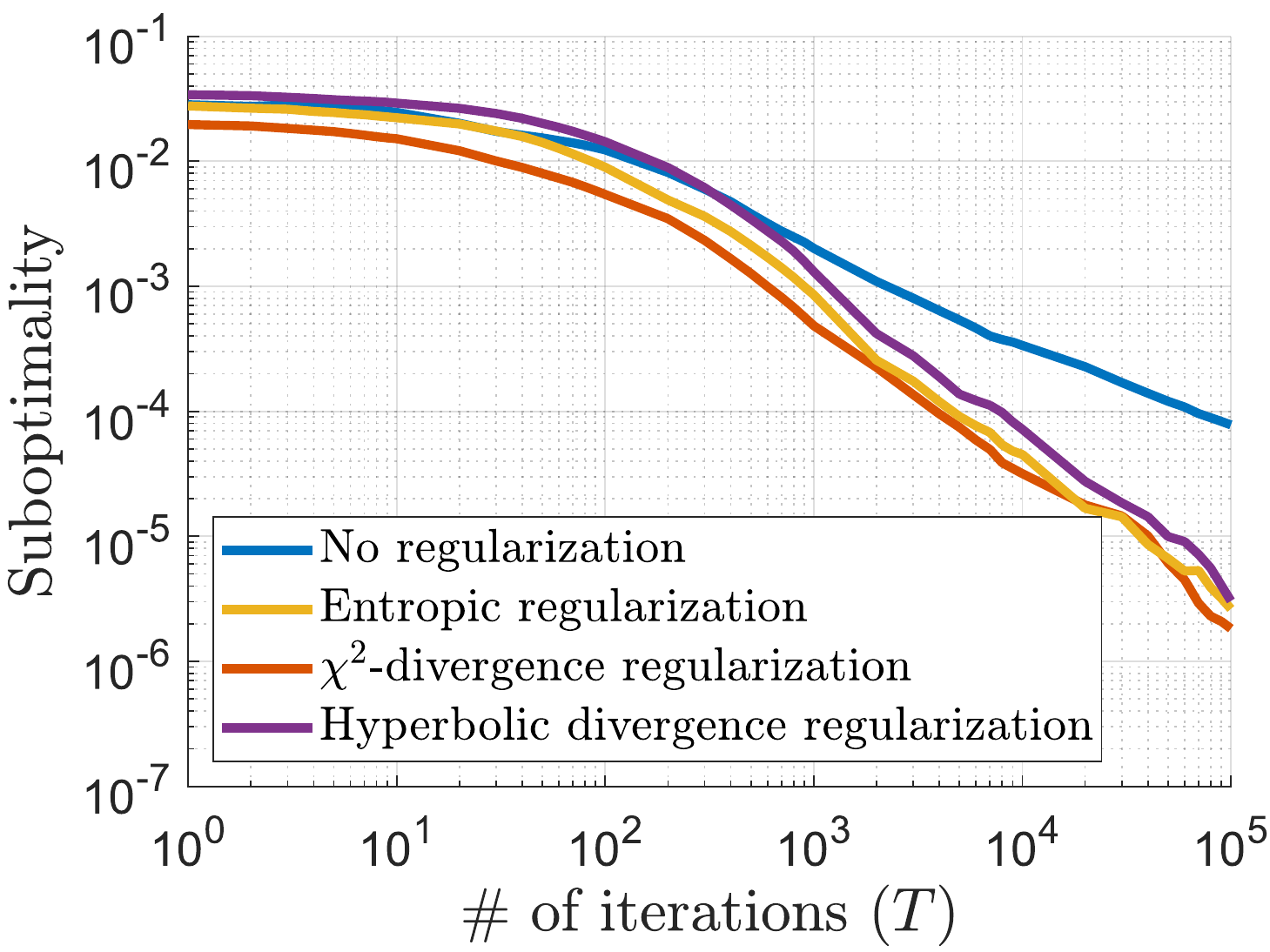}
         \caption{}
         \label{fig:suboptimality}
     \end{subfigure}
     \hspace{2cm}
     \begin{subfigure}[h]{0.43\columnwidth}
         \includegraphics[width=\textwidth]{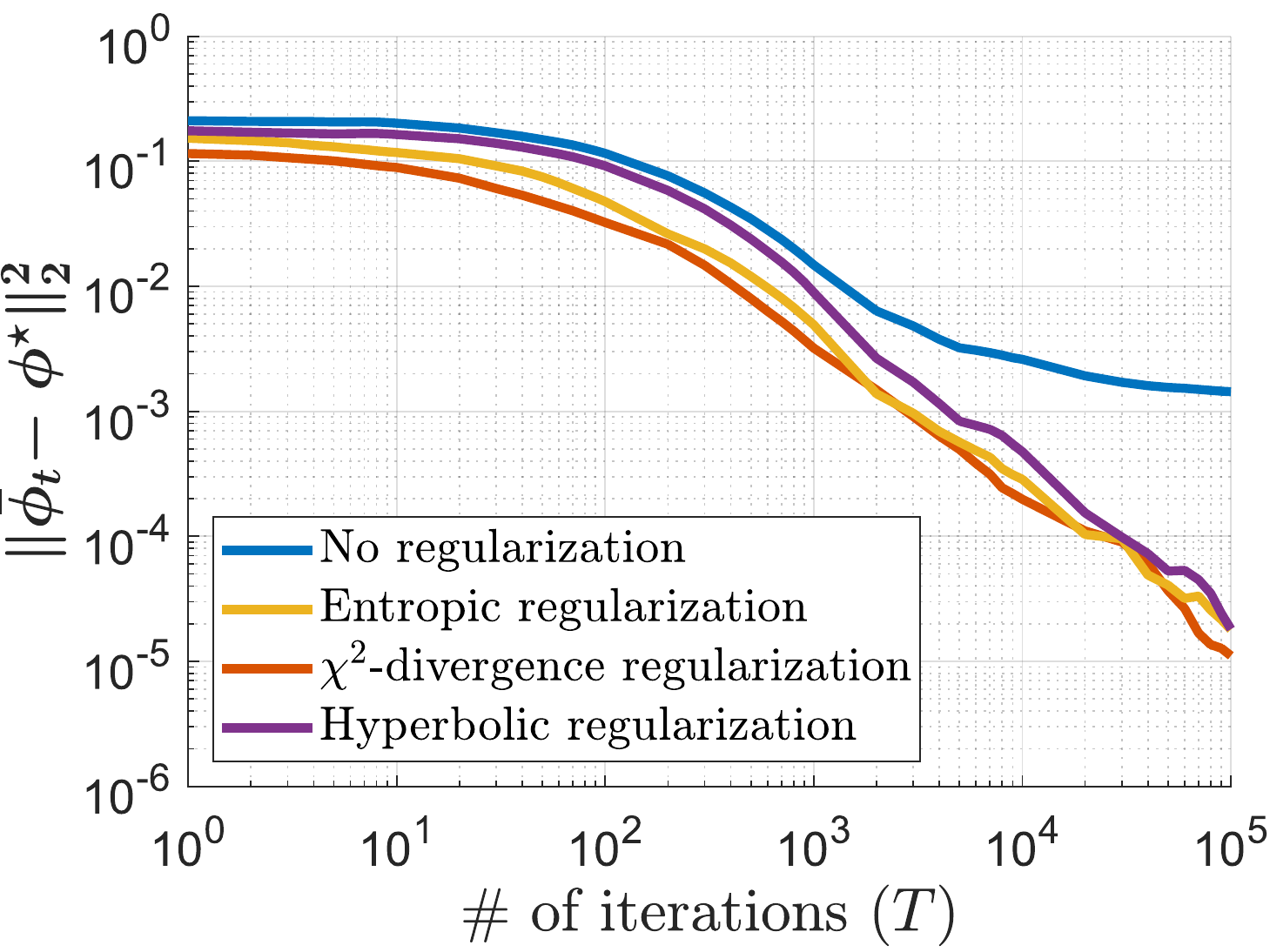}
         \caption{}
         \label{fig:dualvars}
     \end{subfigure}
        \caption{Suboptimality (a) and discrepancy to~$\bs\phi^\star$ (b) of the outputs~$\bar{\bs \phi}_T$ of Algorithm~\ref{algorithm:asgd} for the original (blue), the entropic regularized (orange), the $\chi^2$-divergence regularized (red) {\color{black} and the hyperbolic divergence regularized (purple)} optimal transport~problems.}  \label{fig:num_results}
    \end{figure}
    
    \textbf{Acknowledgements.} This research was supported by the Swiss National Science Foundation under the NCCR Automation, grant agreement~51NF40\_180545. The research of the second author is supported by an Early Postdoc.Mobility Fellowship, grant agreement P2ELP2\_195149.
    
	\appendix
	\renewcommand\thesection{\Alph{section}}
		{\color{black}
			\section{Approximating the Minimizer of a Strictly Convex Function}
			The following lemma is  key ingredient for the proofs of Theorem~\ref{theorem:hard} and Corollary~\ref{corollary:approximate-hard}.
	\begin{lemma}
	\label{lemma:strictly_convex_min}
	    Assume that $g:[0,1] \to \R_{+}$ is a strictly convex function with unique minimizer~$t^\star\in[0,1]$, and define $L = \ceil{\log_2(1/\delta)} +1$ for some prescribed tolerance~$\delta \in (0,1)$. Then, the following hold.
	    \begin{enumerate}[label=(\roman*)]
	        \item Given an oracle that evaluates~$g$ exactly, we can compute a $\delta$-approximation for~$t^\star$ with~$2L$ oracle~calls.
	        \item Given an oracle that evaluates~$g$ inexactly to within an absolute accuracy
	        \begin{equation*}
	             \eps = \frac{1}{4}\min_{l\in [2^L] } \left\{|g(t_l) - g(t_{l-1})| : g(t_l) \neq g(t_{l-1}) \right\} \quad \text{with} \quad  t_l = \frac{l}{2^{L}} \quad \forall l =0,\ldots,2^L,
	        \end{equation*}
	        we can compute a $2\delta$-approximation for~$t^\star$ with~$2L$ oracle~calls.
	    \end{enumerate}
	\end{lemma}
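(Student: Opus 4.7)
The plan is to prove both parts with bisection-style algorithms that halve a shrinking interval known to contain the unique minimizer $t^\star$. In each iteration the algorithm queries $g$ at two points chosen near the current interval's midpoint and exploits the strict convexity of $g$: if the left query returns a smaller value then $t^\star$ lies strictly to the left of the right query (so the piece to the right of the right query can be discarded), and symmetrically in the opposite case. After $L$ such iterations using two queries each, the containing interval has shrunk by approximately a factor of $2^L$, which by the choice $L = \ceil{\log_2(1/\delta)} + 1$ is enough to pin down $t^\star$ to the required accuracy.

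For part~(i), I would implement this idea with queries at $m_k - \eta$ and $m_k + \eta$, where $m_k$ is the midpoint of the current interval and $\eta > 0$ is taken sufficiently small; exactness of the oracle then recovers the correct half up to slack $\eta$, so after $L$ iterations the containing interval has length arbitrarily close to $2^{-L} \le \delta/2$, and its midpoint is a $\delta$-approximation of $t^\star$.

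For part~(ii), I would restrict all queries to the grid $\{t_l\}_{l=0}^{2^L}$ and maintain integer indices $l_0 < l_1$ with $t^\star \in [t_{l_0},t_{l_1}]$, starting from $(l_0,l_1)=(0,2^L)$. Each iteration sets $m = l_0 + \lfloor (l_1-l_0)/2 \rfloor$, calls the inexact oracle at the adjacent grid points $t_m$ and $t_{m+1}$, and uses the sign of $\tilde g(t_m) - \tilde g(t_{m+1})$ to update the interval to $[t_{l_0},t_{m+1}]$ or $[t_m,t_{l_1}]$. The reliability of these decisions is driven by the choice of $\varepsilon$: whenever $g(t_l)\neq g(t_{l+1})$, we have $|g(t_l)-g(t_{l+1})|\ge 4\varepsilon$, hence $|\tilde g(t_l)-\tilde g(t_{l+1})| \ge 2\varepsilon>0$ with the correct sign. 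When instead $g(t_m)=g(t_{m+1})$, strict convexity forces $t^\star\in(t_m,t_{m+1})$, so either branch preserves $t^\star$. A short induction on the grid length $\Delta_k = l_1^{(k)}-l_0^{(k)}$, which obeys $\Delta_{k+1}\le \lceil\Delta_k/2\rceil + \mathds{1}[\Delta_k\text{ even}]$, yields $\Delta_L\le 2$, so the final interval has width at most $2/2^L \le \delta$, making its midpoint a $(\delta/2)$-approximation and a fortiori a $2\delta$-approximation of $t^\star$.

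The main technical step is the reliability argument for part~(ii), which is precisely why $\varepsilon$ is set to one quarter of the smallest nonzero adjacent gap: this is the exact threshold ensuring that a noisy comparison never flips the sign of a true nonzero difference of adjacent $g$-values. Once this is established, the remainder is a clean bookkeeping exercise tracking the parity of $\Delta_k$ through the bisection recursion, while strict convexity disposes of the edge case where two adjacent query values coincide.
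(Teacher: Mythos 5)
Your argument is correct, but it follows a genuinely different route from the paper's. The paper works with the piecewise affine interpolant $\bar g$ of $g$ on the grid $\{t_0,\dots,t_{2^L}\}$, observes that strict convexity makes the slope array $a_l = g(t_l)-g(t_{l-1})$ strictly increasing (so at most one entry vanishes), and then runs textbook binary search on this sorted array to find the index $l^\star$ where the slope crosses zero; the inexact-oracle case is handled by noting that each $\widetilde a_l$ stays within $2\varepsilon$ of $a_l$ and hence preserves the sign of every nonzero $a_l$, so binary search lands within one grid step of $l^\star$. You instead run an interval bisection directly: for part~(i) you query at an $\eta$-perturbed midpoint and shrink a real interval containing $t^\star$, while for part~(ii) you bisect on integer grid indices, comparing the oracle values at the two adjacent grid points nearest the midpoint and using the same reliability threshold $\varepsilon$ to guarantee that noisy comparisons never flip a true nonzero adjacent gap, with a short recursion argument $\Delta_{k+1}\le\lceil\Delta_k/2\rceil+\mathds 1[\Delta_k\text{ even}]$ showing the grid interval ends at width at most~$2$. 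Both approaches are valid and cost $2L$ oracle calls. The paper's reduction to a sorted-array search is cleaner conceptually (no interval-length recursion to track, and the ``at most one vanishing slope'' observation does all the work in the inexact case), whereas your version is more elementary and avoids introducing the interpolant, at the cost of a slightly fiddlier parity-aware recursion in part~(ii) and a not-entirely-explicit choice of the perturbation $\eta$ in part~(i), where a concrete choice such as $\eta\le 2^{-L-1}$ would tighten the argument.
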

	\begin{proof}
Consider the uniform grid~$\{t_0, \ldots, t_{2^L}\}$, and note that the difference~$2^{-L}$ between consecutive grid points is strictly smaller than~$\delta$. Next, introduce a piecewise affine function~$\bar g : [0,1] \to \R_+$ that linearly interpolates~$g$ between consecutive grid points. By construction, $\bar g$ is affine on the interval~$[t_{l-1}, t_{l}]$ with slope~$a_l/\delta$ and~$a_l= g(t_{l}) - g(t_{l-1})$ for all $l \in [2^{L}]$. In addition,~$\bar g$ is continuous and inherits convexity from~$g$. As~$g$ is strictly convex, it is easy to verify that~$\bar g$ has also a kink at every inner grid point~$t_l$ for $l\in[2^L-1]$, and therefore the distance between the unique minimizer~$t^\star$ of~$g$ and any minimizer of~$\bar g$ is at most~$2^{-L}<\delta$. In order to compute a $\delta$-approximation for~$t^\star$, it thus suffices to find a minimizer of~$\bar g$.

Next, define $\bm a = (a_0, \ldots, a_{2^{L} })$ with~$a_0 = -\infty$. As~$\bar g$ has a kink at every inner grid point, we may conclude that the array~$\bm a$ is sorted in ascending order, that is, $a_l> a_{l-1}$ for all~$l \in [2^L]$. This implies that at most one element of~$\bm a$ can vanish. In the following, define~$l^\star = \max\{l\in \{0\}\cup [2^{L}]: a_l \leq 0 \}$. If~$l^\star=0$, then~$\bar g$ is uniquely minimized by~$t_{l^\star}=0$, and~$t^\star$ must fall within the interval~$[t_0, t_1]$. If~$l^\star>0$ and~$a_{l^\star}<0$, on the other hand, then~$\bar g$ is uniquely minimized by~$t_{l^\star}$, and~$t^\star$ must fall within the interval~$[t_{l^\star-1},t_{l^\star + 1}]$. If~$l^\star>0$ and~$a_{l^\star}=0$, finally, then every point in~$[t_{l^\star-1}, t_{l^\star}]$ minimizes~$\bar g$, and~$t^\star$ must also fall within~$[t_{l^\star-1}, t_{l^\star}]$. In any case, $t_{l^\star}$ provides a $\delta$-approximation for~$t^\star$. In the remainder of the proof we show that the index~$l^\star$ can be computed efficiently by Algorithm~\ref{algorithm:binary}. This bisection scheme maintains lower and upper bounds~$\underline l$ and~$\overline l$ on the sought index~$l^\star$, respectively, and reduces the search interval between~$\underline l$ and~$\overline l$ by a factor of two in each iteration. Thus, Algorithm~\ref{algorithm:binary}  computes~$l^\star$ in exactly~$L$ iterations \cite[\S~12]{cormen2009introduction}.
    
    \begin{table}[H]
	\centering
	\begin{minipage}{0.6\textwidth}
    \vspace{-1em}
	\begin{algorithm}[H]
	\color{black}
	\caption{Binary search algorithm \label{algorithm:binary}}
		\begin{algorithmic}[1]
 			\Require An array $\bm a\in \mathbb R^{2^L}$ sorted in ascending order
			\State Initialize $\underline{l} = 0$ and $\overline{l} = 2^L$
			\While{$\underline l<\overline l$}
			\State \hspace{-1ex}Set $l \gets {(\overline{l} + \underline{l})}/{2}$
			\State \hspace{-1ex}\algorithmicif~~$a_l \leq 0$~~\algorithmicthen~~$\underline{l} \gets l$~~\algorithmicelse~~$\overline{l} \gets l$
			\EndWhile	\vspace{0.1em}
			\State \algorithmicif~~$a_{\underline l} \leq 0$~~\algorithmicthen~~$l \gets \underline{l}$~~\algorithmicelse~~$l \gets \overline{l}$
			\Ensure ${l^\star \gets l}$
		\end{algorithmic}
	\end{algorithm}
	\vspace{-1em}
	\end{minipage}
    \end{table}
    
We remark that~$l$ is guaranteed to be an integer and thus represents a valid index in each iteration of the algorithm because~$\underline l$ and~$\overline l$ are initialized as~$0$ and~$2^L$, respectively. Note also that if we have access to an oracle for evaluating~$g$ exactly, then any element~$a_l$ of the array~$\bm a$ can be computed with merely two oracle calls.  Algorithm~\ref{algorithm:binary} thus computes $l^\star$ with~$2L$ oracle calls in total. Hence, assertion~(i) follows.
    
As for assertion~(ii), assume now that we have only access to an inexact oracle that outputs, for any fixed~$t\in[0,1]$, an approximate function value~$\widetilde g(t)$ with~$|\widetilde g(t)- g(t)| \leq \eps$, where~$\eps$ is defined as in the statement of the lemma. Reusing the notation from the first part of the proof, one readily verifies that~$\eps = \frac{1}{4}\min_{l \in [2^{L}]}\{a_l : a_l \neq 0\}$. Next, set~$\widetilde a_0 = -\infty$, and define~$\widetilde a_l = \widetilde g(t_{l}) -\widetilde g(t_{l-1})$ for all~$l \in [2^{L}]$. Therefore, $\widetilde{\bm a} = (\widetilde a_0, \ldots, \widetilde a_{2^{L}})$ can be viewed as an approximation of~$\bm a$. Moreover, Algorithm~\ref{algorithm:binary} with input~$\widetilde{\bm a}$ computes an approximation~$\tilde l^\star$ of~$l^\star$ in~$L$ iterations. Next, we will show that~$|\tilde l^\star - l^\star| \leq 1$ even though~$\widetilde{\bm a}$ is not necessarily sorted in ascending order. To see this, note that~$|a_l| \geq 4 \eps$ for all~$l\in[2^L]$ with~$a_l\neq 0$ by the definition of~$\eps$. By the triangle inequality and the assumptions about the inexact oracle, we further have
\[
    |a_l- \widetilde a_l| \leq |\widetilde g(t_{l}) - g(t_{l})| + |\widetilde g(t_{l-1}) - g(t_{l-1})| \leq 2\eps\quad \forall l\in[2^L].
\]
This reasoning reveals that~$\widetilde a_l$ has the same sign as~$a_l$ for every~$l\in[2^L]$ with~$a_l\neq 0$. In addition, it implies that~$t_{\tilde l^\star}$ approximates~$t^\star$ irrespective of whether or not the array~$\bm a$ has a vanishing element. Indeed, if no element of~$\bm a$ vanishes, then $\widetilde a_l$ has the same sign as~$a_l$ for all~$l \in [2^{L}]$. As Algorithm~\ref{algorithm:binary} only checks signs, this implies that~$\tilde l^\star=l^\star$ and that $t_{\tilde l^\star}$ provides a $\delta$-approximation for~$t^\star$ as in assertion~(i). If one element of~$\bm a$ vanishes, on the other hand, then $\widetilde a_l$ has the same sign as~$a_l$ for all~$l \in [2^{L}]$ with $l\neq l^\star$. As Algorithm~\ref{algorithm:binary} only checks signs, this implies that~$|\tilde l^\star - l^\star| \leq 1$. Recalling that~$|t^\star - t_{l^\star}| \leq \delta $, we thus have
    	\[
    	    |t_{\tilde l^\star} - t^\star| \leq |t_{\tilde l^\star} - t_{l^\star}| + |t^\star - t_{l^\star}| \leq 2\delta.
    	\]
In either case, $t_{\tilde l^\star}$ provides a $2\delta$-approximation for~$t^\star$. As any element of the array~$\widetilde{\bm a}$ can be evaluated with two oracle calls, Algorithm~\ref{algorithm:binary} computes~$\tilde l^\star$ with $2L$ oracle calls in total. Hence, assertion~(ii) follows.
\end{proof}

}

	\section{Efficiency of Binary Search}
	\label{appendix:polynomial_calls}
	\renewcommand\thesection{\Alph{section}}
	We adopt the conventions of \citet[\S~2.1]{schrijver1998theory} to measure the size of a computational problem, which is needed to reason about the problem's complexity. Specifically, the size of a scalar $x\in\mathbb R$ is defined as
	\begin{equation*}
	{\rm{size}}(x) = \left\{\begin{array}{ll} 1 + \ceil{\log_2{(|p|+1)}} + \ceil{\log_2{(q+1)}} & \text{if $x= p/q$ with $p\in\mathbb Z$ and $q\in\mathbb N$ are relatively prime,}\\
	\infty & \text{if $x$ is irrational,}
	\end{array}
	\right.
	\end{equation*}
	where we reserve one bit to encode the sign of $x$. The size of a real vector is defined as the sum of the sizes of its components plus its dimension. Thus, the input size of an instance $\bm w\in \mathbb{R}_+^d$ and $b\in\mathbb R_+$ of the knapsack problem described in Lemma~\ref{lemma:Grani} amounts to
	\begin{equation*}
	{\rm{size}}(\bm {w}, b) = d +  1+\sum_{i=1}^d{{\rm{size}}(w_i)}+ {\rm{size}}(b).
	\end{equation*}
	In the following we will prove that the number of iterations
 	{\color{black}
    \[L = \left\lceil   
    \log_2(6) + \log_2 d! + + d \log_2(\|\bm w \|_1 + 2) + (d+1) \log_2(d+1) + \sum\limits_{i=1}^d \log_2(w_i) \right\rceil + 1\]
	of the bisection algorithm used in the proof of Theorem~\ref{theorem:hard} is upper bounded by a polynomial in ${\rm{size}}(\bm {w}, b)$.} The claim holds trivially if any component of $(\bs w,b)$ is irrational.
	Below we may thus assume that $w_i = p_i / q_i$ and $b = p_{d+1}/q_{d+1}$, where $p_i\in\mathbb Z_+$ and $q_i\in\mathbb N$ are relatively prime for every $i\in[d+1]$. This implies that
	\begin{equation*}
	{\rm{size}}(\bm {w}, b) =2 d + 1+  \sum_{i=1}^{d+1}{\ceil{\log_2{(p_i+1)}} + \ceil{\log_2{(q_i+1)}} }. 
	\end{equation*}
	In order to show that $L$ is bounded by a polynomial in ${\rm{size}}(\bm {w}, b)$, we first note that
	\begin{equation}
	\label{eq:ineq-1}
	\log_2 d!\leq \log_2 d^d\leq d^2\leq \text{size}(\bs w, b)^2\quad \text{and}\quad (d+1) \log_2(d+1) \leq (d+1)^2 \leq \text{size}(\bs w, b)^2.
	\end{equation}
	This follows from the properties of the logarithm and the definition of the size function. Similarly, we find 
	\begin{align*}
	d \log_2(2 + \|\bs w\|_1) = d\log_2\left (2 + \sum\limits_{i=1}^d {p_i}/{q_i}\right) &\leq d\log_2((d+1) \max\{2, \max_{i \in [d]}\{p_i/q_i\}\})\\
	&=d\log_2(d+1)+ d\max\{1, \max_{i \in [d]}\{\log_2(p_i) - \log_2(q_i)\}\}\\
	&\leq (d+1)\log_2(d+1)+ d\max\{1,\max_{i \in [d]}\{ \log_2(p_i) + \log_2(q_i)\}\}\\
	&\leq \text{size}(\bs w, b)^2 + \text{size}(\bs w, b)\max_{i \in [d]}\{\log_2(p_i + 1) + \log_2(q_i+1)\}\\
	&\leq 2\,\text{size}(\bs w, b)^2,
	\end{align*}
	where the first inequality follows from the monotonicity of the logarithm, the second inequality holds because  $\log_2(q_i) \geq 0$ for all $q_i \in \mathbb N$, and the third inequality exploits the second bound in~\eqref{eq:ineq-1} as well as the trivial estimates $d\leq {\rm{size}}(\bm {w}, b)$ and $1=\log_2 2\leq \log_2(q_i+1)$ for all $q_i \in \mathbb N$. The last inequality, finally, follows from the observation that
	\begin{equation*}
	\max_{i \in [d]}\{\log_2(p_i + 1) + \log_2(q_i+1)\}
	\leq  \sum\limits_{i=1}^d \log_2(p_i + 1)+ \log_2(q_i + 1) \leq \text{size}(\bs w, b). 
	\end{equation*}
	Using a similar reasoning, we find 
	\begin{align*}
	\sum\limits_{i=1}^d \log_2(w_i) = \sum\limits_{i=1}^d \log_2(p_i / q_i ) \leq \sum\limits_{i=1}^d \log_2(p_i)+ \log_2(q_i) \leq \text{size}(\bs w, b),
	\end{align*}
	and thus all terms in the definition of $L$ grow at most quadratically with $\text{size}(\bs w, b)$. Hence, the number of iterations $L$ of the bisection algorithm is indeed bounded by a polynomial in $\text{size}(\bs w, b)$.
    
    \section{Detailed Derivations for the Examples of Marginal Ambiguity Sets}
    \label{appendix:derivations}
    
	\begin{example}[Exponential distribution model]
	If the marginal generating function in~\eqref{eq:marginal_dists} is set to $F(s) = \exp(s/\lambda - 1)$ for some~$\lambda>0$, then the marginal distribution function $F_i$ for any $i\in[N]$ reduces to 
	\[F_i(s) = \min\left\{1, \max\{0, 1 - \eta_i \exp(-s/\lambda - 1)\}  \right\},\]
	which characterizes a (shifted) exponential distribution. Defining $f$ as in Theorem~\ref{theorem:primal_dual}, we then obtain
	\begin{align*}
	    f(s)=\int_0^s F^{-1}(t)\diff t = \lambda \int_0^s (\log(t) + 1) \diff t = \lambda s\log(s),
	\end{align*}
	where the third equality exploits the standard convention that $0\log(0) = 0$. The proof of Theorem~\ref{theorem:primal_dual} further implies that $\int_{1-p_i}^1 F_i^{-1}(t) \diff t=-\eta_i f(p_i/\eta_i)$ for all~$i\in[N]$; see~\eqref{eq:integral_rep_f}. By Proposition~\ref{proposition:regularized_ctrans} we thus have
	\begin{align*}
	    \overline{\psi}_c(\bs \phi, \bs x) & = \max\limits_{\bs p \in \Delta^N} \sum\limits_{i=1}^N (\phi_i - c(\bs x, \bs y_i))p_i - \lambda \sum\limits_{i=1}^N p_i \log\left(\frac{p_i}{\eta_i}\right).
	\end{align*}
    Next, assign Lagrange multipliers $\tau$ and $\bs \zeta$ to the simplex constraints $\sum_{i=1}^N p_i=1$ and $\bs p\ge \bs 0$, respectively. 
    If we can find $\bs p^\star$, $\tau^\star$ and $\bm \zeta^\star$ that satisfy the Karush-Kuhn-Tucker optimality conditions 
	\begin{align*}
	    \begin{array}{lll}
	    \sum_{i=1}^N p^\star_i =1, ~p^\star_i \geq 0 & \forall i \in [N] & \text{(primal feasibility)}\\
	    \zeta^\star_i\geq 0 & \forall i \in [N] & \text{(dual feasibility)}\\ \zeta_i^\star p_i^\star=0 & \forall i \in [N] & \text{(complementary slackness)} \\
	    \phi_i - c(\bs x, \bs y_i) -\lambda \log\left(\frac{p_i}{\eta_i}\right) - \lambda - \tau^\star + \zeta^\star_i = 0 & \forall i \in [N] & \text{(stationarity)},
	    \end{array}
	\end{align*}
	then $\bs p^\star$ is optimal in the above maximization problem. To see that $\bs p^\star$, $\tau^\star$ and $\bm \zeta^\star$ exist, we use the stationarity condition to conclude that $p_i^\star = \eta_i \exp((\phi_i - c(\bs x, \bs y_i) - \lambda - \tau^\star + \zeta^\star_i) / \lambda)>0$. As $\eta_i > 0$, we have $\zeta_i^\star =0$ by complementary slackness. We may then conclude that $p_i^\star = \eta_i \exp((\phi_i - c(\bs x, \bs y_i)- \lambda - \tau^\star) / \lambda)$ for all $i \in [N]$, which implies via primal feasibility that $\sum_{i=1}^N \eta_i \exp((\phi_i - c(\bs x, \bs y_i) - \lambda - \tau^\star) / \lambda) = 1$. Solving this equation for $\tau^\star$ and substituting the resulting formula for $\tau^\star$ back into the formula for $p_i^\star$ yields
    \[\tau^\star = \lambda \log\left(\sum_{i=1}^N \eta_i \exp\left(\frac{\phi_i - c(\bs x, \bs y_i) - \lambda} {\lambda}\right)\right)\quad\text{and}\quad p_i^\star =\frac{\eta_i \exp\left(({\phi_i - c(\bs x, \bs {y_i}) )}/{\lambda}\right)}{ \sum_{j=1}^N \eta_j \exp\left(({\phi_j - c(\bs x,\bs {y_j}) })/{\lambda}  \right)}\quad \forall i \in [N].\]
    The vector $\bs p^\star$ constructed in this way constitutes an optimal solution for the maximization problem that defines $\overline{\psi}_c(\bs \phi, \bs x)$. Evaluating the objective function value of $\bs p^\star$ in this problem finally confirms that the smooth $c$-transform coincides with the log-partition function~\eqref{eq:partition:function}.
    \end{example}
    \begin{example}[Uniform distribution model]
	If the marginal generating function in~\eqref{eq:marginal_dists} is set to $F(s) = s/ (2\lambda) + 1/2$ for some~$\lambda>0$, then the marginal distribution function $F_i$ for any $i\in[N]$ reduces to 
	\[
	   F_i(s) = \min\{1, \max\{0, 1 + \eta_i s/ (2\lambda) -\eta_i/ 2 \}\},
	\]
	which characterizes a uniform distribution. Defining $f$ as in Theorem~\ref{theorem:primal_dual}, we then obtain
	\begin{align*}
	    f(s)=\int_0^s F^{-1}(t)\diff t = \lambda \int_0^s (2t-1) \diff t = \lambda(s^2 - s).
	\end{align*}
	The proof of Theorem~\ref{theorem:primal_dual} further implies that $\int_{1-p_i}^1 F_i^{-1}(t) \diff t=-\eta_i f(p_i/\eta_i)$ for all~$i\in[N]$; see~\eqref{eq:integral_rep_f}. By Proposition~\ref{proposition:regularized_ctrans}, the smooth $c$-transform thus simplifies to 
	\begin{align*}
	    \overline{\psi}_c(\bs \phi, \bs x) & = \max\limits_{\bs p \in \Delta^N} \sum\limits_{i=1}^N (\phi_i - c(\bs x, \bs y_i))p_i - \lambda \sum\limits_{i=1}^N \frac{p_i^2}{\eta_i} + \lambda = \lambda + \lambda\spmax_{i \in [N]} \;\frac{\phi_i - c(\bs x, \bs {y_i})}{\lambda}, 
	\end{align*}
	where the last equality follows from the definition of the sparse maximum operator in~\eqref{eq:spmax}.
    \end{example}
    
    \begin{example}[Pareto distribution model]
    If the marginal generating function in~\eqref{eq:marginal_dists} is set to $F(s) = (s (q-1) / (\lambda q)+1/q)^{1/(q-1)}$ for some~$\lambda, q>0$, then the marginal distribution function $F_i$ for any $i\in[N]$ reduces to 
	\[
	   F_i(s) = \min\left\{1, \max\left\{0, 1 - \eta_i \left( \frac{s (1-q)}{\lambda q} + \frac{1}{q} \right)^{\frac{1}{q-1}}  \right\} \right\},
	\]
	which characterizes a Pareto distribution. Defining $f$ as in Theorem~\ref{theorem:primal_dual}, we then obtain
	\begin{align*}
	    f(s)=\int_0^s F^{-1}(t) \diff t = \frac{\lambda}{q-1} \int_0^s (q t^{q-1} - 1) \diff t = \lambda \frac{s^q - s}{q-1}.
	\end{align*}
	\end{example}
	
	\begin{example}[Hyperbolic cosine distribution model]
	If the marginal generating function in~\eqref{eq:marginal_dists} is set to $F(s) = \sinh(s/\lambda - k)$ for some~$\lambda > 0$ and for $k = \sqrt{2} - 1 - \asinh(1)$, then the marginal distribution function $F_i$ for any $i\in[N]$ reduces to 
	\[
	   F_i(s) = \min\left\{1, \max\left\{0, 1 + \eta_i \sinh(s/\lambda + k) \right\} \right\},
	\]
	which characterizes a hyperbolic cosine distribution. Defining $f$ as in Theorem~\ref{theorem:primal_dual}, we then obtain
	\begin{align*}
	    f(s) = \int_0^s F^{-1}(t) \diff t 
	    = \lambda \int_0^s (\asinh(s) + k ) \diff t 
	    = \lambda(s \hspace{0.1em} \asinh(s) - \sqrt{s^2 + 1} + 1 + ks).
	\end{align*}
	\end{example} 
	
	\begin{example}[$t$-distribution model]
	If the marginal generating function in~\eqref{eq:marginal_dists} is set to 
	\[ F(s) = \frac{N}{2}\left(1 + \frac{s - \sqrt{N-1}} {\sqrt{\lambda^2 + (s - \sqrt{N-1})^{2}}}\right) \]
	for some~$\lambda, q>0$, then the marginal distribution function $F_i$ for any $i\in[N]$ reduces to
	\[
	   F_i(s) = \min\left\{1, \max\left\{0, 1 - \frac{\eta_i N}{2} \left(1 - \frac{s + \sqrt{N-1}} {\sqrt{\lambda^2 + (s + \sqrt{N-1})^{2}}}\right) \right\} \right\},
	\]
	which characterizes a $t$-distribution with $2$ degrees of freedom. Defining $f$ as in Theorem~\ref{theorem:primal_dual}, we then find
	\begin{align*}
	    f(s) = \int_0^s F^{-1}(t) \diff t = \lambda \int_0^s  \left( \frac{2s - N}{2 \sqrt{s(N-s)}} + \sqrt{N-1} \right) \diff t = -\lambda \sqrt{s(N-s)} + \lambda s \sqrt{N-1}.
	\end{align*}
	\end{example}
    
	\renewcommand\thesection{\Alph{section}}
	\section{The Sparse Maximum Function}
	\label{appendix:spmax}
	\renewcommand\thesection{\Alph{section}}
	The following proposition, which is a simple extension of \citep[Proposition~1]{sparsemax}, suggests that the solution of~\eqref{eq:spmax} can be computed by a simple sorting algorithm.
	\begin{proposition}
	\label{proposition:spmax}
	Given $\bs u\in\R^N$, let $\sigma$ be a permutation of $[N]$ with $u_{\sigma(1)} \geq u_{\sigma(2)} \geq \cdots \geq u_{\sigma(N)}$, and set
	$$k = \max \left\{ j \in [N]: 2 + \left( \sum_{i =1}^j \eta_{\sigma(i)} \right) u_{\sigma(j)} > \sum_{i =1}^j \eta_{\sigma(i)} u_{\sigma(i)} \right\}\quad \text{and} \quad \tau^\star = \frac{\left(\sum_{i=1}^k \eta_{\sigma(i)} u_{\sigma(i)}\right) - 2}{\sum_{i=1}^k \eta_{\sigma(i)}}.$$
	Then $p^\star_i =  \eta_i [u_i - \tau^\star]_+ / 2$, $i \in [N]$, is optimal in~\eqref{eq:spmax}, where $[\cdot]_+ = \max \{ 0, \cdot \}$ stands for the ramp function.
	\end{proposition}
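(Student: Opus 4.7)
The plan is to establish optimality via the Karush-Kuhn-Tucker (KKT) conditions of the strictly concave quadratic program defining $\spmax$. Since the feasible set $\Delta^N$ is convex and compact, and the objective $\sum_{i=1}^N u_i p_i - p_i^2/\eta_i$ is strictly concave and continuous (recall $\eta_i > 0$), a unique maximizer exists and the KKT conditions are necessary and sufficient.

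First, I would attach a Lagrange multiplier $\tau \in \R$ to the normalization constraint $\sum_{i=1}^N p_i = 1$ and multipliers $\zeta_i \ge 0$ to the non-negativity constraints $p_i \ge 0$. Stationarity gives $u_i - 2p_i/\eta_i - \tau + \zeta_i = 0$, while complementary slackness requires $\zeta_i p_i = 0$ for all $i\in[N]$. A short case split (according to whether $p_i > 0$ or $p_i = 0$) shows that in either case $p_i^\star = \eta_i [u_i - \tau^\star]_+ / 2$, where $\tau^\star$ is the optimal multiplier. Hence the support of $\bs p^\star$ coincides with the set of indices $i$ for which $u_i > \tau^\star$.

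Next, I would exploit the monotonicity of the representation $p_i^\star = \eta_i [u_i - \tau^\star]_+/2$ in $u_i$ to conclude that, upon sorting the utilities as $u_{\sigma(1)} \ge \cdots \ge u_{\sigma(N)}$, the support of $\bs p^\star$ has the form $\{\sigma(1),\ldots,\sigma(k)\}$ for some $k \in [N]$. Enforcing $\sum_{i=1}^N p_i^\star = 1$ then yields the explicit formula
\[
\tau^\star = \frac{\left(\sum_{i=1}^k \eta_{\sigma(i)} u_{\sigma(i)}\right) - 2}{\sum_{i=1}^k \eta_{\sigma(i)}}.
\]
Finally, I would characterize $k$ by the requirement $u_{\sigma(k)} > \tau^\star \ge u_{\sigma(k+1)}$ (dropping the second inequality when $k = N$). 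Substituting the expression for $\tau^\star$ into $u_{\sigma(k)} > \tau^\star$ and clearing denominators recovers the stated inequality $2 + (\sum_{i=1}^k \eta_{\sigma(i)}) u_{\sigma(k)} > \sum_{i=1}^k \eta_{\sigma(i)} u_{\sigma(i)}$, and $k$ is exactly the largest index for which this inequality holds.

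The main technical point is simply verifying that such a threshold $k$ is well defined and consistent, i.e., that the inequality holds for $k=1$ (where it reduces to $2 > 0$), and that the ``$\max$'' in the definition picks out precisely the support predicted by KKT; everything else is routine algebra. Since the problem is strictly concave and KKT is sufficient, the candidate $\bs p^\star$ thus constructed is the unique optimizer, which completes the proof.
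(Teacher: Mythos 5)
Your proposal follows the same KKT-based route as the paper: attach multipliers to the simplex constraints, derive the threshold form $p_i^\star = \eta_i[u_i-\tau^\star]_+/2$, and pin down $\tau^\star$ from the normalization constraint. The only place you gloss over slightly is the verification that the "$\max$" in the definition of $k$ really gives the two-sided threshold $u_{\sigma(k)} > \tau^\star \geq u_{\sigma(k+1)}$ — the paper spells out the reverse direction by showing that the defining inequality failing at $j = k+1$ implies $u_{\sigma(k+1)} \leq \tau^\star$ — but this is exactly the "routine algebra" you flag, and the argument is correct.
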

	\begin{proof}
	Assign Lagrange multipliers $\tau$ and $\bs \zeta$ to the simplex constraints $\sum_{i=1}^N p_i=1$ and $\bs p\ge \bs 0$ in problem~\eqref{eq:spmax}, respectively. If we can find $\bs p^\star$, $\tau^\star$ and $\bm \zeta^\star$ that satisfy the Karush-Kuhn-Tucker conditions
    \begin{align*}
		    \begin{array}{lll}
		    \sum_{i=1}^N p^\star_i =1, ~p^\star_i \geq 0 & \forall i \in [N] & \text{(primal feasibility)}\\
		    \zeta^\star_i\geq 0 & \forall i \in [N] & \text{(dual feasibility)}\\ \zeta_i^\star p_i^\star=0 & \forall i \in [N] & \text{(complementary slackness)} \\
		    u_i - \frac{2 p_i^\star}{\eta_i} - \tau^\star + \zeta^\star_i = 0 & \forall i \in [N] & \text{(stationarity)},
		    \end{array}
		\end{align*}
		then $\bs p^\star$ is optimal in~\eqref{eq:spmax}.
	In the following, we show that $\bs p^\star$, $\tau^\star$ and $\bm \zeta^\star$ exist. Note first that if $p_i^\star > 0$, then $\zeta_i^\star = 0$ by complementary slackness and $p_i^\star = \eta_i (u_i - \tau^\star) / 2$ by stationarity. On the other hand, if $p_i^\star = 0$, then $\zeta_i^\star \geq 0$ by dual feasibility and $u_i - \tau^\star \leq 0$ by stationarity. In both cases we have $p_i^\star = \eta_i [u_i - \tau^\star]_+ / 2$ for all $i\in[N]$, which implies that $\sum_{i=1}^N \eta_i [u_i - \tau^\star]_+ = 2$ by primal feasibility. It thus remains to show that~$\tau^\star$ as defined in the proposition statement solves this nonlinear scalar equation. To this end, note that by the definitions of the permutation $\sigma$ and the index $k$ we have
	\begin{align*}
	    u_{\sigma(j)} \geq u_{\sigma(k)} > \frac{(\sum_{i=1}^k \eta_{\sigma(i)} u_{\sigma(i)}) - 2}{\sum_{i=1}^k \eta_{\sigma(i)}} = \tau^\star
	\end{align*}
	for all $j \le k$. The definition of the index $k$ further implies that
	\begin{align*}
	    2 + \left( \sum_{i=1}^{k+1} \eta_{\sigma(i)} \right) u_{\sigma(k+1)} \leq \sum_{i=1}^{k+1} \eta_{\sigma(i)} u_{\sigma(i)}.
	\end{align*}
	A simple reordering, dividing both sides of the above inequality by $\sum_{i=1}^k \eta_{\sigma(i)}$, and using the definition of~$\tau^\star$ then yields $u_{\sigma(k+1)} \leq \tau^\star$. In addition, by the definition of the permutation $\sigma$, we have $u_{\sigma(j)} \leq u_{\sigma(k+1)}$ for all $j > k$. Hence, we conclude that $u_{\sigma(j)} \leq \tau^\star$ for all $j > k$. One can then show that
	\begin{align*}
	    \sum_{i=1}^N \eta_i [u_i - \tau^\star]_+ = \sum_{i=1}^k \eta_{\sigma(i)} (u_{\sigma(i)} - \tau^\star) = 2,
	\end{align*}
	as desired. Therefore, problem~\eqref{eq:spmax} is indeed solved by $p^\star_{i} = \eta_i [u_i - \tau^\star]_+ / 2$, $i\in[N]$.
	\end{proof}
	\bibliographystyle{abbrvnat} 
	\bibliography{references}
\end{document}